\newcommand{\dataset}{\mathcal{D}}
\newcommand{\R}{\mathbb{R}}
\newcommand{\N}{\mathbb{N}}
\newcommand{\indi}{\mathds{1}}
\newcommand{\argmin}{\operatornamewithlimits{argmin}}
\newcommand{\Nrunf}{\text{\#\,Uns}}
\newcommand{\Maxviol}{\text{MaxVi}}
\newcommand{\Meanviol}{\text{MeanVi}}
\newcommand{\cost}{\text{Co}}
\newcommand{\Obj}{\text{Obj}}
\newcommand{\sI}{\mathcal{I}}
\newcommand{\sE}{\mathcal{E}}
\newcommand{\sF}{\mathcal{F}}
\newcommand{\sS}{\mathcal{S}}
\newcommand{\sT}{\mathcal{T}}
\newcommand{\B}{\mathrm{B}}
\newcommand{\dist}{\Delta}
\newcommand{\sol}{\mathrm{SOL}}
\newcommand{\eps}{\varepsilon}
\newcommand{\violation}{\text{Vi}}
\newcommand{\ol}{\overline} 
\newcommand{\indf}{individually fair}
\newcommand{\ipst}{IP-stability}
\newcommand{\ipste}{IP-stable}
\newcommand{\depth}{\textrm{depth}}
\definecolor{mygreen}{rgb}{0,0.5,0}
\newcommand\jamie[1]{\textcolor{mygreen}{JM:#1}}
\newcommand\ali[1]{\textcolor{red}{AV: #1}}
\newtheorem{lemma}{Lemma}
\newtheorem{definition}{Definition}
\newtheorem{theorem}{Theorem}
\newtheorem{claim}[theorem]{Claim}
\newtheorem{corollary}{Corollary}
\begin{document}
\date{}

\title{Individual Preference Stability for Clustering}

\author[1]{Saba Ahmadi\thanks{saba@ttic.edu}}
\author[2]{Pranjal Awasthi\thanks{pranjalawasthi@google.com}}
\author[3]{Samir Khuller\thanks{samir.khuller@northwestern.edu}}
\author[4]{Matth{\"a}us Kleindessner\thanks{matkle@amazon.de}}
\author[5]{Jamie Morgenstern\thanks{jamiemmt@cs.washington.edu}}
\author[3]{Pattara Sukprasert\thanks{pattara@u.northwestern.edu}}
\author[1]{Ali Vakilian\thanks{vakilian@ttic.edu}}
\affil[1]{Toyota Technological Institute at Chicago, USA}
\affil[2]{Google, USA}
\affil[3]{Northwestern University, USA}
\affil[4]{Amazon Web Service, Germany}
\affil[5]{University of Washington, USA}

\maketitle

\begin{abstract}
In this paper, we propose a natural notion of individual preference (IP) stability for clustering, which asks that every data point, on average, is closer to the points in
its own cluster than to the points in any other cluster. Our notion can be motivated from several perspectives, including game theory and algorithmic fairness. 
We study several questions related to our proposed notion.
We first show that deciding whether a given data set allows for an IP-stable clustering in general is NP-hard. 
As a result, we explore the design of efficient algorithms for finding IP-stable clusterings in some restricted metric spaces. We present a
polytime algorithm to find a clustering satisfying exact IP-stability on the real line, and an efficient algorithm to find an IP-stable 2-clustering for a tree metric. 
We also consider relaxing the stability constraint, i.e., every data point should not be too far from its own cluster compared to any other cluster. 
For this case, we provide 
polytime algorithms with different guarantees.
We evaluate 
some of 
our algorithms and several standard clustering approaches on~real~data~sets. 
\end{abstract}



\section{Introduction}\label{section_introduction}

Clustering, a foundational topic in unsupervised learning, aims to find a partition of a dataset into sets where the 
intra-set similarity is higher than the inter-set similarity.  
The problem of clustering 
can be formalized in numerous 
ways with different ways of measuring similarity within and between sets, such as centroid-based 
formulations 
like
$k$-means, $k$-median and $k$-center \citep[e.g., ][]{kmeans_plusplus}, hierarchical partitionings \citep[e.g., ][]{dasgupta2002performance}, or spectral clustering \citep[e.g., ][]{Luxburg_tutorial}. 
All 
these formulations have also been considered 
under 
additional constraints \citep[e.g., ][]{wagstaff2001constrained};  
in particular, a recent line of work studies clustering under various fairness constraints \citep[e.g., ][]{fair_clustering_Nips2017}. 
Most formulations of clustering are NP-hard, which has led to both the understanding of approximation algorithms and conditions under which (nearly) optimal clusterings can be found in computationally efficient ways. Such conditions that have been studied in recent years are variants of {\em perturbation stability}, i.e., small perturbations to the distances do not affect the optimal clustering \citep{ackerman2009clusterability, awasthi2012center,bilu2012stable}, or are variants of {\em approximation stability}, i.e., approximately optimal clusterings  according to some objective are all close to each other~\citep{balcan2013clustering,ostrovsky2013effectiveness}.
In this work, we introduce a distinct notion of stability in clustering, \emph{individual preference stability} of a clustering, which measures whether data points can reduce their individual objective by reassigning themselves to another cluster. 

The study of individual preference (IP) stability, and clusterings which are IP-stable, has a number of motivations behind it, both in applications and connections to other concepts in computing. For example, clustering can be used  to design curricula for a collection of students, with the goal that each student is assigned to a cluster with the most similar learning needs.  One might also design personalized marketing campaigns where customers are first clustered--the campaign's personalization will be more effective if customers have most affinity to the clusters to which they are assigned. More generally, if one  first partitions a dataset and then designs a collection of interventions or treatments for each subset, IP stability ensures individuals are best represented by the cluster they belong to. 

This notion has connections to several other concepts studied in computing, both for clustering and for other algorithmic problems. Suppose each data point ``belongs'' to an individual, and that individual chooses which cluster she wants to join to minimize her average distance to points in that cluster. If her features are fixed (and she cannot change them), an IP-stable clustering will correspond to a Nash equilibrium of this game.  In this sense, IP-stability is related to the 
concept of stability from matching theory~\citep{roth1984evolution,gale1985some}. IP-stability is also a natural notion of individual fairness for clustering, asking that each individual prefers the cluster they belong to over any other cluster (and directly captures envy-freeness of a clustering such as has been recently studied within the context of classification~\citep{NEURIPS2019_e94550c9}). Finally, the study of IP-stability (whether such a clustering exists, whether an objective-maximizing clustering is also IP-stable) may open up a new set of conditions under which approximately optimal clusterings can be found more efficiently than in the worst~case.

\paragraph{Our Contributions} We propose a natural notion of IP-stability clustering and present a comprehensive analysis of its properties. Our notion requires that each data point, on average, is closer to the points in its own cluster than to the points in any other cluster (Section~\ref{section_IPdefinition}). We show that, in general, {\ipste} clusterings might not exist, even for Euclidean data sets~in~$\R^2$ (Section~\ref{section_IPdefinition}).

Moreover, we prove that deciding whether a given data set has an {\ipste} $k$-clustering is NP-hard, even for $k=2$ and when the distance function is assumed to be a metric~(Section~\ref{section_np_hardness}). Here, and in the following, $k$ is the desired number of clusters.
This naturally motivates the study of approximation algorithms for the problem and the study of the problem in special metric spaces. 
  
A clustering is called $t$-approximately IP-stable if the average distance of each data point to its own cluster is not more than $t$ times its average distance to the points in any other cluster. By exploiting the techniques from metric embedding, in Section~\ref{sec:exclusion-approx}, first we provide a polynomial time algorithm that finds an  $\mathcal{O}(\log^2{n}/\varepsilon)$-approximation IP-stable clustering for $(1-\varepsilon)$-fraction of points in any metric space. Second, by designing a modified single-linkage approach as a pre-processing step, in Section~\ref{sec:underlying_stable_clustering} we provide an ``efficient'' approximation algorithm when the input has a ``well-separated'' IP-stable clustering. More precisely, if the input has an IP-stable clustering with clusters of size at least $\alpha \cdot n$, then our algorithm will find an $\tilde{\mathcal O}(\frac{1}{\alpha})$-approximation IP-stable clustering of the input in polynomial time.

When the data set lies on the real line, surprisingly, we show that an {\ipste} $k$-clustering always exists, and we design an efficient algorithm with running time $\mathcal O(kn)$ to find one, where $n$ is the number of data points (Section~\ref{section_1dim}). In addition to finding a IP-stable clustering, one might want to optimize an additional global objective; we study such a problem in Appendix~\ref{appendix_p_equals_infty}, where we minimize the deviation of clusters' sizes from desirable target sizes. We propose a DP solution which runs in $\mathcal O(n^3 k)$ time for this problem. Moreover, we show how to efficiently find an {\ipste} 2-clustering when the distance function is a tree metric~(Section~\ref{sec:tree-metric}).
We in fact conjecture that the result on the line can be extended to any tree, and 
while we show a positive result only for $k=2$, we believe that a similar result holds for any $k$.


  %
%
%
Finally, we perform extensive experiments on real data sets and compare the performance of 
several standard clustering algorithms such as $k$-means++ and $k$-center 
w.r.t. IP-stability (Section~\ref{section_experiments}).  
%
Although in the worst-case the 
violation of IP-stability by solutions produced by these algorithms can be arbitrarily large (as we show in Section~\ref{section_np_hardness} / Appendix~\ref{sec:hard_instances}), 
some of these algorithms 
perform surprisingly well in practice. 
We also study simple heuristic modifications to make the standard algorithms more aligned with the notion of IP-stability, in particular for 
linkage clustering 
(Appendix~\ref{appendix_exp_general}).

\section{Individual Preference Stability}\label{section_IPdefinition}

\begin{figure}[t]
 \centering
 \includegraphics[scale=0.9]{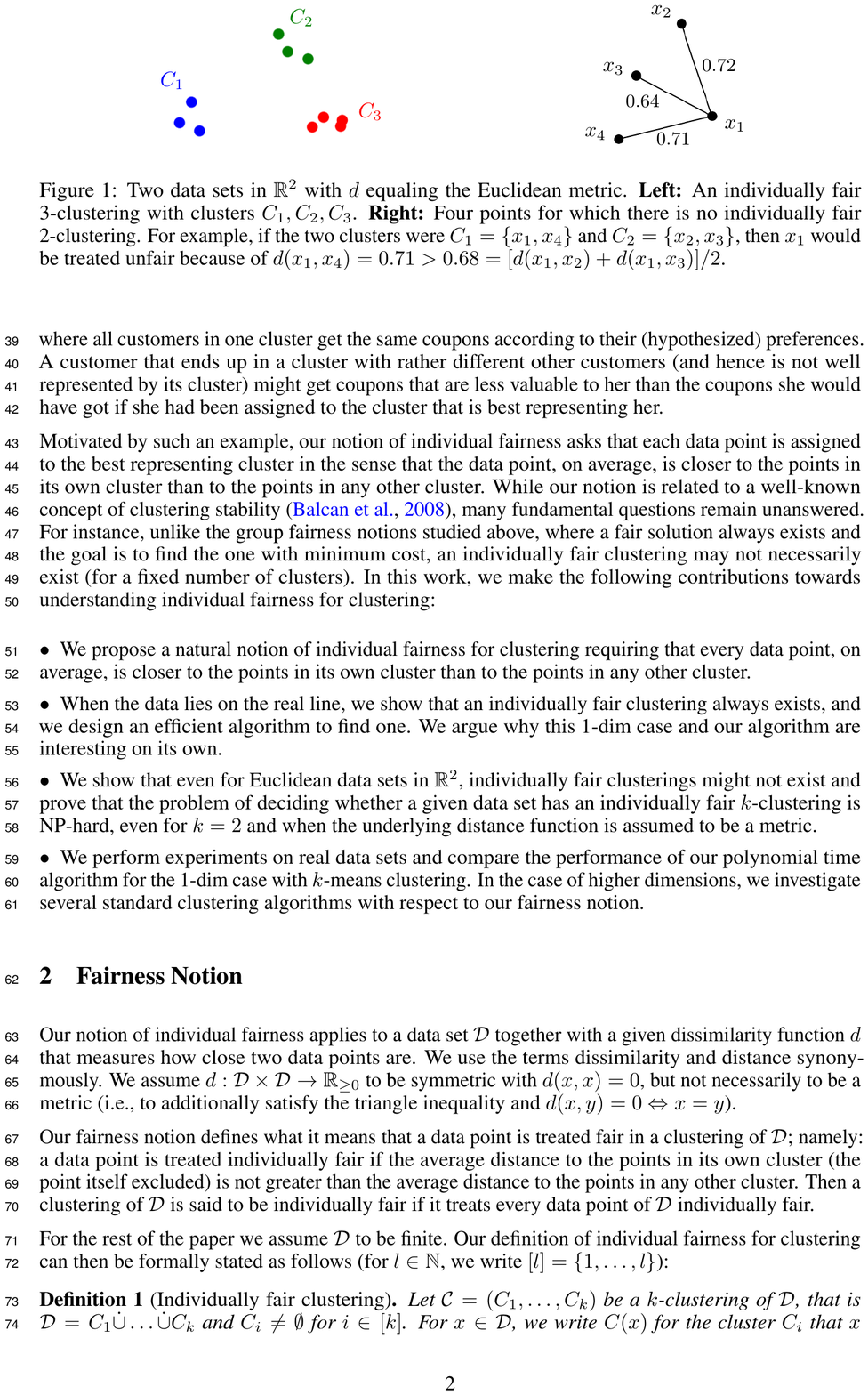}
 \hspace{7mm}
  \begin{overpic}[scale=0.25,trim=0 40 0 40 ]{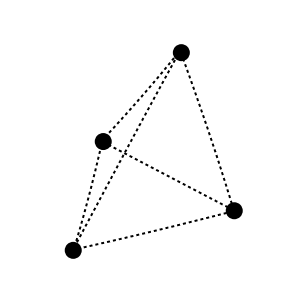}
 \put(86,14){$x_1$}
\put(50,78){$x_2$}
\put(15,42){$x_3$}
\put(7,1){$x_4$}
\put(73,46){\small $0.72$}
\put(50,32){\small $0.64$} 
 \put(46,-1){\small $0.71$}
  \put(8.5,19){\small $0.48$}
  \put(28,56){\small $0.51$}
  \put(36.5,15.5){\small $0.95$}
\end{overpic}

 \caption{Two data sets in $\R^2$ with $d$ equaling the Euclidean metric. \textbf{Left:} An 
 {\ipste} 3-clustering with clusters $C_1,C_2,C_3$.  
 \textbf{Right:} Four points for which there is no {\ipste} 2-clustering. For example, if 
 the two clusters were 
 $C_1=\{x_1,x_4\}$ and $C_2=\{x_2,x_3\}$, then $x_1$ would not be stable because of  
$d(x_1,x_4)=0.71>0.68=[d(x_1,x_2)+d(x_1,x_3)]/2$,
and if the two clusters were
 $C_1 = \{x_1\}$ and $C_2 =\{x_2,x_3,x_4\}$, then $x_4$ would not be stable because
 $d(x_1,x_4) = 0.71 < 0.715 = [d(x_2,x_4) + d(x_3,x_4)]/2$.
 }\label{figure_example}
\end{figure}

Our notion of individual preference stability
applies to a data set~$\dataset$ together 
with a 
given  dissimilarity function~$d$ that measures how 
close
two data points are. 
We 
use the terms  dissimilarity and distance synonymously. 
We assume $d:\dataset\times \dataset\rightarrow\R_{\geq 0}$ to be 
symmetric with $d(x,x)=0$, but not necessarily to 
be a metric (i.e., to additionally satisfy the triangle~inequality and $d(x,y)=0 \Leftrightarrow x=y$).


Our stability notion defines what it means that a data point is 
{\ipste} 
in a clustering of $\dataset$; namely: 
a data point is {\ipste} if the average distance to the points 
in its own cluster (the point itself excluded) is not greater than the average distance to the points in any other cluster. 
Then 
a clustering 
of $\mathcal{D}$ 
is said to be {\ipste} if every data point 
in $\mathcal{D}$ is~stable.

For the rest of the paper we assume $\dataset$ to be finite. Our definition of {\ipst} for clustering can then be formally stated as follows 
(for $l\in\N$, we 
let 
$[l]=\{1,\ldots,l\}$):

\begin{definition}
[Individual preference (IP) stability]
\label{def_ip_stable_clustering}
 Let $\mathcal{C}=(C_1,\ldots,C_k)$ be a $k$-clustering of $\dataset$, that is 
 $\dataset=C_1\dot{\cup}\ldots\dot{\cup} C_k$ and 
 $C_i\neq \emptyset$ for $i\in[k]$. For $x\in \dataset$, we write $C(x)$ 
 for the cluster $C_i$ that $x$ belongs to. 
 We say that  $x\in \dataset$ is 
  {\ipste}
if either $C(x)=\{x\}$ or
 \begin{align}\label{def_ip_stable_ineq}
\frac{1}{|C(x)|-1}
\sum_{y\in C(x)} 
d(x,y)\leq \frac{1}{|C_i|}\sum_{y\in C_i} d(x,y) 
 \end{align}
for all $i\in[k]$ with $C_i\neq C(x)$. The clustering $\mathcal{C}$ is 
an {\ipste} $k$-clustering
if every $x\in \dataset$ is {\ipste}.
For brevity, instead of 
 {\ipste} we may only say stable.
\end{definition}


\begin{figure}[t]
\centering
\begin{overpic}[scale=0.24,trim=80 80 80 80,clip]{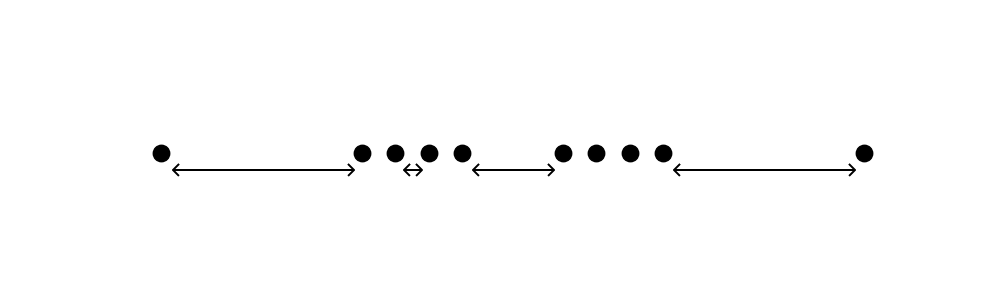}
 \put(21,1.2){$8$}
\put(51,1.2){$1$}
\put(38.3,0.6){\small $\frac{1}{3}$}
\put(81,1.2){$8$}
 \end{overpic}
 
 \vspace{2mm}
 \includegraphics[scale=0.24,trim=80 80 100 80,clip]{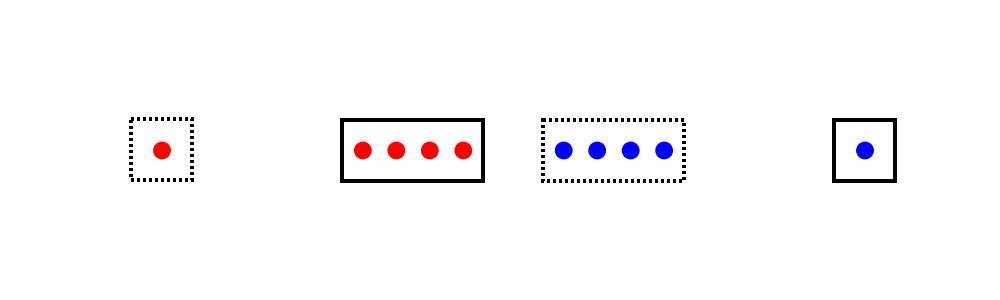}
 
 \caption{A data set 
 on the real line with more than one IP-stable clustering. \textbf{Top:} The data set and the distances between the points. 
 \textbf{Bottom:} The same data set with two 
 IP-stable 
 2-clusterings (one encoded by color: red vs blue / one encoded by frames: solid vs dotted boundary).
 }\label{figure_example_2}
\end{figure}

We 
discuss some important observations about {\ipst} 
as defined in Definition~\ref{def_ip_stable_clustering}: 
if in a clustering all clusters are well-separated and sufficiently far apart, 
then this clustering is {\ipste}. 
An example of such a scenario is provided in the left part of Figure~\ref{figure_example}. 
Hence, at least for such  simple clustering problems
with an ``obvious'' solution, 
{\ipst} does not conflict with the clustering 
goal of
partitioning 
the data set  
such that 
``data points in the same cluster are similar to each other, and data points in different clusters are dissimilar'' \citep[][p.~306]{celebi2016}.
However, there are 
also 
data sets for which no 
{\ipste} $k$-clustering exists 
(for a fixed~$k$ 
and a given distance function~$d$).\footnote{
Of course, the trivial $1$-clustering $\mathcal{C}=(\dataset)$ or  
the 
trivial $|\dataset|$-clustering that puts every 
data 
point in a singleton are 
stable, 
and for a trivial distance function~$d\equiv 0$, every clustering is stable.
}
This can even happen for Euclidean data sets and $k=2$, as the right part of Figure~\ref{figure_example} shows.
If a data set allows 
for an {\ipste} $k$-clustering, there might be more than one {\ipste} $k$-clustering. An example of this is
shown in 
Figure~\ref{figure_example_2}. This example 
also 
illustrates 
that 
{\ipst} does not necessarily work towards the 
aforementioned 
clustering 
goal.  
Indeed, in Figure~\ref{figure_example_2} the 
two 
clusters of the  clustering encoded by the frames, which is 
stable,  are not even~contiguous.  

These observations raise 
a 
number 
of questions such as: when does 
an IP-stable $k$-clustering exist? Can we efficiently decide whether an IP-stable $k$-clustering exists? 
If an IP-stable $k$-clustering exists, can we efficiently compute it? Can we minimize some (clustering) objective 
over the set of all 
IP-stable clusterings? 
How do standard clustering algorithms such as 
Lloyd's algorithm (aka $k$-means; \citealp{lloyd_algorithm}) 
or linkage clustering \citep[e.g.,][Section~22]{shalev2014understanding}
perform in terms of 
{\ipst}? 
Are there simple modifications to these 
algorithms 
in order to 
improve their stability?
In this paper, we explore some of these questions as outlined 
in Section~\ref{section_introduction}. 
%

\subsection{Related Work and Concepts}\label{sec_related_work_and_concepts}

\paragraph{Clustering Stability}
There is a large body of work on the design of efficient clustering algorithms, both in the worst case and under various stability notions \citep{awasthi2014center}. 
Some works 
have studied stability notions called ``average stability'' that are similar to our notion of IP-stability. The work of \citet{vempala2008} studies properties of a similarity function that are sufficient in order to approximately recover 
(in either a list 
model 
or a tree model) 
an unknown 
ground-truth clustering. One of the weaker properties they consider is 
the  average attraction property, 
which 
requires inequality~\eqref{def_ip_stable_ineq} with $t=1$ to hold 
for the ground-truth clustering, 
but 
with 
an additive 
gap of~$\gamma>0$ between the 
left and the right side 
of 
\eqref{def_ip_stable_ineq}. 
\citeauthor{vempala2008} show that the average attraction property is 
sufficient to successfully cluster in the list model, but with the length of the list 
being exponential in $1/\gamma$, and is not sufficient to successfully cluster in the tree model.
%
The works discussed above utilize strong forms of average stability to bypass computational barriers and recover the ground-truth clustering. We, on the other hand, focus on both this problem and the complementary question of when does such stability property even hold, either exactly or approximately. Related notions of clustering stability such as perturbation stability and approximation stability have also been studied. However, the goal in these works is to approximate a clustering objective such as $k$-means or $k$-median under stability assumptions on the optimizer of the objective \citep{ackerman2009clusterability,awasthi2012center,bilu2012stable, balcan2013clustering, balcan2016clustering,makarychev2016metric}.

Closely related to our work is the paper by  \citet{daniely2012clustering}. 
They show that if there is an unknown ground-truth clustering with cluster size at least $\alpha n$ satisfying a slightly stronger requirement than IP-stability, i.e., each point, on average, is at least a factor 3 closer to the points in its own cluster than to the points in any other cluster, then they can find an $\mathcal O(1)$-approximately IP-stable clustering. However, their algorithm runs in time $n^{\Omega(\log 1/\alpha)} = \Omega(n^{\log k})$.



\paragraph{Individual Fairness and Fairness in Clustering}
In the fairness literature, fairness notions are commonly categorized into notions of individual fairness or group fairness. The latter ensures fairness for groups of individuals (such as men vs women), whereas the former aims to ensure fairness for \emph{single} individuals. Our proposed notion of IP-stability can be interpreted as a notion of individual fairness for clustering.
Individual fairness was originally proposed by \citet{fta2012} for classification, 
requiring that similar data points (as measured by a given task-specific metric) should receive a similar prediction. 
It has also been studied in the setting of online learning \citep{joseph2016, joseph2018, gillen2018}. 
Recently, two notions of individual fairness for clustering have been proposed. The first notion, proposed by \citet{jung2020} and also studied by \citet{mahabadi2020}, \citet{chakrabarty2021better} and \citet{vakilian2022improved}, asks that every data point is somewhat close to a center, where ``somewhat'' depends on how close the data point is to its $\lceil n/k \rceil$ nearest neighbors. \citeauthor{jung2020} motivate their notion from the perspective of facility location, but it can also be seen in the context of data points that strive to be well represented by being close to a center similarly to the notion of \citet{chen2019} discussed above. Note that our proposed 
notion of IP-stability 
defines ``being well represented'' in terms of the average distance of a data point to the other points in its cluster, rather than the distance to a center, and hence is not restricted to centroid-based clustering.
%
%
The second notion,  proposed by 
\citet{anderson2020}, is analogous to the notion of individual fairness by \citet{fta2012} for classification. It considers probabilistic clustering, where each data point is mapped to a distribution over a set of centers, and requires that similar data points are mapped to similar 
distributions. Interestingly, 
\citeauthor{anderson2020} allow 
the similarity measure used to define fairness 
and the 
similarity measure 
used to evaluate clustering quality 
to 
be the same or different. However, individual fairness as introduced by \citet{fta2012} has often been 
deemed impractical due to 
requiring access to a task-specific metric \citep{ilvento2019}, and it is unclear where a fairness similarity measure that is different from the clustering similarity measure should come from. 
Note that both notions are different from our notion: (1) there exists clusterings that are IP-stable, but will achieve $\alpha = \infty$ according $\alpha$-fairness notion of \citeauthor{jung2020} and (2) the distributional property of \citeauthor{anderson2020} is satisfied with uniform distribution whereas our stability clusterings do not always exist.


%

\paragraph{Hedonic Games}
A related line of work is the class of hedonic games as a model of coalition formation \citep{dreze1980hedonic,BOGOMOLNAIA2002201,elkind2016price}. In a hedonic game the utility of a player only depends on the identity of players belonging to her coalition.  \citet{feldman-hedonic-clustering} study clustering from the perspective of hedonic games. Our work is different from theirs in the sense that in our model the data points are not~selfish~players.

\section{NP-Hardness}\label{section_np_hardness}

We show that in general metrics, the problem of deciding whether an IP-stable $k$-clustering exists is NP-hard.
For such a result it is crucial to specify how an input instance is encoded: we assume that a data set~$\dataset$ together with a distance function~$d$ is represented by the distance matrix~$(d(x,y))_{x,y\in\dataset}$. 
Under this assumption we can prove the following~theorem:

\begin{theorem}[NP-hardness of {\ipste} clustering]\label{theorem_hardness}
Deciding whether a data set~$\dataset$ together with a distance function~$d$ has an IP-stable $k$-clustering 
is NP-hard. This holds even if $k=2$ and $d$ is a metric distance.   
\end{theorem}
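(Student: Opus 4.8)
The plan is to reduce from an NP-hard graph bipartition problem to the problem of deciding whether an IP-stable $2$-clustering exists, using a distance function that takes only the two values $1$ and $2$. Such a $d$ is automatically a metric (the triangle inequality cannot fail when all nonzero distances lie in $[1,2]$), it is a genuine metric distance since $d(x,y)=0$ exactly when $x=y$, and it is trivially encodable as a distance matrix. Concretely, given a graph $G=(V,E)$ I would take $\dataset=V$ (possibly augmented by a small gadget) and set $d(x,y)=1$ if $\{x,y\}\in E$ and $d(x,y)=2$ otherwise.

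The first step is to rewrite IP-stability combinatorially. Fix a $2$-clustering $(C_1,C_2)$ with $|C_1|=p$, $|C_2|=q$, and let $\deg_S(v)$ denote the number of neighbours of $v$ in a set $S$. For $v\in C_1$ a one-line count gives $\sum_{y\in C_1\setminus\{v\}}d(v,y)=2(p-1)-\deg_{C_1}(v)$ and $\sum_{y\in C_2}d(v,y)=2q-\deg_{C_2}(v)$, so the stability inequality~\eqref{def_ip_stable_ineq} for $v$ is equivalent to
\[
\frac{\deg_{C_1}(v)}{p-1}\ \ge\ \frac{\deg_{C_2}(v)}{q}.
\]
Thus $\dataset$ admits an IP-stable $2$-clustering iff $V$ can be split into two nonempty parts so that every vertex has a neighbour-density within the rest of its own part that is at least its neighbour-density in the other part. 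If we start from an $r$-regular $G$ and restrict attention to a balanced split ($p=q=n/2$), this condition becomes, for $n$ large, ``$v$ has at least $r/2$ of its neighbours in its own part'', i.e.\ the classical \emph{Satisfactory Partition} condition, which is NP-hard (and remains so in a variant adequate for this reduction). So the skeleton is: the source instance is a yes-instance $\Leftrightarrow$ the associated $\{1,2\}$-metric has an IP-stable $2$-clustering, and the construction is clearly polynomial-time, with $k$ hard-wired to $2$.

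The step I expect to be the main obstacle is controlling the sizes of the two clusters: the clean equivalence above only matches the source problem for (near-)balanced partitions, so I must rule out every ``unintended'' IP-stable $2$-clustering — in particular heavily unbalanced ones, and ones that use a singleton cluster (which is vacuously stable by Definition~\ref{def_ip_stable_clustering}). I see two routes, and making one of them work is the technical heart of the argument. One is to attach a size-forcing gadget: two large ``anchor'' blocks, each an internal distance-$1$ clique at distance $2$ from the other, wired so that in any IP-stable $2$-clustering essentially one whole anchor lands in each part, pinning $p$ and $q$ up to lower-order terms; one then verifies that every ``cheating'' clustering makes some anchor vertex or some original vertex unstable. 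The other is to take the source graph to be a good expander, so that for any unbalanced split the smaller side's vertices have on average too few same-side neighbours to meet the density threshold $\deg_{C_1}(v)\ge (p-1)r/(n-1)$, forcing a violation; a mild ``no isolated and no universal vertex'' assumption on $G$ already disposes of the singleton-cluster cases. Once ``only (near-)balanced partitions can be IP-stable'' is established, the remaining bookkeeping with the displayed inequality closes the equivalence and yields NP-hardness, already for $k=2$ with a metric distance.
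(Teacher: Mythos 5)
Your combinatorial reformulation is correct and is a genuinely different route from the paper: with a $\{1,2\}$-valued distance built from a graph $G$, inequality \eqref{def_ip_stable_ineq} for $v\in C_1$ is exactly $\deg_{C_1}(v)/(p-1)\ \ge\ \deg_{C_2}(v)/q$, such a $d$ is automatically a metric, and for a balanced split this is implied by the Satisfactory Partition condition, so the forward direction of a reduction from Balanced Satisfactory Partition would go through. (The paper instead reduces from a restricted 3-SAT via a hand-crafted metric on the points $True,False,\star,\infty,C_1,\dots,C_n,x_1,\neg x_1,\dots$ with about a dozen tuned distance values.)

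The proposal nevertheless has a genuine gap, and it is precisely the step you flag but do not carry out: ruling out every unintended IP-stable $2$-clustering, i.e.\ showing that an IP-stable clustering of your instance can only arise from a (near-)balanced satisfactory partition of the source graph. Neither of your two routes is more than a plan, and each hides a real obstruction rather than bookkeeping. For the expander route you need, on the \emph{same} instance family, both (i) no unbalanced split is IP-stable and (ii) the source partition problem remains NP-hard; expansion buys (i) but there is no known (and no obviously true) NP-hardness of (balanced) Satisfactory Partition restricted to $r$-regular expanders, and your density condition only approximates the satisfactory condition up to the $p-1$ versus $q$ slack, so the hardness source must also tolerate that. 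For the gadget route, the anchor cliques must be wired to the original vertices inside a $\{1,2\}$ metric so that every anchor vertex is itself stable in the intended clustering, while every cheating clustering (anchors split across clusters, both anchors on one side, unbalanced splits, singleton clusters) makes some vertex unstable; that case analysis is the entire content of such a proof and is absent. This is exactly where the paper's proof spends nearly all of its effort: after the easy direction, it eliminates, case by case (eight configurations of $True,False,\star,\infty$ plus the clause/literal constraints), every partition not of the intended shape before reading off a satisfying assignment. Until you supply the analogous argument --- a concrete gadget with a complete stability verification, or a hardness proof for your structurally restricted source instances --- the reduction is a plausible sketch, not a proof of Theorem~\ref{theorem_hardness}.
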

Due to space limitation, the proof of this theorem and all other missing proofs are deferred to the appendix.

The proof is via a reduction from a variant of 3-SAT where the number of clauses is equal to the number of variables and each variable occurs in at most three clauses. 
Unless $\text{P}=\text{NP}$, Theorem~\ref{theorem_hardness} implies that 
for general data sets, even when being guaranteed that an IP-stable $k$-clustering exists, there cannot be any efficient algorithm for computing such an {\ipste} clustering. 
However, as with all NP-hard problems, there are two possible remedies. 
The first remedy is to look at approximately IP-stable clusterings:

\begin{definition}
[Approximate IP-stability]
\label{def_ip_stable_clustering_APPROX}
 Let $\mathcal{C}=(C_1,\ldots, C_k)$ be a $k$-clustering of $\dataset$ and for $x\in \dataset$ let $C(x)$ 
 be the cluster $C_i$ that $x$ belongs to. 
 We say that for some $t\geq 1$, $x\in \dataset$ is 
 $t$-approximately {\ipste}
if either $C(x)=\{x\}$ or
 \begin{align}\label{def_ip_stable_ineq_APPROX}
\frac{1}{|C(x)|-1}
\sum_{y\in C(x)} 
d(x,y)\leq \frac{t}{|C_i|}\sum_{y\in C_i} d(x,y) 
 \end{align}
for every $C_i\neq C(x)$. The clustering $\mathcal{C}$ is 
$t$-approximately {\ipste}
if every $x\in \dataset$ is $t$-approximately {\ipste}.
\end{definition}



An alternative way of defining 
approximate IP-stability, which is explored in Section~\ref{sec:exclusion-approx}, would be to allow a violation of inequality \eqref{def_ip_stable_ineq} in Definition~\ref{def_ip_stable_clustering} for a certain number of  points. 
In our experiments in Section~\ref{section_experiments} we will actually consider both these notions of approximation. 


The second remedy is to restrict our considerations to data sets with some special structure. This is what we do 
in Section~\ref{sec:special-metrics} where we consider the 1-dimensional Euclidean metric and tree metrics. 

To complement Theorem~\ref{theorem_hardness}, 
we show theoretical lower bounds for the standard clustering algorithms $k$-means++, $k$-center, and single linkage. 
Their violation of IP-stability can be arbitrarily large. 

\begin{restatable}{theorem}{badexamples}
\label{thm:hard_instances}
For any $\alpha>1$, there exist separate examples where the clusterings produced by $k$-means++, $k$-center, and single linkage algorithms are $t$-approximately IP-stable only for $t\ge \alpha$. 
\end{restatable}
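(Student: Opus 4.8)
The plan is to construct, for each of the three algorithms separately, a concrete family of point sets (parameterized so that the IP-stability violation grows without bound) on which the algorithm is forced by its own selection rule to output a clustering that splits a tightly packed group of points away from a point that strongly prefers them, or merges a point into a cluster whose other members are all far away. Since the three algorithms have very different behaviors, I would treat them in three short independent lemmas and then combine them into Theorem~\ref{thm:hard_instances}.

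First, for single linkage: I would place a long ``chain'' of points with small consecutive gaps, plus one or two outlier points positioned so that the greedy nearest-pair merging of single linkage first glues the whole chain together and only later is forced to cut it at a point that leaves an individual point $x$ stuck in a singleton-ish or small cluster whose average distance to its assigned cluster is $\ge \alpha$ times its average distance to the large chain cluster. The key is that single linkage's merge order is determined purely by the minimum inter-cluster distance, so by controlling consecutive gaps I can force exactly which cut survives when $k=2$ clusters remain; then I just have to verify inequality~\eqref{def_ip_stable_ineq_APPROX} fails for the chosen $t<\alpha$ by a direct averaging computation. For $k$-center: I would exploit that $k$-center (e.g., the greedy $2$-approximation, or even the optimal solution) only cares about the maximum radius, so I can put a dense blob and a far singleton; the optimal $2$-center solution isolates the far point, but a point on the boundary of the blob would vastly prefer to be grouped with the other side of the blob — again a one-line averaging argument, tuned so the ratio exceeds $\alpha$. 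For $k$-means++: here the randomness matters, so I would design an instance (several tight clusters of equal size, with one pair of ``twin'' tight groups much closer to each other than to the rest) where, with probability bounded away from zero over the $D^2$-seeding, the initial centers land so that Lloyd's iterations converge to a partition merging one twin group's stragglers with a distant group; since the statement only requires existence of \emph{an} instance where the produced clustering is $t$-approximately IP-stable only for $t \ge \alpha$, I can either argue about the most likely outcome or simply note that with positive probability the output has unbounded violation, which suffices to contradict any claim of a finite worst-case (or expected) guarantee.

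The steps, in order: (1) fix $\alpha>1$ and set an auxiliary scale parameter (say a gap ratio $\delta = \delta(\alpha)$); (2) write down the single-linkage instance, trace the merge sequence to pin down the final $2$-clustering, and verify the violation is $\ge \alpha$ at a designated point; (3) do the analogous construction and verification for $k$-center using the max-radius optimality; (4) do the $k$-means++ construction, and argue via the $D^2$-sampling probabilities (and Lloyd convergence) that the bad partition arises with positive probability, hence the worst-case instance exists; (5) assemble the three cases. For each case the verification of~\eqref{def_ip_stable_ineq_APPROX} reduces to comparing two explicit averages of distances, which is routine once the geometry is fixed.

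I expect the $k$-means++ case to be the main obstacle, because unlike single linkage and $k$-center — whose outputs are essentially deterministic functions of the metric that I can force exactly — $k$-means++ involves randomized seeding followed by Lloyd's local search, so I must design the instance robustly enough that \emph{some} reachable fixed point of Lloyd's updates has unbounded IP-violation and that this fixed point is actually selected with nonnegligible probability (or is the unique/typical outcome). Controlling Lloyd's basin of attraction from the seeding distribution is the delicate part; the single-linkage and $k$-center constructions, by contrast, should be short and their verifications purely arithmetic.
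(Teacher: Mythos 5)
Your three-part plan mirrors the paper's structure (Appendix~\ref{sec:hard_instances}): separate constructions for each algorithm, deterministic forcing for single linkage and $k$-center, and a probabilistic seeding argument for $k$-means++. The single-linkage case is essentially the paper's proof: a chain with controlled gaps whose merge order is forced, followed by a direct averaging computation (the paper's $v_2$ sits at the boundary of the long chain cluster and prefers the nearby singleton $\{v_1\}$, giving violation $(n-1)/4$). One caveat on your wording: the violating point cannot be ``stuck in a singleton-ish or small cluster,'' since a singleton is stable by Definition~\ref{def_ip_stable_clustering}, and because single linkage is Kruskal/MST-based, any point in a small tight cluster has cross-cluster distance at least the last (largest) merge distance, so its internal average cannot exceed its external average. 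The violator must live in the internally spread-out chain cluster and prefer the small one, i.e., exactly the paper's orientation.

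The $k$-center case has a genuine gap: a dense blob plus a far singleton does not work. Under either greedy (Gonzalez) or optimal $2$-center, the output on such an instance is blob versus singleton, and every blob point's average distance to its own cluster is tiny compared to its distance to the far point, so the clustering is perfectly IP-stable and no choice of parameters makes the ratio exceed $\alpha$. The paper needs a different mechanism, namely misassignment by the nearest-center rule: two $\epsilon$-balls $B_1,B_2$ and two designated points $c_1,c_2$ placed so that greedy picks $c_1$ then $c_2$, and the assignment ``crosses over'' --- all of $B_2$ and exactly one point $p\in B_1$ go to $c_1$, while all of $p$'s $\epsilon$-close neighbors on $B_1$ go to $c_2$. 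Then $p$'s own cluster consists almost entirely of far points while the other cluster contains all its near neighbors, yielding violation $\Omega(n)$ (Appendix~\ref{sec:hard_instance_kcenter}). Your sketch is missing this crossing idea, and without it the verification of~\eqref{def_ip_stable_ineq_APPROX} you promise cannot go through. Similarly, for $k$-means++ you correctly flag the hard part (controlling Lloyd's outcome from the $D^2$-seeding) but do not resolve it; the paper does so with two specific ideas you would need: (a) a four-point gadget $\{z,z',v,u\}$ in which the bad seeding $\{v,u\}$ is an \emph{exact fixed point} of Lloyd's updates (the centroid of $\{z,z',v\}$ is $v$), so no dynamics analysis is needed and the violation at $v$ equals $\alpha$ by construction (Claim~\ref{clm:unstable-factor}); and (b) replication of this gadget $n_\alpha$ times at mutual distance $D_\alpha$ with $k_\alpha=\frac{13}{12}n_\alpha$ centers, so that by stochastic domination and Hoeffding-type bounds the seeding places one center per block and then, with constant probability, places exactly $\{v_j,u_j\}$ in some block and nothing more (Lemmas~\ref{lem:iter-1} and~\ref{lem:iter-2}). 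Without an argument of this kind, your claim that the bad Lloyd fixed point ``is actually selected with nonnegligible probability'' remains an assertion rather than a proof, and relying on merely positive (possibly vanishing) probability would also weaken the statement compared to the paper's constant-probability guarantee.
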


\section{Approximation Algorithms for IP-Stability}
\label{sec:approx-stability}
In this section, we provide two algorithms for finding approximately IP-stable clustering. Our first algorithm finds a partially IP-stable clustering and the second one finds an approximately IP-stable clustering if the input point set admits a clustering that satisfies a set of requirements that are slightly stronger than the requirements of IP-stability.

\subsection{Partially Stable Clustering}\label{sec:exclusion-approx}

Here, we show that if we only want to cluster $(1-\varepsilon)$-fraction of the points, it is possible to find a $\mathcal O(\frac{\log^2 n}{\varepsilon})$-approximation for IP-stable $k$-clustering in any metric space.

We first define {\em hierarchically well-separated trees} (HSTs). 

\begin{definition}[\citet{Bartal96}] 
    A $t$-hierarchically well-separated tree ($t$-HST) is defined as a rooted weighted tree with the following properties:
    \begin{itemize}
        \item The edge weight from any node to each of its children is the same.
        \item The edge weight along any path from the root to a leaf are decreasing by a factor of at least $t$.
    \end{itemize}
\end{definition}

\citet{FRT04} shows that it is possible to have a tree embedding where distortion is $\mathcal O(\log n)$ in expectation. Recently, \citet{haeupler2021tree} proposes an algorithm that give a worst-case distortion, which they call \emph{partial tree embeddings} if it is allowed to remove a constant fraction of points from the embedding. While their work concerns hop-constrained network design, we provide a simplified version of their result which is useful in our context.

\begin{theorem}[\citet{haeupler2021tree}]
\label{thm:main_tree_embedding}
Given weighted graph $G=(V,E,w)$, $0 < \varepsilon < \frac 1 3$ and root $r \in V$, there is a polynomial-time algorithm which samples from a distribution over partial tree embeddings whose trees are $2$-HST and rooted at $r$ with exclusion probability\footnote{Let $\mathcal D$ be distribution of partial tree metrics of $G$. $\mathcal D$ has \emph{exclusion probability} $\epsilon$ if for all $v \in V$, we have $\text{Pr}_{d \sim \mathcal D}[v~\in~V_d]~\geq~1-\epsilon$} $\varepsilon$ and worst-case distance stretch $\mathcal O(\frac{\log^2 n}{\varepsilon})$.
\end{theorem}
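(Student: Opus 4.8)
The plan is to build the partial tree embedding as a truncated, FRT-style hierarchical low-diameter decomposition of $G$ and to convert its \emph{expected} $\mathcal{O}(\log n)$ distortion into a \emph{worst-case} guarantee by deleting the few vertices that get ``cut'' at a scale much coarser than their distances warrant. First I would fix the decomposition exactly as in \citet{FRT04}: rescale so the minimum nonzero distance is $1$, and (via the standard reduction for tree embeddings) assume the aspect ratio is $\mathrm{poly}(n)$, so that there are only $L=\mathcal{O}(\log n)$ dyadic scales $2^0,2^1,\dots,2^{L}$. A uniformly random permutation of $V$ with $r$ placed first, together with one random radius per scale, induces a laminar family of partitions $\mathcal{P}_{L}\succeq\cdots\succeq\mathcal{P}_0$ in which every cluster of $\mathcal{P}_i$ has diameter at most $2^i$; assigning length $\Theta(2^i)$ to the edge from each scale-$i$ cluster to its scale-$(i{-}1)$ children yields a tree $T$ that is a $2$-HST rooted at the node corresponding to $V$ (equivalently, at $r$). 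The only property of this construction I need is the padded-decomposition bound: for every vertex $v$, every scale $i$, and every $\rho\in(0,1)$,
\[
\Pr\!\big[\,B(v,\rho\,2^i)\not\subseteq (\text{cluster of }v\text{ in }\mathcal{P}_i)\,\big]\;=\;\mathcal{O}(\rho\log n).
\]

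Next, fix a ``padding parameter'' $\beta:=c\,L\log n/\varepsilon=\mathcal{O}(\log^2 n/\varepsilon)$ for a large enough constant $c$, and \emph{exclude} a vertex $v$ from $V_d$ iff at some scale $i$ the ball $B(v,2^i/\beta)$ is not contained in $v$'s scale-$i$ cluster. Applying the padded-decomposition bound with $\rho=1/\beta$ and a union bound over the $L$ scales gives $\Pr[v\text{ excluded}]\le L\cdot\mathcal{O}((\log n)/\beta)\le\varepsilon$, which is the required exclusion probability. For the stretch, non-contraction is immediate from the separation property: if $u,v$ first split at scale $j$ then $d_G(u,v)\le 2^{j+1}$ while $d_T(u,v)=\Theta(2^j)$, so $d_T(u,v)\ge\Omega(d_G(u,v))$, and rescaling $T$ by the hidden constant makes this hold with constant $1$. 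For the upper bound, take any $v\in V_d$ and any $u$, and set $d:=d_G(u,v)$, $i^\star:=\lceil\log_2(\beta d)\rceil$; then $2^{i^\star}/\beta\ge d$, so $u\in B(v,2^{i^\star}/\beta)\subseteq$ (scale-$i^\star$ cluster of $v$), hence $u$ and $v$ share a cluster at every scale $\ge i^\star$ and $d_T(u,v)=\mathcal{O}(2^{i^\star})=\mathcal{O}(\beta d)=\mathcal{O}\big((\log^2 n/\varepsilon)\,d_G(u,v)\big)$. This is the claimed worst-case distance stretch, and it holds for every pair with at least one endpoint in $V_d$, in particular on all of $V_d\times V_d$.

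The crux — the only place this is more than a mechanical use of \citet{FRT04} — is the tension between the two guarantees: the exclusion probability must be spread over all $L=\Theta(\log n)$ scales, which inflates $\beta$ (and hence the stretch) by exactly that factor, so the $\log^2 n$ is intrinsic to this approach and decomposes as (\#scales)$\times$(FRT padding quality). Keeping the union bound down to a single extra $\log n$ factor is precisely why one first reduces to aspect ratio $\mathrm{poly}(n)$; I would treat that reduction, together with the facts that the FRT tree can be taken rooted at the prescribed $r$ and that uniformly rescaling edge lengths preserves the $2$-HST property, as routine points to verify in the write-up. A shorter alternative is to invoke \citet{haeupler2021tree} verbatim and specialize their hop-constrained construction to the hop-unconstrained setting, but the self-contained argument above also pins down the constants that later sections rely on.
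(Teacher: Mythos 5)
Your proposal is correct in outline, but it is worth noting that the paper does not prove this statement at all: Theorem~\ref{thm:main_tree_embedding} is imported verbatim (in simplified form) from \citet{haeupler2021tree} and used as a black box, so your self-contained reconstruction is by definition a different route. What you do — build the FRT/CKR laminar hierarchy, invoke the per-scale padding bound $\Pr[B(v,\rho 2^i)\not\subseteq C_i(v)]=\mathcal O(\rho\log n)$, exclude any vertex that fails padding at some scale with $\rho=1/\beta$, and union-bound over the $\mathcal O(\log n)$ scales to get $\beta=\Theta(\log^2 n/\varepsilon)$ — is essentially the mechanism behind the cited result, and your stretch and non-contraction arguments are sound; this buys the reader a transparent explanation of where the $\log^2 n$ comes from (number of scales times padding quality), which the paper's citation hides. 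Two points you defer are not as routine as you suggest, however. First, the reduction to $\mathrm{poly}(n)$ aspect ratio is where the real work lies for a \emph{worst-case} guarantee: with aspect ratio $\Phi$ your union bound runs over $\Theta(\log\Phi)$ scales, and the standard expected-distortion fix (telescoping the refined CKR bound) does not directly apply to a per-scale failure/union-bound argument, so you would need to either carry out a genuine aspect-ratio reduction or argue that only $\mathcal O(\log n)$ scales are ``dangerous'' for each vertex; \citet{haeupler2021tree} handle this carefully. Second, ``rooted at $r$'' in their definition is stronger than rooting the FRT hierarchy arbitrarily — their partial tree embeddings identify the root with the vertex $r$ and preserve distances to it — though for this paper's application (Claims~\ref{clm:hst-extend} and~\ref{claim:main_clustering_hst} only use leaf-to-leaf distances in a $2$-HST) your weaker reading suffices. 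With those two caveats addressed, or with the theorem simply cited as the paper does, your argument matches the intended guarantee.
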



\begin{claim}
\label{clm:hst-extend}
Without loss of generality, we can assume the following two properties from the construction in \Cref{thm:main_tree_embedding}: (1) $V$ is the set of leaves of $T$, and (2) $\depth(u) = \depth(v)$ for any $u,v \in V$.
\end{claim}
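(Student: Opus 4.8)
The plan is to argue that both normalizations can be achieved by routine modifications to any $2$-HST produced by \Cref{thm:main_tree_embedding} without affecting the distance stretch or the exclusion probability, so the claim follows by passing to the modified distribution.

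\textbf{Step 1: Making $V$ the set of leaves.} In a $2$-HST returned by the algorithm, some vertices of $V$ may be internal nodes. For each such vertex $v$, I would attach a fresh leaf $v'$ as a child of $v$, relabel the internal copy as an auxiliary (non-$V$) node, and move the identity of $v$ to $v'$. The new edge $(v,v')$ needs a weight; to preserve the $t$-HST property I give it weight equal to $1/t$ (here $t=2$) times the weight of the edge from $v$ to its existing children, or any value small enough to keep the geometric-decay condition along every root-to-leaf path. Distances between original points change by at most the sum of these two tiny appended edges, which is a constant factor of the smallest edge on the relevant paths; one checks this is absorbed into the $\mathcal O(\log^2 n/\varepsilon)$ stretch (we only ever \emph{increase} distances by a bounded factor, and the lower bound $d_T(u,v)\ge d_G(u,v)$ is maintained since we never shorten paths). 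The exclusion probability is unaffected because we never remove a point that was present.

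\textbf{Step 2: Equalizing leaf depths.} After Step 1 all points of $V$ are leaves but possibly at different depths. For any leaf at depth less than $\depth_{\max}$, I would repeatedly subdivide/extend: replace the leaf $v$ (with parent edge of weight $w$) by a path of auxiliary nodes ending in a leaf carrying $v$'s identity, choosing the appended edge weights to continue the factor-$\ge 2$ geometric decrease (e.g.\ halving at each step). Since the weights shrink geometrically, the total added length is at most the weight of $v$'s original parent edge, again a constant-factor perturbation that is swallowed by the stretch bound, and distances are only increased so the non-contraction property still holds. This makes all leaves in $V$ sit at the common depth $\depth_{\max}$. Combining, the resulting tree is still a $2$-HST, still has exclusion probability $\varepsilon$, still has worst-case stretch $\mathcal O(\log^2 n/\varepsilon)$, and now satisfies (1) and (2).

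\textbf{Main obstacle.} The only delicate point is bookkeeping the effect of the appended edges on the distance stretch: one must confirm that the extra length added to any pair's tree-distance is bounded by a constant times their original tree-distance (so the multiplicative stretch changes by at most a constant factor) and that no distance is ever decreased (so the embedding remains non-contracting, as required for it to be a valid embedding). Because in a $2$-HST the edge weights decrease geometrically down any root-to-leaf path, the total appended weight on a path is a geometric series dominated by its first term, which is itself at most the weight of an edge already on that path; this yields the needed bound. I would state this as the one computation to verify and otherwise treat the construction as straightforward, hence the phrase ``without loss of generality'' in the claim.
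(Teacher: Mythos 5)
Your proposal is correct and matches the paper's own argument: the paper likewise hangs each point of $V$ off a downward path of auxiliary nodes with geometrically decaying edge weights (bringing it to a common leaf depth), and bounds the total appended length by the weight of the edge from $v$ to its parent, so every pairwise distance grows by at most a constant factor (the paper gets $d_{T'}(u,v)\leq 3\,d_T(u,v)$), never shrinks, and the exclusion probability is untouched. Your two-step presentation (first leafify, then equalize depths) versus the paper's single augmentation is only a cosmetic difference.
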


Our remaining task is to find an IP-stable clustering of  \emph{leaves} 
(according to the tree metric distance). If we can do that, then it follows that we have an $\mathcal O(\log^2{(n)}/\varepsilon)$-approximately IP-stable $k$-clustering.

\begin{claim}
\label{claim:main_clustering_hst}
    There exists a $k$-clustering $\mathcal C = (C_1,\ldots, C_k)$ of leaves on $t$-HST for any $t \geq 2$ such that $\mathcal C$ is IP-stable for any leaf node. Moreover, for $u,v \in C_i$ and $w \in C_j$, $d_T(u,v) \leq d_T(u,w)$.
\end{claim}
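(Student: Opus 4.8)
The plan is to prove the stronger ``moreover'' inequality, since IP-stability follows from it for free: if $d_T(u,v)\le d_T(u,w)$ for all $u,v\in C_i$ and $w\in C_j$, then $\max_{v\in C_i}d_T(u,v)\le\min_{w\in C_j}d_T(u,w)$, so the average of $d_T(u,\cdot)$ over $C_i$ is at most its average over $C_j$, which is exactly \eqref{def_ip_stable_ineq} (singletons being vacuously stable). Hence it suffices to build a $k$-clustering of the leaves in which, from the vantage point of every leaf, every leaf in its cluster is at least as close as every leaf outside. I will use the level structure from \Cref{thm:main_tree_embedding,clm:hst-extend}: the HST we work with is \emph{level-uniform}, i.e.\ all leaves lie at a common depth and the distance from a leaf to an ancestor depends only on that ancestor's level, a value I call $W_\ell$. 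Then $d_T(u,v)=2W_\ell$ exactly when $\mathrm{LCA}(u,v)$ is at level $\ell$; since edge weights strictly decrease (by a factor $\ge t$) down any root-to-leaf path, $W_\ell$ is strictly decreasing in $\ell$. Consequently every leaf-leaf distance equals $2W_\ell$ for the level $\ell$ at which the two leaves separate, all pairwise distances inside a level-$\ell$ subtree are $\le 2W_\ell$, and the distance from a leaf inside that subtree to anything outside it is $\ge 2W_{\ell-1}\ge 2W_\ell$.

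For the construction, for each level $\ell$ let $p_\ell$ be the number of level-$\ell$ nodes that are ancestors of at least one leaf; then $p_0=1$, $p_L=n\ge k$, and $p_\ell$ is nondecreasing, so there is a least level $\ell^\star$ with $p_{\ell^\star}\ge k$ (if $k=1$ output the trivial clustering). Start from the partition of the leaves by their level-$(\ell^\star-1)$ ancestors, which has $p_{\ell^\star-1}<k$ blocks, and refine blocks one at a time: replacing the block of a level-$(\ell^\star-1)$ node $a$ by the blocks of the children of $a$ raises the block count by $(\#\mathrm{children}(a))-1$. Process the blocks greedily, and on the single block where a full refinement would exceed $k$, merge the children of that node into exactly the number of groups needed to land on $k$ and stop; since full refinement to level $\ell^\star$ gives $p_{\ell^\star}\ge k$ blocks, this terminates with exactly $k$ blocks. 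The resulting clusters come in three shapes: (a) the leaf set of a level-$(\ell^\star-1)$ node; (b) the leaf set of a level-$\ell^\star$ node; (c) the union of the leaf sets of several level-$\ell^\star$ children of one common level-$(\ell^\star-1)$ node $a$.

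It remains to verify ``moreover''. Fix $u$ in a cluster $C$. If $C$ has shape (a) or (b) and is rooted at a node of level $\ell\in\{\ell^\star-1,\ell^\star\}$, then $\max_{v\in C}d_T(u,v)\le 2W_\ell$, while any $w\notin C$ has $\mathrm{LCA}(u,w)$ at level $\le\ell-1$, so $d_T(u,w)\ge 2W_{\ell-1}\ge 2W_\ell\ge\max_{v\in C}d_T(u,v)$. If $C$ has shape (c), then for $v\in C$ either $v$ shares $u$'s level-$\ell^\star$ subtree (so $d_T(u,v)\le 2W_{\ell^\star}<2W_{\ell^\star-1}$) or $v$ lies under a different child of $a$ (so $d_T(u,v)=2W_{\ell^\star-1}$), giving $\max_{v\in C}d_T(u,v)\le 2W_{\ell^\star-1}$; and any $w\notin C$ lies either under another level-$\ell^\star$ child of $a$ (so $d_T(u,w)=2W_{\ell^\star-1}$) or outside $T_a$ (so $d_T(u,w)\ge 2W_{\ell^\star-2}\ge 2W_{\ell^\star-1}$), hence $d_T(u,w)\ge\max_{v\in C}d_T(u,v)$ in every case. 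This establishes ``moreover'' for all leaves, and with it IP-stability.

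I expect the delicate point to be the shape-(c) clusters. Merging sibling subtrees is harmless only because the tree is level-uniform: from any leaf $u$, every leaf hanging below a different child of $a$ is at exactly the distance $2W_{\ell^\star-1}$, which coincides with $u$'s distance to the nearest leaf in a sibling cluster, so no leaf can strictly prefer a sibling cluster over its own. (Without level-uniformity one can exhibit $t$-HSTs admitting no such $k$-clustering, essentially because leaves inside a subtree then lie at varying distances from its root, so a ``shallow'' leaf in a merged cluster may be closer to a sibling cluster than to a ``deep'' co-cluster leaf.) The remaining thing to pin down carefully is the counting argument that the greedy refinement stops at exactly $k$ blocks, which is the only reason a merged cluster has to be allowed at all.
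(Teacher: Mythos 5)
Your proposal is correct and follows essentially the same route as the paper's proof: pick the level at which the number of subtrees straddles $k$, refine node by node into child subtrees with a merged group of siblings at the boundary, and verify stability by showing that for every leaf the maximum within-cluster distance (twice the distance to the cluster's rooting ancestor) is at most the minimum cross-cluster distance, using that all leaves sit at equal depth in the HST. The only differences are cosmetic — you derive IP-stability from the ``moreover'' inequality rather than alongside it, and you allow several merged sibling groups where the paper keeps a single remainder attached to the boundary node — and your explicit remark that the argument needs per-level uniform leaf-to-ancestor distances is an assumption the paper's own proof also makes (implicitly, via $d_T(u,v)=2d_T(u,x)$).
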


By combining the tree embedding result and our clustering on HSTs we have the following theorem.

\begin{theorem}
\label{thm:main_partial_tree}
    Let $(V,d)$ be a metric. There is a randomized, polynomial time algorithm that produces a clustering $\mathcal{C}=(C_1,\ldots, C_k)$ for $V' \subseteq V$, where $V'$ is taken from $V$ with exclusion probability $\varepsilon$ such that,
    for any node $u~\in~C_i, j\neq~i$, $\overline{d}(u,C_i)~\leq~\mathcal O(\frac{\log^2 n}{\varepsilon}) \overline{d}(u, C_{j })$
    \footnote{We let $\overline{d}(u,C) = \sum_{v \in C} \frac{d(u,v)}{|C \setminus\{u\}|}$ be the average distance from $u$ to cluster $C$. If $|C| = \{u\}$, then $\overline{d}(u,C) = 0$.}.
\end{theorem}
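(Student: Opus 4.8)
The plan is to obtain the clustering by composing the three ingredients established just above: sample a partial tree embedding via \Cref{thm:main_tree_embedding}, normalize it with \Cref{clm:hst-extend}, and cluster its leaves with \Cref{claim:main_clustering_hst}; then transfer the tree-metric IP-stability back to $d$ at the cost of the worst-case stretch. Concretely, present $(V,d)$ as the weighted complete graph $G=(V,\binom{V}{2},d)$, whose shortest-path metric is $d$ since $d$ obeys the triangle inequality. Run the polynomial-time sampler of \Cref{thm:main_tree_embedding} on $G$ with parameter $\varepsilon$ and an arbitrary root $r$; this returns a $2$-HST $T$ on a random subset $V'\subseteq V$ with $\Pro[v\in V']\ge 1-\varepsilon$ for every $v\in V$, and with worst-case stretch $\alpha:=\mathcal O(\log^2 n/\varepsilon)$, i.e.\ $d(u,v)\le d_T(u,v)\le \alpha\, d(u,v)$ for all $u,v\in V'$ (here $n=|V|\ge|V'|$, so $\alpha$ is in terms of the original $n$). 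By \Cref{clm:hst-extend} we may assume $V'$ is exactly the leaf set of $T$ and all leaves lie at a common depth, so \Cref{claim:main_clustering_hst} with $t=2$ yields, in polynomial time, a $k$-clustering $\mathcal C=(C_1,\dots,C_k)$ of $V'$ that is IP-stable with respect to $d_T$. Output $\mathcal C$.

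It then remains to check the approximation guarantee with respect to $d$. Fix $u\in C_i$ and $j\ne i$; if $C_i=\{u\}$ then $\overline d(u,C_i)=0$ and we are done, so assume $|C_i|\ge 2$. Writing $\overline{d_T}$ for the average-distance operator under the tree metric, we bound the numerator using $d\le d_T$, apply IP-stability of $\mathcal C$ in $T$ (which is exactly inequality~\eqref{def_ip_stable_ineq} read in $d_T$), and bound the denominator using $d_T\le\alpha\, d$:
\begin{align*}
\overline d(u,C_i)\;\le\;\overline{d_T}(u,C_i)\;\le\;\overline{d_T}(u,C_j)\;\le\;\alpha\cdot\overline d(u,C_j).
\end{align*}
This is the claimed bound $\overline d(u,C_i)\le\mathcal O(\log^2 n/\varepsilon)\,\overline d(u,C_j)$. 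The running time is polynomial since each of the three stages is; the only randomness is the draw of $(T,V')$, and conditioned on it both the clustering and the displayed inequality are deterministic, so the exclusion-probability statement is inherited verbatim from \Cref{thm:main_tree_embedding}.

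Since \Cref{clm:hst-extend} and \Cref{claim:main_clustering_hst} are granted, no substantive obstacle remains; the only points needing care are (i) aligning the two directions of the stretch with the two sides of the IP-stability inequality — the numerator needs the non-contraction $d\le d_T$ while the denominator needs the upper bound $d_T\le\alpha\,d$ — and (ii) noting that re-measuring distances by $d_T$ does not change which cluster $u$ belongs to, since $\mathcal C$ is a fixed partition of $V'$. The ``moreover'' conclusion of \Cref{claim:main_clustering_hst}, that intra-cluster tree distances are at most inter-cluster ones, is not needed here; it is recorded for later use.
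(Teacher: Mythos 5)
Your proposal is correct and follows essentially the same route as the paper's proof: embed $(V,d)$ into a $2$-HST via \Cref{thm:main_tree_embedding} (normalized by \Cref{clm:hst-extend}), cluster the leaves IP-stably in the tree metric via \Cref{claim:main_clustering_hst}, and transfer stability back through the chain $\overline d(u,C_i)\le\overline{d}_T(u,C_i)\le\overline{d}_T(u,C_j)\le\mathcal O(\log^2 n/\varepsilon)\,\overline d(u,C_j)$. Your additional remarks (the singleton case, the directions of the stretch, and inheriting the exclusion probability) only make explicit what the paper leaves implicit.
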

\begin{proof}
    We use Theorem~\ref{thm:main_tree_embedding} to embed $(V,d)$ into a $2$-HST $(V', T)$. We then apply Claim~\ref{claim:main_clustering_hst} to produce $(C_1,\ldots, C_k)$ that is IP-stable in $T$. Since $T$ is a partial tree embedding with worst-case distortion $\mathcal O(\frac{\log^2n}{\varepsilon})$, it follows that $\overline{d}(u,C_i)~\leq~\overline{d}_T(u,C_i)~\leq~\overline{d}_T(u,C_j)~\leq~\mathcal O(\frac{\log^2 n}{\varepsilon})\overline{d}(u,C_j)$.
\end{proof}

\subsection{Instances with Stable Clustering}\label{sec:underlying_stable_clustering}

Next, we show an algorithm that finds an approximately IP-stable clustering, if there is a {\em well-separated} underlying clustering.
Let us first define a well-separated clustering. A similar notion is also defined by~\citet{daniely2012clustering}. 

\begin{definition}
[$(\alpha,\gamma)$-clustering]
 Let $\mathcal{C}=(C_1, \ldots, C_k)$ be a $k$-clustering of $\dataset$, that is 
 $\dataset=C_1\dot{\cup}\ldots\dot{\cup} C_k$ and 
 $C_i\neq \emptyset$ for $i\in[k]$. For $x\in \dataset$, we write $C(x)$ 
 for the cluster $C_i$ that $x$ belongs to. 
 We say that  $\mathcal{C}$ is $(\alpha,\gamma)$-clustering if
 \begin{enumerate}
     \item For all $C_i$, $|C_i| \geq \alpha \cdot n$, where $n = |\dataset|$, and
     \item For all $i\neq j$ and $x \in C_i$, $\overline{d}(x,C_j) \geq \gamma \cdot \overline{d}(x,C_i)$.
 \end{enumerate}
\end{definition}
 
 
 \begin{lemma}
 [\citet{daniely2012clustering}]
 \label{lem:main_danielystructure}
 Let $\mathcal{C} = (C_1, \ldots, C_k)$ be an $(\alpha,\gamma)$-clustering, and let $i \neq j$. Then
 \begin{enumerate}
     \item For every\footnote{\citet{daniely2012clustering} use the term \emph{almost every} to avoid sets of measurement zero.}
     $x \in C_i, y \in C_j$, $\frac{\gamma-1}{\gamma} \ol{d}(y,C_i) \leq d(x,y) \leq \frac{\gamma^2+1}{\gamma(\gamma-1)} \ol{d}(y,C_i)$, and
     \item For every $x,y \in C_i$, $d(x,y) \leq \frac{2}{\gamma-1} \ol{d}(x,C_j)$.
 \end{enumerate}
 \end{lemma}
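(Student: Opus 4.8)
The plan is to derive all three inequalities from exactly two ingredients: the triangle inequality for $d$ (here $d$ is a metric) and part~(2) of the definition of an $(\alpha,\gamma)$-clustering, which I will use in the form: for any point $p\in C_a$ and any index $b\neq a$ one has $\ol d(p,C_a)\le\tfrac1\gamma\,\ol d(p,C_b)$. The size condition~(1) is never needed. Fix $i\neq j$ and write $D:=d(x,y)$; for part~1 we take $x\in C_i$ and $y\in C_j$, while for part~2 we take $x,y\in C_i$. Throughout, $\ol d(x,C_i),\ol d(x,C_j),\ol d(y,C_i),\ol d(y,C_j)$ denote the four relevant average distances.

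The preliminary step is to record the ``one-hop'' inequalities obtained by averaging the triangle inequality $d(a,z)\le d(a,b)+d(b,z)$ over all $z$ in a fixed cluster $C$ and dividing: this yields $\ol d(a,C)\le d(a,b)+\ol d(b,C)$, and the fact that $\ol d$ normalizes by $|C|$ or by $|C|-1$ depending on membership only introduces a factor $\tfrac{|C|-1}{|C|}\le 1$ on the right, which can only help. The instances I will use are
\[
\ol d(y,C_i)\le D+\ol d(x,C_i),\qquad \ol d(x,C_j)\le D+\ol d(y,C_j),\qquad \ol d(y,C_j)\le D+\ol d(x,C_j),
\]
together with $D\le \ol d(x,C_i)+\ol d(y,C_i)$, which comes from averaging $D=d(x,y)\le d(x,z)+d(z,y)$ over the appropriate subset of $C_i$ (again up to a harmless factor $\le 1$, both when $y\notin C_i$ and when $y\in C_i$). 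Chaining $\ol d(x,C_j)\le D+\ol d(y,C_j)$ with the separation condition applied first to $x$ and then to $y$ gives the key estimate
\[
\ol d(x,C_i)\le\tfrac1\gamma\bigl(D+\ol d(y,C_j)\bigr)\le\tfrac1\gamma D+\tfrac1{\gamma^2}\,\ol d(y,C_i).
\]

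For part~1 (so $x\in C_i$, $y\in C_j$): substituting the key estimate into $\ol d(y,C_i)\le D+\ol d(x,C_i)$ gives $\ol d(y,C_i)\bigl(1-\tfrac1{\gamma^2}\bigr)\le D\bigl(1+\tfrac1\gamma\bigr)$, and writing $1-\tfrac1{\gamma^2}=\tfrac{(\gamma-1)(\gamma+1)}{\gamma^2}$ and cancelling the common factor $\tfrac{\gamma+1}{\gamma}$ yields the lower bound $D\ge\tfrac{\gamma-1}{\gamma}\,\ol d(y,C_i)$. Substituting the key estimate into $D\le\ol d(x,C_i)+\ol d(y,C_i)$ gives $D\bigl(1-\tfrac1\gamma\bigr)\le\bigl(1+\tfrac1{\gamma^2}\bigr)\ol d(y,C_i)=\tfrac{\gamma^2+1}{\gamma^2}\,\ol d(y,C_i)$, i.e.\ the upper bound $D\le\tfrac{\gamma^2+1}{\gamma(\gamma-1)}\,\ol d(y,C_i)$. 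For part~2 (so $x,y\in C_i$): the one-hop bound $D\le\ol d(x,C_i)+\ol d(y,C_i)$ together with the separation condition applied to $x$ and to $y$ gives $D\le\tfrac1\gamma\bigl(\ol d(x,C_j)+\ol d(y,C_j)\bigr)$, and then the one-hop inequality $\ol d(y,C_j)\le D+\ol d(x,C_j)$ gives $D\le\tfrac1\gamma\bigl(2\,\ol d(x,C_j)+D\bigr)$, which rearranges to $D\le\tfrac2{\gamma-1}\,\ol d(x,C_j)$.

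The one delicate point is that in each of these final steps the quantity $D=d(x,y)$ reappears on the right-hand side once the triangle inequality is routed ``through'' the other cluster, so the stated bound only emerges after collecting the $D$-terms and solving a linear inequality; getting the exact constants $\tfrac{\gamma^2+1}{\gamma(\gamma-1)}$ and $\tfrac2{\gamma-1}$ then amounts to carefully tracking the two nested applications of the separation factor $\gamma$. Everything else---the $|C|$-versus-$(|C|-1)$ bookkeeping and the trivial singleton-cluster cases---is routine.
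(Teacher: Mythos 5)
Your proof is correct, and all three constants come out exactly as stated. Note, though, that the paper itself does not prove this lemma at all -- it is imported verbatim from \citet{daniely2012clustering} -- so there is no in-paper argument to compare against; your derivation (triangle inequality averaged over a cluster, plus the separation condition $\ol{d}(p,C_a)\le\frac{1}{\gamma}\ol{d}(p,C_b)$ applied twice, then solving the resulting linear inequality in $d(x,y)$) is a valid self-contained substitute. Two small caveats: (i) you implicitly assume $d$ satisfies the triangle inequality, which is fine here since the lemma is only invoked in Section~\ref{sec:underlying_stable_clustering} where the input is a metric, but it is genuinely needed (condition (2) of the definition alone only ties $d(x,y)$ to averages, with cluster-size-dependent constants); (ii) your blanket claim that the $|C|$-versus-$(|C|-1)$ normalization ``only introduces a factor $\frac{|C|-1}{|C|}\le 1$ on the right'' is not true for every orientation of the one-hop inequality (it fails when the left-hand point lies in $C$ and the hopped-through point does not), but every instance you actually use has the favorable orientation, so the argument stands as written.
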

 
Here, we show that, for large enough $\gamma$, the edges inside a cluster will always be smaller than edges between clusters.

\begin{claim}
\label{clm:main_good_edge_property}
 Let $\mathcal{C} = (C_1, \ldots, C_k)$ be an $(\alpha,\gamma)$-clustering. Let $i \neq j$, and let $x,y \in C_i$ and $z \in C_j$. If $\gamma \geq 2+\sqrt{3}$, then $d(x,y) \leq d(y,z)$.
\end{claim}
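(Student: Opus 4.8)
The plan is to sandwich both sides of the target inequality between multiples of a single quantity, the average distance $\ol{d}(y,C_j)$ from $y$ to the other cluster, and then reduce to an elementary inequality in $\gamma$. All structural input comes from Lemma~\ref{lem:main_danielystructure}; the rest is bookkeeping, so I do not expect a genuine obstacle.

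First I would lower-bound the right-hand side. Applying part~1 of Lemma~\ref{lem:main_danielystructure} with the two cluster indices interchanged --- which is legitimate since the $(\alpha,\gamma)$-clustering condition, and hence the lemma, holds for every ordered pair of distinct indices --- and specializing to $z \in C_j$ and $y \in C_i$ gives $d(y,z) = d(z,y) \ge \tfrac{\gamma-1}{\gamma}\,\ol{d}(y,C_j)$. Next I would upper-bound the left-hand side. Part~2 of Lemma~\ref{lem:main_danielystructure} gives $d(x,y) \le \tfrac{2}{\gamma-1}\,\ol{d}(x,C_j)$ for all $x,y \in C_i$; since $d$ is symmetric, relabeling $x \leftrightarrow y$ in this bound also gives $d(x,y) = d(y,x) \le \tfrac{2}{\gamma-1}\,\ol{d}(y,C_j)$, which is the form I want. (If $x = y$ or $C_i = \{y\}$ then $d(x,y) = 0$ and the claim is trivial, so we may assume these bounds are meaningful.)

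Chaining the two estimates, it suffices to prove the purely numerical inequality $\tfrac{2}{\gamma-1} \le \tfrac{\gamma-1}{\gamma}$. Clearing denominators (all positive since $\gamma > 1$), this is equivalent to $2\gamma \le (\gamma-1)^2$, i.e.\ to $\gamma^2 - 4\gamma + 1 \ge 0$, whose larger root is $2+\sqrt{3}$; hence it holds exactly when $\gamma \ge 2+\sqrt{3}$. Therefore $d(x,y) \le \tfrac{2}{\gamma-1}\,\ol{d}(y,C_j) \le \tfrac{\gamma-1}{\gamma}\,\ol{d}(y,C_j) \le d(y,z)$, which is the claim.

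The one point requiring care --- and the closest thing to a difficulty --- is that the two bounds supplied by Lemma~\ref{lem:main_danielystructure} are naturally phrased in terms of \emph{different} averages ($\ol{d}(\cdot,C_i)$-type in part~1 versus $\ol{d}(\cdot,C_j)$-type in part~2), so one has to invoke the lemma with the right assignment of roles, using both the $i \leftrightarrow j$ symmetry of the $(\alpha,\gamma)$-clustering definition and the symmetry of $d$, in order to bring both estimates onto the common quantity $\ol{d}(y,C_j)$. Once that alignment is done, the threshold $\gamma \ge 2+\sqrt{3}$ is precisely the condition for the quadratic $\gamma^2 - 4\gamma + 1$ to be nonnegative, so the stated constant is tight for this argument.
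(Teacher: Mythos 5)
Your proof is correct and follows essentially the same chain as the paper's: bound $d(x,y)$ above by $\tfrac{2}{\gamma-1}\,\ol{d}(y,C_j)$ via part~2 of Lemma~\ref{lem:main_danielystructure}, bound $d(y,z)$ below by $\tfrac{\gamma-1}{\gamma}\,\ol{d}(y,C_j)$ via part~1, and observe that $\tfrac{2}{\gamma-1}\le\tfrac{\gamma-1}{\gamma}$ exactly when $\gamma^2-4\gamma+1\ge 0$, i.e.\ $\gamma\ge 2+\sqrt{3}$. The only difference is that you make explicit the index swap and the $x\leftrightarrow y$ relabeling that the paper applies tacitly.
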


\begin{theorem}
\label{thm:main_exact_bf}
Let $\gamma \geq 2+\sqrt{3}$. If there exists an $(\alpha,\gamma)$-clustering, then there is an algorithm that finds such a clustering in time 
$\mathcal O( n^2 \log n + n \cdot \left(\frac{1}{\alpha}\right)^k)$. 
\end{theorem}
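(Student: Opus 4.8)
The plan is to prove a structural lemma — that when an $(\alpha,\gamma)$-clustering with $\gamma\ge 2+\sqrt{3}$ exists, each of its clusters is a node of the single-linkage dendrogram of $(\dataset,d)$ — and then to search the (small) set of ways to cut that dendrogram into $k$ parts. First I would compute all $\binom{n}{2}$ pairwise distances and run Kruskal's algorithm on the complete graph; in $\mathcal O(n^2\log n)$ time this yields the single-linkage dendrogram, a rooted binary tree whose leaves are $\dataset$ and whose $n-1$ internal ``merge'' nodes we identify with the sets of leaves below them.

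The structural lemma is the heart of the argument. By \Cref{clm:main_good_edge_property}, for $\gamma\ge 2+\sqrt{3}$ and any $i\ne j$, $x,y\in C_i$, $z\in C_j$ one has $d(x,y)\le d(y,z)$; equivalently, for every $y\in C_i$, $\max_{x\in C_i}d(x,y)\le\min_{z\notin C_i}d(y,z)$. Fix $C_i$ and let $t^\star$ be the first step of Kruskal's run after which some component contains both a point of $C_i$ and a point outside $C_i$. Just before step $t^\star$ every component is contained in $C_i$ or disjoint from $C_i$, so the edge $e=(a,b)$ added at step $t^\star$ joins a component inside $C_i$ to one disjoint from $C_i$; hence $a\in C_i$, $b\notin C_i$, and $e$ is an inter-cluster edge. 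Assuming distinct pairwise distances (true for almost every instance, the genericity convention also used by \citet{daniely2012clustering}; ties form a measure-zero exception removable by perturbation), every edge $(a,x)$ with $x\in C_i\setminus\{a\}$ has weight $d(a,x)\le d(a,b)$, hence $<d(a,b)$, so it was processed strictly before $e$. Therefore by the time $e$ is considered the component of $a$ contains all of $C_i$ and, being still contained in $C_i$, equals $C_i$; so $C_i$ is a connected component at some step, i.e.\ a dendrogram node. Since the $C_i$ partition $\dataset$, the target clustering is one of the ``$k$-cuts'' of the dendrogram — a partition of the leaves obtained by selecting $k-1$ internal nodes that together with the root form a connected subtree and taking the $k$ subtrees hanging below them.

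It remains to enumerate these $k$-cuts and test each one. By Property~1 of an $(\alpha,\gamma)$-clustering each $C_i$ has $\ge\alpha n$ leaves, so I enumerate only $k$-cuts whose parts all have size $\ge\alpha n$, via a top-down recursion: at a node $u$ that must become $j$ parts, recurse on its two children with budgets $j_L,j_R$ for every $j_L+j_R=j$ with $j_L,j_R\ge 1$; a node with budget $1$ is declared a cluster, valid iff it has $\ge\alpha n$ leaves. Since a subtree with $m\le n$ leaves holds at most $m/(\alpha n)\le 1/\alpha$ disjoint parts of size $\ge\alpha n$, the count $f(u,j)$ of valid $k$-cuts of the subtree at $u$ into $j$ parts obeys $f(u,1)\le 1$ and $f(u,j)=\sum_{j_L+j_R=j}f(u_L,j_L)f(u_R,j_R)\le(j-1)(1/\alpha)^{j-2}\le(1/\alpha)^{j-1}$ (using $j\le 1/\alpha$), so there are at most $(1/\alpha)^{k-1}$ candidates. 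For each candidate I verify it is an $(\alpha,\gamma)$-clustering; Property~1 is automatic and Property~2 requires $\min_{j:\,C_j\ne C(v)}\ol{d}(v,C_j)\ge\gamma\,\ol{d}(v,C(v))$ for every $v$, which is $\mathcal O(k)$ work per point once the averages $\ol{d}(v,u)$ are available. I obtain them by computing, once, $\mathrm{Sum}(v,u)=\sum_{w\in u}d(v,w)$ for all $v$ and all dendrogram nodes $u$ via the bottom-up recurrence $\mathrm{Sum}(v,u)=\mathrm{Sum}(v,u_L)+\mathrm{Sum}(v,u_R)$, in $\mathcal O(n)$ time per $v$ and $\mathcal O(n^2)$ total; since every $C_i$ is a dendrogram node, each $\ol{d}(v,C_i)$ is then an $\mathcal O(1)$ lookup and each candidate is tested in $\mathcal O(nk)$ time. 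The true clustering is among the candidates and passes, so returning the first candidate that passes yields an $(\alpha,\gamma)$-clustering, and the running time is $\mathcal O(n^2\log n)$ (dendrogram) $+\,\mathcal O(n^2)$ (precomputation) $+\,(1/\alpha)^{k-1}\cdot\mathcal O(nk)=\mathcal O(n^2\log n+n(1/\alpha)^k)$, using $k\le 1/\alpha$.

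The main obstacle is the structural lemma — making the single-linkage argument airtight, in particular dealing with equal distances — together with arranging (through the tree-DP precomputation) that each of the $(1/\alpha)^{k-1}$ candidates can be checked in amortized $\mathcal O(n)$ time so as to hit the stated bound; the counting estimate and the remaining bookkeeping are routine.
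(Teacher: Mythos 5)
Your proposal is correct and rests on the same engine as the paper's proof --- Claim~\ref{clm:main_good_edge_property}, which for $\gamma\ge 2+\sqrt{3}$ forces intra-cluster distances to be no larger than inter-cluster distances, so that single-linkage ``respects'' the hidden clustering, followed by a brute-force search over a family of size roughly $(1/\alpha)^{k}$ --- but you execute the search differently. The paper modifies single-linkage itself (merge only while at least one side has at most $\alpha n$ points), argues via Claim~\ref{clm:main_good_edge_property} that this never merges across true clusters, ends with at most $1/\alpha$ pieces of size at least $\alpha n$, and then enumerates groupings of these pieces into $k$ clusters. You instead run unmodified single-linkage to completion, prove the stronger structural statement that every true cluster is literally a node of the dendrogram, and enumerate dendrogram $k$-cuts whose parts have at least $\alpha n$ leaves. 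What your route buys: the candidate count $(1/\alpha)^{k-1}$ falls out cleanly from the tree-cut recursion (and in fact matches the theorem's stated $n\cdot(1/\alpha)^k$ term more transparently than the paper's own count of the groupings of its pieces), and you make explicit the verification step and the $\mathcal O(n^2)$ precomputation of $\mathrm{Sum}(v,u)$ that the paper leaves implicit. What the paper's route buys: it never needs the full dendrogram or the ``cluster = dendrogram node'' lemma, only that its size-capped merges stay inside true clusters, and its enumeration is over arbitrary unions of pieces rather than contiguous tree cuts.

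Two caveats, neither fatal. First, your genericity assumption (distinct pairwise distances) is doing real work: Claim~\ref{clm:main_good_edge_property} is non-strict, and at $\gamma=2+\sqrt{3}$ ties between an intra- and an inter-cluster edge are genuinely possible, in which case a bad tie-break can merge across true clusters; note, however, that the paper's modified single-linkage has exactly the same fragility (its proof also needs intra edges to be considered first), so you are on equal footing, and you at least flag the issue --- though the ``perturb away ties'' fix should be phrased carefully, since a perturbation can destroy the $(\alpha,\gamma)$ property when its defining inequalities are tight. Second, to actually meet the time bound you should say that the top-down enumeration is guided by the memoized counts $f(u,j)$ (computed bottom-up in $\mathcal O(nk^2)\subseteq \mathcal O(n(1/\alpha)^k)$ time, with the pruning test that the subtree at $u$ has at least $j\alpha n$ leaves), so that no time is spent on branches yielding zero valid cuts; without this the naive recursion could wander through dead ends. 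Both points are routine to repair and do not affect the correctness of the argument.
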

\begin{proof}
Let us consider a modification to \emph{single-linkage} algorithm:
We consider edges in non-decreasing order and only merge two clusters if at least one of them has size at most $\alpha n$.
\Cref{clm:main_good_edge_property} suggests that the edges within an underlying cluster will be considered before edges between two clusters. Since we know that any cluster size is at least $\alpha n$, when considering an edge, it is safe to use this condition to merge them. If it is not the case, we can ignore the edge. 

By this process, we will end up with a clustering where each cluster has size at least $\alpha n$. Hence, there are at most $\mathcal O(1/\alpha)$ such clusters. We then can enumerate over all possible clusterings, as there are at most $\mathcal O( \frac{1}{\alpha^k})$ such clusterings, the running time follows.
\end{proof}


Note that the algorithm described in the above theorem has a high running time (especially if $\alpha$ is small). Notice that, before the enumeration, we do not make any mistakes when we run the modified single-linkage, and we get a clustering where each cluster has size at least $\alpha n$. We further show it is possible to define a metric over this clustering and apply \Cref{thm:main_partial_tree} to the metric.
The clustering we get will be approximately IP-stable. 

\paragraph{Conditioning the clusters via single-linkage} When we run the single-linkage algorithm, in addition to the size of any pair of clusters, we can also consider the ratio between each pair of points in two different clusters. We want it so that distances for every pairs of points in two clusters are roughly the same. Moreover, the distance from a point $x$ to its own cluster should not be large compared to the distance from $x$ to any other clusters. We formally define this condition below.

\begin{claim}
\label{clm:main_good_edge_property_2}
 Let $\mathcal{C} = (C_1, \ldots, C_k)$ be an $(\alpha,\gamma)$-clustering. 
 Let $D \subseteq C_i$ and $D' \subseteq C_j$ be two set of points from different clusters. Then for $x \in D$ and $y,y' \in D'$, $\frac{d(x,y)}{d(x,y')} \leq \frac{\gamma^2+1}{(\gamma-1)^2}$.
\end{claim}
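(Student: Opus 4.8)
The plan is to invoke part~1 of Lemma~\ref{lem:main_danielystructure}, but applied in the orientation that makes the sandwiching quantity independent of the particular point chosen in $C_j$. Fix $x \in D \subseteq C_i$ and $y,y' \in D' \subseteq C_j$ with $i \neq j$; note the subsets $D,D'$ are only there to guarantee $x \in C_i$ and $y,y' \in C_j$, and play no further role. Reading Lemma~\ref{lem:main_danielystructure}(1) with the cluster $C_j$ in the role of the ``home'' cluster and $x$ in the role of the point lying in the \emph{other} cluster (legitimate since the lemma holds for every ordered pair of distinct clusters), and using that $d$ is symmetric, gives for \emph{every} $w \in C_j$ that
\[
\frac{\gamma-1}{\gamma}\,\ol d(x,C_j)\;\le\; d(x,w)\;\le\;\frac{\gamma^2+1}{\gamma(\gamma-1)}\,\ol d(x,C_j).
\]
The crucial feature is that the bounding quantity $\ol d(x,C_j)=\frac{1}{|C_j|}\sum_{w\in C_j}d(x,w)$ depends only on $x$ and the cluster $C_j$, hence is the \emph{same} number whether we take $w=y$ or $w=y'$.

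From here the rest is a one-line division. Apply the upper bound with $w=y$ to get $d(x,y)\le \frac{\gamma^2+1}{\gamma(\gamma-1)}\ol d(x,C_j)$, and the lower bound with $w=y'$ to get $d(x,y')\ge \frac{\gamma-1}{\gamma}\ol d(x,C_j)$ (which in particular is strictly positive whenever $d(x,y)>0$, since $\gamma>1$; if $d(x,y)=0$ the claimed inequality is trivial). Dividing, the factors $\ol d(x,C_j)$ cancel and
\[
\frac{d(x,y)}{d(x,y')}\;\le\;\frac{(\gamma^2+1)/(\gamma(\gamma-1))}{(\gamma-1)/\gamma}\;=\;\frac{\gamma^2+1}{(\gamma-1)^2},
\]
as desired.

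The only place the argument could go wrong — and hence the ``hard part,'' such as it is — is the choice of orientation when invoking Lemma~\ref{lem:main_danielystructure}(1). In the naive orientation one bounds $d(x,y)$ in terms of $\ol d(y,C_i)$ and $d(x,y')$ in terms of $\ol d(y',C_i)$; these two bounding quantities differ, and one is left needing $\ol d(y,C_i)\le \ol d(y',C_i)$, which cannot hold in general since it is symmetric in $y$ and $y'$. Routing the bound through $\ol d(x,C_j)$ instead, which is manifestly independent of the choice among points of $C_j$, is precisely what makes the two estimates share a common scale and cancel; everything else is routine.
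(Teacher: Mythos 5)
Your proof is correct and is essentially the paper's own argument: the paper likewise applies Lemma~\ref{lem:main_danielystructure}(1) in the swapped orientation, sandwiching both $d(x,y)$ and $d(x,y')$ between $\frac{\gamma-1}{\gamma}\,\ol d(x,C_j)$ and $\frac{\gamma^2+1}{\gamma(\gamma-1)}\,\ol d(x,C_j)$, and then divides. Your additional remarks (why the common bounding quantity must be $\ol d(x,C_j)$ rather than $\ol d(y,C_i)$, and the $d(x,y')=0$ edge case) are just more explicit versions of steps the paper leaves implicit.
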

\begin{corollary}
\label{col:main_bounded_length}
For two subsets $D,D'$ from different underlying clusters, let $x,x' \in D$ and $y,y' \in D'$. Then
$$\frac{d(x,y)}{d(x',y')} \leq {\left ( \frac{\gamma^2+1}{(\gamma-1)^2} \right )}^2.$$
\end{corollary}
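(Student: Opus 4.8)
The plan is to derive Corollary~\ref{col:main_bounded_length} from Claim~\ref{clm:main_good_edge_property_2} by chaining two applications of that claim through a common pivot distance, exploiting the symmetry of $d$. Claim~\ref{clm:main_good_edge_property_2} controls the ratio of two distances that share their \emph{first} endpoint (a point of $D \subseteq C_i$) while their second endpoints range over $D' \subseteq C_j$. To bound $d(x,y)/d(x',y')$, where now \emph{both} endpoints vary, I would introduce the intermediate quantity $d(x,y')$ and write $\frac{d(x,y)}{d(x',y')} = \frac{d(x,y)}{d(x,y')}\cdot\frac{d(x,y')}{d(x',y')}$.

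First I would bound the first factor directly: since $x \in D$ and $y,y' \in D'$, Claim~\ref{clm:main_good_edge_property_2} gives $\frac{d(x,y)}{d(x,y')} \le \frac{\gamma^2+1}{(\gamma-1)^2}$. For the second factor I would use that $d$ is symmetric, so $\frac{d(x,y')}{d(x',y')} = \frac{d(y',x)}{d(y',x')}$; now $y' \in D' \subseteq C_j$ plays the role of the fixed first endpoint and $x,x' \in D \subseteq C_i$ play the role of the two varying endpoints. Since $D$ and $D'$ are subsets of \emph{different} underlying clusters, the hypotheses of Claim~\ref{clm:main_good_edge_property_2} are met with the roles of $C_i$ and $C_j$ interchanged, yielding $\frac{d(y',x)}{d(y',x')} \le \frac{\gamma^2+1}{(\gamma-1)^2}$. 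Multiplying the two estimates gives $\frac{d(x,y)}{d(x',y')} \le \left(\frac{\gamma^2+1}{(\gamma-1)^2}\right)^2$, which is exactly the claimed bound.

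There is essentially no hard step here; the only points requiring a little care are (i) that Claim~\ref{clm:main_good_edge_property_2} is symmetric in the two clusters, so it may be applied with either $D$ or $D'$ supplying the fixed endpoint, and (ii) that the denominator $d(x',y')$ (and the pivot $d(x,y')$) is nonzero, which holds because $D$ and $D'$ lie in distinct clusters of an $(\alpha,\gamma)$-clustering with $\gamma>1$, so the inter-cluster average distances, and hence the individual inter-cluster distances controlled via Lemma~\ref{lem:main_danielystructure}, are strictly positive. The same \emph{almost every} caveat as in Lemma~\ref{lem:main_danielystructure} is inherited by the statement.
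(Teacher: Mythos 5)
Your proof is correct and follows what the paper intends: the corollary is stated without an explicit proof precisely because it follows by applying Claim~\ref{clm:main_good_edge_property_2} twice (once with $x$ as the fixed endpoint over $y,y'$, once with $y'$ as the fixed endpoint over $x,x'$, using symmetry of $d$) and multiplying the two bounds, exactly as you do. Your added remarks on symmetry of the claim and nonvanishing denominators are fine but not needed beyond what the paper already assumes.
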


\begin{claim}
\label{clm:main_merge_criteria}
    Let $D,D'$ be two clusters we consider in the single-linkage algorithm.
    Let $e = (x,y)$ be an edge that we merge in an arbitrary step of single-linkage algorithm where $x \in D, y \in D'$, then $D$ and $D'$ must belong to the same underlying clustering if one of the followings is true.
    \begin{enumerate}[leftmargin=*]
        \item $|D| < \alpha \cdot n$ or $|D'| < \alpha \cdot n$,
        \item $\frac{\max_{x\in D, y\in D'} d(x,y)}{\min_{x\in D, y\in D'} d(x,y) } >  {\left ( \frac{\gamma^2+1}{(\gamma-1)^2} \right )}^2$,
        \item There exists $x' \in D$ such that $d(x,x') > \frac{2\gamma}{(\gamma-1)^2} d(x,y)$.
    \end{enumerate}
\end{claim}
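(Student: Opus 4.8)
The plan is to prove the contrapositive for each of the three conditions: assuming $D \subseteq C_i$ and $D' \subseteq C_j$ come from \emph{different} underlying clusters, I will show that none of (1)--(3) can hold. The key point is that the modified single-linkage algorithm only ever merges sets that are either subsets of a single $C_\ell$ or unions of several $C_\ell$'s — I would first record this invariant (it follows because, by Claim~\ref{clm:main_good_edge_property}, all intra-$C_\ell$ edges are strictly shorter than all inter-$C_\ell$ edges, so by the time the algorithm considers any inter-cluster edge, every $C_\ell$ has already been fully assembled into one component). Hence ``$D,D'$ from different underlying clusterings'' means every $C_\ell$ is entirely inside $D$, entirely inside $D'$, or disjoint from both; in particular $D$ and $D'$ each have size a multiple-of-something but at least $\alpha n$, and more importantly $D$ and $D'$ each contain at least one \emph{full} underlying cluster.

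For condition (1): since $D$ contains some full $C_\ell$ with $|C_\ell| \ge \alpha n$, we get $|D| \ge \alpha n$, and likewise $|D'| \ge \alpha n$; so (1) fails. For condition (2): take any $x \in D$ and $y, y' \in D'$. If $y,y'$ lie in the same underlying cluster, Claim~\ref{clm:main_good_edge_property_2} gives $d(x,y)/d(x,y') \le (\gamma^2+1)/(\gamma-1)^2$. If they lie in different underlying clusters, pick $x' \in D$ in the same underlying cluster as $x$ (itself) and chain through Corollary~\ref{col:main_bounded_length}: more carefully, I would bound the max-to-min ratio over \emph{all} pairs $(x,y) \in D \times D'$ by applying Corollary~\ref{col:main_bounded_length} to the two underlying clusters realizing the max and the min respectively — both a ``within $D$'' and a ``within $D'$'' step are absorbed, giving the ratio bound $\big((\gamma^2+1)/(\gamma-1)^2\big)^2$. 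So (2) fails. For condition (3): with $x \in D$ and the merged edge $(x,y)$, $x' \in D$, let $C_i$ be the underlying cluster of $x$ and $x'$ if they share one — but $x'$ may be in a different underlying cluster inside $D$. If $x,x' \in C_i$, part 2 of Lemma~\ref{lem:main_danielystructure} gives $d(x,x') \le \frac{2}{\gamma-1}\ol d(x,C_j)$, and since $y \in C_j \subseteq D'$... here I need a lower bound on $d(x,y)$ in terms of $\ol d(x,C_j)$, which comes from part 1 of Lemma~\ref{lem:main_danielystructure} (the lower bound $d(x,y) \ge \frac{\gamma-1}{\gamma}\ol d(y,C_i)$, suitably symmetrized) together with the fact that $C_j \subseteq D'$; combining these yields $d(x,x') \le \frac{2\gamma}{(\gamma-1)^2} d(x,y)$, contradicting (3). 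If instead $x' \notin C_i$, then $x'$ is in some other full underlying cluster inside $D$, and I would route the bound through that cluster and $y$'s cluster analogously, again landing at the same constant.

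The main obstacle I anticipate is bookkeeping the constants in condition (3): Lemma~\ref{lem:main_danielystructure} is stated with $\ol d(\cdot,\cdot)$ averages over full underlying clusters, whereas the single-linkage sets $D, D'$ are unions of such clusters, so I must be careful that the average distance from $x$ to $C_j$ (a full underlying cluster) is comparable to the quantities appearing in the lemma, and that the particular edge $(x,y)$ the algorithm chose is long enough — i.e.\ that $d(x,y)$ is not much smaller than $\min_{y \in C_j} d(x,y)$, which is exactly what Corollary~\ref{col:main_bounded_length} / Claim~\ref{clm:main_good_edge_property_2} deliver. Getting all three constants to come out as stated (rather than slightly worse) is the delicate part; the case analysis over whether the relevant points share an underlying cluster is routine once the invariant from the first paragraph is in hand.
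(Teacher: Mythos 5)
Your treatment of conditions (2) and (3) contains the right ingredients, but the structural invariant your whole plan rests on is wrong, and with it your proof of condition (1) collapses. First, Claim~\ref{clm:main_good_edge_property} only compares edges sharing an endpoint ($d(x,y)\le d(y,z)$ for $x,y\in C_i$, $z\in C_j$); it does not say that \emph{every} intra-cluster edge is shorter than \emph{every} inter-cluster edge, and that global statement is false in general (intra-distances inside one cluster can exceed the distance between two other underlying clusters that happen to be close to each other). Second, and more importantly, the modified single-linkage only merges when one of the criteria (1)--(3) fires, so even after all intra-$C_\ell$ edges have been scanned, $C_\ell$ need not have been assembled into a single component --- this is exactly why the algorithm using this claim ends with $\ell\ge k$ clusters. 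Hence your reading that, once an inter-cluster edge is reached, $D$ and $D'$ are unions of full underlying clusters, and your conclusion in case (1) that each therefore contains a full $C_\ell$ and has size at least $\alpha n$, are unsupported. The invariant actually maintained (inductively, by this very claim) is the opposite refinement: every current cluster is a \emph{subset} of a single underlying cluster. With that reading, the paper proves (1) by an ordering argument you do not supply: if $D\subseteq C_D$, $D'\subseteq C_{D'}$ with $C_D\ne C_{D'}$ and $|D|<\alpha n$, then $D\subsetneq C_D$, so there is $w\in C_D\setminus D$; by Claim~\ref{clm:main_good_edge_property}, $d(x,w)\le d(x,y)$, so the edge $(x,w)$ was processed before $e$, and at that moment the component containing $x$ was a subset of $D$ and thus had size $<\alpha n$, so criterion (1) would already have forced that merge and $w$ would lie in $D$ --- a contradiction. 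This is the missing idea for (1).

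For (2) and (3), your same-underlying-cluster sub-cases coincide with the paper's proof: (2) is an immediate contrapositive of Corollary~\ref{col:main_bounded_length}, and (3) chains part 1 of Lemma~\ref{lem:main_danielystructure} (symmetrized, as you note) with part 2 to get exactly $d(x,x')\le \frac{2\gamma}{(\gamma-1)^2}d(x,y)$ and a contradiction. But the extra sub-cases you introduce --- $y,y'$ lying in different underlying clusters inside $D'$, or $x'$ lying in a different underlying cluster inside $D$ --- cannot be ``routed analogously to the same constant'': if $D'$ really were a union of two underlying clusters at very different distances from $x$, the max/min ratio in (2) and the ratio $d(x,x')/d(x,y)$ in (3) are unbounded, so the stated constants would simply be false under your interpretation. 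Those sub-cases do not arise once you adopt the correct subset invariant, and under your union invariant they are fatal rather than ``routine bookkeeping''. The repair is to discard the union-of-clusters picture, state the refinement invariant, keep your same-cluster arguments for (2) and (3), and prove (1) by the edge-ordering contradiction above.
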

    
    
\begin{theorem}
Let $\alpha > 0 ,\gamma \geq 2+\sqrt{3}$. If there exists an $(\alpha, \gamma)$-clustering, then there is a randomized algorithm that finds
a $\mathcal O(\frac{\log^2{(1/\alpha)}}{\alpha})$-approximately IP-stable k-clustering in polynomial time and constant success probability.
\end{theorem}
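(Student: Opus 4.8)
The plan is to recycle the partial‑tree‑embedding machinery of \Cref{sec:exclusion-approx}, but to apply it to a \emph{constant‑size} contracted instance distilled from the modified single‑linkage. First I would run single‑linkage with the full merge rule of \Cref{clm:main_merge_criteria}: scan the edges of the complete graph in non‑decreasing order and merge the current clusters of an edge's endpoints exactly when one of the three conditions there holds. By \Cref{clm:main_merge_criteria} every merge is safe, so the final groups $D_1,\dots,D_m$ each sit inside a single cluster of the promised $(\alpha,\gamma)$‑clustering; moreover each $|D_a|\ge\alpha n$, since if some final group were smaller then, when the first edge leaving it was scanned, its current subcluster was already smaller than $\alpha n$ and condition~1 would have forced a merge. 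Hence $m\le 1/\alpha$. I would then extract, from the negations of the merge conditions at termination together with \Cref{clm:main_good_edge_property}, \Cref{clm:main_good_edge_property_2}, \Cref{col:main_bounded_length} and \Cref{lem:main_danielystructure}, two structural facts about the final groups: for all $D_a\neq D_b$, \textbf{(i)} all cross distances $d(x,y)$ with $x\in D_a,y\in D_b$ agree up to a fixed constant factor, and \textbf{(ii)} every within‑group distance inside $D_a$ is at most a fixed constant times the $D_a$‑to‑$D_b$ distance.

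Given (i)--(ii), I would contract each $D_a$ to a single point of weight $w_a=|D_a|$, let $\hat d(D_a,D_b)$ be the minimum cross distance, and pass to its shortest‑path closure, which by (i)--(ii) is a genuine metric within a constant factor of $\hat d$. This instance has only $m\le 1/\alpha$ points, so I would apply \Cref{thm:main_partial_tree} to it with exclusion probability $\varepsilon:=\Theta(\alpha)$ and union‑bound over the $m$ points: with probability at least $1/2$ \emph{no} group is excluded, and we obtain a $2$‑HST $T$ on all the groups with $\hat d\le d_T\le\Lambda\hat d$, where $\Lambda=\mathcal O(\log^2 m/\varepsilon)=\mathcal O(\log^2(1/\alpha)/\alpha)$. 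Feeding $T$ to \Cref{claim:main_clustering_hst} gives a $k$‑clustering $(\mathcal C_1,\dots,\mathcal C_k)$ of the leaves, and the key point I would exploit is that its stated structural property --- $d_T(u,v)\le d_T(u,w)$ whenever $u,v$ lie in one part and $w$ in another --- immediately yields a \emph{weighted} IP‑stability guarantee: each term of the $w$‑weighted $\mathcal C_l$‑average of $d_T(D_a,\cdot)$ is bounded by $\min_{w\in\mathcal C_{l'}}d_T(D_a,w)$, which in turn lower‑bounds the $w$‑weighted $\mathcal C_{l'}$‑average. Pushing this through the $\Theta(1)$‑distortion between $\hat d$ and $d_T$ gives $\overline{\hat d}^{\,w}(D_a,\mathcal C_l)\le\mathcal O(\Lambda)\,\overline{\hat d}^{\,w}(D_a,\mathcal C_{l'})$ for $D_a\in\mathcal C_l$ and every $l'\neq l$.

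The algorithm outputs $\hat C_l=\bigcup_{D_a\in\mathcal C_l}D_a$, and the last step unfolds the guarantee to individual points. For $x\in D_a\subseteq\hat C_l$, I would expand $\overline{d}(x,\hat C_l)$ as a $w$‑weighted combination of the intra‑group term $\frac{1}{|D_a|-1}\sum_{x'\in D_a}d(x,x')$ and the terms $\frac{1}{|D_b|}\sum_{y\in D_b}d(x,y)$ over $D_b\in\mathcal C_l\setminus\{D_a\}$. By (i) each of the latter is $\Theta(\hat d(D_a,D_b))$, and by (ii) the intra‑group term is $\mathcal O(1)\cdot\min_{b}\hat d(D_a,D_b)\le\mathcal O(1)\cdot\overline{\hat d}^{\,w}(D_a,\mathcal C_{l'})$; crucially, because $w_a=|D_a|$ the weighted group‑level averages coincide with the point‑level averages, so no spurious $1/\alpha$ size‑ratio factor appears. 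Combining with the weighted HST guarantee and the trivial inequality $\overline{d}(x,\hat C_{l'})\ge\overline{\hat d}^{\,w}(D_a,\mathcal C_{l'})$ (from $d(x,y)\ge\hat d(D_a,D_b)$) yields $\overline{d}(x,\hat C_l)\le\mathcal O(\Lambda)\,\overline{d}(x,\hat C_{l'})=\mathcal O(\log^2(1/\alpha)/\alpha)\cdot\overline{d}(x,\hat C_{l'})$, which is what is claimed. The running time is $\mathcal O(n^2\log n)$ for sorting the edges plus $\mathrm{poly}(1/\alpha)\le\mathrm{poly}(n)$ for the embedding and HST clustering, and the success probability is a constant (boostable by repetition).

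The main obstacle, I expect, is proving (i) and (ii) for \emph{every} pair of final groups --- in particular for two groups lying inside the same underlying cluster, where \Cref{clm:main_good_edge_property} no longer orders intra‑ versus inter‑cluster edges. For such a pair one must argue directly from the order in which single‑linkage scans edges that conditions~2 and~3 failing at termination forces their cross distances to be uniform up to a constant and their diameters to be $\mathcal O(1)$ times their mutual distance. Once (i)--(ii) hold for all pairs, the unfolding above --- including the degenerate cases of a singleton part $\mathcal C_l=\{D_a\}$ and of a part $\mathcal C_{l'}$ sitting entirely inside $x$'s own underlying cluster --- goes through; the one pitfall to avoid is using the \emph{unweighted} version of the HST guarantee, which reintroduces a $1/\alpha$ factor once group sizes span $[\alpha n,n]$.
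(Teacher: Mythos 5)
Your proposal follows essentially the same route as the paper's proof: run the modified single-linkage with the merge criteria of \Cref{clm:main_merge_criteria} to obtain at most $1/\alpha$ groups of size at least $\alpha n$ whose cross-distances are uniform up to constants and whose diameters are small relative to inter-group distances, apply the partial tree embedding of \Cref{thm:main_tree_embedding} to one point per group with exclusion probability below $\alpha$, cluster via \Cref{claim:main_clustering_hst}, take unions of groups, and unfold the guarantee using the pointwise property $d_T(u,v)\leq d_T(u,w)$. The differences are cosmetic (the paper uses a single representative per group rather than a weighted contraction with shortest-path closure, and bounds $\overline{d}(x,F_i)$ and $\overline{d}(x,F_j)$ via the furthest intra-cluster and closest inter-cluster point rather than weighted group-level averages), and the ``main obstacle'' you flag --- establishing your properties (i)--(ii) for every pair of final groups --- is precisely what the paper asserts, with similarly brief justification, as properties (2)--(3) of the single-linkage output.
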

\begin{proof}
In the first phase, we run the modified single-linkage algorithm: As in the standard single-linkage algorithm, we consider edges in a non-decreasing order of length and merge two clusters using the criteria of \Cref{clm:main_merge_criteria}. 
When this phase finishes, we get a clustering $(D_1, \ldots, D_{\ell})$ with $k \leq \ell \leq \mathcal O(\frac 1 \alpha)$, where
(1) $|D_i| \geq \alpha n$, (2) For any $x \in D_i, y \in D_j$, $d(x,y)$ approximates $\ol{d}(D_i,D_j)$\footnote{$\ol{d}(C,C') = \sum_{x \in C, y\in C'} \frac{d(x,y)}{|C||C'|}$.}
, and (3) $\ol{d}(x,D_i) = \mathcal O( \ol{d}(x,D_j))$. 

This implies that we can pick $\mathcal{R} = \{r_i, \ldots, r_\ell\}$ where each $r_i \in D_i$ is the representative of $D_i$. Then, we show an IP-stable clustering of $\mathcal{R}$ yields an IP-stable clustering of the initial point set as well.

We apply \Cref{thm:main_tree_embedding} to $\mathcal{R}$ with exclusion probability $\varepsilon < \alpha$ to produce a partial tree embedding $T$. By the choice of $\varepsilon$, with constant probability, not a single point in $\mathcal{R}$ is excluded 
We then apply \Cref{claim:main_clustering_hst} to produce a clustering $\mathcal C = ( C_1, \ldots, C_k)$ of $\mathcal{R}$. Let $\mathcal F = (F_1, \ldots, F_k)$ where $F_i = \bigcup_{r_j \in C_i} D_j$ be the final clustering. Next, we show that $\mathcal F$ is an approximately IP-stable clustering. 

    For every $x \in F_i$, let $y$ be the point furthest to $x$ in $F_i$ and $z$ be the point closest to $x$ in $F_j$.
    If we show that $d(x,y) =  \mathcal O( \frac {\log^2{(1/\alpha)}} {\alpha} ) d(x,z)$, then this proves the claim as $\ol{d}(x,F_i) = \mathcal O( d(x,y))$ and $\ol{d}(x,F_j) = \Omega(d(x,z)).$
    
    Let $D_x,D_y,D_z$ be the clusters $x,y,z$ belong to after the merging phases terminates, respectively.
    Also, let $r_x, r_y, r_z$ be the representatives of $D_x,D_y,D_z$.
    By \Cref{claim:main_clustering_hst}, since $r_x,r_y$ belong to the same cluster $C$, and since $r_z$ belongs to another cluster $C'$,
    $d(r_x,r_y) \leq d_T(r_x,r_y) \leq d_T(r_x,r_z) \leq \mathcal O( \frac {\log^2{(1/\alpha)}} {\alpha} ) d(r_x,r_z)$ where the last inequality holds since $T$ is a 2-HST embedding for the representative points with worst-case distortion guarantee of $\mathcal O( \frac {\log^2{(1/\alpha)}} {\alpha} )$.
    Because of \Cref{clm:main_merge_criteria}, $d(x,y) = \Theta(d(r_x,r_y))$ and $d(x,z) = \Theta(d(r_x,r_z))$, it follows that $d(x,y) = \mathcal O( \frac {\log^2{(1/\alpha)}} {\alpha} ) d(x,z)$. Hence, the proof is complete.
\end{proof}
\paragraph{Remark} 
Our analysis relies on the fact that the partial embedding provides a worst-case guarantee. 
However, to the best of our knowledge, a probabilistic tree embedding does not seem to work because the distortion guarantee is on expectation. In other words, 
there could be an edge $e$ such that the distortion is bad, and having only one such edge is enough to break the stability of any $k$-clustering on the tree embedding.

\section{Special Metrics}
\label{sec:special-metrics}
Another approach to circumvent the NP-hardness of IP-stable clustering is to restrict our considerations to data sets with some special structure. Here, we consider $1$-dimensional Euclidean metrics and tree metrics.

\subsection{1-dimensional Euclidean Case}\label{section_1dim}
Here we study the special case of $\dataset\subseteq \R$ where $|\dataset|=n$, and $d$ is the Euclidean metric. We provide an $\mathcal O(kn)$ algorithm that finds an IP-stable $k$-clustering. We show the following Theorem~\ref{thm:1d-stable} holds.

\paragraph{High-level Idea} We start with a specific clustering where all but one cluster are singletons.
By the definition, only nodes in the non-singleton cluster may want to \emph{leave} the cluster (i.e., the IP-stability condition may be violated for a subset of points in that cluster). Lemma~\ref{lemma_boundary_points} shows that if a node $v$ of a cluster $C$ desires to leave $C$, then a boundary node $v'$ of $C$ will want to leave $C$ as well. This allows us to focus only on boundary vertices. There can be at most two vertices per cluster, as long as our clustering is contiguous. We further observe that a cluster will become unstable only when an additional vertex joins the cluster. When this happens, since
the vertex that just joined, which has become a boundary vertex, does not desire to leave the cluster, it must be the boundary vertex on the other end that wants to leave the cluster. Since we maintain the order and the monotonicity of clusters and boundary vertices that we inspect, we end up with $\mathcal O(kn)$ time algorithm. We show that the following theorem holds. 

\begin{theorem}
\label{thm:1d-stable}
Let $\dataset\subseteq \R$ be a point set of size $n$. There exists an algorithm that for any $1\leq k\leq n$, gives an IP-stable $k$-clustering of $\dataset$, and has a time complexity of~$\mathcal O(kn)$. 
\end{theorem}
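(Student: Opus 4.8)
The plan is to work with the sorted point set $x_1 \le \cdots \le x_n$ and to search only among \emph{contiguous} clusterings, i.e.\ those in which each cluster is a block of consecutive points; such a clustering is described by $k-1$ boundary indices. The algorithm starts from the extreme contiguous clustering $C_1=\{x_1,\dots,x_{n-k+1}\}$, $C_2=\{x_{n-k+2}\},\dots,C_k=\{x_n\}$, in which every singleton is trivially stable, so only $C_1$ can contain an unstable point. It then repeatedly locates an unstable point and repairs the clustering by moving a single boundary point into an adjacent cluster, halting when no unstable point remains; at termination the clustering is IP-stable by definition, so the whole content is (i) correctness of the repair step, (ii) termination, and (iii) the $\mathcal O(kn)$ running time.

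First I would establish the structural fact behind the boundary lemma (Lemma~\ref{lemma_boundary_points}): in a contiguous clustering, ``how much a point $v\in C$ prefers to defect to a fixed neighbouring cluster'' is monotone as $v$ ranges over $C$ towards that neighbour. Hence if any point of $C$ is unstable, then one of the (at most two) boundary points of $C$ is unstable, and in fact prefers an \emph{adjacent} cluster — this is what lets us only ever touch boundary points and only ever move them one cluster over. Using the same one-dimensional monotonicity I would prove the two observations that make the repair step behave like a one-directional sweep: (a) if we move a boundary point $p$ from $C_i$ into the adjacent cluster $C_{i\pm 1}$, then $p$ is stable in its new cluster (the instability inequality $p$ satisfied before the move is exactly the inequality that now prevents $p$ from moving back, and $p$ cannot prefer a cluster that is even farther away); and (b) deleting a boundary point from a contiguous cluster cannot create a new instability, so after a move the only cluster that can become unstable is the one that received $p$, and — since $p$ itself is stable there — the instability must sit at its \emph{other} boundary, which then wants to move one step further in the same direction. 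Consequently the repairs form a monotone cascade: starting from $C_1$ the defections only ever push boundary points to the right, so each of the $k-1$ boundary indices is nonincreasing throughout the execution, and the cascade is absorbed no later than at $C_k$ (whose rightmost point $x_n$, being closer to everything inside $C_k$ than to anything in a cluster to its left, is always stable).

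Given this structure, termination and the running time follow from a potential argument: the sum of the $k-1$ boundary indices is a nonnegative integer, is at most $kn$ initially, and strictly decreases with every move, so there are $\mathcal O(kn)$ moves; moreover no cluster is ever emptied, since a singleton is always stable and hence we never try to remove its only point, so the process cannot get stuck and must halt at a clustering with no unstable boundary point — equivalently, by the boundary lemma, with no unstable point at all. For the per-move cost I would maintain, for each of the $\le 2k$ current boundary points, a running value of $\sum_{y\in C}d(x,y)$; because the points lie on a line this sum, and thus the stability test, is an $\mathcal O(1)$ expression in prefix sums of the coordinates together with the cluster sizes, a move updates only a constant number of these quantities, and since the cascade advances monotonically only $\mathcal O(1)$ boundary points need to be re-examined after each move. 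This gives $\mathcal O(1)$ amortized work per move and $\mathcal O(kn)$ in total.

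The crux I expect is item (b) together with the cascade claim: proving that a single boundary move never re-introduces instability ``behind'' the advancing front, so that the defection front genuinely moves in one direction and the boundary indices are monotone rather than oscillating. This is precisely where the one-dimensional geometry is used essentially, and it is also where one must handle the corner cases carefully — what happens when the cascade reaches the leftmost or rightmost cluster, and ensuring a move never shrinks a cluster below one point — so that the procedure is well-defined and the potential is truly monotone. Once this structural picture is in place, the prefix-sum bookkeeping and the potential count are routine.
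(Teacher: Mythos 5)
Your overall plan---sort the points, start from $C_1=\{x_1,\dots,x_{n-k+1}\}$ with singletons on the right, repair by moving boundary points into adjacent clusters, argue the separators move only one way, and maintain per-cluster sums so each stability test is $\mathcal O(1)$---is exactly the paper's algorithm, and your step (a) is correct and is implicitly used there. The problem is your key structural claim (b), which you yourself flag as the crux: it is false as stated. Moving the rightmost point $p$ of $C_i$ into $C_{i+1}$ \emph{can} re-introduce instability behind the front: the new rightmost point of the donor $C_i\setminus\{p\}$ may now want to follow $p$ (e.g.\ $C_i=\{0,\,9.9,\,10\}$, $C_{i+1}=\{10.1\}$: after $10$ defects, the point $9.9$ has own-cluster average $9.9$ but average distance $0.15$ to $\{10,10.1\}$), and the rightmost point of $C_{i-1}$ may become unstable once $C_i$ shrinks, because its average distance to $C_i$ drops while its own cluster is unchanged. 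The paper's proof explicitly accounts for this back-and-forth (``if the second cluster shrinks, some rightmost nodes of the first cluster might get unstable, and want to join the second cluster''), so the process is not a one-directional sweep in which only the receiving cluster's far boundary can become unstable, and the statement you planned to prove cannot be proven.

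What the argument actually needs---and what does hold---is the weaker invariant that no point ever prefers a cluster to its \emph{left}: every newly created instability is again a rightward defection, so every repair still moves a separator leftward and your potential argument (at most $(k-1)n$ moves, no cluster ever emptied) survives. To prove it, note that the initial clustering has no leftward preference; by Lemma~\ref{lemma_boundary_points} a leftward instability would have to appear at some cluster's leftmost point against its left neighbour; the donor's leftmost point only becomes more stable when the donor loses its rightmost point (its own-cluster average decreases, its distance to the left neighbour is unchanged); the receiver's new leftmost point is $p$ itself, which is stable by your (a); and all other clusters are unaffected or helped. With this corrected invariant the bookkeeping also changes slightly: after a move from $C_i$ to $C_{i+1}$ you must re-examine the right boundary points of $C_{i-1}$, $C_i$ and $C_{i+1}$, not just the advancing front---still $\mathcal O(1)$ points per move, so the $\mathcal O(kn)$ bound stands. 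Once (b) is replaced by this statement, your proof coincides with the paper's.
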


It is well-known that for any set of $n$ points in a metric space, there exists an embedding to one-dimensional Euclidean space with distortion $\mathcal{O}(n)$. Hence:
\begin{corollary}
Let $\dataset$ be a point set of size $n$ in an arbitrary metric space. There exists a polynomial time algorithm that returns an $\mathcal{O}(n)$-approximately IP-stable $k$-clustering of $\dataset$. 
\end{corollary}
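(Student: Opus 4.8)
The plan is to run the one-dimensional algorithm of Theorem~\ref{thm:1d-stable} on an $\mathcal O(n)$-distortion embedding of the metric into the line, and observe that pulling the resulting partition back to $\dataset$ multiplies the IP-stability ratio by exactly the distortion. Fix $1\le k\le n$. By the fact noted above, there is a map $f:\dataset\to\R$ of distortion $\mathcal O(n)$, and (by rescaling $f$) we may assume the normalization $d(x,y)\le |f(x)-f(y)|\le L\cdot d(x,y)$ for all $x,y\in\dataset$, where $L=\mathcal O(n)$; this $f$ is computable in polynomial time. Since $(\dataset,d)$ is a genuine metric space, $d(x,y)>0$ for $x\ne y$ forces $|f(x)-f(y)|\ge d(x,y)>0$, so $f$ is injective and $f(\dataset)\subseteq\R$ is an $n$-point set to which Theorem~\ref{thm:1d-stable} applies verbatim. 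Run that algorithm on $f(\dataset)$ to obtain a $k$-clustering that is exactly IP-stable for the Euclidean metric on the line, and return the induced partition $\mathcal C=(C_1,\dots,C_k)$ of $\dataset$ (identifying each point with its image under the bijection $f$).

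It remains to verify that $\mathcal C$ is $\mathcal O(n)$-approximately IP-stable in $(\dataset,d)$ in the sense of Definition~\ref{def_ip_stable_clustering_APPROX}. Fix $x\in\dataset$ and a cluster $C_i\ne C(x)$; if $C(x)=\{x\}$ the condition is vacuous, so assume $|C(x)|\ge 2$. Applying the lower distortion bound on the home cluster, exact IP-stability of $\mathcal C$ on the line, and the upper distortion bound on the competing cluster, in that order, gives
\begin{align*}
\frac{1}{|C(x)|-1}\sum_{y\in C(x)}d(x,y)
&\le \frac{1}{|C(x)|-1}\sum_{y\in C(x)}|f(x)-f(y)|\\
&\le \frac{1}{|C_i|}\sum_{y\in C_i}|f(x)-f(y)|\\
&\le \frac{L}{|C_i|}\sum_{y\in C_i}d(x,y)
\;=\; \mathcal O(n)\cdot\frac{1}{|C_i|}\sum_{y\in C_i}d(x,y).
\end{align*}
As $x$ and $C_i$ were arbitrary, every point is $\mathcal O(n)$-approximately {\ipste} in $\mathcal C$, which is what we wanted.

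For the running time, the embedding is produced in polynomial time and the one-dimensional algorithm runs in $\mathcal O(kn)$ time by Theorem~\ref{thm:1d-stable}, so the whole pipeline is polynomial. I do not expect a substantive obstacle here: the statement is essentially a packaging of Theorem~\ref{thm:1d-stable} together with the folklore line embedding. The only points requiring care are bookkeeping ones — fixing the normalization of $f$ so that its non-contraction bound is used on the home cluster and its expansion bound on the competing cluster in inequality~\eqref{def_ip_stable_ineq}, and noting that an $\mathcal O(n)$-distortion embedding of a metric is automatically injective so that Theorem~\ref{thm:1d-stable} may be invoked on $f(\dataset)$ without any perturbation argument.
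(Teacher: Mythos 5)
Your proposal is correct and follows exactly the route the paper intends: embed the metric into the line with $\mathcal O(n)$ distortion, run the Theorem~\ref{thm:1d-stable} algorithm on the image to get an exactly IP-stable clustering there, and pull the partition back, with the distortion factor appearing once in the chain of inequalities. The paper states this corollary without spelling out the details, and your write-up (including the normalization/injectivity bookkeeping) is a correct elaboration of that same argument.
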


In Appendix~\ref{appendix_p_equals_infty}, we consider an extension of the $1$-dimensional Euclidean case where a target size for each cluster is given, and the goal is to find an IP-stable $k$-clustering that minimizes the total violation from the target cluster sizes.
Formally, the goal is to solve 
\begin{align}\label{1dim-problem}
 \min_{\substack{\mathcal{C}=(C_1,\ldots,C_k):~\mathcal{C}~\text{is an}\\ \text{IP-stable clustering of $\dataset$}\\ \text{with contiguous clusters}}}~ \|(|C_1|-t_1,\ldots,|C_k|-t_k)\|_p,
\end{align}
where $t_1,\ldots,t_k\in[n]$ with $\sum_{i=1}^k t_i=n$ are 
given target cluster sizes, $p\in\R_{\geq 1}\cup\{\infty\}$ and $\|\cdot\|_p$ denotes the $\ell_p$-norm. In Appendix~\ref{appendix_p_equals_infty}, we provide an $\mathcal O(n^3k)$ dynamic programming approach for this problem.
\subsection{IP-Stable Clusterings on Trees}
\label{sec:tree-metric}
In this section, we show how to find  an {\ipste} 2-clustering when $d$ is a tree metric. Let $T = (V,E)$ be a given weighted tree rooted at $r$ ($r$ can be arbitrarily chosen). Our construction will first pick a \emph{boundary edge} among edges adjacent to~$r$. 
We then \emph{rotate} the boundary edge in a systematic manner until the clustering we have, which are two connected components we get by removing the boundary edge~$e$ from $T$, is {\ipste}. 
While it could be the case that non-contiguous {\ipste} clustering exists, we only consider contiguous clustering in our algorithm. Our algorithm implies that such a clustering exists.



For any graph $G=(V,E)$ and $v \in V$, let $C_G(v)$ be the set of vertices reachable from $v$ in $G$. For any tree $T =(V,E)$ and an edge $e \in E$, let $T\setminus e = (V,E \setminus \{e\})$ be the subgraph of $T$ obtained by removing $e$. Let $N(u)$ denote the set of neighbors of $u$ in $T$. We say that $u^f \in N(u)$ is the {\em furthest neighbor} of $u$ if $\overline{d}(u, C_{T{\setminus (u,u^f)}}(u^f)) \geq \overline{d}(u,C_{T{\setminus (u,w)}}(w))$ for any $w \in N(u)$. Next, we define a rotate operation which is crucial for our algorithm and show that the following property holds:



\begin{claim}
\label{claim:rotate-fair}
Suppose the current boundary edge is $(u,v)$, we define the operation $\textrm{rotate}(u)$ to be an operation that moves the boundary edge from $(u,v)$ to $(u,u^f)$. After $\textrm{rotate}(u)$ is called, the clustering defined by the 
   boundary edge $(u,u^f)$ is stable for $u$.
\end{claim}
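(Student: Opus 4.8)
The plan is to unwind the IP-stability condition for the single point $u$, write $u$'s cluster as the disjoint union of the subtrees hanging off $u$'s other neighbors, and then apply the defining property of the furthest neighbor. I expect the whole thing to be short; the content is just that, for a $2$-clustering, stability of a vertex means ``average distance to my own cluster $\le$ average distance to the other cluster,'' and cutting at the furthest neighbor is exactly the choice that makes the other cluster the one with the largest average distance.

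First I would fix notation: after $\textrm{rotate}(u)$ the boundary edge is $e=(u,u^f)$; let $A = C_{T\setminus e}(u)$ and $B = C_{T\setminus e}(u^f)$ be the two resulting clusters, so $u\in A$ and $u^f\in B$. Since $k=2$, Definition~\ref{def_ip_stable_clustering} says $u$ is stable in this clustering exactly when $A=\{u\}$ (the degenerate leaf case, trivially fine) or $\overline{d}(u,A)\le\overline{d}(u,B)$; so it remains to prove the inequality when $|A|\ge 2$.

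Next I would decompose $A$. Let $w_1,\dots,w_m$ be the neighbors of $u$ in $T$ other than $u^f$, and set $A_j = C_{T\setminus(u,w_j)}(w_j)$. Deleting the vertex $u$ from $T$ splits it into exactly the components $A_1,\dots,A_m,B$, so $A\setminus\{u\} = A_1\dot{\cup}\cdots\dot{\cup}A_m$ and $|A|-1 = \sum_j|A_j|$. Using $d(u,u)=0$ and $u\notin A_j$, this gives
\[\overline{d}(u,A) = \frac{1}{|A|-1}\sum_{y\in A\setminus\{u\}} d(u,y) = \frac{\sum_{j=1}^m |A_j|\,\overline{d}(u,A_j)}{\sum_{j=1}^m |A_j|},\]
so $\overline{d}(u,A)$ is a convex combination of the values $\overline{d}(u,A_j)$. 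Finally, the defining property of $u^f$ is that it maximizes $\overline{d}(u,C_{T\setminus(u,w)}(w))$ over $w\in N(u)$; in particular $\overline{d}(u,A_j)\le\overline{d}(u,B)$ for every $j$. A convex combination of numbers each at most $\overline{d}(u,B)$ is itself at most $\overline{d}(u,B)$, hence $\overline{d}(u,A)\le\overline{d}(u,B)$, which is precisely the stability condition for $u$.

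I do not expect a genuine obstacle here: the only care needed is the bookkeeping that $A\setminus\{u\}$ is exactly the disjoint union of the subtrees $A_j$ obtained by removing $u$ (so the weights $|A_j|$ sum to $|A|-1$), plus handling the leaf case $A=\{u\}$ separately. The real difficulty of this section lies not in Claim~\ref{claim:rotate-fair} itself but in the accompanying argument that repeatedly applying $\textrm{rotate}$ to the appropriate endpoint terminates and yields a clustering stable for \emph{every} vertex, not just the one just rotated.
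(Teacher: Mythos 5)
Your argument is correct, and it is the natural one: decompose $A\setminus\{u\}$ into the subtrees hanging off the neighbors of $u$ other than $u^f$, observe that $\overline{d}(u,A)$ is then a convex combination of the quantities $\overline{d}(u,C_{T\setminus(u,w_j)}(w_j))$, each of which is at most $\overline{d}(u,B)$ by the defining maximality of the furthest neighbor $u^f$, and handle the leaf case $A=\{u\}$ via the degenerate clause of Definition~\ref{def_ip_stable_clustering}. The paper itself states Claim~\ref{claim:rotate-fair} without spelling out a proof (it is treated as immediate from the definition of $u^f$), so your write-up simply makes explicit the intended argument; there is no gap, and no genuinely different route to compare against.
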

\paragraph{Algorithm} Now we are ready to describe our algorithm. First, pick an edge $(b_0=r,v)$ arbitrarily among edges adjacent to the root $r$, and call $\textrm{rotate}(b_0)$. After rotation, let $e_1 = (b_0, b_1)$ denote the boundary edge. By Claim~\ref{claim:rotate-fair}, the clustering defined by $e_1$ is stable for $b_0$. If it is also stable for $b_1$, then we have a stable clustering.
If not, suppose the boundary edge is now $e_i = (b_{i-1}, b_i)$. By induction, we assume that the clustering is stable for $b_{i-1}$. If this clustering is not stable, then it is not stable for $b_i$. We call $\textrm{rotate}(b_i)$ to move the boundary edge to $e_{i+1} = (b_i, b_{i+1})$. $b_{i+1}$ cannot be $b_{i-1}$, otherwise, the previous clustering would already be stable for $b_i$. We can repeat this process until we find a stable clustering. The process will terminate because the boundary edge is moved further away from the root at every step. Eventually, we will reach the point where $b_{i}^f = b_{i-1}$. This might happen at a leaf. Once it happens, then we have a stable clustering for the boundary nodes. In~\Cref{appendix:proof-boundary-nodes-stability-suffices}, we show that when considering a contiguous clustering, if the boundary nodes are stable, then every node is stable. In other words:

\begin{claim}
\label{thm:boundary-nodes-stability-suffices}
    Let $e = (u,v)$ be the boundary edge of a 2-clustering of $T$. If $u$ and $v$ are stable, then the clustering is stable for every node $x \in V$. 
\end{claim}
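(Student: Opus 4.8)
The plan is to use the one structural fact that makes tree metrics special. The $2$-clustering under consideration is $(C_u,C_v)$ with $C_u:=C_{T\setminus e}(u)$ and $C_v:=C_{T\setminus e}(v)$, i.e. the two connected components of $T\setminus e$, and $C(u)=C_u$, $C(v)=C_v$. Because $T$ is a tree and $e=(u,v)$ is precisely the edge whose removal separates $C_u$ from $C_v$, the unique path in $T$ from any $x\in C_u$ to any $y\in C_v$ traverses $u$, then the edge $e$, then $v$; hence $d(x,y)=d(x,u)+d(u,v)+d(v,y)$. By the symmetry of the situation it suffices to show that every $x\in C_u$ is stable (the argument for $x\in C_v$ is identical with $u$ and $v$ interchanged), and we may assume $|C_u|\ge 2$, since otherwise $C_u=\{u\}$ and its only point is stable by definition.

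First I would use the path decomposition to get an exact formula for the average distance of $x\in C_u$ to the opposite cluster. Plugging $d(x,y)=d(x,u)+d(u,v)+d(v,y)$ into $\overline{d}(x,C_v)=\frac{1}{|C_v|}\sum_{y\in C_v}d(x,y)$ splits the sum into the term $d(x,u)$ and the term $d(u,v)+\frac{1}{|C_v|}\sum_{y\in C_v}d(v,y)$, which does not depend on $x$. Evaluating the identity at $x=u$ shows that this $x$-independent term equals $\overline{d}(u,C_v)$, so one obtains the clean identity $\overline{d}(x,C_v)=\overline{d}(u,C_v)+d(x,u)$ for every $x\in C_u$.

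Second I would bound how much the average intra-cluster distance can increase when moving from $u$ to $x$ within $C_u$. Writing $\overline{d}(x,C_u)-\overline{d}(u,C_u)=\frac{1}{|C_u|-1}\sum_{y\in C_u}\bigl(d(x,y)-d(u,y)\bigr)$, I would peel off the summand $y=x$, which equals $-d(x,u)$, and bound each of the remaining $|C_u|-1$ summands by $d(x,u)$ using the triangle inequality (valid since a tree metric is a metric). This yields $\overline{d}(x,C_u)-\overline{d}(u,C_u)\le d(x,u)$. Combining the two steps with the hypothesis $\overline{d}(u,C_u)\le\overline{d}(u,C_v)$ gives $\overline{d}(x,C_u)\le\overline{d}(u,C_u)+d(x,u)\le\overline{d}(u,C_v)+d(x,u)=\overline{d}(x,C_v)$, which is exactly stability of $x$.

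I do not expect a genuine obstacle: the content is the path decomposition plus one triangle-inequality estimate. The points requiring care are (i) the singleton edge cases, where a cluster equals $\{u\}$ or $\{v\}$ and one must fall back on the definitional convention that a singleton's point is stable (keeping the sums in the form $\frac{1}{|C_v|}\sum_{y\in C_v}d(v,y)$ avoids dividing by zero), and (ii) the fact that the path decomposition uses crucially that $e$ is \emph{the} edge separating the two parts — this is where contiguity of the clustering is used, and indeed the statement is false for non-contiguous $2$-clusterings. The only mildly delicate bookkeeping is in the intra-cluster bound, where the $y=x$ summand must be separated out first so that the remaining count matches the $|C_u|-1$ normalization.
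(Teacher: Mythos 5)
Your proposal is correct and follows essentially the same route as the paper's proof: it uses the path decomposition through the boundary edge to get $\overline{d}(x,C_v)=d(x,u)+\overline{d}(u,C_v)$, bounds $\overline{d}(x,C_u)\le d(x,u)+\overline{d}(u,C_u)$ via the triangle inequality, and combines these with the stability of $u$. The only difference is cosmetic bookkeeping (peeling off the $y=x$ term versus adding the nonnegative $d(u,x)$ term), so there is nothing further to fix.
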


Finally, since the boundary edge is moving away from a fixed vertex, we call $\textrm{rotate}$ at most $T$ times.




\section{Experiments}\label{section_experiments}

%

We ran a number of experiments.\footnote{Code available on \url{https://github.com/amazon-research/ip-stability-for-clustering}} 
Our experiments are intended to  serve as a proof of concept.
They do not focus on the running times of the 
algorithms or their applicability to \emph{large} data sets. Hence,  
we 
only 
use rather small data sets of sizes 500~to~1885.

Let us define some quantities.
For any point $x$, let 
$$\violation(x) = \max_{C_i\neq C(x)}\frac{\frac{1}{|C(x)|-1}
\sum_{y\in C(x)} d(x,y)}{\frac{1}{|C_i|}\sum_{y\in C_i} d(x,y)},$$
where 
we use the convention that $\frac{0}{0}=0$. 
A point~$x$ is IP-stable if and only if $\violation(x)\leq 1$. 
Let $U=\{x\in \dataset: \text{$x$ is not stable}\}$.
We measure the extent to which a $k$-clustering~$\mathcal{C}=(C_1,\ldots,C_k)$ of a dataset $\dataset$ is \mbox{(un-)stable} by 
$\Nrunf = |U|$ (``number of unstable''), $\Maxviol=\max_{x\in\dataset} \violation(x)$ (``maximum violation''), and $\Meanviol$ (``mean violation'') defined as
\begin{align*}
    \Meanviol &=
    \begin{cases}
         \frac {1}{|U|}\sum_{x \in U} \violation(x) &\quad U\neq \emptyset\\
         0 &\quad U=\emptyset
    \end{cases}.
\end{align*}


The clustering~$\mathcal{C}$ is 
{\ipste} if and only if
$\Nrunf=0$ and 
$\Maxviol\leq 1$. 
$\Maxviol$ is the smallest value of $t$ such that $\mathcal{C}$ is a $t$-approximately {\ipste} clustering. 
We measure the quality of~$\mathcal{C}$ w.r.t. the goal of putting similar points into the same cluster by $\cost$ (``cost''), defined as the average within-cluster distance. Formally, 
\begin{align}\label{exp_cost_sq}
\cost=\sum_{i=1}^k \frac{1}{{|C_i| \choose 2}}\sum_{\{x,y\}\in C_i\times C_i}d(x,y).
\end{align}
%
We performed all experiments  
in Python. 
We used the 
standard clustering algorithms from Scikit-learn\footnote{\url{https://scikit-learn.org/}} or SciPy\footnote{\url{https://scipy.org/}} with all parameters set to their default~values.

\subsection{General Data Sets}\label{section_experiments_general}

We performed the same set of experiments on the first 1000 records of the 
Adult data set, the Drug Consumption data set (1885 records), and the Indian Liver Patient data set (579 records), which are all publicly available in the UCI repository \citep{UCI_all_four_data_sets_vers2}. 
%
As distance function~$d$ we  used the Euclidean, 
Manhattan or Chebyshev metric.  
Here we only present the results for the Adult data set 
on the Euclidean metric, the other results  are provided in Appendix~\ref{appendix_exp_general}.
Our observations are largely consistent between the different data sets~and~metrics. 



\paragraph{(Un-)Stability of Standard Algorithms}
Working with  the Adult data set, we only used its six numerical features
(e.g., age, hours worked per week), normalized to zero mean and unit variance, for representing records. 
We applied 
several standard clustering algorithms as well as the group-fair $k$-center algorithm of 
\citet{fair_k_center_2019} (referred to as $k$-center~GF) 
to the data set ($k$-means++; $k$-medoids) or its 
distance matrix
($k$-center using the greedy strategy of \citet{gonzalez1985}; $k$-center GF; single / average~/ complete linkage clustering). 
%
In order to study the extent to which these methods produce (un-)stable clusterings, for $k=2,5,10, 15, 20,30,\ldots,100$, 
we computed 
$\Nrunf$, $\Maxviol$ and $\Meanviol$ as defined 
above 
for the resulting $k$-clusterings. 
For measuring the quality of the clusterings we computed  $\cost$ as defined in~\eqref{exp_cost_sq}.

\newcommand{\scaleA}{0.275}

\begin{figure}[t]
\centering
\includegraphics[width=\columnwidth]{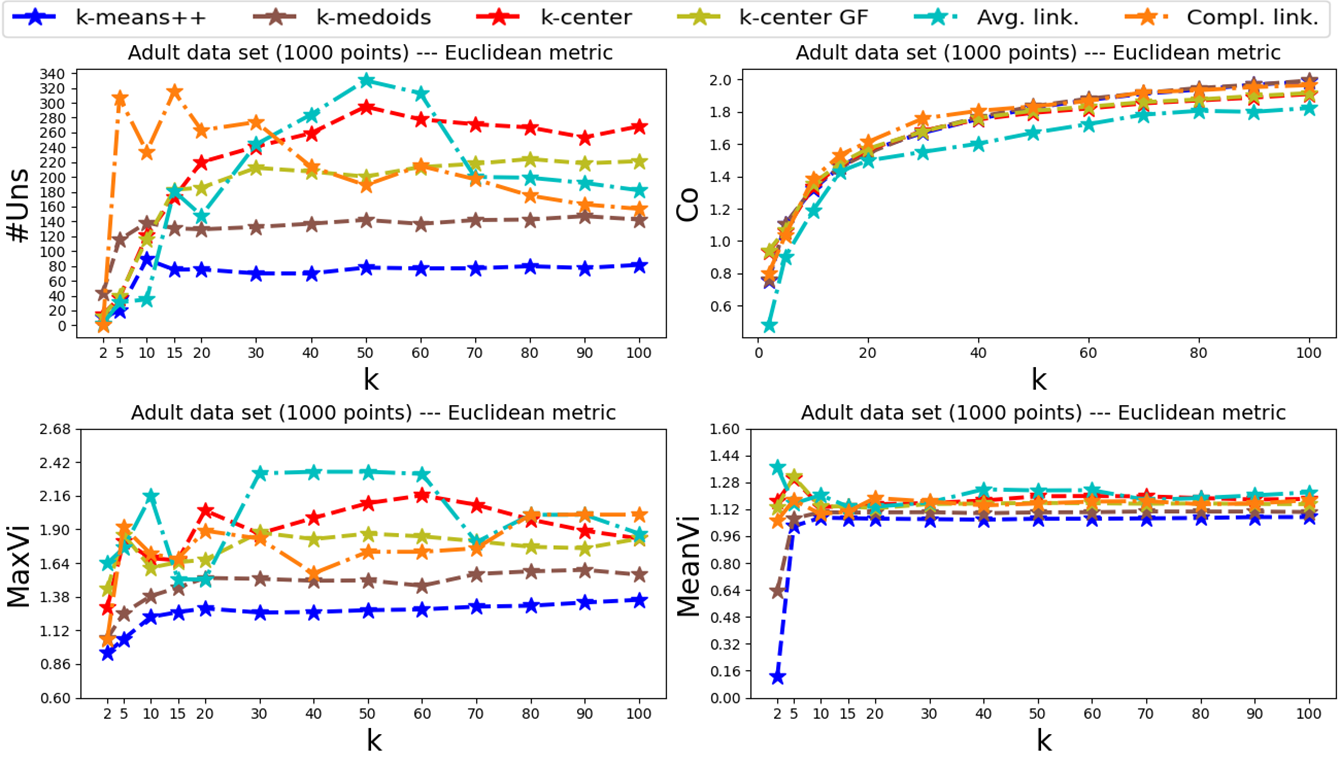}

\vspace{-2mm}
\caption{
$\Nrunf$ \textbf{(top-left)}, 
$\cost$ \textbf{(top-right)}, 
$\Maxviol$ \textbf{(bottom-left)} 
and $\Meanviol$ \textbf{(bottom-right)}
for the clusterings produced by the 
various 
algorithms as a function of 
$k$. $k$-center GF denotes the group-fair $k$-center algorithm of 
\citet{fair_k_center_2019}. 
}
\label{exp_gen_standard_alg_Adult}
\end{figure}


Figure~\ref{exp_gen_standard_alg_Adult} shows the results, where 
we removed the single linkage algorithm since it performs significantly worse than the other algorithms.
For $k$-means++, $k$-medoids, $k$-center and $k$-center GF we show average results obtained from running 
them 
for 
25 times since their outcomes depend on random initializations. 
We can see that, in particular for large values of~$k$, $k$-center, $k$-center GF, and the linkage algorithms can be quite unstable
with 
rather large values of $\Nrunf$ and $\Maxviol$. 
In contrast, $k$-means++ produces quite stable clusterings with $\Nrunf\leq 90$ and $\Maxviol\leq 1.4$ even when $k$ is large. 
For 
a 
baseline comparison, for a random clustering in which every data point was assigned to one of $k=100$ clusters uniformly at random we observed 
$\Nrunf= 990$, $\Maxviol=11.0$, and $\Meanviol=2.5$ on average. 
%
The $k$-medoids algorithm performs worse than $k$-means++, but better than the other algorithms. The clusterings produced by 
$k$-center~GF, which we ran with the constraint of choosing $\lfloor k/2\rfloor$ female and $\lceil k/2\rceil$ male centers,  
are slightly more 
stable than the ones produced by $k$-center.
However, note that it really depends on the data set whether group-fairness correlates with IP-stability or not (see  Appendix~\ref{example_group_fair_vs_indi_fair}). 
All  methods perform similarly w.r.t. the clustering cost~$\cost$.



\paragraph{Heuristics to Improve Standard Algorithms}

One might wonder whether we can  
modify the standard clustering algorithms 
in order to 
make them more stable.  
A natural idea to make any clustering 
more stable is to 
change it locally 
and to iteratively pick a data point that is not stable and assign 
it to the cluster that it is closest to. However, this idea turned out not to work as we observed that  usually we can only pick  a very small number of data points whose reassignment does not cause other 
data 
points that are initially stable to become unstable after the reassignment 
 (see 
 Appendix~\ref{example_local_search}). Another idea that we explored is specifically tied to linkage clustering. We provide the details in 
 Appendix~\ref{appendix_pruning_strategy}.

\subsection{1-dimensional Euclidean Data Sets}\label{section_experiments_1D}
In Appendix~\ref{appendix_exp_1dim} we present experiments with our DP approach 
on real world 1-dimensional Euclidean data sets.

\section{Discussion}\label{section_discussion}



We proposed a notion of IP-stability that aims 
at data points being 
well represented by their clusters. 
Formally, it requires every point, on average, to be closer to the points in its own
cluster than to the points in any other cluster. 
{\ipst} is not restricted to centroid-based clustering and raises numerous questions, some of which we addressed: in a general metric space, we showed it is NP-hard to decide whether an {\ipste} $k$-clustering exists and we provided approximation algorithms for the problem when the input contains a ``well-separated'' clustering or when a small fraction of  inputs can be excluded in the output. 
For one-dimensional Euclidean data sets we can compute an {\ipste} clustering in polynomial time. For the case of tree metrics, we showed how to find an {\ipste} $2$-clustering in polynomial time.
We 
proved 
that in the worst-case scenario some standard clustering algorithms including $k$-means++ provide arbitrarily unstable clusterings. However, our experiments show that $k$-means++ works quite well in practice. 


While our works focus mostly on the upper bound side, understanding the lower bound, e.g., hardness of approximation for {\ipst} is one important open question.
It is a natural direction to explore whether our ideas for {\ipste} $2$-clustering on trees can be extended to $k$-clustering for $k>2$? Finally, it would be very interesting to provide theoretical evidence supporting our empirical findings that show the surprising effectiveness of $k$-means++ for our proposed notion of {\ipst}, in spite of the existence of bad worst-case scenarios.


\section*{Acknowledgements}
SA is supported by the Simons Collaborative grant on Theory of Algorithmic Fairness, and the National Science Foundation
grant CCF-1733556. Part of the research was done when the author was visiting Northwestern University.
JM is supported by fundings from the NSF AI Institute for the Foundations of Machine Learning (IFML), an NSF Career award, and the Simons Collaborative grant on Theory of Algorithmic Fairness.
AV is supported by NSF award CCF-1934843.

\bibliography{mybibfile_fairness}
\bibliographystyle{plainnat}

\clearpage
\onecolumn
\appendix

\section*{Appendix}\label{appendix}

\section{Other Related Work}\label{sec:other_related_work}
\paragraph{Other notions of fairness for clustering}

Fair clustering was first studied in the seminal work of \citet{fair_clustering_Nips2017}. 
It is based on the fairness notion of disparate impact
\citep{feldman2015}, which says that the output of a 
machine learning 
algorithm should be independent of a sensitive attribute, and 
asks that
each cluster has proportional representation from different
demographic groups.
\citeauthor{fair_clustering_Nips2017}
provide approximation algorithms that incorporate
their
notion 
into
$k$-median and $k$-center  clustering, assuming that there are only two 
demographic 
groups. 
Several follow-up works extend this line of work to other 
clustering objectives such as $k$-means or spectral clustering, 
multiple or non-disjoint groups, 
some variations of the fairness notion, 
 to address scalability issues 
 or to improve approximation guarantees   \citep{roesner2018,sohler_kmeans,ahmadian2019,anagnostopoulos2019,backurs2019,bera2019,bercea2019,huang2019,fair_SC_2019,
 ahmadian_fair_hierarchical_clustering,ahmadian2020, ahmadi2020fair,esmaeili2020,harb2020}. 
The recent work of \citet{davidson2020} shows that for two groups, when given any clustering, one can efficiently compute the fair clustering 
(fair according to the notion of \citeauthor{fair_clustering_Nips2017})
that is most similar to the given clustering  
using 
linear programming.
\citeauthor{davidson2020} also show that it is NP-hard to decide whether a data set allows for a fair clustering that 
additionally 
satisfies some given must-link constraints.
They mention that such must-link constraints could be used for encoding individual level fairness constraints of the form ``similar data points must go to the same cluster''. 
However, for such a notion of individual fairness it remains unclear which pairs of data points exactly should be subject to a must-link~constraint.

Alternative fairness notions for clustering 
are
tied to centroid-based clustering such as $k$-means,  $k$-median and $k$-center,  
 where one chooses $k$ centers and then forms clusters by assigning every data point to its closest center: 
(i) Motivated 
by the 
application of 
data summarization, \citet{fair_k_center_2019} propose 
that the various demographic groups should be 
proportionally represented~among 
the chosen centers.  
(ii) \citet{chen2019} propose a notion of proportionality that requires that 
no sufficiently large subset of data points could jointly reduce 
their distances from their 
closest~centers by choosing a new center.   
This 
notion is similar to our notion of IP-stability  in that it 
assumes that an individual data point 
strives 
to be well represented (in the notion of \citeauthor{chen2019} by being close to a center) 
and it  also 
does not rely on demographic group information. 
However, while our notion aims at ensuring stability for every single data point, the notion of \citeauthor{chen2019} only looks at sufficiently large subsets.~\citet{micha2020proportionally} further studied the proportionally fair clustering in Euclidean space and on graph metrics.
(iii)~\citet{Brubach2020,brubach2021fairness} introduce the notions of pairwise fairness and community preserving fairness and incorporate them into $k$-center clustering. 
(iv)~\citet{ghadiri2021} and \citet{abbasi2021} study a fair version of $k$-means clustering where the goal is to minimize the maximum average cost that a demographic group incurs (the maximum 
is over the demographic groups and the average is over the various data points within a group).~\citet{makarychev2021approximation} and 
\citet{chlamtavc2022approximating} 
designed optimal algorithms for minimizing the global clustering cost of a clustering (e.g., $k$-median and $k$-means cost) with respect to this notion of fairness and its generalization known as clustering with cascaded norms.  

\section{Proof of Theorem~\ref{theorem_hardness}}\label{proof_hardness}
We show NP-hardness 
of the 
{\ipste} clustering
decision problem 
(with $k=2$ 
and $d$ required to be a metric) 
via a reduction from a variant of 3-SAT.   
It is well known that deciding whether a Boolean formula in conjunctive normal form, where each clause 
comprises at most three literals, is satisfiable is NP-hard. NP-hardness also holds for a restricted version of 3-SAT, where each variable 
occurs in at most three clauses \citep[][page 259]{garey_comp_and_intractability}. Furthermore, we can require the formula to have the 
same number of clauses as number of variables as the following transformation shows:
let $\Phi$ be a formula with $m$ clauses and $n$ variables. If $n>m$, we introduce $l=\lfloor\frac{n-m+1}{2}\rfloor$ new variables $x_1,\ldots,x_l$ 
and for each of them add three clauses $(x_i)$ to $\Phi$ (if $n-m$ is odd, we add only two clauses $(x_l)$). The resulting formula has the 
same number of clauses as number of variables and is satisfiable if and only if $\Phi$ is satisfiable. Similarly, if $n<m$, we introduce $l=\lfloor3\cdot\frac{m-n}{2}+\frac{1}{2}\rfloor$ 
new variables $x_1,\ldots,x_l$ and add to $\Phi$ the clauses $(x_1\vee x_2\vee x_3),(x_4\vee x_5\vee x_6),\ldots,(x_{l-2}\vee x_{l-1}\vee x_l)$ (if $m-n$ is odd, the last clause is 
$(x_{l-1}\vee x_l)$ instead of $(x_{l-2}\vee x_{l-1}\vee x_l)$). As before, the resulting formula has the 
same number of clauses as number of variables and is satisfiable if and only if $\Phi$ is satisfiable.

So let $\Phi=C_1 \wedge C_2\wedge \ldots \wedge C_n$ be a formula in conjunctive normal form over variables $x_1,\ldots,x_n$ 
such that each clause~$C_i$ comprises at most three literals 
$x_j$ or $\neg x_j$ and each variable occurs in at most three clauses (as either $x_j$ or $\neg x_j$). 
We construct a metric space $(\dataset,d)$ in time polynomial in $n$ such that $\dataset$ has an 
{\ipste} 
2-clustering with respect to $d$ if and only if $\Phi$ is satisfiable 
(for $n$ sufficiently large). 
We set 
\begin{align*}
\dataset=\{True, False,\star,\infty,C_1,\ldots,C_n,x_1,\neg x_1,\ldots,x_n,\neg x_n\} 
\end{align*}
and 
\begin{align*}
d(x,y)=\left[d'(x,y)+\indi\{x\neq y\}\right]+\indi\{x\neq y\}\cdot\max_{x,y\in\dataset}\left[d'(x,y)+1\right],\quad x,y\in\dataset,
\end{align*}
for some symmetric function $d':\dataset\times\dataset\rightarrow\R_{\geq 0}$ with $d'(x,x)=0$, $x\in\dataset$, that we specify in the next paragraph. 
It is straightforward to see that $d$ is a metric. Importantly, note that for any $x\in\dataset$, inequality \eqref{def_ip_stable_ineq} holds with respect to $d$ if and only 
if it holds with respect to $d'$.

We set $d'(x,y)=0$ for all $x,y\in\dataset$ except for the following:
\begin{align*}
d'(True,False)&=A,\\
d'(True,\star)&=B,\\
d'(\star,False)&=C,\\
d'(C_i,False)&=D,\quad i=1,\ldots,n,\\
d'(C_i,\star)&=E,\quad i=1,\ldots,n,\\
d'(\infty,True)&=F,\\
d'(\infty,False)&=G,\\
d'(\infty,\star)&=H,\\
d'(C_i,\infty)&=J,\quad i=1,\ldots,n,\\
d'(x_i,\neg x_i)&=S,\quad i=1,\ldots,n,\\
d'(C_i,\neg x_j)&=U, \quad  (i,j)\in\{(i,j)\in\{1,\ldots,n\}^2:x_j\text{~appears in~}C_i\},\\
d'(C_i, x_j)&=U, \quad (i,j)\in\{(i,j)\in\{1,\ldots,n\}^2:\neg x_j\text{~appears in~}C_i\},
\end{align*}
where we set
\begin{align}\label{choice_numbers}
\begin{split}
 A &=  n, \qquad F = n^2, \qquad B = 2F=2n^2, \qquad E = \frac{5}{2}F = \frac{5}{2} n^2, \\
 J &= E+\log n=\frac{5}{2} n^2+\log n, \qquad D = J + \log^2 n=\frac{5}{2} n^2+\log n+\log^2 n,\\
 U &=3J=\frac{15}{2} n^2+3\log n, \qquad H= n D +E=\frac{5}{2} n^3+\frac{5}{2} n^2+n\log n+n\log^2 n,\\
G &= H+2n^2-n-2n\log^2 n=\frac{5}{2} n^3+\frac{9}{2} n^2-n+n\log n-n\log^2 n,\\
S &= (3n+3)U=\frac{45}{2} n^3+\frac{45}{2} n^2+9n\log n+9\log n,\\ 
C &= \frac{A + G + n  D}{2}=\frac{5}{2} n^3+\frac{9}{4} n^2+n\log n.
\end{split}
\end{align}

We 
show that 
for $n\geq 160$ 
there 
is 
a satisfying assignment 
for $\Phi$
if and only if there 
is 
an 
{\ipste}
2-clustering of  $\dataset$.

\begin{itemize}[leftmargin=*]
\item  \emph{``Satisfying assignment $\Rightarrow$ 
{\ipste}
2-clustering''}

Let us assume we are given a satisfying assignment of $\Phi$. We may assume that if $x_i$ only appears as $x_i$ in $\Phi$ and not as $\neg x_i$, then $x_i$ is true; 
similarly, if $x_i$ only appears as $\neg x_i$, then $x_i$ is false.
We construct a clustering of $\dataset$ into two clusters $V_1$ and $V_2$ as follows:
\begin{align*}
 V_1&=\{True,\infty,C_1,\ldots,C_n\}\cup\{x_i: x_i\text{~is true in sat. ass.}\}\cup\{\neg x_i: \neg x_i\text{~is true in sat. ass.}\},\\
 V_2&=\{False,\star\}\cup\{x_i: x_i\text{~is false in satisfying assignment}\}\cup\{\neg x_i: \neg x_i\text{~is false in sat. ass.}\}.
\end{align*}
It is $|V_1|=2+2n$ and $|V_2|=2+n$. We need show that every data point in $\dataset$ is stable.
This is equivalent to verifying that the following inequalities are true:

\vspace{2mm}
Points in $V_1$:
\begin{align}\label{cond_points_X1}
 True:~~~~&\frac{1}{1+2n}\sum_{v\in V_1}d'(True,v)=\frac{F}{1+2n}\leq\frac{A+B}{2+n}=\frac{1}{2+n}\sum_{v\in V_2}d'(True,v)\\
\infty:~~~~&\frac{1}{1+2n}\sum_{v\in V_1}d'(\infty,v)=\frac{F+nJ}{1+2n}\leq \frac{G+H}{2+n}= \frac{1}{2+n}\sum_{v\in V_2}d'(\infty,v)\\
C_i:~~~~& \frac{1}{1+2n}\sum_{v\in V_1}d'(C_i,v)\leq\frac{J+2U}{1+2n}\leq \frac{U+D+E}{2+n}\leq \frac{1}{2+n}\sum_{v\in V_2}d'(C_i,v)\\
x_i:~~~~&\frac{1}{1+2n}\sum_{v\in V_1}d'(x_i,v)\leq\frac{2U}{1+2n}\leq \frac{S}{2+n}\leq \frac{1}{2+n}\sum_{v\in V_2}d'(x_i,v)\\
\neg  x_i:~~~~&\frac{1}{1+2n}\sum_{v\in V_1}d'(\neg x_i,v)\leq\frac{2U}{1+2n}\leq \frac{S}{2+n}\leq \frac{1}{2+n}\sum_{v\in V_2}d'(\neg x_i,v)\label{cond_points_X1_end}
\end{align}

\vspace{2mm}
Points in $V_2$:
\begin{align}
False:~~~~&\frac{1}{1+n}\sum_{v\in V_2}d'(False,v)=\frac{C}{1+n}\leq\frac{A+G+nD}{2+2n}=\frac{1}{2+2n}\sum_{v\in V_1}d'(False,v)\label{cond_False_is_fair}\\
\star:~~~~&\frac{1}{1+n}\sum_{v\in V_2}d'(\star,v)=\frac{C}{1+n}\leq\frac{B+H+nE}{2+2n}=\frac{1}{2+2n}\sum_{v\in V_1}d'(\star,v)\\
x_i:~~~~&\frac{1}{1+n}\sum_{v\in V_2}d'(x_i,v)=0\leq \frac{S}{2+2n}\leq \frac{1}{2+2n}\sum_{v\in V_1}d'(x_i,v)\\
\neg  x_i:~~~~&\frac{1}{1+n}\sum_{v\in V_2}d'(\neg x_i,v)=0\leq \frac{S}{2+2n}\leq \frac{1}{2+2n}\sum_{v\in V_1}d'(\neg x_i,v)\label{cond_points_X2_end}
\end{align}

It is straightforward to check that for our choice of $A,B,C,D,E,F,G,H,J,S,U$ as specified 
in \eqref{choice_numbers} all inequalities~\eqref{cond_points_X1} to \eqref{cond_points_X2_end} are true.

\vspace{2mm}
\item \emph{``{\ipste} 2-clustering $\Rightarrow$ satisfying assignment''}

Let us assume that there is an {\ipste} clustering of $\dataset$ with two clusters $V_1$ and $V_2$.
For any partitioning of $\{C_1,\ldots,C_n\}$ into two sets of size $l$ and $n-l$ ($0\leq l\leq n$) we denote the two sets by $\mathcal{C}_l$ and $\widetilde{\mathcal{C}}_{n-l}$.

We first show that  $x_i$ and $\neg x_i$ cannot be contained in the same cluster (say in $V_1$). This is because if we assume that $x_i,\neg x_i \in V_1$, 
for our choice of $S$ and $U$ in \eqref{choice_numbers} we have
\begin{align*}
\frac{1}{|V_2|}\sum_{v\in V_2}d'(x_i,v)\leq U< \frac{S}{3n+2}\leq \frac{1}{|V_1|-1}\sum_{v\in V_1}d'(x_i,v)
\end{align*}
in contradiction to $x_i$ being 
stable.
As a consequence we have $n\leq |V_1|,|V_2|\leq 2n+4$.

Next, we show that due to our choice of $A,B,C,D,E,F,G,H,J$ in \eqref{choice_numbers} none of the following cases can be true:

\vspace{1mm}
\begin{enumerate}
\setlength\itemsep{5mm}

 \item $\{True,\infty\}\cup \mathcal{C}_l\subset V_1$ and $\widetilde{\mathcal{C}}_{n-l}\cup\{False,\star\}\subset V_2$ 
 for any $0\leq l< n$

 \vspace{2.4mm}
In this case, $False$ would not be 
stable since for all $0\leq l< n$,
\begin{align*}
\frac{1}{|V_1|}\sum_{v\in V_1}d'(False,v)= \frac{A+G+lD}{l+2+n}<\frac{C+(n-l)D}{n-l+1+n}=\frac{1}{|V_2|-1}\sum_{v\in V_2}d'(False,v).
\end{align*}

 \item $\{True\}\cup \mathcal{C}_l\subset V_1$ and $\widetilde{\mathcal{C}}_{n-l}\cup\{False,\star,\infty\}\subset V_2$ for any $0\leq l\leq n$

 \vspace{2.4mm}
 In this case, $False$ would not be 
 stable since for all $0\leq l\leq n$,
\begin{align*}
 \frac{1}{|V_1|}\sum_{v\in V_1}d'(False,v)=\frac{A+lD}{l+1+n}<\frac{C+G+(n-l)D}{n-l+2+n}=\frac{1}{|V_2|-1}\sum_{v\in V_2}d'(False,v). 
\end{align*}

\item $\{False,\infty\}\cup\mathcal{C}_l\subset V_1$ and $\widetilde{\mathcal{C}}_{n-l}\cup\{True,\star\}\subset V_2$ for any $0\leq l\leq n$

\vspace{2.4mm}
In this case, $True$ would not be 
stable since for all $0\leq l\leq  n$,
 \begin{align*}
\frac{1}{|V_1|}\sum_{v\in V_1}d'(True,v)= \frac{A+F}{l+2+n}<\frac{B}{n-l+1+n}=\frac{1}{|V_2|-1}\sum_{v\in V_2}d'(True,v).
\end{align*}

\item $\{False\}\cup\mathcal{C}_l\subset V_1$ and $\widetilde{\mathcal{C}}_{n-l}\cup\{True,\star,\infty\}\subset V_2$ for any $0\leq l\leq n$

\vspace{2.4mm}
In this case, $True$ would not be 
stable since for all $0\leq l\leq  n$, 
\begin{align*}
 \frac{1}{|V_1|}\sum_{v\in V_1}d'(True,v)=\frac{A}{l+1+n}<\frac{B+F}{n-l+2+n}=\frac{1}{|V_2|-1}\sum_{v\in V_2}d'(True,v).
\end{align*}

\item $\{\star,\infty\}\cup\mathcal{C}_l\subset V_1$ and $\widetilde{\mathcal{C}}_{n-l}\cup\{False,True\}\subset V_2$ for any $0\leq l\leq n$

\vspace{2.4mm}
In this case, $\star$ would not be 
stable since for all $0\leq l\leq  n$,
\begin{align*}
\frac{1}{|V_2|}\sum_{v\in V_2}d'(\star,v)= \frac{B+C+(n-l)E}{n-l+2+n}<\frac{H+lE}{l+1+n}=\frac{1}{|V_1|-1}\sum_{v\in V_1}d'(\star,v). 
\end{align*}

\item $\{\star\}\cup\mathcal{C}_l\subset V_1$ and $\widetilde{\mathcal{C}}_{n-l}\cup\{False,True,\infty\}\subset V_2$ for any $0\leq l\leq n$

\vspace{2.4mm}
In this case, $\infty$ would not be 
stable since for all $0\leq l\leq n$,
\begin{align*}
\frac{1}{|V_1|}\sum_{v\in V_1}d'(\infty,v)=\frac{H+lJ}{l+1+n}<\frac{F+G+(n-l)J}{n-l+2+n}=\frac{1}{|V_2|-1}\sum_{v\in V_2}d'(\infty,v).
\end{align*}

\item $\mathcal{C}_l\subseteq V_1$ and $\widetilde{\mathcal{C}}_{n-l}\cup\{True,False,\star,\infty\}\subseteq V_2$ for any $0\leq l\leq n$
 
 \vspace{2.4mm}
 In this case, $True$ would not be 
 stable since for all $0\leq l\leq  n$,
\begin{align*}
  \frac{1}{|V_1|}\sum_{v\in V_1}d'(True,v)=0<\frac{A+B+F}{3+(n-l)+n}=\frac{1}{|V_2|-1}\sum_{v\in V_2}d'(True,v).  
\end{align*}

\item $\{\infty\}\cup\mathcal{C}_l\subseteq V_1$ and $\widetilde{\mathcal{C}}_{n-l}\cup\{True,False,\star\}\subseteq V_2$ for any $0\leq l\leq n$

\vspace{2.4mm}
In this case, $True$ would not be 
stable since for all $0\leq l\leq  n$,
\begin{align*}
  \frac{1}{|V_1|}\sum_{v\in V_1}d'(True,v)=\frac{F}{1+l+n}<\frac{A+B}{2+(n-l)+n}=\frac{1}{|V_2|-1}\sum_{v\in V_2}d'(True,v).
\end{align*}

\end{enumerate}

\vspace{2mm}
Of course, in all these cases we can exchange the role of $V_1$ and $V_2$. Hence, $True,\infty, C_1,\ldots,C_n$ must be contained in one cluster and $\star,False$ must be 
contained in the other cluster. W.l.o.g., let us assume $True,\infty, C_1,\ldots,C_n\in V_1$ and  $\star,False\in V_2$ and hence 
$|V_1|=2n+2$ and $|V_2|=n+2$.

Finally, we show that for the clause $C_i=(l_j)$ or $C_i=(l_j\vee l_{j'})$ or $C_i=(l_j\vee l_{j'} \vee l_{j''})$, with the literal $l_j$ equaling $x_j$ or $\neg x_j$,  
it cannot be the case that $C_i,\neg l_j$ or  $C_i,\neg l_j$, $\neg l_{j'}$ or  $C_i,\neg l_j$, $\neg l_{j'}$, $\neg l_{j''}$ are all contained in $V_1$. 
This is because otherwise
\begin{align}\label{wider8}
 \frac{1}{|V_2|}\sum_{v\in V_2}d'(C_i,v)=\frac{D+E}{n+2}<\frac{U+J}{2n+1}\leq\frac{1}{|V_1|-1}\sum_{v\in V_1}d'(C_i,v)
\end{align}
for our choice of $D,E,J,U$ in \eqref{choice_numbers} and $C_i$ would not be 
stable. Consequently, since $x_j$ and $\neg x_j$ are not in the same cluster, 
for each clause $C_i$ at least one of its literals must be in~$V_1$. 

Hence, if we set every literal $x_i$ or $\neg x_i$ that is contained in $V_1$ to a true logical value 
and every literal $x_i$ or $\neg x_i$ that is contained in~$V_2$ to a false logical value, we obtain a valid assignment 
that makes $\Phi$ true.  
\hfill $\square$
\end{itemize}

\section{Missing Proofs from \Cref{sec:approx-stability}}
\label{sec:appendix-proof-approx}

\subsection{Proof of \Cref{clm:hst-extend}}
\begin{proof}
    The idea is to extend nodes corresponding to actual points in $V$ so that they are the leaves and that they are at the same depth. 
    The distortion of the modified tree will be worse by a constant factor, but that is fine for our purpose as the construction has $\mathcal O(polylog(n))$ distortion.
    Let $(V',d_T)$ be a given partial tree metric from the construction in \Cref{thm:main_partial_tree} and let $\ell = \depth(d_T)$ be the depth of the tree. 
    If a node~$v \in V$ is not a leaf of $T$, 
    we augment $T$ by replacing $v$ with $v'$ and connecting $v'$ to $v$ with a path  $v', v_1, v_2, \ldots, v_{\ell-\depth_T(v) -1}, v$.
    Let $T'$ be the tree after our modification.
    By this augmentation, $v$ is now a leaf in $T'$ and $\depth_{T'}(v) = \ell - 1$.
    Let $(V'',d_{T'})$ be the new tree metric.
    For any $u\in V$, we show that $d_{T'}(u,v) \leq 3 d_{T}(u,v)$.
    In words, this is because the weight of the added path between $v'$ and $v$ is a telescopic sum, and this sums up to at most the weight of the edge connecting $v$ to its parent in $T$, i.e. $p_T(v)$.
    That is,
    $d_{T'}(u,v) = d_{T'}(u,v') + d_{T'}(v',v) = d_{T}(u,v) + d_{T'}(v',v) \leq d_{T}(u,v) + d_{T'}(p_{T'}(v'),v') = d_{T}(u,v) + d_{T}(p_{T}(v),v)\leq  3d_{T}(u,v)$. 
    Since we can apply this argument to any node, the claim holds true.
\end{proof}

\subsection{Proof of \Cref{claim:main_clustering_hst}}
\begin{proof}
    Let $T$ be our HST tree rooted at $r$. Let $S = \{v_1, v_2 \ldots, v_{k}\}$ be a set of nodes that we are going to pick. Suppose they are sorted in such a way that for every $1\leq i<k$, $\depth(v_i) \geq \depth(v_{i+1})$. We let $\depth(r)=0$, and for every node~$u$, $\depth(u) = \depth(p(u)) + 1$ where $p(u)$ is the parent of $u$ in $T$. 
    
    To find $S$, consider the lowest depth, i.e., furthest from the root, $\ell$ such that the number of nodes at depth $\ell$ is at most $k$. Let $V_\ell$ be the set of nodes at depth $\ell$. Initially, $S = \emptyset$.
    For each $v \in V_\ell$, if $|S| + |\textrm{children}(v)| + |V_\ell| - 1 < k$, then add $\textrm{children}(v)$ to $S$ and remove $v$ from $V_\ell$ and continue.
    Otherwise, if $|S| + |\textrm{children}(v)| + |V_\ell| - 1 = k$, then add $\textrm{children}(v)$ and $V_\ell - \{v\}$ to $S$. Since $|S| = k$ after this operation, we are done. In the last case where $|S| + |\textrm{children}(v)| + |V_\ell| - 1 > k$, select a subset of $\textrm{children}(v)$ of size $ k - |S| + |V_\ell|$. Let this subset be $V_v$. Let $S = S \cup V_\ell \cup V_v$. 
    An example is given in Figure~\ref{fig:FRT-stable-clustering} for $k=4$. 
    
\begin{figure}[!ht]
    \centering
    \includegraphics[width=0.6\textwidth]{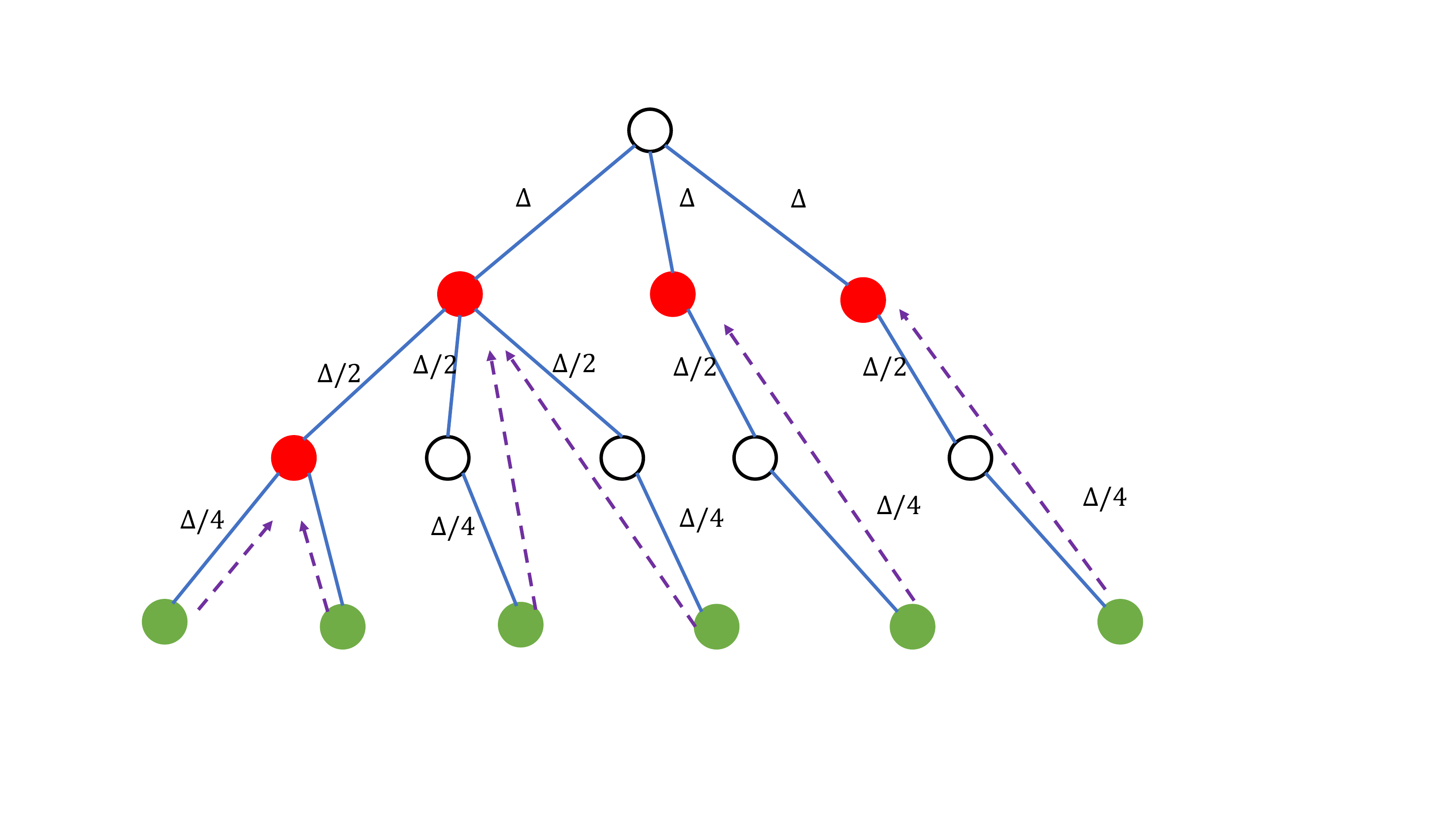}
    \caption{IP-stable $4$-clustering on a $2$-HST. $S$ is the set of the red nodes.}\label{fig:FRT-stable-clustering}
\end{figure}


    For $i \leq k$, let $T_i$ to be the subtree of $T \setminus \left ( \bigcup_{j<i} T_j \right )$ rooted at $v_i$.  
    Let $C_i$ denote the $i^{th}$ cluster consisting of the leaves of $T_i$.
    We argue that $\mathcal C = (C_1, C_2, \ldots, C_k)$ is an IP-stable clustering of $T$.
    Let $u,v$ be two leaves in $T$. 
    Let $x$
    be the lowest common ancestor (lca) of $u$ and $v$. Since $T$ is an HST tree, and since all leaves are at the same depth, $d_T(u,v) = d_T(u,x) + d_T(x,v) = 2 d(u,x)$. 
    For any pair of $C_i\in \mathcal{C}, u\in C_i$ where $u$ is a leaf, as we show above, the distance from $u$ to any other leaf node~$v\in C_i$ is at most $2 d_T(u,v_i)$. This is because the lca of $u$ and $v$ can either be $v_i$ or its descendant. 
    Hence, we can say that $
    \frac{1}{|C_i| -1} \sum_{w \in C_i}d_T(u, w) \leq 2 d_T(u, v_i)$. 
    Moreover, for any leaf node~$v' \in C_{j \neq i}$,
    we have that 
    $d_T(u, v') \geq 2 d_T(u, v_i)$.
    This is because the lca of $u$ and $v'$ can either be $v_i$ or its ancestor.
    Hence, $\frac{1}{|C_j|}\sum_{w \in C_j}  d_T(u, w) \geq 2 d_T(u, v_i)$. 
    Therefore, $\frac{1}{|C_i| -1} \sum_{w \in C_i}d(u, w)\leq \frac{1}{|C_j|}\sum_{w \in C_j}  d_T(u, w)$, and $\mathcal{C}$ is a stable clustering for $T$.
    
    We end the proof by noting that for any $u,v \in C_i$ and $w \in C_j$, $d_T(u,v) \leq 2 d_T(u,v_i) \leq d_T(u, v_j) \leq d_T(u,w) $.
\end{proof}

\subsection{Proof of \Cref{clm:main_good_edge_property}}
\begin{proof}
    \begin{align*}
        d(x,y)
        &\leq \frac{2}{\gamma-1} \ol{d}(y,C_j) & \text{\big (By (2) in \Cref{lem:main_danielystructure} \big)} \\
        &\leq \frac{\gamma-1}{\gamma} \ol{d}(y,C_j) & \big (\gamma \geq 2+\sqrt{3} \big )\\
        &\leq d(y,z) & \text{\big (By (1) in \Cref{lem:main_danielystructure}\big)}
    \end{align*}
\end{proof}

\subsection{Proof of \Cref{thm:main_exact_bf}}
\begin{proof}
Let us consider a modification to \emph{single-linkage} algorithm:
We consider edges in non-decreasing order and only merge two clusters if one (or both) size is consisting of at most $\alpha n$ points.
\Cref{clm:main_good_edge_property} suggests that the edges within an underlying cluster will be considered before edges between two clusters. Since we know that any cluster size is at least $\alpha n$, when considering an edge, it is safe to use this condition to merge them. If it is not the case, we can ignore the edge. 

By this process, we will end up with a clustering where each cluster has size at least $\alpha n$. There are at most $\mathcal O(1/\alpha)$ such clusters. We then can enumerate over all possible clusterings, as there are at most $\mathcal O( \frac{1}{\alpha^k})$ such clusterings, the running time follows.
\end{proof}

\subsection{Proof of \Cref{clm:main_good_edge_property_2}}
\begin{proof}
From \Cref{lem:main_danielystructure}, $\frac{\gamma-1}{\gamma} \ol{d}(x,C_j) \leq d(x,y),d(x,y') \leq \frac{\gamma^2+1}{\gamma(\gamma-1)} \ol{d}(x,C_j)$. 
Hence, $$\frac{d(x,y)}{d(x,y')} \leq \frac{\gamma^2+1}{\gamma(\gamma-1)} \cdot \frac{\gamma}{\gamma-1} = \frac{\gamma^2+1}{(\gamma-1)^2}.$$
\end{proof}

\subsection{Proof of \Cref{clm:main_merge_criteria}}
\begin{proof}
    $(1)$ follows from \Cref{clm:main_good_edge_property} and the fact that we consider edges in non-decreasing order, hence, when we consider $e$, if $D,D'$ belong to different underlying clusters, and if $|D| < \alpha n$, then an edge $e'$ within the cluster $C_D$ must already be considered. At that point, $D$ should be merged into another cluster, hence a contradiction.
    
    $(2)$ follows from \Cref{col:main_bounded_length}. If $\frac{\max_{x\in D, y\in D'} d(x,y)}{\min_{x\in D, y\in D'} d(x,y) } > {\left ( \frac{\gamma^2+1}{(\gamma-1)^2} \right )}^2$, then there exists $x,x' \in D$ and $y,y' \in D'$ that violate the inequality in \Cref{col:main_bounded_length}, so $D$ and $D'$ must belong to the same underlying cluster.
    
    It remains to show $(3)$. Suppose $D$ belongs to the underlying cluster $C_D$ and $D'$ belongs to another underlying cluster $C_{D'}$, then
    $d(x,y) \geq \frac{\gamma-1}{\gamma} \ol{d}(x,C_{D'})$ ($(1)$ in \Cref{lem:main_danielystructure}). However,
    \begin{align*} 
        d(x,x') &> \frac{2 \gamma}{(\gamma-1)^2} d(x,y) 
        > \frac{2 \gamma}{(\gamma-1)^2}  \cdot \frac{\gamma-1}{\gamma} \ol{d}(x,C_{D'})\\
        &= \frac{2}{\gamma-1} \ol{d}(x,C_{D'}).
    \end{align*}
    Since we know that $x,x'$ belong to the same underlying cluster, this is a contradiction.
\end{proof} 
\section{Missing Proofs of Section~\ref{sec:special-metrics}}

\subsection{Proof of Theorem~\ref{thm:1d-stable}}
\label{proof:thm-interval-clustering}
\begin{proof}
Initially, start with the leftmost cluster containing $n-k+1$ points, and all remaining $k-1$ clusters having one single point. Imagine clusters are separated with some separators. An example is given in~\Cref{fig:initialization_interval_clustering} for $k=4$. We show that by only moving the separators to the left, an IP-stable $k$-clustering can be found.
\begin{figure}[!ht]
    \centering
    \includegraphics[width=0.5\textwidth]{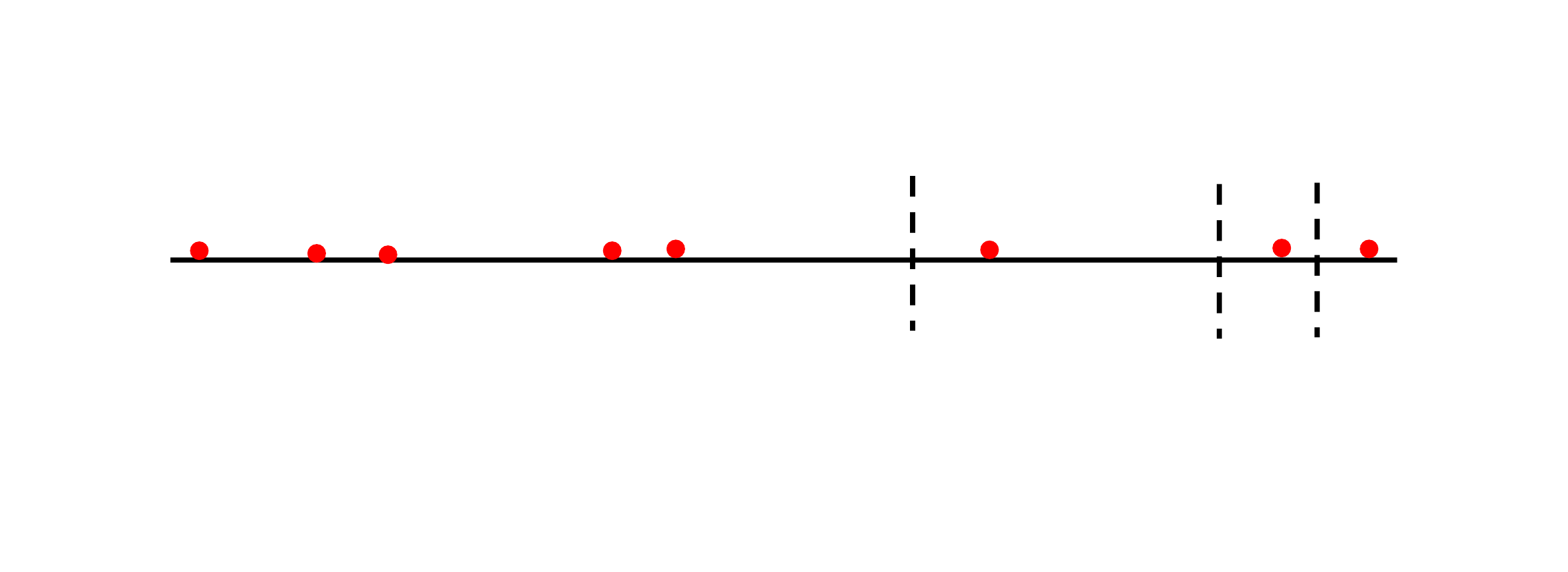}
    \vspace{-0.5in}
    \caption{Initialization for $k=4$ and $n=8$.}
    \label{fig:initialization_interval_clustering}
\end{figure}
In this proof, we mainly argue about the stability of boundary nodes of clusters; however, Lemma~\ref{lemma_boundary_points} in Appendix~\ref{appendix_p_equals_infty} shows that stability of boundary nodes implies stability of all the nodes.
Since all the clusters but the first one are singletons, and singleton clusters are {\ipste}, the only separator that might want to move is the first separator. 
The first separator moves to the left until the first cluster becomes {\ipste}. 
In the next step, the only cluster that is not {\ipste} is the second cluster since the first cluster has just been made {\ipste}, and the rest are singletons. The second cluster is unstable due to some nodes on its right hand side that want to join the third cluster by moving the second separator to the left (note that the leftmost node of the second cluster is stable since the first separator has just moved).
If the second cluster shrinks, some rightmost nodes of the first cluster might get unstable, and want to join the second cluster by moving the first separator to the left. The same scenario repeats over and over again until the first two clusters get stable by moving the first two separators only to the left. Using the same argument, at each step $i$, the only cluster that is not {\ipste} is the $i^{th}$ cluster, and the first $i$ clusters can be made stable by moving the first $i$ separators only to the left. 

Consider the last, i.e. $k^{th}$ step. If the $k^{th}$ cluster is a singleton, then it is {\ipste}, and our algorithm has converged to an {\ipste} $k$-clustering.
Assume the $k^{th}$ cluster is not a singleton. The reason that it is not a singleton is that the points on its left hand side had preferred to move from the $(k-1)^{th}$ cluster to the $k^{th}$ cluster. If the $k^{th}$ cluster is a better cluster for its leftmost point, so is a better cluster for all other points inside it. Therefore, the $k^{th}$ cluster is an {\ipste} cluster, and our algorithm converges to an {\ipste} $k$-clustering.

By using the fact that if a separator moves it only moves to the left, and therefore, each separator moves at most $n$ times, in Lemma~\ref{lem:running-time-1d} in Appendix~\ref{Appendix-running-time-1d}, 
we show the running time of the algorithm is $\mathcal{O}(kn)$. 
\end{proof}

\subsection{Running Time of the Algorithm for $1$-dimensional Euclidean Metric}
\label{Appendix-running-time-1d}

\begin{lemma}
\label{lem:running-time-1d}
The running time of the algorithm described for the $1$-d Euclidean metric case is $\mathcal{O}(kn)$.
\end{lemma}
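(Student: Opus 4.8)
The plan is to show that the algorithm described in the proof of Theorem~\ref{thm:1d-stable} can be implemented so that the total work across all separator moves is $\mathcal{O}(kn)$. First I would fix notation: sort $\dataset$ so that $x_1 \le x_2 \le \cdots \le x_n$ on the line; this sorting is a one-time $\mathcal{O}(n\log n)$ cost, which is dominated once we observe $k\ge 1$ and the claimed bound is really about the main loop (and in fact, if the input is given sorted, the bound is $\mathcal{O}(kn)$ as stated; I would note this minor point). Each of the $k-1$ separators only ever moves left, and it lives among the $n-1$ gaps between consecutive points, so separator $j$ makes at most $n$ moves over the whole run; summing over the $k-1$ separators gives at most $(k-1)n = \mathcal{O}(kn)$ separator-move events in total. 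The crux is therefore to argue that the algorithm performs only $\mathcal{O}(1)$ amortized work per separator-move event, i.e., that checking whether a boundary point wants to leave its cluster, and updating the bookkeeping after a move, can each be done in constant time.

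The key step is maintaining incremental aggregates. For the single ``active'' non-singleton cluster $C_i = \{x_a, x_{a+1}, \ldots, x_b\}$ whose stability we are currently repairing, I would keep running sums such as $\sum_{t=a}^b x_t$ and the size $b-a+1$, together with the analogous sums for its two neighboring clusters. Because $d$ is the Euclidean metric on the line and clusters are contiguous, for a boundary point $x_b$ the quantity $\sum_{y\in C_i} d(x_b,y) = \sum_{t=a}^{b}(x_b - x_t) = (b-a+1)x_b - \sum_{t=a}^b x_t$, and the average distance from $x_b$ to an adjacent cluster $C_{i+1}=\{x_{b+1},\ldots,x_c\}$ is $\frac{1}{c-b}\big(\sum_{t=b+1}^{c} x_t - (c-b)x_b\big)$ — both computable in $\mathcal{O}(1)$ from the stored prefix-type sums. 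Hence the IP-stability test \eqref{def_ip_stable_ineq} for a boundary point is an $\mathcal{O}(1)$ check. When a separator moves left by one point (moving $x_b$ from $C_i$ to $C_{i+1}$), updating all affected sums and sizes is a constant number of additions/subtractions. I would invoke Lemma~\ref{lemma_boundary_points} to justify that it suffices to test boundary points only, so at most two points per cluster are ever examined at a given configuration.

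The remaining subtlety — and the main obstacle — is bounding the ``cascade'' cost: when the $i$-th separator moves left, a point may leave cluster $C_{i-1}$ as well, triggering a move of the $(i-1)$-th separator, which may cascade further left. I would argue that each such cascaded move is itself one of the $\le (k-1)n$ total left-moves already counted, and the monotonicity (no separator ever moves right) guarantees no move is ever ``undone,'' so there is no blow-up from re-processing: the total number of (separator, move) pairs over the entire execution is at most $(k-1)n$, and each carries $\mathcal{O}(1)$ work by the incremental-aggregates argument above. Adding the $\mathcal{O}(n)$ initialization of the first cluster's sums and the $\mathcal{O}(k)$ initialization of the singletons, the total is $\mathcal{O}(kn)$, as claimed. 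I expect the write-up's only delicate point to be making the cascade/amortization argument fully rigorous, i.e., exhibiting a potential function (e.g., the total leftward displacement still available to all separators, which is at most $(k-1)n$ and strictly decreases with every move) that cleanly certifies the global move count.
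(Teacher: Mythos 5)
Your proposal is correct and follows essentially the same route as the paper's proof: both bound the total number of separator moves by $\mathcal{O}(kn)$ via the leftward-only monotonicity and charge $\mathcal{O}(1)$ work per check and per move using maintained cluster aggregates, with Lemma~\ref{lemma_boundary_points} justifying that only boundary points need testing. The only difference is cosmetic bookkeeping --- you maintain coordinate (prefix) sums and sizes, while the paper maintains sums of distances from each cluster's leftmost and rightmost points together with sizes --- and both yield the same constant-time stability test and update.
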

\begin{proof}
First, we argue that if the following information are maintained for each cluster, the stability of each boundary node can be checked in $\mathcal{O}(1)$.
\begin{itemize}
 \item sum of distances from the left-most and right-most nodes to all other nodes in the cluster
 \item the number of nodes in the cluster
\end{itemize}

Suppose that $u$ is the right-most node of a cluster $C_i$. In order to check if $u$ is stable, we compare the average distance from $u$ to its own cluster ($C_i$) with the average distance from $u$ to the adjacent cluster, say ($C_{i+1}$). Let $v$ be the node adjacent to $u$ in $C_{i+1}$, then $\overline{d}(u, C_{i+1}) = d(u,v) + \overline{d}(v, C_{i+1})$. Since $d(u,C_i), d(v,C_{i+1}), |C_i|, |C_{i+1}|$ are all maintained, we can check if $\overline{d}(u,C_i) \leq \overline{d}(u,C_{i+1})$ in $\mathcal O(1)$. The argument for the case that $u$ is the leftmost node of a cluster $C_i$ holds in a similar way.
At the beginning of the algorithm, since our initialization has only one big cluster, coming up with the needed information for two boundary nodes of the big cluster takes $\mathcal O(n)$. Since the only operation that our algorithm does is to move a separator to the left, we show whenever such an operation happens, all the information stored that need to get updated, can get updated in $\mathcal{O}(1)$.
Suppose when a separator moves to the left, the right-most node of $C_i$, let's say $u$, moves to $C_{i+1}$.
Let $s,t,v,w$ be the left-most node of $C_i$, the node next to $u$ in $C_i$, the node next to $u$ in $C_{i+1}$, and the right-most node of $C_{i+1}$ before the move, respectively.
Also, let $\tilde C_i = C_i \setminus \{u\}$ and $\tilde C_{i+1} = C_{i+1} \cup \{u\}$ denote the updated clusters $C_i$ and $C_{i+1}$ after $u$ joins the cluster on its right.
As $|\tilde C_i| = |C_i| -1$ and $|\tilde C_{i+1}| = |C_{i+1}| + 1$, updating the numbers of nodes in two clusters is trivial. Now we show how to update other information, namely, $d(s,\tilde C_i), d(t, \tilde C_i), d(u, \tilde C_{i+1}), d(w, \tilde C_{i+1})$. 
\begin{align*}
    d(s,\tilde C_i) & = d(s, C_i) - d(s,u) \\
    d(t, \tilde C_i) & = d(u, C_i) - |\tilde C_i| d(t,u) \\
    d(u, \tilde C_{i+1}) & = d(u, C_{i+1}) = d(v,C_{i+1}) + |\tilde C_{i+1}|d(u,v) \\
    d(w, \tilde C_{i+1}) & = d(w, C_{i+1}) + d(u,w)
\end{align*}
 
As the right-hand sides of the relationships above are maintained each update can be performed in $\mathcal O(1)$.
Since in the algorithm each separator only moves to the left, the total number of times that separators are moved is at most $kn$. Each time that a separator wants to move, the stability of the node on its right is checked in $\mathcal{O}(1)$. Also, if a separator moves, updating the information stored takes $\mathcal{O}(1)$. Hence, the total time complexity of the algorithm is $\mathcal{O}(kn)$.
\end{proof}

\subsection{Proof of \Cref{thm:boundary-nodes-stability-suffices}}
\label{appendix:proof-boundary-nodes-stability-suffices}

\begin{proof}
    Let $C_1,C_2$ be the clusters containing $u$ and $v$, respectively.
    Wlog, suppose $|C_1| > 1$ and $x \in C_1$. Our goal is to show that
    $\overline{d}(x,C_1) \leq \overline{d}(x,C_2)$.
    
    Since the path from $x$ to any $y \in C_2$ has to go through $u$, we have that $\overline{d}(x,C_2) = d(x,u) + \overline{d}(u,C_2)$.
    
    As $u$ is stable, $\overline{d}(u,C_1) \leq \overline{d}(u,C_2)$.
    
    If we can show that $\overline{d}(x,C_1) \leq d(x,u) + \overline{d}(u,C_1)$, then we are done. This is true as
    \begin{align*}
        \overline{d}(x,C_1)
        &=\frac{1}{|C_1|-1} \sum_{y \in C_1, y\neq x } d(x,y) \\
        &=\frac{1}{|C_1|-1} \sum_{y \in C_1, y \neq x } d(x,y) \\
        &\leq \frac{1}{|C_1|-1} \sum_{y \in C_1, y \neq x }\left( d(x,u) + d(u,y) \right)\\
        &=d(x,u) + \frac{1}{|C_1|-1} \sum_{y \in C_1, y \neq x }  d(u,y) \\
        &\leq d(x,u) + \frac{1}{|C_1|-1} \sum_{y \in C_1  }  d(u,y) \\
        &= d(x,u) + \overline{d}(u,C_1).
    \end{align*}
    
\end{proof}
\section{Hard Instances for $k$-means++, $k$-center, and single linkage}
\label{sec:hard_instances}
In this section, we prove Theorem~\ref{thm:hard_instances} by showing hard instances for $k$-means++, $k$-center, and single linkage clustering algorithms.

\subsection{Hard Instances for $k$-means++}
\label{sec:kmeans++}
Here, we show that for any given approximation factor $\alpha>1$, there exists  an instance $\sI_\alpha$ and a target number of clusters~$k_\alpha$ such that with constant probability $k$-means++ outputs a $k_\alpha$-clustering of $\sI_\alpha$ that violates the requirement of {\ipste} clustering by a factor of $\alpha$.

\paragraph{High-level description of $k$-means++.} $k$-means++ is an algorithm for choosing the initial seeds for the Lloyd's heuristic that provides a provable approximation guarantee~\citep{kmeans_plusplus}. The high-level intuition is to spread out the initial set of $k$ centers. The first center is picked uniformly at random, after which each subsequent center is picked from the remaining set points according to the probability distribution proportional to the squared distance of the points to the so-far-selected set of centers. 
Once the seeding phase picks $k$ centers, $k$-means++ performs Lloyd's heuristic.   

Before stating the argument formally, we show how to construct the hard instance $\sI_\alpha$ for any given approximation parameter~$\alpha>1$ for the IP-stability requirement. 
\paragraph{Structure of the hard instance.} Given the approximation parameter $\alpha$, $\sI_\alpha$ is constructed as follows. The instance consists of $n_\alpha$ copies of block $I$ such that each adjacent blocks are at distance $D_\alpha$ from each other; see Figure~\ref{fig:hard_instance}. As we explain in more detail, the solution $\sol$ returned by $k$-means++ violates the IP-stability by a factor of $\alpha$ if there exists a block~$I_j\in \sI_\alpha$ such that $\sol\cap I_j = \{v_j, u_j\}$. In the rest of this section, our goal is to show that with constant probability this event happens when $k_\alpha =\frac{13}{12}n_{\alpha}$.
\begin{figure}[h!]
 \centering
 \includegraphics[width = 0.7\textwidth]{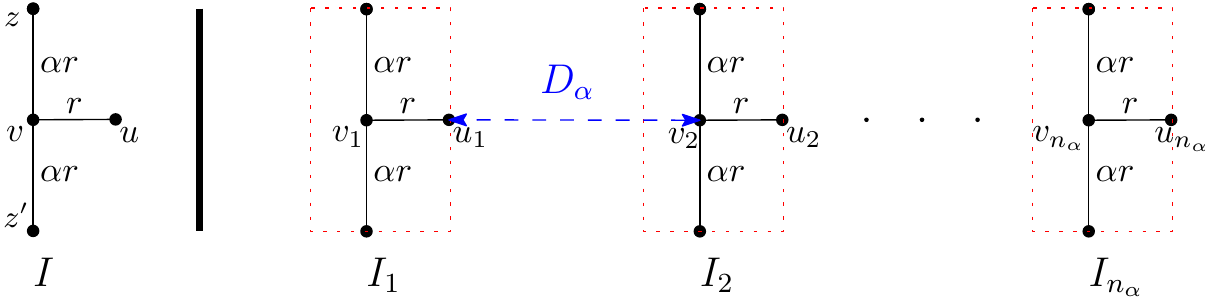}
 \caption{The construction of hard instance for a given parameter $\alpha$. (a) shows the structure of a building block $I$ and (b) shows a hard instance which constitutes of $n_\alpha$ blocks $I_1, \ldots, I_{n_\alpha}$.}\label{fig:hard_instance}
\end{figure}
\begin{claim}\label{clm:unstable-factor}
Suppose that $v$ and $u$ are picked as the initial set of centers for $2$-clustering of $I$. Then, the solution returned by $k$-means++ violates the IP-stability requirement by a factor of $\alpha$.  
\end{claim}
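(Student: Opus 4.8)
The plan is to reduce the statement to a single block. Since consecutive blocks are $D_\alpha$ apart and $D_\alpha$ dwarfs the diameter of a block, the $D_\alpha$-separation decouples the instance, so the restriction of $k$-means++'s output to $I$ is governed by the two seeds $v,u$ lying in $I$; hence it suffices to analyze the $2$-means instance on the point set $I$ seeded with $\{v,u\}$ and to exhibit one point of $I$ whose IP-stability inequality is violated by the factor $\alpha$.

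First I would determine the clustering Lloyd's converges to. Starting from $\{v,u\}$, the first assignment step is the Voronoi partition of $I$ with respect to $v$ and $u$; by the placement of the auxiliary points of the block (Figure~\ref{fig:hard_instance}(a)) this is $C(v)=\{v\}\cup W$, where $W$ is the set of ``wing'' points placed symmetrically about $v$, perpendicular to the direction from $v$ to $u$, at distance $\approx \alpha\cdot d(v,u)$ from $v$, and $C(u)=\{u\}\cup M$, where $M$ is $u$'s tight satellite set. The crucial point is that this partition is a fixed point of the Lloyd update: by the symmetry of $W$ about $v$ the centroid of $C(v)$ equals $v$, and since $M$ is concentrated within $o(d(v,u))$ of $u$ the centroid of $C(u)$ equals $u$ up to a negligible perturbation; one then checks the reassignment step changes nothing, because $v$ is at distance $0$ from its own centroid, each wing point (placed on $v$'s side of the bisector of $v$ and $u$) is closer to $v$ than to $u$, and each satellite sits next to $u$. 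Thus Lloyd's terminates in one step and $k$-means++ outputs exactly $(C(v),C(u))$ on $I$.

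Second I would read off the violation with $v$ as the witness. On one hand, $\overline{d}(v,C(v))$ is the average distance from $v$ to the wings, which by construction is $(1+o(1))\,\alpha\cdot d(v,u)$; on the other hand, $\overline{d}(v,C(u))$ is the average distance from $v$ to $u$ and its satellites, which is $(1+o(1))\,d(v,u)$. Hence
\[
\frac{\overline{d}(v,C(v))}{\overline{d}(v,C(u))} \;\geq\; \alpha ,
\]
so $v$ fails to be $t$-approximately IP-stable for every $t<\alpha$, and therefore the clustering produced by $k$-means++ is $t$-approximately IP-stable only for $t\geq\alpha$, which is the assertion of the claim.

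The step I expect to be the main obstacle is making the ``bad'' partition simultaneously a Lloyd fixed point and a carrier of a factor-$\alpha$ IP-violation: these requirements pull against each other, since to keep the wings in $C(v)$ (rather than have Lloyd's reassign them to $u$) the wings must lie on $v$'s side of the bisector of $v$ and $u$, yet to make $v$ strictly prefer $C(u)$ in the IP sense the wings must be far from $v$ compared with $d(v,u)$. The construction threads this by exploiting the gap between a cluster's \emph{centroid} and the \emph{average distance} to that cluster: a configuration of $W$ symmetric about $v$ pins the centroid of $C(v)$ at $v$ --- so Lloyd's never moves anything --- while the average distance from $v$ to $W$ can be made as large as $\alpha\cdot d(v,u)$, and placing $u$ off the axis of symmetry of $W$ prevents the Voronoi cut of $\{v,u\}$ from slicing through $W$. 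The remaining work --- verifying the reassignment inequalities for the handful of auxiliary points, and confirming that the $D_\alpha$-separation makes the global Lloyd run agree with the single-block analysis --- is routine.
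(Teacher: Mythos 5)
Your proposal is correct and follows essentially the same route as the paper: the initial Voronoi partition puts the wing points with $v$, symmetry pins the centroid of that cluster at $v$ so Lloyd's terminates immediately, and the violation is witnessed at $v$ as the ratio of its average wing distance ($\alpha\,d(v,u)$) to $d(v,u)$, i.e.\ exactly $\alpha$. The only cosmetic deviations are that in the paper's block $u$'s cluster is just the singleton $\{u\}$ (no satellite set) and the quantities are exact rather than $(1+o(1))$, neither of which affects the argument.
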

\begin{proof}
It is straightforward to verify that once $v$ and $u$ are picked as the initial set of centers, after a round of Lloyd's heuristic $v$ and $u$ remain the cluster centers and $k$-means++ stops. This is the case since when $v$ and $u$ are centers initially the clusters are $\{z, z', v\}$ and $\{u\}$. The centroids of these two clusters are respectively $v$ and $u$. Hence, $k$-means++ outputs $\{z, z', v\}$ and $\{u\}$ as the $2$-clustering of $I$.
The maximum violation of the {\ipste} requirement for this $2$-clustering corresponds to $v$ which is equal to $\frac{(d(v, z) + d(v, z'))/2}{d(v,u)} = \alpha$.  
\end{proof}

In this section, the squared distance function is denoted by $\dist$. Here is the main theorem of this section.
\begin{theorem}\label{thm:hard-instance}
For any given parameter $\alpha$, there exists an instance $\sI_\alpha$ such that with constant probability the solution returned by $k$-means++ on $(\sI_\alpha, k_\alpha = \frac{13}{12}n_\alpha)$ is violating the requirement of {\ipste} clustering by a factor of $\alpha$---the maximum violation over the points in $\sI_\alpha$ is at least $\alpha$. 
\end{theorem}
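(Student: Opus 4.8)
The plan is to reduce the theorem to a purely probabilistic statement about the seeding phase of $k$-means++ and then to analyze that phase as a two-scale random process. By Claim~\ref{clm:unstable-factor}, applied separately inside each block, it suffices to show that with probability at least some absolute constant $c>0$ the seeding phase selects, for at least one block $I_j$, the points $v_j$ and $u_j$ as the \emph{only} two centers lying in $I_j$. Indeed, once $D_\alpha$ is taken sufficiently large (a polynomial in $n_\alpha$ and the block diameter suffices), every data point is closer to each center in its own block than to any center in another block, so both the Lloyd assignment and all centroid updates split over the $n_\alpha$ blocks; on such a block $I_j$ the run is then identical to the two-center run of Claim~\ref{clm:unstable-factor}, so $\violation(v_j)=\alpha$ and hence $\Maxviol\geq\alpha$.

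First I would handle the \emph{block scale}. The key observation is that as soon as a block contains one chosen center, the $\dist$-weight of its remaining points collapses from $\Theta(D_\alpha^2)$ to $O(\mathrm{diam}(I)^2)$, which is negligible once $D_\alpha$ is large. Hence, by a union bound over the $k_\alpha$ seeding steps, the failure probability of the event ``the first $n_\alpha$ seeds occupy $n_\alpha$ distinct blocks'' can be made smaller than any prescribed constant; conditioned on this event every block then contains exactly one seed and the residual weights of all blocks agree up to a $(1\pm o(1))$ factor. Thus the destinations of the remaining $k_\alpha-n_\alpha=\tfrac{1}{12}n_\alpha$ seeds are, up to $o(1)$ in total variation, $\tfrac{1}{12}n_\alpha$ i.i.d.\ uniform balls thrown into $n_\alpha$ bins, so a $\Theta(1)$ fraction of blocks end up with exactly two seeds.

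Next comes the \emph{point scale}, which is where the choice of instance is used. Consider a block $I_j$ receiving exactly two seeds. Its first seed is drawn while all four of its points still carry essentially equal $\dist$-weight, so it is (up to $o(1)$) uniform over $\{z_j,z_j',v_j,u_j\}$; its second seed is then drawn with probability proportional to the squared distance from the first. A short computation from the block geometry (recall $d(v_j,z_j)=d(v_j,z_j')=r$, $d(v_j,u_j)=r/\alpha$, $d(z_j,z_j')=2r$) gives a constant $p_\alpha>0$, depending only on $\alpha$, for the event that the two seeds of $I_j$ are exactly $\{v_j,u_j\}$, and these within-block choices are independent across blocks given the block-level allocation. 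Writing $N$ for the number of such ``good'' blocks, we obtain $\E N=\Omega(p_\alpha\, n_\alpha)$ with only mild (essentially pairwise) dependence among the block indicators; choosing $n_\alpha=\Theta(1/p_\alpha)$ large enough and applying a second-moment (Paley--Zygmund) bound yields $\Pr[N\geq 1]\geq c$, which is exactly the event we needed.

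The step I expect to be the main obstacle is the first one: rigorously justifying that, after conditioning on the ``one seed per block, then spread the rest'' pattern, the adaptive $D^2$-sampling really does reduce to uniform balls-in-bins together with independent within-block colorings, despite the sequential and history-dependent nature of the seeding. The clean route is an explicit coupling of the $k$-means++ seeding on $\sI_\alpha$ with this idealized process, arranged so that the coupling can fail only on the $o(1)$-probability ``bad'' events already isolated above (a seed landing in an occupied block during the first $n_\alpha$ draws, or the residual-weight ratios being too skewed); everything downstream — the value $p_\alpha>0$ and the balls-in-bins second-moment estimate — is elementary.
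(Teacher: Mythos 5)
Your overall architecture is the same as the paper's: the same block instance, the same reduction (via Claim~\ref{clm:unstable-factor}, plus the observation that once every block owns at least one center, Lloyd's iterations never mix blocks) to the purely seeding-phase event that some block's center set is exactly $\{v_j,u_j\}$, and the same first phase showing that the first $n_\alpha$ seeds occupy distinct blocks (this is part (1) of Lemma~\ref{lem:iter-1}). You diverge in the second phase. The paper additionally records in Lemma~\ref{lem:iter-1}(2) that at least $n_\alpha/10$ blocks receive a $v$-type first seed, and then in Lemma~\ref{lem:iter-2} argues sequentially over the next $m_\alpha=n_\alpha/20$ draws: conditioned on any history, each draw adds $u_j$ to a $v_j$-only block with probability at least $1/(123\alpha^2)$, and afterwards each draw re-enters the designated block with probability at most $2/(3m_\alpha)$; stochastic domination by independent Bernoullis then yields a constant success probability. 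You instead propose a coupling of the adaptive $D^2$-sampling to i.i.d.\ uniform balls-in-bins with independent within-block colorings, followed by a Paley--Zygmund bound, with $n_\alpha=\Theta(\alpha^2)$ (matching the paper's $n_\alpha\geq\max(9000,240\alpha^2)$).

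The one step that is wrong as stated is that coupling. After phase one the residual weights of the blocks do \emph{not} agree to $(1\pm o(1))$: they differ by constant factors depending on which of the four points was seeded (e.g.\ $(2\alpha^2+1)r^2$ when $v_j$ was picked versus up to $(5\alpha^2+1)r^2$ otherwise), and the within-block identity of every new seed feeds back into the block-level weights used for later draws. So the entanglement between ``allocation'' and ``coloring'' is a systematic constant-size effect, not an $o(1)$-probability bad event, and no coupling to the idealized product process can be arranged to fail only with $o(1)$ probability. This does not sink the plan, because all you actually need is conditional comparability: given any history, each later seed lands in a prescribed block with probability $\Theta(1/n_\alpha)$, a block's first seed is (nearly) uniform over its four points since $D_\alpha\gg \alpha r\sqrt{n_\alpha}$, and within a $v_j$-only block a new seed hits $u_j$ with probability $\Omega(1/\alpha^2)$. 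Replacing the exact-uniform coupling by these per-step conditional bounds and either stochastic domination (as the paper does) or a second-moment estimate over the $n_\alpha/12$ extra seeds gives $\Pr[N\geq 1]=\Omega(1)$ and completes the proof; so your route is essentially the paper's, once its second-phase mechanism is weakened from total-variation coupling to conditional-probability bounds.
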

To prove the above theorem, we show that by setting the values of $D_\alpha$ and $n_{\alpha}$ properly, $k$-means++ with constant probability picks the points corresponding to $u,v$ from at least one of the copies $I_j$ (and picks no other points from $I_j$). 

\begin{lemma}\label{lem:iter-1}
Let $\sS_1$ denote the set of centers picked by $k$-means++ on $(\sI_\alpha, \frac{13}{12}n_\alpha)$ after the first $n_\alpha$ iterations in the seeding phase where $n_\alpha \ge 18700$. If $D_\alpha > \alpha r \sqrt{\frac{3n_{\alpha}}{\eps}}$ where $\eps<\frac{1}{100 n_\alpha}$, then with probability at least $0.98$, 
\begin{enumerate}[leftmargin=*]
    \item $\sS_1$ contains exactly one point from each block; $\forall I\in \sI_\alpha, |\sS_1 \cap I| = 1$, and
    \item In at least $\frac{n_\alpha}{10}$ blocks, copies of $v$ are picked; $|j\in [n_\alpha]\;|\; I_j\cap \sS_1 = \{v_j\}| \ge \frac{n_\alpha}{10}$.
\end{enumerate}
\end{lemma}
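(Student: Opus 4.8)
The plan is to follow the seeding phase of $k$-means++ one pick at a time and to exploit that $D_\alpha$ is chosen polynomially larger than the intra-block scale $\alpha r$. Two facts drive everything: (1) once a block contains a center, its points carry a negligible share of the squared-distance sampling weight, so each of the first $n_\alpha$ picks lands in a pairwise distinct block; and (2) conditioned on a pick being the first one to enter a given block, the four points $I=\{z,z',v,u\}$ of that block (see Figure~\ref{fig:hard_instance}) are almost equally likely to be chosen, so the copy of $v$ is chosen with probability close to $\tfrac14$.

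For item 1, I would induct on the number $i$ of picks already made, carrying the event $E_i$ that these picks lie in $i$ distinct blocks (one center per covered block). Conditioned on $E_i$, every point of a covered block is within the block diameter $2\alpha r$ of the center sitting in that block, so the covered blocks contribute total weight at most $i\cdot|I|\cdot(2\alpha r)^2$; meanwhile each uncovered block is at distance at least $D_\alpha-2\alpha r\ge D_\alpha/2$ from every chosen center (using $D_\alpha\ge 4\alpha r$, which holds), so it contributes at least $|I|\cdot(D_\alpha/2)^2$. Hence pick $i+1$ re-enters a covered block with probability at most $\frac{16\,i\,\alpha^2 r^2}{(n_\alpha-i)\,D_\alpha^2}$. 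A union bound over $i=0,\dots,n_\alpha-1$, with $\sum_{i=0}^{n_\alpha-1}\frac{i}{n_\alpha-i}\le n_\alpha\ln n_\alpha$, bounds the failure probability by $\frac{16\,\alpha^2 r^2\,n_\alpha\ln n_\alpha}{D_\alpha^2}$, which is below $\tfrac1{100}$ once $D_\alpha^2>3n_\alpha\alpha^2 r^2/\varepsilon$, $\varepsilon<\tfrac{1}{100n_\alpha}$ and $n_\alpha\ge 18700$.

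For item 2, I condition on the event of item 1. The first center is uniform over all $|I|n_\alpha$ points, so it is the $v$-copy of its block with probability exactly $\tfrac14$. For any later pick that first enters a block $I_j$, the nearest existing center is at distance at least $D_\alpha-2\alpha r\ge D_\alpha/2$, so the squared distances of the points of $I_j$ to their nearest center all lie within a factor $(1+4\alpha r/D_\alpha)^2\le 1+\tfrac1{n_\alpha}$ of one another; hence, conditioned on the whole history and on this pick landing in $I_j$, the chosen point is $v_j$ with probability at least $\frac{1}{|I|(1+1/n_\alpha)}\ge\tfrac15$. Revealing the picks sequentially, the number of blocks whose $v$-copy is chosen therefore stochastically dominates a $\mathrm{Bin}(n_\alpha,\tfrac15)$ random variable, so a multiplicative Chernoff bound gives $\Pr[\mathrm{Bin}(n_\alpha,\tfrac15)<n_\alpha/10]\le e^{-n_\alpha/40}$, far below $\tfrac1{100}$. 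A final union bound over the two failure events yields success probability at least $0.98$.

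The main obstacle is the dependence across iterations needed for item 2: the order in which blocks get covered and the point chosen inside each block are correlated through the shared history, so the concentration step has to proceed via stochastic domination (or a Doob-martingale/Azuma argument) rather than assume independent blocks. A secondary technical point is keeping the geometric estimates — intra-block diameter $\le 2\alpha r$ versus inter-block distance $\ge D_\alpha-2\alpha r$ — uniformly valid at every iteration, in particular for the last few picks, when only one uncovered block remains and the ratio $i/(n_\alpha-i)$ in the union bound is largest.
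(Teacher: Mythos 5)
Your proposal is correct and follows essentially the same route as the paper: item 1 via the scale separation between the intra-block radius $\mathcal O(\alpha r)$ and the inter-block distance $D_\alpha$ plus a union bound over the $n_\alpha$ iterations, and item 2 via a constant lower bound on the conditional probability that a fresh-block pick is the $v$-copy, followed by stochastic domination by a binomial and a Chernoff/Hoeffding bound (your per-pick constant $\ge 1/5$ is even tighter than the paper's $\ge 1/9$, which loses a factor by using the doubled triangle inequality for squared distances). The only point to clean up is that you condition on the global event of item 1 before the domination step, and that event also constrains \emph{future} picks and so could in principle bias the within-block choice at the current step; the paper sidesteps this by lower-bounding the probability that the $i$-th pick is a $v$-type point given any history, without conditioning, and intersecting with item 1 only at the end by a union bound---a purely presentational fix to your argument.
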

\begin{proof}
We start with the first property. Consider iteration~$i\leq n_\alpha$ of $k$-means++ and let $S_{i-1}$ denote the set of points that are selected as centers in the first $i-1$ iterations. 
Let $\sI^{+}_{i-1} := \{I_j \;|\; I_j \cap S_{i-1} \neq \emptyset\}$ and $\sI^{-}_{i-1} :=\{I_j \;|\; I_j \cap S_{i-1} = \emptyset\}$.
Since for any pair of points $p\in I, p'\in I'$ where $I\neq I'$, $\dist(p,p') \geq D^2_\alpha$, for any point $p\in \sI^{-}_{i-1}$, $\dist(p, S_{ i-1}) \geq D_\alpha^2$. Moreover, by the construction of the building block (see Figure~\ref{fig:hard_instance}-(a)), for any point $p'\in \sI^{+}_{i-1}$, $\dist(p', S_{i-1}) \leq 4\alpha^2 r^2\leq \frac{4\eps}{3n_{\alpha}} D_\alpha^2$. Hence, the probability that in iteration~$i$ the algorithm picks a point from  $\sI^{-}_{i-1}$ is
\begin{align*}
\frac{\sum_{p\in \sI^{-}_{i-1}}\dist(p, S_{i-1})}{\sum_{p\in \sI^{-}_{i-1}}\dist(p, S_{i-1}) + \sum_{p\in \sI^{+}_{i-1}}\dist(p, S_{i-1})} \ge \frac{4D^2_\alpha}{4D^2_\alpha + 3n_\alpha \cdot (4\alpha^2 r^2)} \ge \frac{1}{1+\eps} \geq 1-\eps
\end{align*}
Thus, the probability that $i$-th point is picked from $\sI^+_{i-1}$ is at most $\eps$. By union bound over the first $n_\alpha$ iterations of $k$-means++, the probability that $\sS_1$ picks exactly one point from each block in $\sI_{\alpha}$ is at least
\begin{align}\label{eq:cover-block}
    1 - \sum_{i=1}^{n_\alpha} \Pr[\text{$i$-th point is picked from $\sI^+_{i-1}$}] \ge 1 - \eps \cdot n_\alpha \ge 0.99
\end{align}
Next, we show that the second property also holds with high probability. Consider iteration~$i$ of the algorithm. By the approximate triangle inequality for $\dist$ function\footnote{$\forall u,v,w\in P, \dist(v,w) \leq 2(\dist(v,u) + \dist(u,w))$.} and since the distance of any pair of points within any block is $2\alpha r$, for a point $p\in I_j \in \sI^-_{i-1}$, $\eta^2 \leq \dist(p, S_{i-1}) \leq 2(\eta^2 + 4\alpha^2r^2)$ where $\eta \ge D_\alpha$. In particular, for each block $I_j\in \sI^{-}_{i-1}$, 
\begin{align}\label{eq:v-pick}
\frac{\dist(v_j, S_{i-1})}{\sum_{p\in I_j}\dist(p, S_{i-1})} \geq \frac{\eta^2}{7\eta^2 + 24\alpha^2 r^2}\geq 1/(8+\frac{\eps}{n_\alpha}) 
\end{align}
Let $X_i$ be a random variable which indicates that the $i$-th point belongs to $\{v_j | j \in [n_\alpha]\}$; $X_i = 1$ if the $i$-th point belongs to $\{v_j | j \in [n_\alpha]\}$ and is equal to zero otherwise.
\begin{align*}
    \Pr[X_i = 1] 
    &\ge \frac{\sum_{I_j\in \sI^{-}_{i-1}} \dist(v_j, S_{i-1})}{\sum_{I_j\in \sI^{-}_{i-1}} \sum_{p\in I_j}\dist(p, S_{i-1}) + \sum_{p\in \sI^{+}_{i-1}}\dist(p, S_{i-1})} \\ 
    &\ge \frac{D^2_\alpha}{(8+\eps/n_\alpha) \cdot D^2_\alpha + \sum_{p\in \sI^{+}_{i-1}}\dist(p, S_{i-1})} \quad\rhd\text{by Eq.~\eqref{eq:v-pick}}
    \\
    &\ge \frac{D^2_\alpha}{(8+\eps/n_\alpha) \cdot D^2_\alpha + 3n_\alpha \cdot (4\alpha^2 r^2)} \quad \rhd\forall p\in \sI^+_{i-1}, \dist(p, S_{i-1})\leq 4\alpha^2 r^2
    \\
    &\ge \frac{D^2_\alpha}{(8+\eps/n_\alpha) \cdot D^2_\alpha + \eps \cdot D^2_\alpha} > 1/9
\end{align*}
Note that we showed that $\Pr[X_i=1] \geq 1/9$ independent of the algorithm's choices in the first $i-1$ iterations of the algorithm. In particular, random variables $X_1, \ldots, X_{n_\alpha}$ are stochastically dominated by {\em independent} random variable $Y_1, \ldots, Y_{n_{\alpha}}$ where each $Y_i =1$ with probability $1/9$ and is zero otherwise. Hence, the random variable $X:=X_1 + \ldots + X_{n_{\alpha}}$ is stochasitcally dominated by $Y$ which is $\B(n_\alpha, 1/9)$, the binomial distribution with $n_\alpha$ trials and success probability $1/9$.  
Hence, by an application of Hoeffding's inequality on $Y$,
\begin{align}
    \Pr[X \leq \frac{n_\alpha}{10}] \leq \Pr[Y\leq \frac{n_\alpha}{10}] 
    &\leq \exp(-2n_{\alpha}(\frac{1}{9}-\frac{1}{10})^2) \leq 0.01 &&\rhd\text{for $n_\alpha>18700$}\label{eq:witness-block}
\end{align}

Thus, by Eq.~\eqref{eq:cover-block} and~\eqref{eq:witness-block}, with probability at least $0.98$ both properties hold.
\end{proof}

\begin{lemma}\label{lem:iter-2}
Let $\sS$ be the set of centers picked at the end of the seeding phase of $k$-means++ on $(\sI_\alpha, k_\alpha = \frac{13}{12}n_\alpha)$ where $n_\alpha>\max(9000, 240\alpha^2)$. Then, with probability at least $0.33$, there exists a block $I_j\in \sI_\alpha$ such that $\sS \cap I_j = \{v_j, u_j\}$. 
\end{lemma}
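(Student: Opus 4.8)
The plan is to build on Lemma~\ref{lem:iter-1}. Condition on its favorable event $E_1$, which has probability at least $0.98$ and guarantees that after the first $n_\alpha$ seeding iterations the center set $\sS_1$ contains exactly one point of each block, and that at least $n_\alpha/10$ blocks $I_j$ satisfy $\sS_1\cap I_j=\{v_j\}$ --- call these the \emph{good} blocks. Only $k_\alpha-n_\alpha=\tfrac{1}{12}n_\alpha$ seeding iterations remain, and it suffices to show that, conditioned on $E_1$, with constant probability one of them adds $u_j$ to some good block $I_j$ without any other point of $I_j$ ever being selected. Two features of the construction drive the argument. First, all pairwise distances inside a block are $\mathcal O(\alpha r)$ and a block has a constant number of points, so the $k$-means++ potential $\Phi$ right after iteration $n_\alpha$ is $\mathcal O(n_\alpha\alpha^2 r^2)$; moreover $\Phi$ is non-increasing as further centers are added. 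Second, $v_j$ is the nearest neighbour of $u_j$ in the whole instance: every other point of $I_j$ is at distance $>r=d(u_j,v_j)$ from $u_j$, and every point outside $I_j$ is at distance $\ge D_\alpha\gg r$. Since seeding never removes a center, $v_j$ stays selected forever, and hence the squared distance from $u_j$ to the current center set stays equal to $r^2$ throughout the remaining iterations.

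Next I would bound the per-iteration progress. Fix one of the remaining iterations and condition on $E_1$, on the history, and on the event that no good block has yet had $u_j$ selected, so there are still $\ge n_\alpha/10$ good blocks, each with $d(u_j,\sS)^2=r^2$. The probability that the newly sampled center is $u_j$ for some good $j$ is then
\[
\frac{\sum_{j\text{ good}} d(u_j,\sS)^2}{\Phi}\;\ge\;\frac{(n_\alpha/10)\,r^2}{\mathcal O(n_\alpha\alpha^2 r^2)}\;=\;\Omega\!\left(\tfrac{1}{\alpha^2}\right).
\]
Chaining this conditional lower bound over all $\tfrac{1}{12}n_\alpha$ remaining iterations, exactly as in the stochastic-domination argument of Lemma~\ref{lem:iter-1}, the probability that no $u_j$ of a good block is ever selected is at most $\bigl(1-\Omega(1/\alpha^2)\bigr)^{n_\alpha/12}\le\exp\!\bigl(-\Omega(n_\alpha/\alpha^2)\bigr)$. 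Under the hypothesis $n_\alpha>\max(9000,240\alpha^2)$, and tracking the constants carefully --- here it matters that a good block, whose sole phase-one center is $v_j$, contributes only $\mathcal O(\alpha^2 r^2)$ to $\Phi$, so the denominator above is genuinely $\mathcal O(n_\alpha\alpha^2 r^2)$ with a small constant --- this failure probability is a concrete constant below $0.65$; so conditioned on $E_1$ the event ``$u_j$ selected for some good $j$'' holds with probability $\ge 0.34$, and multiplying by $\Pr[E_1]\ge 0.98$ gives $\ge 0.33$.

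It remains to upgrade ``$u_j\in\sS$ for some good $j$'' to ``$\sS\cap I_j=\{v_j,u_j\}$ for some good $j$'', i.e.\ to rule out $z_j$ or $z'_j$ also being picked in that block. Here I would use that there are at most $\tfrac{1}{12}n_\alpha$ extra picks in total but strictly more good blocks ($\ge n_\alpha/10$), together with the fact that each extra pick lands in any particular block with probability $\mathcal O(1/n_\alpha)$ (its block's potential over $\Phi=\Theta(n_\alpha\alpha^2 r^2)$); a short union-bound/birthday computation then shows that, within the probability budget above, there is a good block receiving its $u_j$ but neither $z_j$ nor $z'_j$, which is the block required by the statement. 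I expect the constant-chasing to be the main obstacle: the per-iteration probability and the potential bound must be sharp enough that $(1-\Theta(1/\alpha^2))^{n_\alpha/12}$ actually drops below the level needed to clear $0.33$ under $n_\alpha>240\alpha^2$ (rather than merely under $n_\alpha=\omega(\alpha^2)$), which is what forces the per-block accounting of $\Phi$; a secondary subtlety is that the set of still-available good blocks is history-dependent, so the argument must be stated as an iteration-by-iteration conditional bound and then composed, not as a product of independent events.
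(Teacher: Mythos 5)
Your plan follows the paper's proof in its essentials: condition on the Lemma~\ref{lem:iter-1} event, lower-bound per iteration the probability that the new center is $u_j$ in a block whose only center so far is $v_j$, compose these conditional bounds by stochastic domination, and then bound the probability that a completed block $\{v_j,u_j\}$ is hit again. The differences are bookkeeping: the paper runs the ``catch a $u_j$'' phase only over a window of $m_\alpha=n_\alpha/20$ iterations and then protects the \emph{first} completed block, lower-bounding the running potential by the $\geq n_\alpha/10-m_\alpha$ still-intact good blocks, whereas you use all $n_\alpha/12$ remaining iterations and close with a global union-bound over completed blocks. That variant is workable, but it forces you to prove the assertion you only state, namely $\Phi=\Theta(n_\alpha\alpha^2 r^2)$ \emph{as a lower bound throughout the tail}: the paper's intact-good-block bound is not enough for your version, since after $n_\alpha/12$ extra picks only $n_\alpha/10-n_\alpha/12=n_\alpha/60$ good blocks are guaranteed intact, making the per-iteration spoiling probability as large as roughly $60/n_\alpha$ and the expected number of hits on a fixed completed block about $5$. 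The fix is available but must be said: at most $n_\alpha/12$ blocks are ever touched after the first $n_\alpha$ iterations, so at least $\tfrac{11}{12}n_\alpha$ blocks retain a single center, each contributing at least $2\alpha^2 r^2$ to $\Phi$; with that, a completed block survives with probability close to $1$.

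The second soft spot is exactly the one you flag: the constants. With the bounds actually available, the catch-phase failure probability is at best about $\exp\bigl(-\tfrac{n_\alpha/12}{47\alpha^2+10}\bigr)$, which at the extreme $n_\alpha\approx 240\alpha^2$ is $\approx\exp(-20/47)\approx 0.65$; adding even a $\approx 0.09$ spoiling loss and the factor $0.98$ gives roughly $0.98\times(1-0.65-0.09)\approx 0.25$, short of $0.33$. So ``careful constant-chasing'' does not obviously clear the bar under the hypothesis $n_\alpha>\max(9000,240\alpha^2)$; you would need either a larger lower bound on $n_\alpha$ or sharper per-iteration estimates. In fairness, the paper's own final inequality invokes $m_\alpha\ge 240\alpha^2$ (i.e.\ $n_\alpha\ge 4800\alpha^2$) while the lemma statement only assumes $n_\alpha>240\alpha^2$, so this tension is inherited from the source; but as written your sketch asserts the target constant rather than establishing it, and together with the unproven potential lower bound this is where the argument still has to be closed.
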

\begin{proof}

Let $\sT_1$ denote the event that $\sS_1$, the set of centers picked in the first $n_\alpha$ iterations, picks exactly one point from each block and in at least $n_{\alpha}/10$ blocks the $v$-type points are picked in $\sS_1$.
Note that by an application of Lemma~\ref{lem:iter-1}, $\Pr[\sT_1] \geq 0.98$---with probability at least $0.98$, $\sT_1$ holds.

First we show the following. Let $m_\alpha = n_{\alpha}/20$. With constant probability, there exists an iteration~$i^*\in [n_\alpha+1, n_\alpha + m_\alpha]$ in which $k$-means++ picks a $u$-type point form $I_j$ such that $I_j \cap S_{i^*-1} = \{v_j\}$. Let $\mathcal{E}_i$ denote the event that in iteration~$i+n_\alpha$ where $i \in [m_\alpha]$, $k$-means++ picks the point $u_j$ from $I_j$ such that $S_{i-1} \cap I_j = \{v_j\}$. 
Note that for each block in $\{I_j \; | \; \sS_1 \cap I_j = \{v_j\}\}$, $\sum_{p\in I_j} \dist(p, \sS_1) = (2\alpha^2+1) r^2$; otherwise, $\sum_{p\in I_j} \dist(p, \sS_1) \le (5\alpha^2+1) r^2$. 
Hence, for each $i\in [m_{\alpha}]$, conditioned on $\sT_1$ and independent of the prior choices of the algorithm in iterations $n_\alpha+1, \ldots, i^*:=n_{\alpha}+i-1$,
\begin{align*}
    \Pr[\mathcal{E}_i | \sT_1] 
    &= \frac{\sum_{\{I_j \; | \; I_j\cap S_{i^*} = \{v_j\}\}} \dist(u_j, S_{i^*})}{\sum_{\{I_j \; | \; I_j\cap S_{i^*} = \{v_j\}\}} \sum_{p\in I_j}\dist(p, S_{i^*}) + \sum_{\{I_j \; | \; I_j\cap S_{i^*} \neq \{v_j\}\}} \sum_{p\in I_j}\dist(p, S_{i^*})} \\
    &\ge\frac{(\frac{n_\alpha}{10} - m_\alpha)\cdot r^2}{(\frac{n_\alpha}{10} - m_\alpha)\cdot (2\alpha^2 + 1) r^2 + (n_\alpha - (\frac{n_\alpha}{10} -m_\alpha))\cdot (5\alpha^2+1)r^2} \\
    &\ge \frac{m_\alpha}{ m_\alpha (2\alpha^2+1) + (n_\alpha-m_\alpha)(5\alpha^2+1)} \\
    &\ge \frac{1}{2\alpha^2 + 1 + 20(5\alpha^2 +1)}\\
    &\ge \frac{1}{123\alpha^2}
\end{align*}
For each $i\in [m_\alpha]$, let random variable $X_i := \boldsymbol{1}_{\sE_i|\sT_1}$.
Since $X_1, \ldots, X_{m_\alpha}$ are stochastically dominated by independent random variables $Y_i$ such that $Y_i = 1$ with probability $1/(123\alpha^2)$ and zero otherwise, we can bound the probability that none of $\sE_1, \ldots, \sE_{m_{\alpha}}$ happens as follows. 
\begin{align}
    \Pr[\neg\sE_1 \wedge \ldots \wedge \neg\sE_{m_\alpha} | \sT_1] 
    &= \Pr[\sum_{i\leq m_\alpha} X_i < 1 | \sT_1] \nonumber\\
    &\leq \Pr[\sum_{i\leq m_\alpha} Y_i < 1] \nonumber\\
    &= (1- \frac{1}{123\alpha^2})^{m_\alpha} \leq \exp(-\frac{m_\alpha}{123\alpha^2}) \label{eq:pick-u-v}
\end{align}
Hence, with probability at least $1- \exp(-\frac{m_\alpha}{123\alpha^2})$, there exists an iteration~$i^*\in [n_\alpha+1, n_\alpha + m_\alpha]$ and block $I^*_j$ such that $I^*_j\cap S_{i^*} = \{v_j, u_j\}$. Next, we show that with constant probability no more points is picked from $I^*_j$ in the remaining iterations of the seeding phase $k$-means++.
Let $\sT_2$ denote the event that there exists $(I^*_j, i^*)$ such that $I^*_j \cap S_{i^*} = \{v_j, u_j\}$; $\sT_2 := \sE_1 \vee \ldots \vee \sE_{m_{\alpha}}$. For any $i\in[1, n_\alpha+m_\alpha - i^*]$, let $\sF_i$ denote the event that in iteration~$i^* + i$, $k$-means++ picks another point from $I^*_j$.
\begin{align}\label{eq:bad-event}
    \Pr[\sF_i | \sT_1 \wedge \sT_2] 
    &< \frac{2\alpha^2 r^2}{\sum_{\{I_j \; | \; I_j\cap S_{i-1} = \{v_j\}\}} \sum_{p\in I_j}\dist(p, S_{i-1})} \nonumber \\
    &\leq \frac{2\alpha^2 r^2}{(\frac{n_\alpha}{10}-m_\alpha)\cdot (2\alpha^2 + 1) r^2} = \frac{2}{3m_{\alpha}}
\end{align}
Let $t_\alpha := n_\alpha + m_\alpha - i^*$ and define $\sT_3$ to denote the event that none of $\sF_1, \ldots, \sF_{t_\alpha}$ happens.
For each $i\in [t_\alpha]$, let random variable $\tilde{X}_i := \boldsymbol{1}_{\sF_i|\sT_1 \wedge \sT_2}$.
Since $\tilde{X}_1, \ldots, \tilde{X}_{m_\alpha}$ are stochastically dominated by independent random variables $\tilde{Y}_i$ such that $\tilde{Y}_i = 1$ with probability $1/(123\alpha^2)$ and zero otherwise, we can lower bound the probability that event $\sT_3$ holds as follows. 
\begin{align}
    \Pr[\sT_3 | \sT_1 \wedge \sT_2] = \Pr[\neg\sF_1 \wedge \ldots \wedge \neg \sF_{t_{\alpha}} | \sT_1 \wedge \sT_2] 
    &\ge \Pr[\sum_{i\leq t_\alpha} \tilde{Y}_i < 1 | \sT_1 \wedge \sT_2] \nonumber \\
    &\ge (1-\frac{2}{3m_\alpha})^{t_\alpha} &&\rhd\text{by Eq.~\eqref{eq:bad-event}} \nonumber \\
    &\ge (1-\frac{2}{3m_\alpha}) \cdot \exp(-\frac{2}{3}) &&\rhd t_\alpha \leq m_\alpha \label{eq:not-violate-u-v}
\end{align}

Hence, Eq.~\eqref{eq:pick-u-v} and Eq.~\eqref{eq:not-violate-u-v} together imply that the probability that there exists no block $I^*_j$ such that $|I^*_j \cap \sS| = \{v_j, u_j\}$ is at most 
\begin{align*}
    \Pr[\neg \sT_1] + \Pr[\neg \sT_2 | \sT_1] + \Pr[\neg \sT_3 | \sT_1 \wedge \sT_2]
    &\leq 0.02 + e^{-\frac{m_\alpha}{123\alpha^2}} + 1 - (1-\frac{2}{3m_\alpha})e^{-\frac{2}{3}}\\
    &\leq 0.02 + e^{-\frac{m_\alpha}{123\alpha^2}} + \frac{1}{2}+\frac{1}{3m_\alpha} \;\rhd m_\alpha \ge 240\alpha^2 \\
    &\leq 0.02 + 0.145 + 0.5 + 0.005 < 0.67
\end{align*}
Thus, with probability at least $0.33$, at the end of the seeding phase, there exists a block $I^*_j$ such that $I^*_j \cap \sS = \{v_j, u_j\}$.
\end{proof}

Now, we are ready to prove Theorem~\ref{thm:hard-instance}.
\begin{proof}[Proof of Theorem~\ref{thm:hard-instance}]
Note that Lemma~\ref{lem:iter-1} together with Lemma~\ref{lem:iter-2} implies that with constant probability, at the end of the seeding phase, at least one point is picked from each block in $\sI_{\alpha}$ and there exist a block $I_j$ such that $I_j\cap S = \{v_j, u_j\}$.

Given the structure of $\sI_{\alpha}$, after each step of the Lloyd's heuristic, no points from two different blocks will be in the same cluster. Furthermore, by Claim~\ref{clm:unstable-factor}, the clusters of $I_j$ which are initially $\{z_j, z'_j, v_j\}, \{u_j\}$ remain unchanged in the final solution of $k$-means++. However, such clustering is violating the requirement of {\ipste} clustering for $v_j$ by a factor of $\alpha$.

Thus, with constant probability, the solution of $k$-means++ on $(\sI_\alpha, k_\alpha)$ violates the requirement of {\ipste} clustering by a factor of at least $\alpha$.
\end{proof}

\subsection{Hard Instances for $k$-center}

\label{sec:hard_instance_kcenter}
\begin{figure}[!ht]
    \centering
    \includegraphics[width=0.5\textwidth]{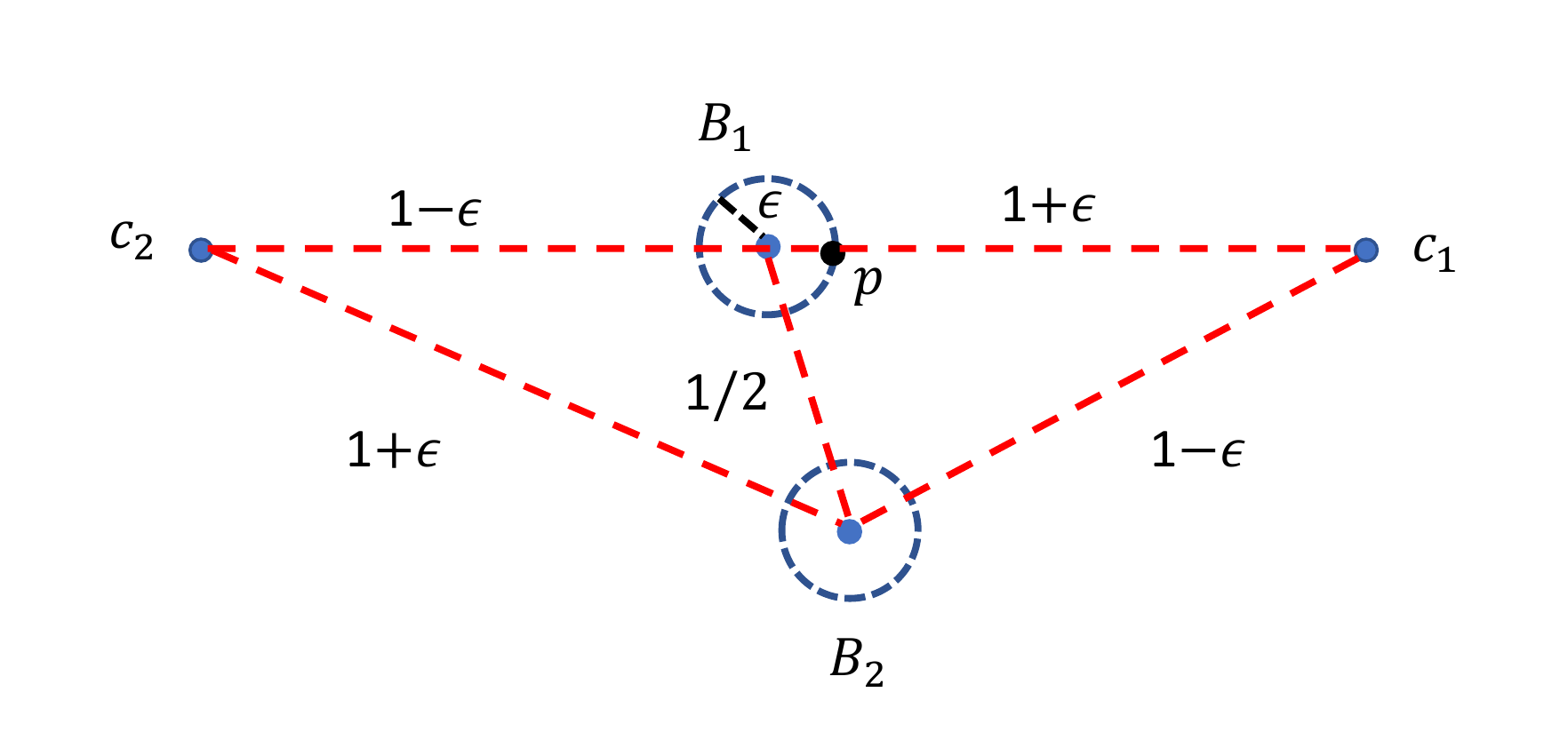}
    \caption{A hard instance for $k$-center.}
    \label{fig:k-center-bad-example}
\end{figure}

Figure~\ref{fig:k-center-bad-example} shows a hard instance for $k$-center. Each of the balls $B_1$ and $B_2$ have radius $\epsilon$. The center of $B_1$ has a distance of $1+\epsilon$ from $c_1$, and a distance of $1-\epsilon$ from $c_2$. The center of $B_2$ has a distance of $1-\epsilon$ from $c_1$, and a distance of $1+\epsilon$ from $c_2$. The nodes of the graph are $c_1$, $c_2$, $n$ points on the surface of $B_1$, and $n$ points on the surface of $B_2$.

Consider $k$-center using the greedy strategy by~\cite{gonzalez1985}. Suppose $k=2$, and the first center picked is $c_1$. The furthest node from $c_1$ is $c_2$, and hence it is the second center picked.  Since all the points get assigned to the closest center, all the points on $B_1$ get assigned to $c_2$, except the point $p$ that has the same distance from $c_1$ and $c_2$ and gets assigned to $c_1$. All the points on $B_2$ get assigned to $c_1$. 
We show such a clustering is unstable for $p$ within a factor of $n/8$. The average distance of $p$ to the nodes in its own cluster is at least $\frac{n(1/2-2\epsilon)+1}{n+1}$. The average distance of $p$ to the nodes assigned to $c_2$ is at most $\frac{2{\epsilon}(n-1)+1}{n}$. For $\epsilon \leq 1/2n$, the instability for $p$ is at least within a multiplicative factor of $n/8$. By setting $n>8\alpha$, $p$ is not $t$-approximately {\ipste} for any value $t<\alpha$.

\subsection{Hard Instances for Single Linkage}
\label{sec:hard_instance_single_linkage}
Figure~\ref{fig:single-linkage-bad-example} shows a bad example for single linkage clustering when $k=2$. A possible implementation of single linkage first merges $v_2$ and $v_3$. Next, it merges $v_4$ and $\{v_2, v_3\}$, and repeatedly at each iteration~$i$ adds $v_{i+2}$ to the set $\{v_2, v_3, \ldots, v_{i+1}\}$. When there are only two clusters $\{v_2, v_3, \ldots, v_n\}$ and $v_1$ left, the algorithm terminates. By setting $\epsilon = 1/2$, $v_2$ gets unstable by a factor of $(n-1)/4$. 
By setting $(n-1)/4 > \alpha$, $v_2$ is not $t$-approximately {\ipste} for any value of $t<\alpha$.

\begin{figure}[!ht]
    \centering
    \includegraphics[width=0.5\textwidth]{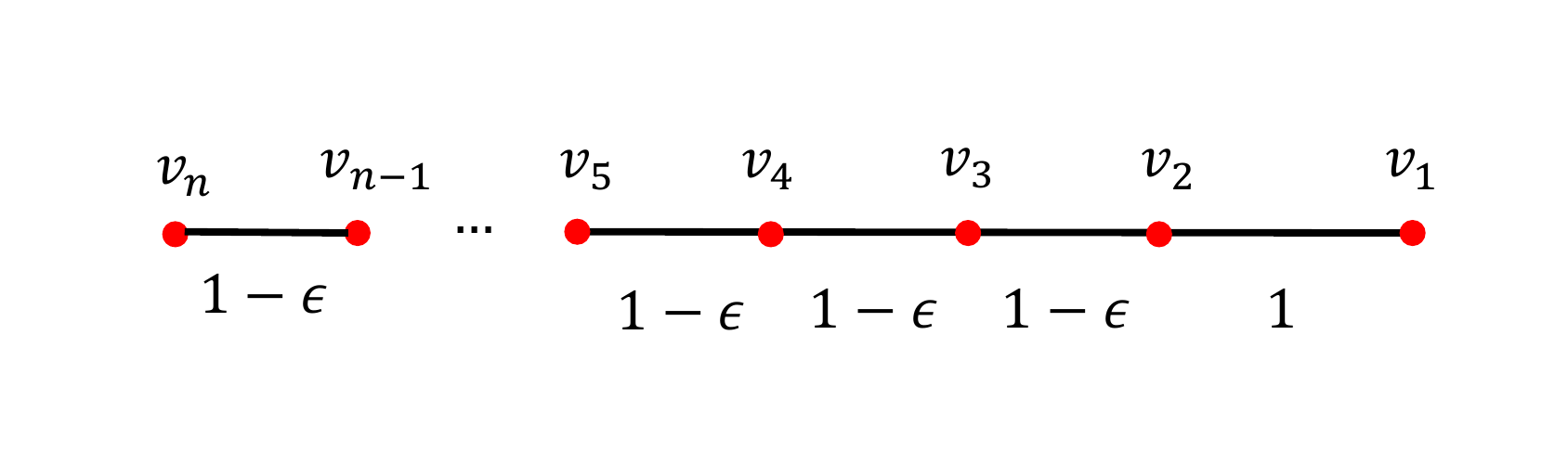}
    \vspace{-0.3cm}
    \caption{A hard instance for single linkage clustering.}
    \label{fig:single-linkage-bad-example}
\end{figure}

\section{Dynamic Programming Approach for Solving \eqref{1dim-problem}}\label{appendix_p_equals_infty}

In the following, we propose an efficient DP approach to find 
a 
solution~to~\eqref{1dim-problem}. 
Let 
$\dataset=\{x_1,\ldots,x_n\}$ with $x_1\leq \ldots\leq x_n$. 
Our approach builds a 
table~$T\in(\N\cup\{\infty\})^{n\times n\times k}$
with  
\begin{align}\label{definition_table_T}
T(i,j,l)=\min_{(C_1,\ldots,C_l)\in\mathcal{H}_{i,j,l}} \|(|C_1|-t_1,\ldots,|C_l|-t_l)\|_p^p
\end{align}
for $i\in[n]$, $j\in[n]$, $l\in[k]$, where
\begin{align*}
&\mathcal{H}_{i,j,l}=\big\{\mathcal{C}=(C_1,\ldots,C_l):~\text{$\mathcal{C}$ is a stable $l$-clustering of $\{x_1,\ldots,x_i\}$ with $l$ non-empty}\\
&~~~~~~~~~~~~~~~~~~~~~\text{contiguous clusters such that the right-most cluster $C_l$ contains exactly $j$ points}\big\}
\end{align*}
and $T(i,j,l)=\infty$ if $\mathcal{H}_{i,j,l}=\emptyset$. Here, we consider the case $p\neq \infty$. The modifications of our approach 
to 
the case~$p=\infty$ are minimal and 
are 
described later.

The optimal value of \eqref{1dim-problem} is given 
by $\min_{j\in[n]} T(n,j,k)^{1/p}$. Below, we will describe how to use the table~$T$ to compute an {\ipste} $k$-clustering solving \eqref{1dim-problem}. 
First, we explain how to build $T$. 
We have, for $i,j\in[n]$, 
\begin{align}\label{table_T_initial}
\begin{split}
 &T(i,j,1)=\begin{cases}
 |i-t_1|^p,& j=i, \\
 \infty, &j\neq i\\           
           \end{cases},
\qquad
T(i,j,i)=\begin{cases}
 \sum_{s=1}^i|1-t_s|^p,& j=1, \\
 \infty, &j\neq 1\\           
           \end{cases},\\
 &T(i,j,l)=\infty,\quad j+l-1>i,
 \end{split}
\end{align}
and
 the recurrence relation, for $l>1$ and $j+l-1\leq i$,
\begin{align}\label{recurrence_relation}
\begin{split}
T(i,j,l)=|j-t_l|^p+\min\left\{T(i-j,s,l-1):  s\in[i-j-(l-2)],\frac{\sum_{f=1}^{s-1}|x_{i-j}-x_{i-j-f}|}{s-1}\leq\right.\\
 \left.\frac{\sum_{f=1}^{j}|x_{i-j}-x_{i-j+f}|}{j},\frac{\sum_{f=2}^{j}|x_{i-j+1}-x_{i-j+f}|}{j-1}\leq\frac{\sum_{f=0}^{s-1}|x_{i-j+1}-x_{i-j-f}|}{s}\right\},
\end{split}
\end{align}
where we use the convention that $\frac{0}{0}=0$ for the 
fractions on the left sides of the inequalities. 
%
First, we explain relation~\eqref{recurrence_relation}. Before that we need to show the following lemma holds:

\begin{lemma}[Stable boundary points imply stable  clustering]\label{lemma_boundary_points}
 Let $\mathcal{C}=(C_1,\ldots,C_k)$ be a $k$-clustering of $\dataset=\{x_1,\ldots,x_n\}$, where $x_1\leq x_2\leq\ldots\leq x_n$, with contiguous clusters 
 $C_1=\{x_1,\ldots,x_{i_1}\},C_2=\{x_{i_1+1},\ldots,x_{i_2}\},\ldots,C_k=\{x_{i_{k-1}+1},\ldots,x_n\}$, for some 
 $1\leq i_1<\ldots<i_{k-1}<n$. Then 
 $\mathcal{C}$ is 
 {\ipste} if and only if 
all points $x_{i_l}$ and $x_{i_l+1}$, $l\in[k-1]$,
 are stable.
 Furthermore, 
$x_{i_l}$ ($x_{i_l+1}$, resp.) is stable if and only if its average distance to the points in $C_l\setminus\{x_{i_l}\}$ ($C_{l+1}\setminus\{x_{i_l+1}\}$, resp.) 
is not greater than the average distance to the points in $C_{l+1}$ ($C_{l}$, resp.). 
\end{lemma}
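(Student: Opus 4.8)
The plan is to establish both equivalences by reducing the stability of an arbitrary point to that of a few boundary points, and the stability of a boundary point to a single inequality against its neighbouring cluster. Throughout I write $\ol{d}(p,C)$ for the average distance from $p$ to $C$ (with $p$ deleted from $C$ when $p\in C$), so that $x$ is stable iff $C(x)=\{x\}$ or $\ol{d}(x,C(x))\le\ol{d}(x,C_j)$ for every $j$ with $C_j\neq C(x)$, and I use $d(x,y)=|x-y|$ on the line.

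First I would record two elementary facts. \emph{(i)} If a cluster $C$ lies weakly to the right of a point $p$, i.e.\ $p\le\min C$, then $\ol{d}(p,C)=\mathrm{mean}(C)-p$; the mirror statement holds when $C$ lies weakly to the left of $p$. \emph{(ii)} (\emph{contraction toward a boundary}) If $|C|\ge 2$ and $u=\max C$, then $\ol{d}(x,C)\le (u-x)+\ol{d}(u,C)$ for every $x\in C$, and symmetrically $\ol{d}(x,C)\le(x-v)+\ol{d}(v,C)$ for $v=\min C$. Fact (ii) is obtained from the triangle inequality $|x-y|\le(u-x)+(u-y)$ summed over $y\in C\setminus\{x\}$; the subtlety is that both $\ol{d}(x,C)$ and $\ol{d}(u,C)$ are normalised by $|C|-1$ although their underlying index sets differ by one element, and absorbing the $y=x$ term leaves the coefficient $\tfrac{|C|-2}{|C|-1}\le 1$ in front of $(u-x)$.

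Next I would prove the ``furthermore'' part. The forward direction is immediate from the definition. For the converse, suppose $|C_l|\ge 2$ and $\ol{d}(x_{i_l},C_l)\le\ol{d}(x_{i_l},C_{l+1})$; I must check $\ol{d}(x_{i_l},C_l)\le\ol{d}(x_{i_l},C_j)$ for all $j\neq l$. For $j>l$ every point of $C_j$ is $\ge x_{i_l}$, so by (i) and contiguity $\ol{d}(x_{i_l},C_j)=\mathrm{mean}(C_j)-x_{i_l}\ge\mathrm{mean}(C_{l+1})-x_{i_l}=\ol{d}(x_{i_l},C_{l+1})\ge\ol{d}(x_{i_l},C_l)$. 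For $j<l$ every $y\in C_l\setminus\{x_{i_l}\}$ satisfies $y\ge\min C_l\ge\max C_j\ge z$ for every $z\in C_j$, whence $\ol{d}(x_{i_l},C_l)\le x_{i_l}-\min C_l\le x_{i_l}-\max C_j\le\ol{d}(x_{i_l},C_j)$; the singleton case is trivial. The statement for $x_{i_l+1}$ is the left--right mirror image.

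Finally, the main equivalence: ``only if'' is trivial. For ``if'', assume all boundary points are stable and fix an arbitrary $x$ in a cluster $C_m$ with $|C_m|\ge 2$ (singletons are stable by definition) and a cluster $C_j$ with $j\neq m$. If $j>m$, put $u=\max C_m=x_{i_m}$, a boundary point and hence stable, so $\ol{d}(u,C_m)\le\ol{d}(u,C_j)$; by (i), $\ol{d}(x,C_j)=\ol{d}(u,C_j)+(u-x)$, and by (ii), $\ol{d}(x,C_m)\le(u-x)+\ol{d}(u,C_m)$, which chain to $\ol{d}(x,C_m)\le\ol{d}(x,C_j)$. If $j<m$, the mirror argument with $v=\min C_m=x_{i_{m-1}+1}$ (again a boundary point) gives the same bound. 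Hence every $x$ is stable, so $\mathcal{C}$ is IP-stable. The only genuinely delicate step is fact (ii): the naive summation would produce the coefficient $\tfrac{|C|}{|C|-1}>1$ in front of $(u-x)$, which is too weak to close the final chain of inequalities, so the bookkeeping of the $y=x$ term must be carried out carefully. Everything else is elementary manipulation with one-dimensional distances and the contiguity of the clusters.
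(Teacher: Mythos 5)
Your proof is correct and takes essentially the same route as the paper's: a triangle-inequality contraction of an interior point onto the nearer boundary point (your fact (ii)), combined with that boundary point's stability and the additive translation of average distances to clusters lying entirely on one side (your fact (i)), plus the monotonicity over farther clusters for the ``furthermore'' part. The bookkeeping concern you flag in fact (ii) is actually harmless: a coefficient of $1$ in front of $(u-x)$ already suffices to close the final chain, which is exactly what the paper's inline computation uses.
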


\begin{proof}
We assume that $\dataset=\{x_1,\ldots,x_n\}\subseteq \R$ with $x_1\leq x_2\leq \ldots \leq x_n$ and write the Euclidean metric $d(x_i,x_j)$ between two points $x_i$ and $x_j$  
in its usual way $|x_i-x_j|$. 

If $\mathcal{C}$ is  {\ipste}, then all points $x_{i_l}$ and $x_{i_l+1}$, $l\in[k-1]$, are stable. 
Conversely, let us assume that $x_{i_l}$ and $x_{i_l+1}$, $l\in[k-1]$, are stable. We need to show that all points in $\dataset$ are stable.  
Let $\tilde{x}\in C_l=\{x_{i_{l-1}+1},\ldots,x_{i_l}\}$ for some $l\in\{2,\ldots,k-1\}$ 
and $l'\in\{l+1,\ldots,k\}$. Since $x_{i_l}$ is stable, we have
\begin{align*}
 \frac{1}{|C_l|-1}\sum_{y\in{C_{l}}} (x_{i_l}-y)= \frac{1}{|C_l|-1}\sum_{y\in{C_{l}}} |x_{i_l}-y|\leq  
  \frac{1}{|C_{l'}|}\sum_{y\in{C_{l'}}} |x_{i_l}-y|=\frac{1}{|C_{l'}|}\sum_{y\in{C_{l'}}} (y-x_{i_l}) 
\end{align*}
and hence
\begin{align*}
 \frac{1}{|C_l|-1}\sum_{y\in{C_{l}}} |\tilde{x}-y|&\leq  \frac{1}{|C_l|-1}\sum_{y\in C_{l}\setminus\{\tilde{x}\}} (|\tilde{x}-x_{i_l}|+|x_{i_l}-y|)\\
 &= (x_{i_l}-\tilde{x})+\frac{1}{|C_l|-1}\sum_{y\in C_{l}\setminus\{\tilde{x}\}} (x_{i_l}-y)\\
 &\leq (x_{i_l}-\tilde{x})+\frac{1}{|C_{l'}|}\sum_{y\in{C_{l'}}} (y-x_{i_l})\\
 &=\frac{1}{|C_{l'}|}\sum_{y\in{C_{l'}}} (y-\tilde{x})\\
 &=\frac{1}{|C_{l'}|}\sum_{y\in{C_{l'}}} |\tilde{x}-y|. 
\end{align*}
Similarly, we can show for $l'\in\{1,\ldots,l-1\}$ that 
\begin{align*}
 \frac{1}{|C_l|-1}\sum_{y\in{C_{l}}} |\tilde{x}-y|\leq  \frac{1}{|C_{l'}|}\sum_{y\in{C_{l'}}} |\tilde{x}-y|, 
\end{align*}
and hence $\tilde{x}$ is stable. 
Similarly, we can show that all points $x_1,\ldots,x_{{i_1}-1}$ and $x_{i_{k-1}+2},\ldots,x_n$ are stable.

For the second claim observe that for $1\leq s\leq l-1$, the average distance of $x_{i_l}$ to the points in $C_s$ cannot 
 be smaller than the average distance to the points 
in $C_l\setminus\{x_{i_l}\}$
and for $l+2\leq s\leq k$,  
the average distance of $x_{i_l}$ to the points in $C_s$ cannot be smaller than the average distance to the points 
in $C_{l+1}$. A similar argument proves the claim for $x_{i_l+1}$.
\end{proof}

It follows from Lemma~\ref{lemma_boundary_points} that a clustering $(C_1,\ldots,C_l)$ of $\{x_1,\ldots,x_i\}$ with contiguous clusters and $C_l=\{x_{i-j+1},\ldots,x_i\}$ is 
{\ipste} if and only if $(C_1,\ldots,C_{l-1})$ is a stable clustering of  $\{x_1,\ldots,x_{i-j}\}$ and the average distance of $x_{i-j}$ to the points in 
$C_{l-1}\setminus\{x_{i-j}\}$ is not greater 
than  the average distance to the points in $C_l$ and the average distance of $x_{i-j+1}$ to the points in $C_{l}\setminus\{x_{i-j+1}\}$ is not greater 
than  the average distance to the points in $C_{l-1}$. The latter two conditions correspond to the two inequalities in~\eqref{recurrence_relation} 
(when 
$|C_{l-1}|=s$, where $s$ is a variable). 
By 
explicitly enforcing 
these 
two constraints,  
we can utilize the first condition and 
rather than minimizing over  $\mathcal{H}_{i,j,l}$ in \eqref{recurrence_relation_explained}, 
we can minimize over both $s\in[i-j-(l-2)]$ and $\mathcal{H}_{i-j,s,l-1}$ 
(corresponding to minimizing over all {\ipste}
$(l-1)$-clusterings of  $\{x_1,\ldots,x_{i-j}\}$ with non-empty contiguous clusters). It is 
\begin{align*}
\min_{\substack{s\in[i-j-(l-2)]\\ (C_1,\ldots,C_{l-1})\in\mathcal{H}_{i-j,s,l-1}}} 
\|(|C_1|-t_1,\ldots,|C_{l-1}|-t_{l-1})\|_p^p =\min_{s\in[i-j-(l-2)]} T(i-j,s,l-1),
\end{align*}
and hence we end up with the recurrence relation~\eqref{recurrence_relation}.

It is not hard to see that using~\eqref{recurrence_relation}, we can build the table~$T$ in 
time $\mathcal{O}(n^3k)$. Once we have $T$, we 
can compute a solution $(C_1^*,\ldots,C_k^*)$ to \eqref{1dim-problem} by specifying 
$|C_1^*|,\ldots,|C_k^*|$ 
in 
time $\mathcal{O}(nk)$ as follows: let $v^*=\min_{j\in[n]} T(n,j,k)$. We set $|C_k^*|=j_0$ for an arbitrary 
$j_0$ with $v^*=T(n,j_0,k)$. For $l=k-1,\ldots,2$, we then set $|C_l^*|=h_0$ for an arbitrary $h_0$~with 
(i) $T(n-\sum_{r=l+1}^k |C_r^*|,h_0,l)+\sum_{r=l+1}^k ||C_r^*|-t_r|^p=v^*$,
 (ii) the average distance of $x_{n-\sum_{r=l+1}^k |C_r^*|}$ to the closest $h_0-1$ many  points on its left side is not greater than the average 
distance to the points in $C_{l+1}^*$, and 
(iii) the average distance of $x_{n-\sum_{r=l+1}^k |C_r^*|+1}$ to the other points in $C_{l+1}^*$ is 
 not greater than the average distance to the closest $h_0$ many points on its left side. 
Finally, it is $|C_1^*|=n-\sum_{r=2}^k |C_r^*|$. It follows from the definition 
of the table~$T$ in~\eqref{definition_table_T} and Lemma~\ref{lemma_boundary_points} that for $l=k-1,\ldots,2$ we can always find some $h_0$ satisfying (i)~to~(iii)
and that our approach yields an {\ipste} $k$-clustering $(C_1^*,\ldots,C_k^*)$ of $\dataset$. 

Hence we have shown the following theorem:

\begin{theorem}[Efficient 
DP 
approach solves \eqref{1dim-problem}]
By means of the dynamic programming approach \eqref{definition_table_T} to \eqref{recurrence_relation} we can compute 
an {\ipste} clustering 
solving 
\eqref{1dim-problem} in running time $\mathcal{O}(n^3k)$.
\end{theorem}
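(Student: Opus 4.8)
The plan is to establish three things in sequence: that the table $T$ from \eqref{definition_table_T} obeys the base cases \eqref{table_T_initial} and the recurrence \eqref{recurrence_relation}; that $\min_{j\in[n]} T(n,j,k)^{1/p}$ is the optimum of \eqref{1dim-problem} and that a backward pass recovers an optimal clustering; and that everything runs in $\mathcal{O}(n^3k)$ time. Lemma~\ref{lemma_boundary_points} is the structural workhorse throughout, so I would prove it first.

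For Lemma~\ref{lemma_boundary_points}: since the points lie on a line and clusters are contiguous, for $\tilde{x}\in C_l$ weakly left of the right-most point $x_{i_l}$ of $C_l$ and any cluster $C_{l'}$ to the right, the identity $|\tilde{x}-y| = (x_{i_l}-\tilde{x})+|x_{i_l}-y|$ holds for $y\in C_{l'}$, while $|\tilde{x}-y|\le (x_{i_l}-\tilde{x})+|x_{i_l}-y|$ holds for $y\in C_l\setminus\{\tilde{x}\}$; averaging and invoking stability of $x_{i_l}$ shows $\tilde{x}$ is stable against $C_{l'}$, and the symmetric computation with $x_{i_{l-1}+1}$ handles clusters to the left. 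Hence stability of all interface points implies stability of every point, and the same computation shows an interface point need only be compared with its single adjacent cluster. With this, the recurrence \eqref{recurrence_relation} follows: a contiguous IP-stable $l$-clustering of $\{x_1,\dots,x_i\}$ with right-most cluster of size $j$ is precisely an IP-stable $(l-1)$-clustering of $\{x_1,\dots,x_{i-j}\}$, of some right-most size $s$, together with the block $\{x_{i-j+1},\dots,x_i\}$, subject to exactly the two interface inequalities in \eqref{recurrence_relation} (that $x_{i-j}$ is stable against the new block and $x_{i-j+1}$ against the old cluster of size $s$). By the lemma no other constraint appears and none already satisfied in the sub-clustering is destroyed, since every non-interface comparison and every across-boundary comparison other than the new one is unchanged. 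Minimizing the separable objective $\|\cdot\|_p^p$ over this decomposition yields \eqref{recurrence_relation}; the base cases \eqref{table_T_initial} are immediate, and applying the decomposition at $i=n$, $l=k$ gives that the optimum of \eqref{1dim-problem} equals $\min_{j} T(n,j,k)^{1/p}$.

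For efficiency: precompute prefix sums of the sorted coordinates, so that for any index $m$ and count $c$ the quantities $\sum_{f=1}^{c}(x_m-x_{m-f})$ and $\sum_{f=1}^{c}(x_{m+f}-x_m)$ are computable in $\mathcal{O}(1)$; the four averaged sums in the constraints of \eqref{recurrence_relation} are of exactly this form. The table has $\mathcal{O}(n^2k)$ entries, and filling $T(i,j,l)$ requires scanning the $\mathcal{O}(n)$ candidate values of $s$, each tested and evaluated in $\mathcal{O}(1)$, so building $T$ costs $\mathcal{O}(n^3k)$. An optimal clustering is then read off by a standard backward pass: from an argmin $j_0$ of $T(n,\cdot,k)$, repeatedly pick the size of the current right-most cluster consistent with the stored optimum and the two interface inequalities; this costs $\mathcal{O}(n)$ per level and $\mathcal{O}(nk)$ total, and Lemma~\ref{lemma_boundary_points} guarantees a valid pick always exists, so the recovered clustering is IP-stable and optimal. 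For $p=\infty$ one stores $\max_r\big||C_r|-t_r\big|$ rather than $\sum_r ||C_r|-t_r|^p$, replacing the leading ``$|j-t_l|^p+{}$'' in \eqref{recurrence_relation} by ``$\max\big(|j-t_l|,\,\cdot\big)$'' and adjusting \eqref{table_T_initial} accordingly; the running-time bound is unchanged.

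The main obstacle is the correctness of the recurrence --- the claim that appending a right-most block introduces \emph{exactly} the two stated inequalities and never invalidates a constraint that held in the sub-clustering. This is precisely what Lemma~\ref{lemma_boundary_points} buys: its first part collapses the $\Omega(n)$-many stability conditions down to those of the $\mathcal{O}(k)$ interface points, and its second part reduces each such condition to a comparison with a single adjacent cluster, so that enlarging the clustering to the right can only touch the right-most interface of the prefix, namely the point $x_{i-j}$, leaving all other interface comparisons, and hence all other stability conditions, intact.
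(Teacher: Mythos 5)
Your proposal is correct and follows essentially the same route as the paper: prove Lemma~\ref{lemma_boundary_points} by the on-the-line triangle argument, use it to justify that appending a right-most block adds exactly the two interface inequalities in \eqref{recurrence_relation} (so the recurrence is valid and the optimum is $\min_j T(n,j,k)^{1/p}$), fill the $\mathcal{O}(n^2k)$-entry table with an $\mathcal{O}(n)$ scan over $s$ per entry, recover a solution by a backward pass in $\mathcal{O}(nk)$, and handle $p=\infty$ by replacing the sum with a max. The only difference is that you make the $\mathcal{O}(1)$ constraint checks explicit via prefix sums, which the paper leaves implicit.
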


Let us first explain the recurrence relation~\eqref{recurrence_relation}:  
because of $\|(x_1,\ldots,x_l)\|_p^p=\|(x_1,\ldots,x_{l-1})\|_p^p+|x_l|^p$ and for every clustering~$(C_1,\ldots,C_l)\in\mathcal{H}_{i,j,l}$ it 
is $|C_l|=j$, we have
\begin{align}\label{recurrence_relation_explained}
T(i,j,l)=|j-t_l|^p+\min_{(C_1,\ldots,C_l)\in\mathcal{H}_{i,j,l}} \|(|C_1|-t_1,\ldots,|C_{l-1}|-t_{l-1})\|_p^p.
\end{align}

\subsection{$p=\infty$}
Now we describe how to modify the dynamic programming approach of Section~\ref{section_1dim} to the case $p=\infty$: 
in this case, we replace the definition of the table~$T$ in \eqref{definition_table_T} by  
\begin{align*}
T(i,j,l)=\min_{(C_1,\ldots,C_l)\in\mathcal{H}_{i,j,l}} \|(|C_1|-t_1,\ldots,|C_l|-t_l)\|_{\infty},\quad i\in[n], j\in[n], l\in[k],
\end{align*}
and $T(i,j,l)=\infty$ if $\mathcal{H}_{i,j,l}=\emptyset$ as before.
The optimal value of \eqref{1dim-problem} is now given 
by $\min_{j\in[n]} T(n,j,k)$. Instead of \eqref{table_T_initial}, 
we have, for $i,j\in[n]$, 
\begin{align*}
T(i,j,1)=\begin{cases}
 |i-t_1|,& j=i, \\
 \infty, &j\neq i\\           
           \end{cases},\hspace{7mm}
T(i,j,i)=\begin{cases}
 \max_{s=1,\ldots,i}|1-t_s|,& j=1, \\
 \infty, &j\neq 1\\           
           \end{cases}
\end{align*}
and
\begin{align*}
T(i,j,l)=\infty,\quad j+l-1>i,
\end{align*}
and
 the recurrence relation~\eqref{recurrence_relation} now becomes, for $l>1$ and $j+l-1\leq i$,
\begin{align*}
&T(i,j,l)=\max\Bigg\{|j-t_l|,\min\bigg\{T(i-j,s,l-1):  s\in[i-j-(l-2)],\\
&~~~~~~~~~~~~~~~~~~~~~~~~~~~~~~~~~~~~~~~~~~~~~~~~~~~~~~\frac{1}{s-1}\,{\sum_{f=1}^{s-1}|x_{i-j}-x_{i-j-f}|}\leq \frac{1}{j}\,{\sum_{f=1}^{j}|x_{i-j}-x_{i-j+f}|},\\
&~~~~~~~~~~~~~~~~~~~~~~~~~~~~~~~~~~~~~~~~~~~~~~~~~~~~~~\frac{1}{j-1}\,{\sum_{f=2}^{j}|x_{i-j+1}-x_{i-j+f}|}\leq\frac{1}{s}\,{\sum_{f=0}^{s-1}|x_{i-j+1}-x_{i-j-f}|}\bigg\}\Bigg\}.
\end{align*}

Just like before, we can build the table~$T$ in 
time $\mathcal{O}(n^3k)$. Computing a solution $(C_1^*,\ldots,C_k^*)$ to \eqref{1dim-problem} 
also works similarly as before. The only thing that we have to change is the condition (i) on $h_0$ 
(when setting $|C_l^*|=h_0$ for $l=k-1,\ldots,2$): now $h_0$ must satisfy 
\begin{align*}
\max\left\{T\left(n-\sum_{r=l+1}^k |C_r^*|,h_0,l\right),\max_{r=l+1,\ldots,k} ||C_r^*|-t_r|\right\}=v^*
\end{align*}
or equivalently 
 \begin{align*}
 T\left(n-\sum_{r=l+1}^k |C_r^*|,h_0,l\right)\leq v^*.
 \end{align*}



\vspace{6mm}
\section{
Addendum to Section~\ref{section_experiments_general}
}\label{appendix_exp_general}

In this section, we provide supplements to Section~\ref{section_experiments_general}:
\begin{itemize}

\item
In Section~\ref{example_group_fair_vs_indi_fair}, we present a simple example that shows that it really depends on the data set whether a group-fair clustering is IP-stable or not.

\item
In Section~\ref{example_local_search}, we provide an example illustrating why the local search idea 
of 
Section~\ref{section_experiments_general} does not work.

\item
In Section~\ref{appendix_pruning_strategy}, we 
present a heuristic approach to make linkage clustering more aligned with 
IP-stability.

\item
In Section~\ref{appendix_exp_general_adult}, we present the experiments of Section~\ref{section_experiments_general} on the Adult data set \citep{UCI_all_four_data_sets_vers2}:  we study the performance of  various standard clustering algorithms as a function of the number of clusters~$k$ when the underlying metric is either the
Euclidean (as in the plot of Figure~\ref{exp_gen_standard_alg_Adult}),  Manhattan or Chebyshev metric. We also study our heuristic approach of Appendix~\ref{appendix_pruning_strategy} for making linkage clustering more stable: just as for the standard algorithms, we show $\Nrunf$, $\Maxviol$, $\Meanviol$ and $\cost$ as a function of $k$ for ordinary average / complete / single linkage clustering and their modified versions 
using our heuristic approach in its both variants ($\#$U denotes the variant 
based on
$\Nrunf$ and MV the variant 
based on 
$\Maxviol$). 


\item
In Section~\ref{appendix_exp_general_drug}, we show the same set of experiments  on the 
Drug Consumption data set \citep{drug_consumption_data}. We used all 1885 records in the data set, and we used all 12 features describing 
a record (e.g., age, gender, or education), but did not use the information about the drug consumption of a record (this information is usually used as label when setting up a classification problem on the data set). We normalized the features to zero mean and unit variance. When 
running the standard clustering algorithms on the data set, we refrained  from running spectral clustering since the Scikit-learn implementation occasionally was not able to do the eigenvector computations and aborted with a LinAlgError. Other than that, all results are largely consistent with the results for the Adult data set.

\item
In Section~\ref{appendix_exp_general_liver}, we show the same set of experiments on the Indian Liver Patient data set \citep{UCI_all_four_data_sets_vers2}. Removing four records with missing values, we ended up with 579 records, 
for which we used all 11 available features (e.g., age, gender, or total proteins). We normalized the features to zero mean and unit variance. 
Again, all results are largely consistent with the results for the Adult data set.

 
\end{itemize}

\begin{figure}[t]
\centering
\includegraphics[scale=0.38]{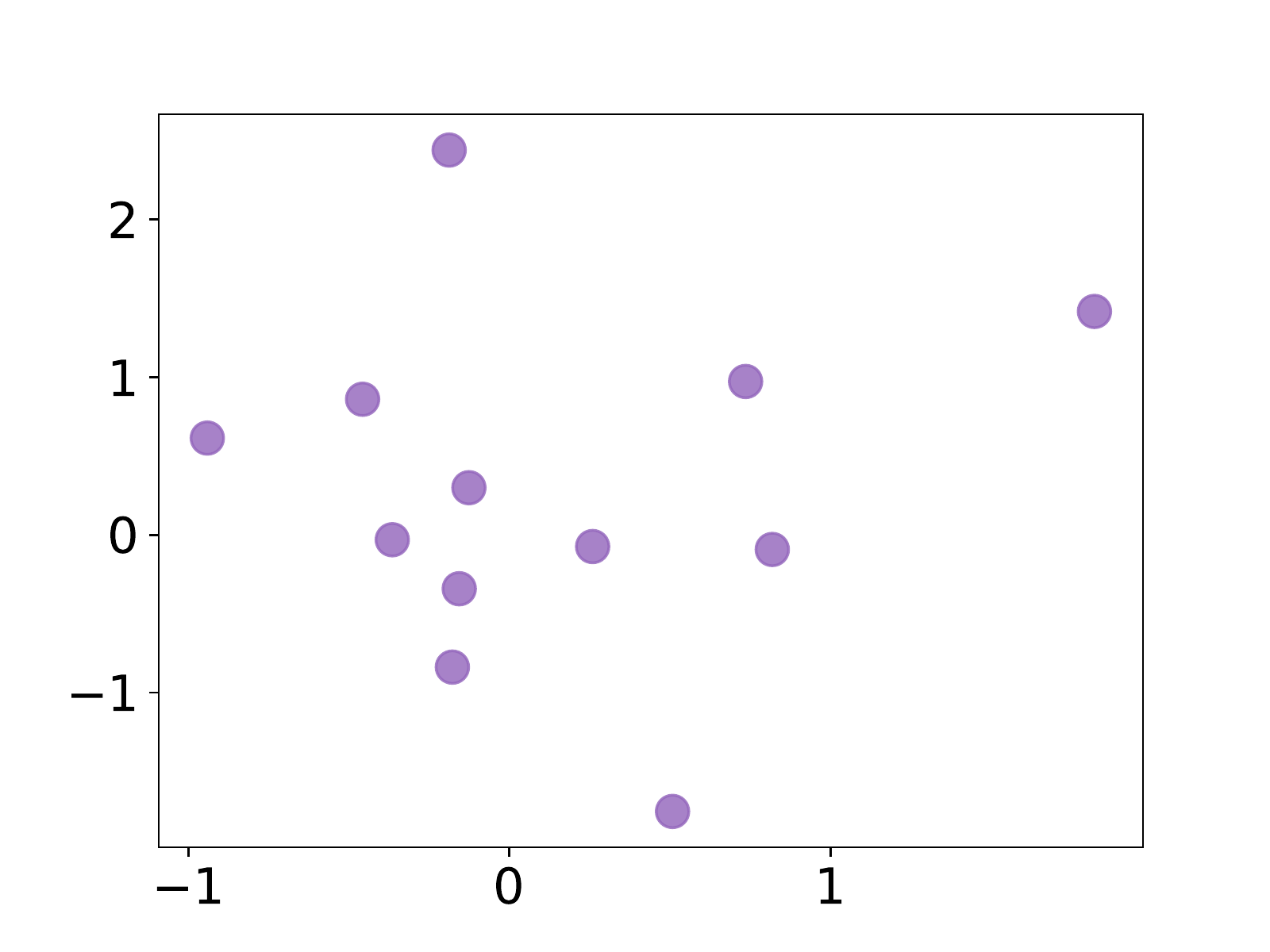}
\hspace{6mm}
\includegraphics[scale=0.38]{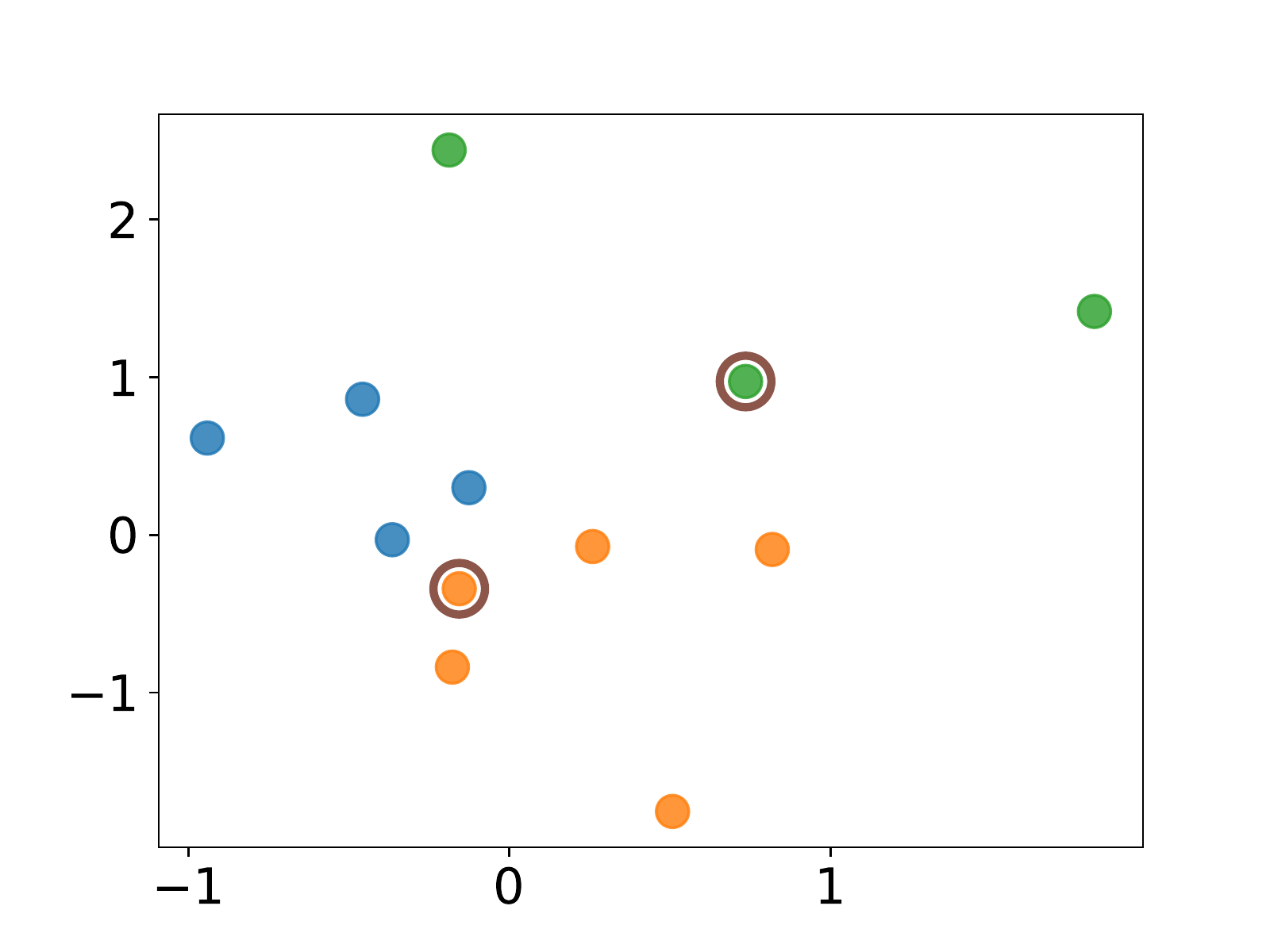}

\includegraphics[scale=0.38]{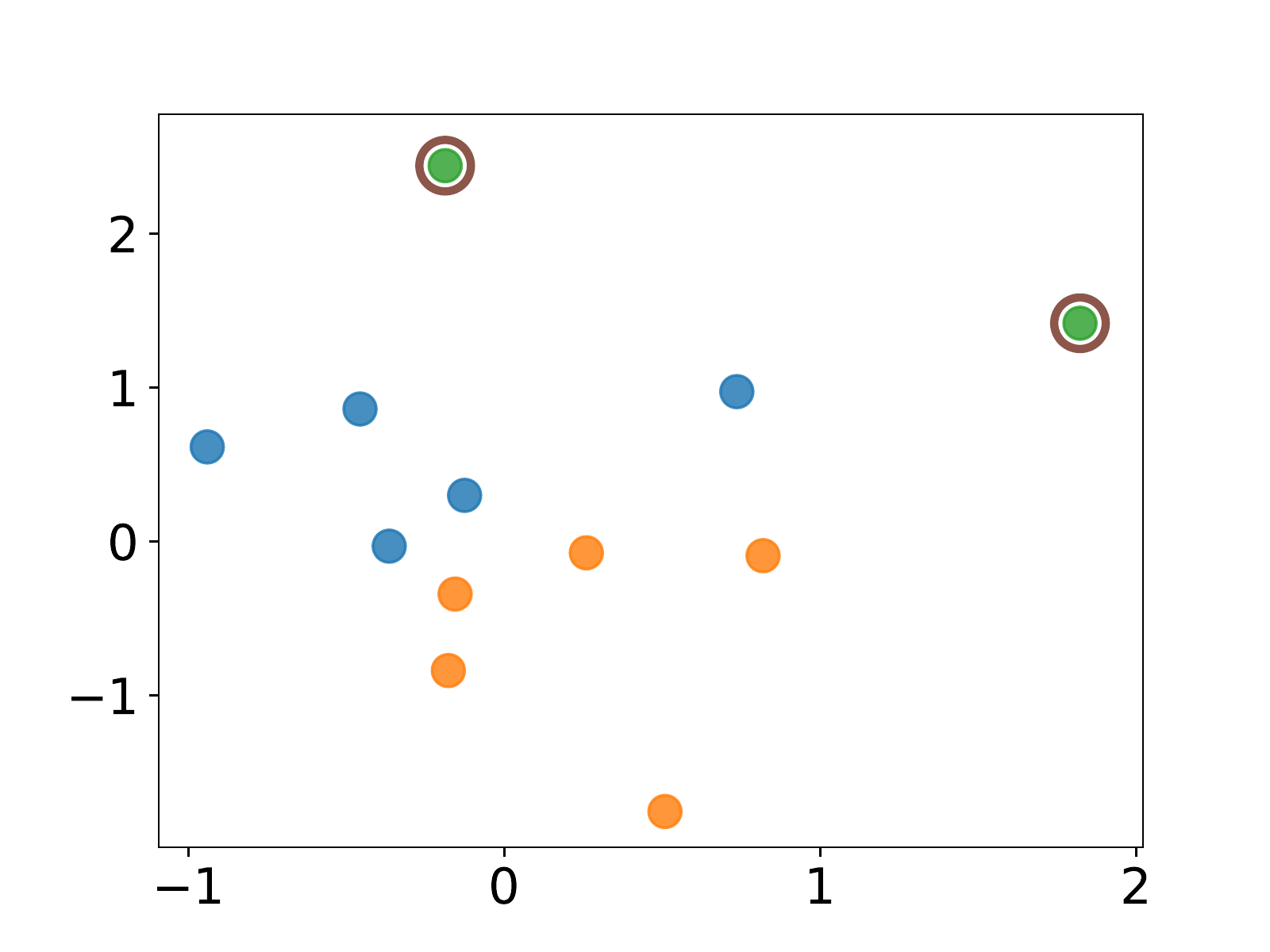}
\hspace{6mm}
\includegraphics[scale=0.38]{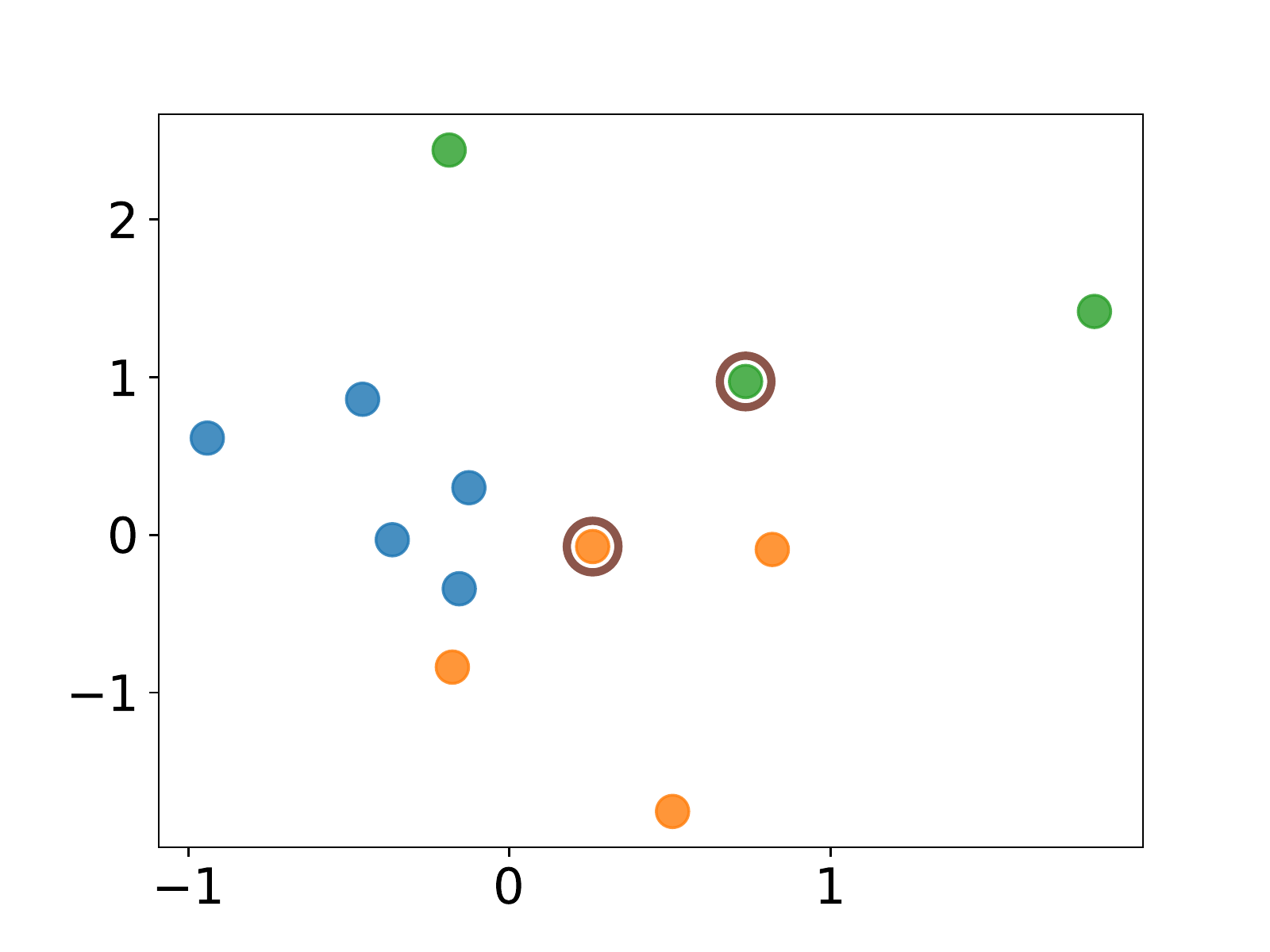}

\caption{An example illustrating why the local search idea outlined in Section~\ref{section_experiments_general} does not work.
\textbf{Top left:} 12 points in $\R^2$. 
\textbf{Top right:} A $k$-means clustering of the 12 points (encoded by color) with two points that are not IP-stable (surrounded by a circle). 
\textbf{Bottom row:} After assigning one of the two points that are not stable in the $k$-means clustering to its closest cluster, that point is stable. However, 
now some points that were initially stable are not stable anymore.}\label{fig_counterexample_local_search}
\end{figure}

\subsection{Compatibility of Group Fairness and IP-Stability}\label{example_group_fair_vs_indi_fair}

By means of a simple example we want to illustrate that it really depends on the data set whether group fairness and IP-stability are compatible 
or at odds with each other. Here we consider the prominent 
group fairness notion for clustering of \citet{fair_clustering_Nips2017}, which 
asks that in each cluster, every demographic group is approximately equally represented. Let us assume that the data set
consists of the four 1-dimensional points 0, 1, 7 and 8 and the distance function $d$ is the ordinary Euclidean metric.
It is easy to see that the only IP-stable 2-clustering is $\mathcal{C}=(\{0,1\}, \{7,8\})$. Now if there are two demographic 
groups $G_1$ and $G_2$ with $G_1=\{0,7\}$ and $G_2=\{1,8\}$, the clustering~$\mathcal{C}$ is perfectly fair according to the notion 
of \citeauthor{fair_clustering_Nips2017}. But if $G_1=\{0,1\}$ and $G_2=\{7,8\}$, the clustering~$\mathcal{C}$
is totally unfair according to the 
their 
notion.

\subsection{Why Local Search Does not Work}\label{example_local_search}

Figure~\ref{fig_counterexample_local_search} presents an example illustrating why the local search idea outlined in 
Section~\ref{section_experiments_general} does not work: assigning a data point that is not IP-stable to its closest cluster (so that that data point becomes stable) 
may cause other data points that are initially stable to not be stable anymore  
after the reassignment.

\subsection{Heuristic Approach to Make Linkage Clustering More Stable}\label{appendix_pruning_strategy}

Linkage clustering 
builds 
a binary tree that represents a hierarchical clustering with the root of the tree corresponding to the whole data set and 
every 
node 
corresponding to a subset such that a parent is the union of its two children.   
The leaves of the tree correspond to singletons comprising one data point 
\citep[e.g.,][Section 22.1]{shalev2014understanding}. If one wants to obtain a $k$-clustering of the data set, 
the output of a linkage clustering 
algorithm is a certain pruning of this tree.

With {\ipst} being our primary goal, 
we propose to construct a $k$-clustering / a pruning of the tree as follows:  
starting with the two children of the root, 
we maintain a set of 
nodes 
that corresponds to a clustering 
and proceed in $k-2$ rounds. In round~$i$, we greedily 
split 
one of the $i+1$ many 
nodes 
that we currently have into its two children such that the resulting $(i+2)$-clustering minimizes, over the $i+1$ many possible splits, 
 $\Nrunf$ (the number of non-stable datapoints; cf. the beginning of Section~\ref{section_experiments}). Alternatively, we can split the node that 
gives rise to a minimum value of 
$\Maxviol$ (cf. the beginning of Section~\ref{section_experiments}). 
Algorithm~\ref{pseudocode_pruning_strategy} provides the pseudocode of our proposed strategy. 
%

\begin{algorithm}[H]
   \caption{Algorithm to greedily prune a hierarchical clustering}
   \label{pseudocode_pruning_strategy}
   
\begin{algorithmic}[1]
\vspace{1mm}
     \STATE {\bfseries Input:}  
       binary tree $T$ representing a hierarchical clustering obtained from running a linkage clustering algorithm; number of clusters~$k\in\{2,\ldots,|\dataset|\}$; 
       measure $meas\in\{\Nrunf,\Maxviol\}$ that one aims to optimize~for

\vspace{1mm}  
   \STATE {\bfseries Output:}
   a $k$-clustering $\mathcal{C}$
   
\vspace{3mm}
\STATE{\emph{\# Conventions:} 
\vspace{0mm}
\begin{itemize}[leftmargin=*]
\setlength{\itemsep}{1pt}
 \item \emph{for a node $v\in T$, we denote the left child of $v$ by $Left(v)$ and the right child by $Right(v)$}
\item 
\emph{for a $j$-clustering~$\mathcal{C}'=(C_1,C_2,\ldots,C_j)$, a cluster $C_l$ and $A,B\subseteq C_l$ with $A\dot{\cup} B =C_l$ we write 
$\mathcal{C}'|_{C_l\hookrightarrow A,B}$ for the $(j+1)-$clustering 
that we obtain by replacing the cluster $C_l$ with two clusters $A$ and $B$ in $\mathcal{C}'$}
\end{itemize}
}   
  
\vspace{4mm}
\STATE{Let $r$ be the root of $T$ and initialize the clustering $\mathcal{C}$ as $\mathcal{C}=(Left(r),Right(r))$}
\vspace{2pt}
\FOR{$i=1$ {\bfseries to} $k-2$ \textbf{by} $1$} 
\vspace{2pt}
\STATE{Set $$v^\star=\argmin_{v: v\text{ is a cluster in $\mathcal{C}$ with }|v|>1} meas(\mathcal{C}|_{v\hookrightarrow Left(v),Right(v)})$$ 
and
$\mathcal{C}=\mathcal{C}|_{v^\star\hookrightarrow Left(v^\star),Right(v^\star)}$
}

\vspace{4pt}
\ENDFOR

\RETURN~~$\mathcal{C}$
\end{algorithmic}
\end{algorithm}

\clearpage
\subsection{
Experiments on the Adult Data Set}\label{appendix_exp_general_adult}

\vspace{5mm}
\begin{figure*}[h]
\centering
\includegraphics[width=0.74\textwidth]{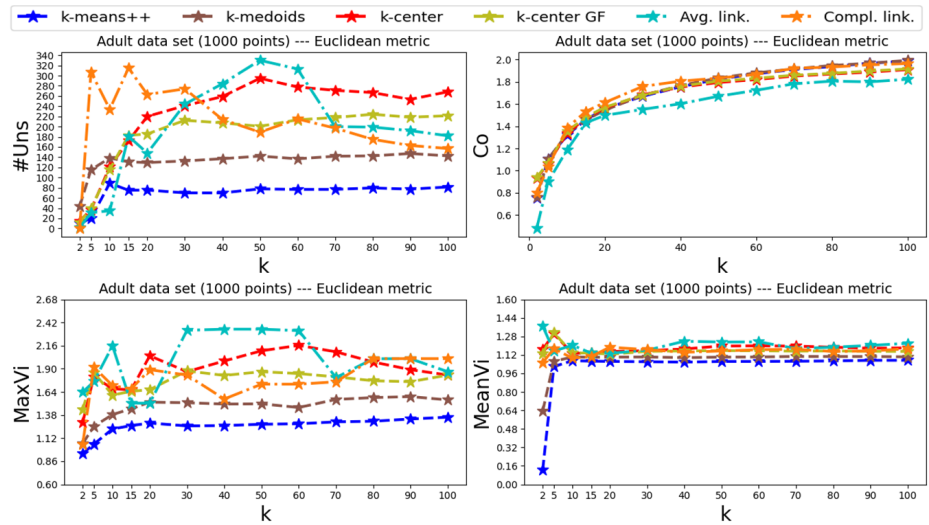}
\caption{Adult data set --- plots of Figure~\ref{exp_gen_standard_alg_Adult}: 
$\Nrunf$ \textbf{(top-left)}, 
$\cost$ \textbf{(top-right)},
$\Maxviol$ \textbf{(bottom-left)} 
and $\Meanviol$ \textbf{(bottom-right)}
for the clusterings produced by the various standard algorithms as a function of 
$k$.
}
\end{figure*}

\vspace{12mm}
\begin{figure*}[h]
\centering
\includegraphics[width=0.74\textwidth]{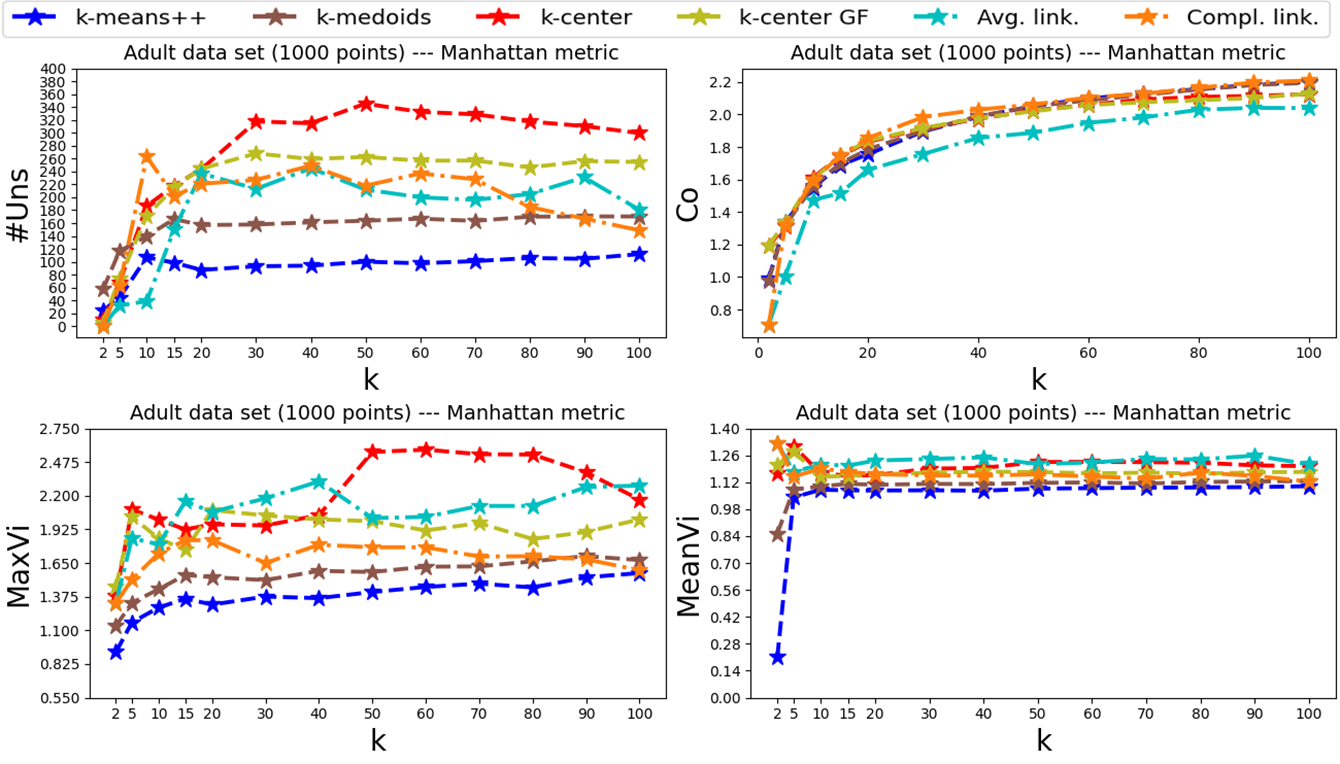}
\caption{Adult data set --- similar plots as in Figure~\ref{exp_gen_standard_alg_Adult}, 
but for the Manhattan metric. 
}
\end{figure*}

\begin{figure*}[h]
\centering
\includegraphics[width=0.9\textwidth]{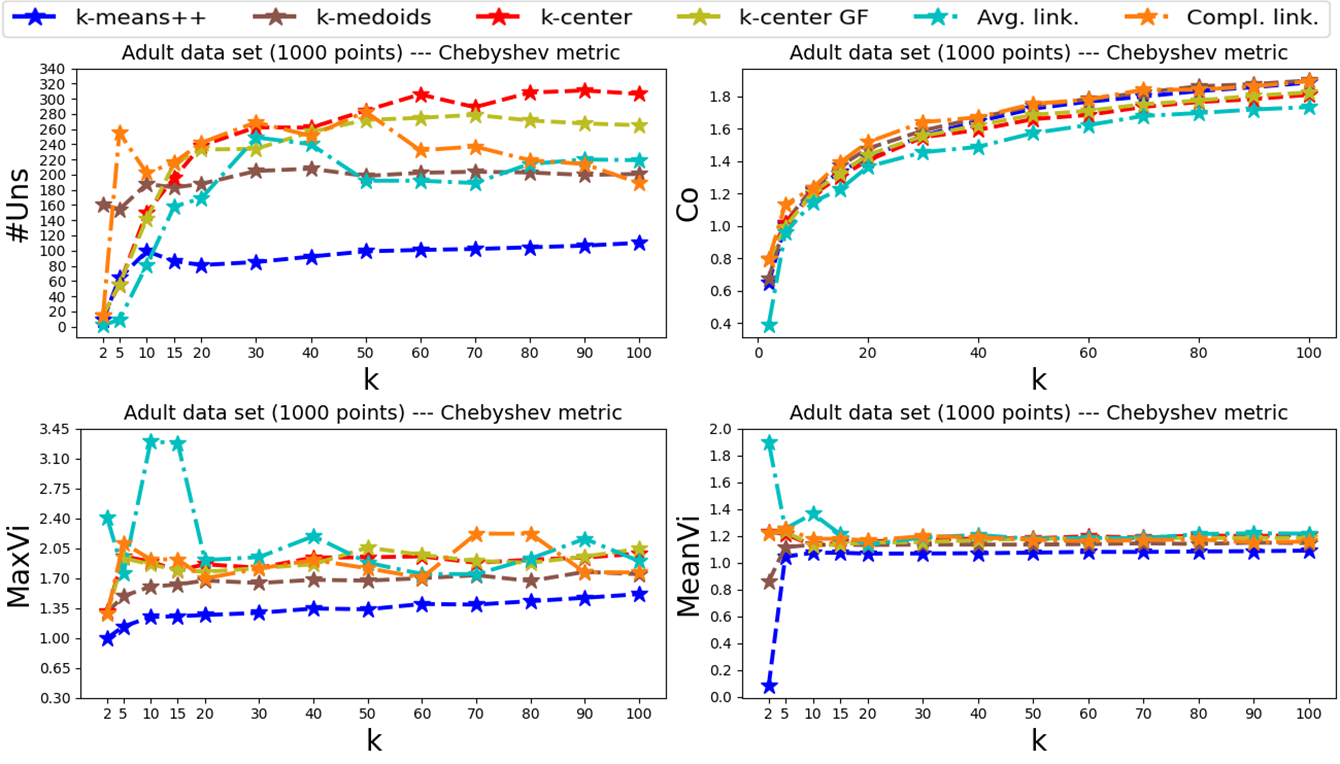}
\caption{Adult data set --- similar plots as in Figure~\ref{exp_gen_standard_alg_Adult}, but for the Chebyshev metric. 
}
\end{figure*}


\begin{figure}[h]
\centering
\includegraphics[width=0.9\textwidth]{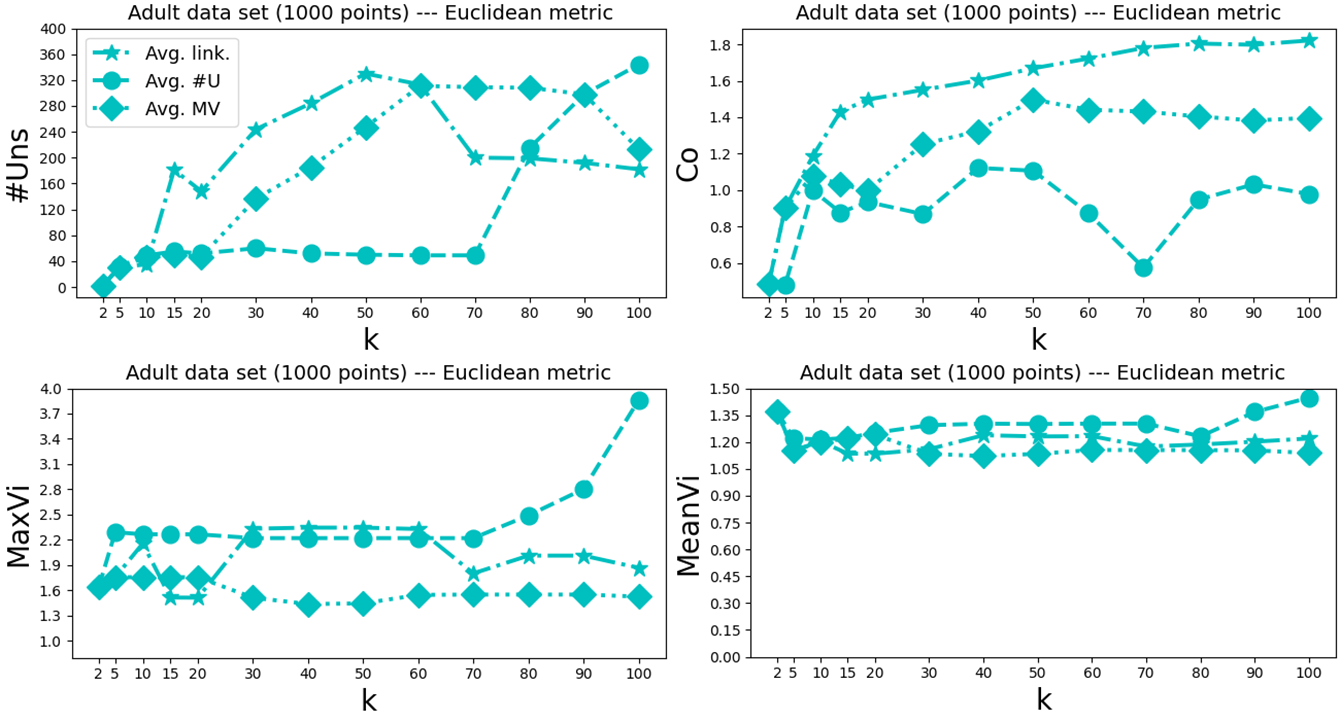}

\caption{Adult data set with Euclidean metric: 
$\Nrunf$  \textbf{(top-left)}, 
$\cost$ \textbf{(top-right)},
$\Maxviol$ \textbf{(bottom-left)}, 
and $\Meanviol$ \textbf{(bottom-right)}
for the clusterings produced by 
average linkage clustering 
and the two variants of our heuristic of Appendix~\ref{appendix_pruning_strategy} 
to improve it: the first ($\#$U in the legend) greedily chooses splits as to minimize $\Nrunf$, the second 
(MV)
as to minimize $\Maxviol$. 
}
\end{figure}

\begin{figure*}[h]
\centering
\includegraphics[width=0.9\textwidth]{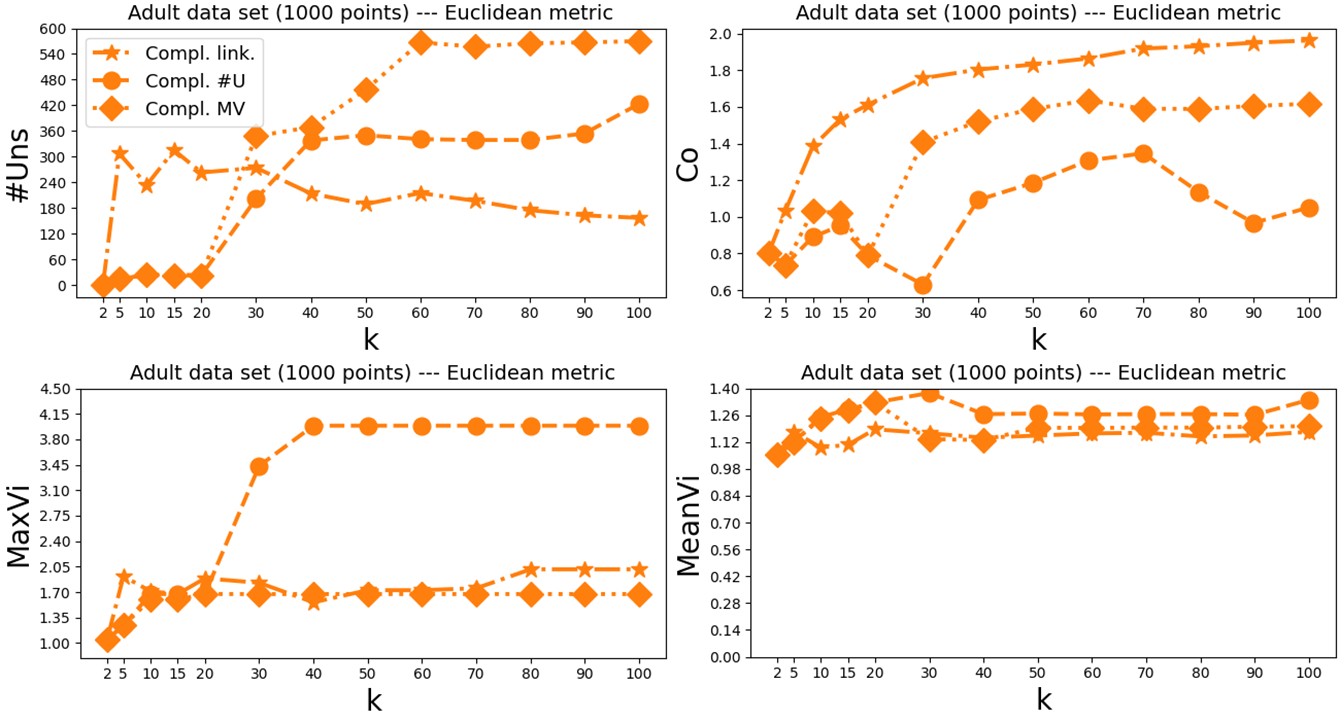}

\caption{Adult data set with Euclidean metric: 
the various measures 
for the clusterings produced by 
complete  linkage clustering 
and the two variants of our heuristic 
approach. 
}
\end{figure*}

\begin{figure*}[h]
\centering
\includegraphics[width=0.9\textwidth]{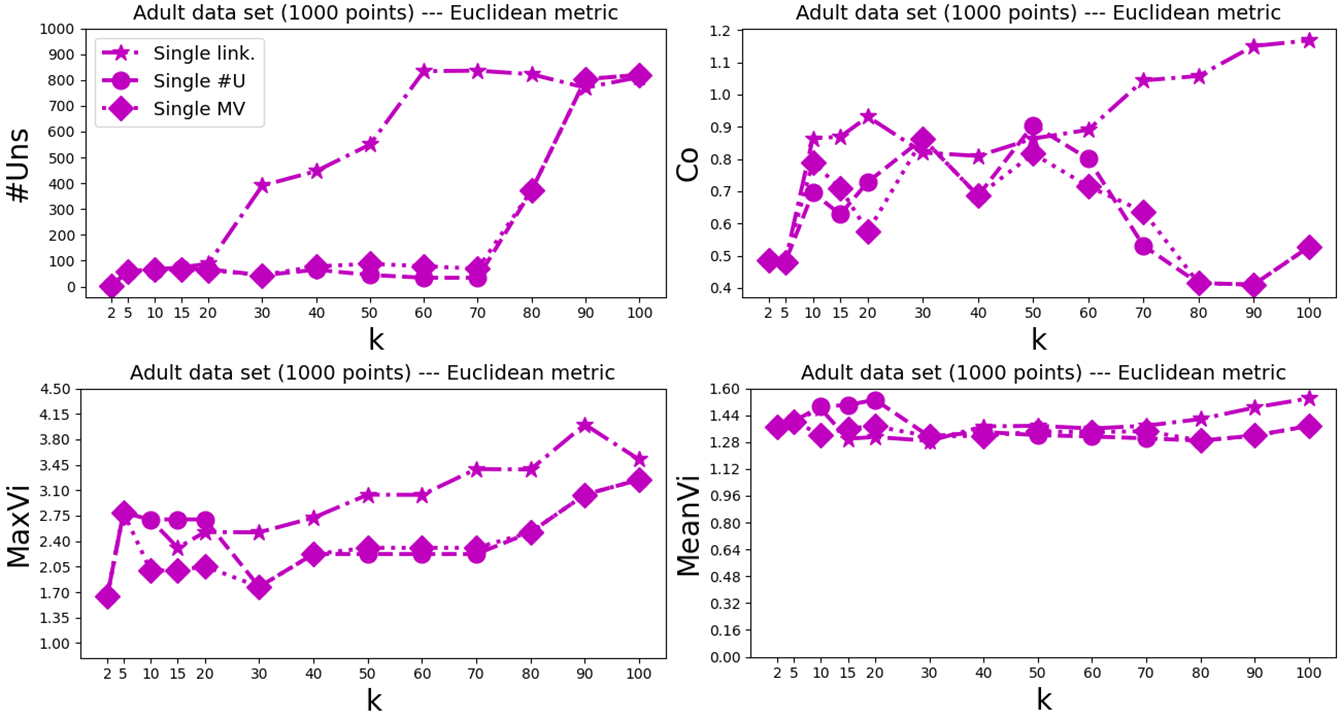}

\caption{Adult data set with Euclidean metric: 
the various measures 
for the clusterings produced by 
single  linkage clustering 
and the two variants of our heuristic 
approach.  
}
\end{figure*}

\begin{figure*}[h]
\centering
\includegraphics[width=0.9\textwidth]{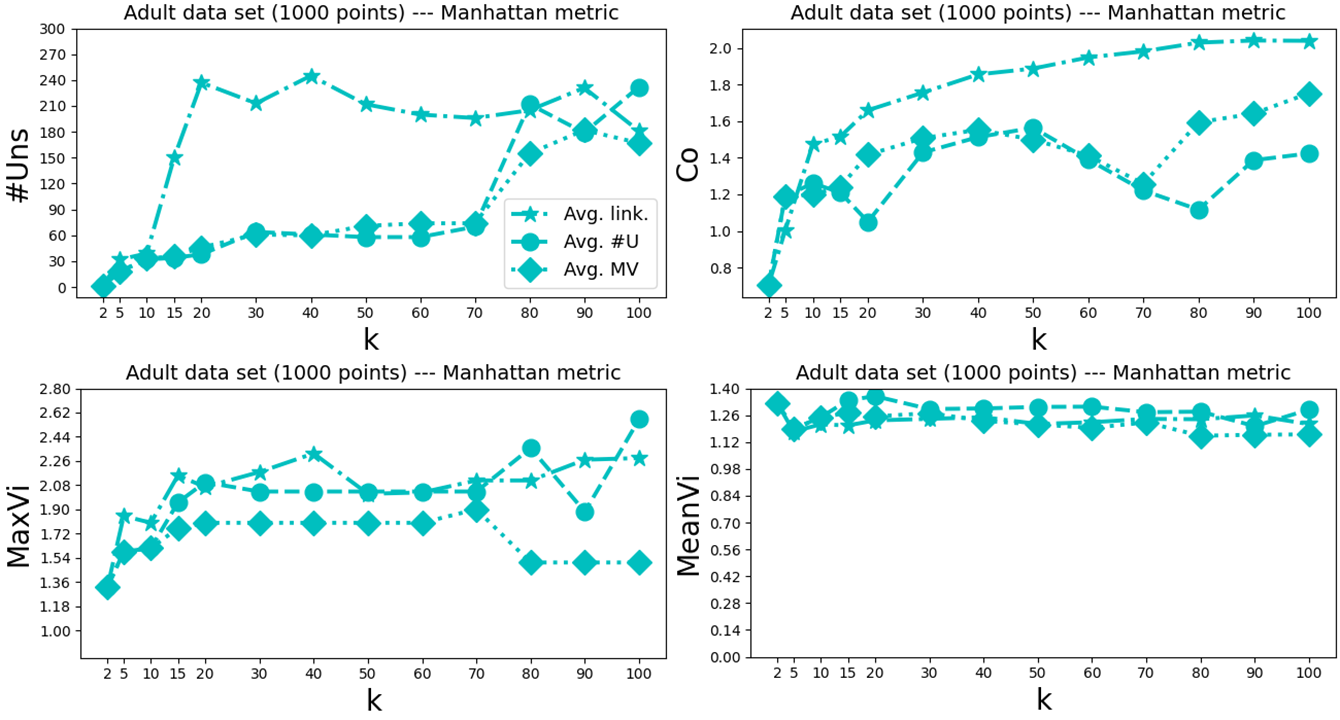}

\caption{Adult data set with Manhattan metric: 
the various measures 
for the clusterings produced by 
average  linkage clustering 
and the two variants of our heuristic 
approach. 
}
\end{figure*}

\begin{figure*}[h]
\centering
\includegraphics[width=0.9\textwidth]{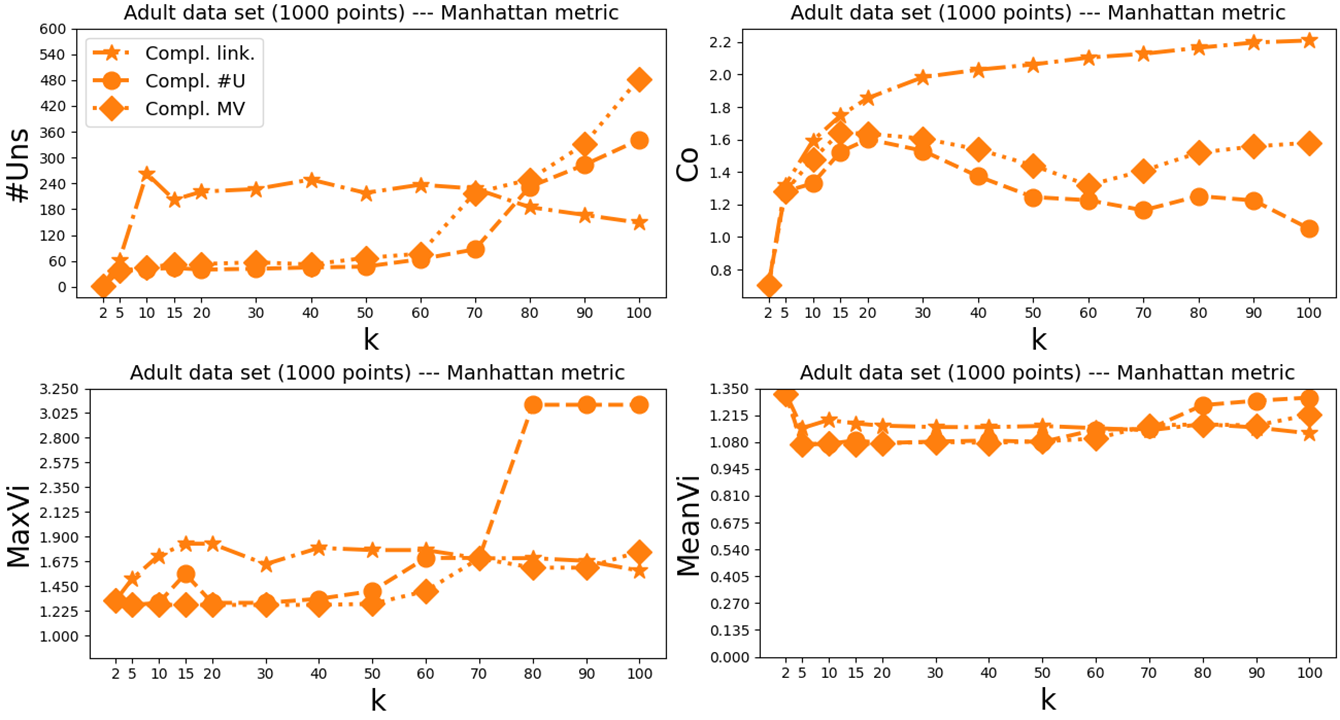}

\caption{Adult data set with Manhattan metric: 
the various measures 
for the clusterings produced by 
complete  linkage clustering 
and the two variants of our heuristic 
approach. 
}
\end{figure*}

\begin{figure*}[h]
\centering
\includegraphics[width=0.9\textwidth]{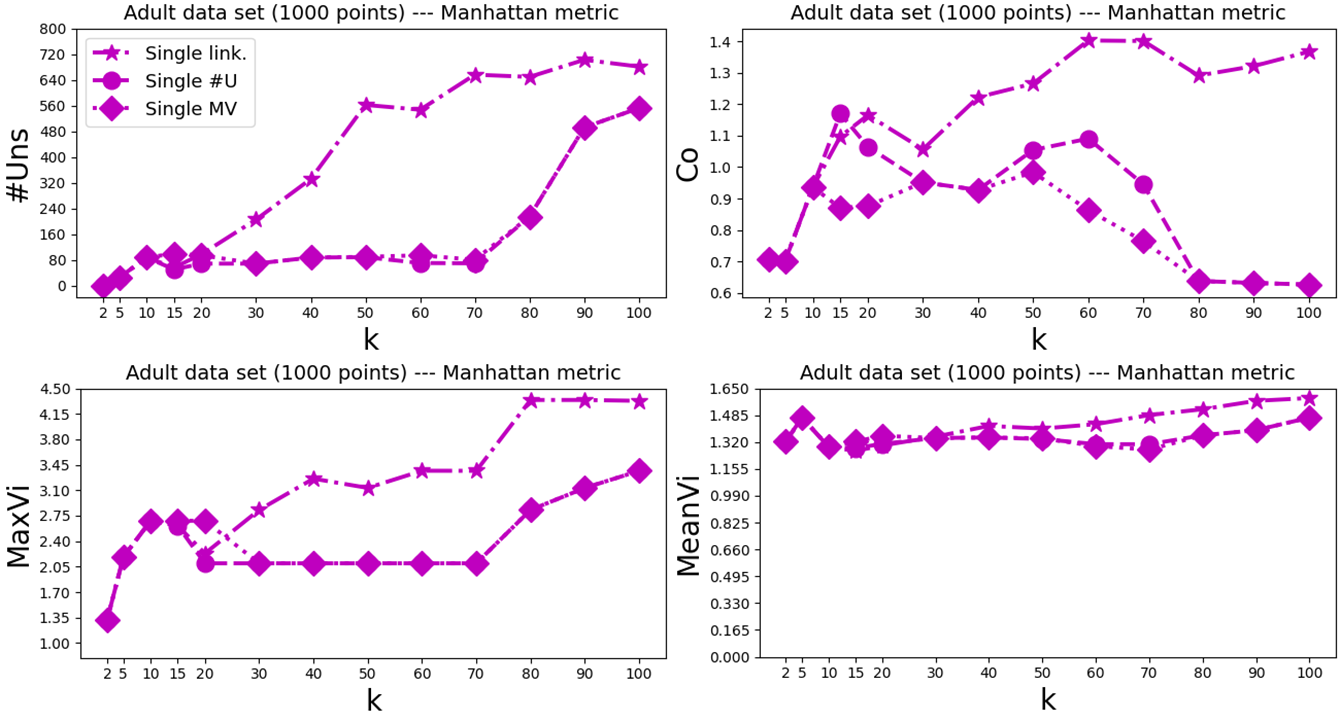}

\caption{Adult data set with Manhattan metric: 
the various measures 
for the clusterings produced by 
single  linkage clustering 
and the two variants of our heuristic 
approach. 
}
\end{figure*}

\begin{figure*}[h]
\centering
\includegraphics[width=0.9\textwidth]{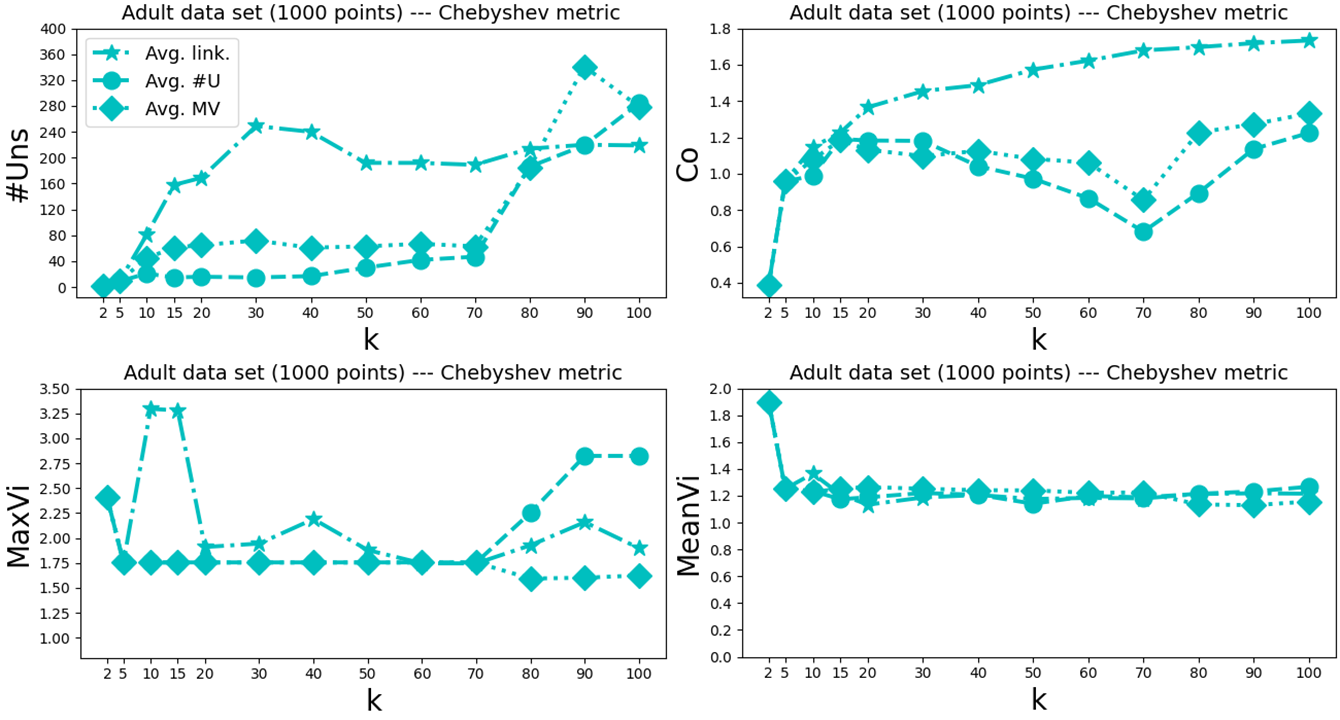}

\caption{Adult data set with Chebyshev metric: 
the various measures 
for the clusterings produced by 
average  linkage clustering 
and the two variants of our heuristic 
approach. 
}
\end{figure*}

\begin{figure*}[h]
\centering
\includegraphics[width=0.9\textwidth]{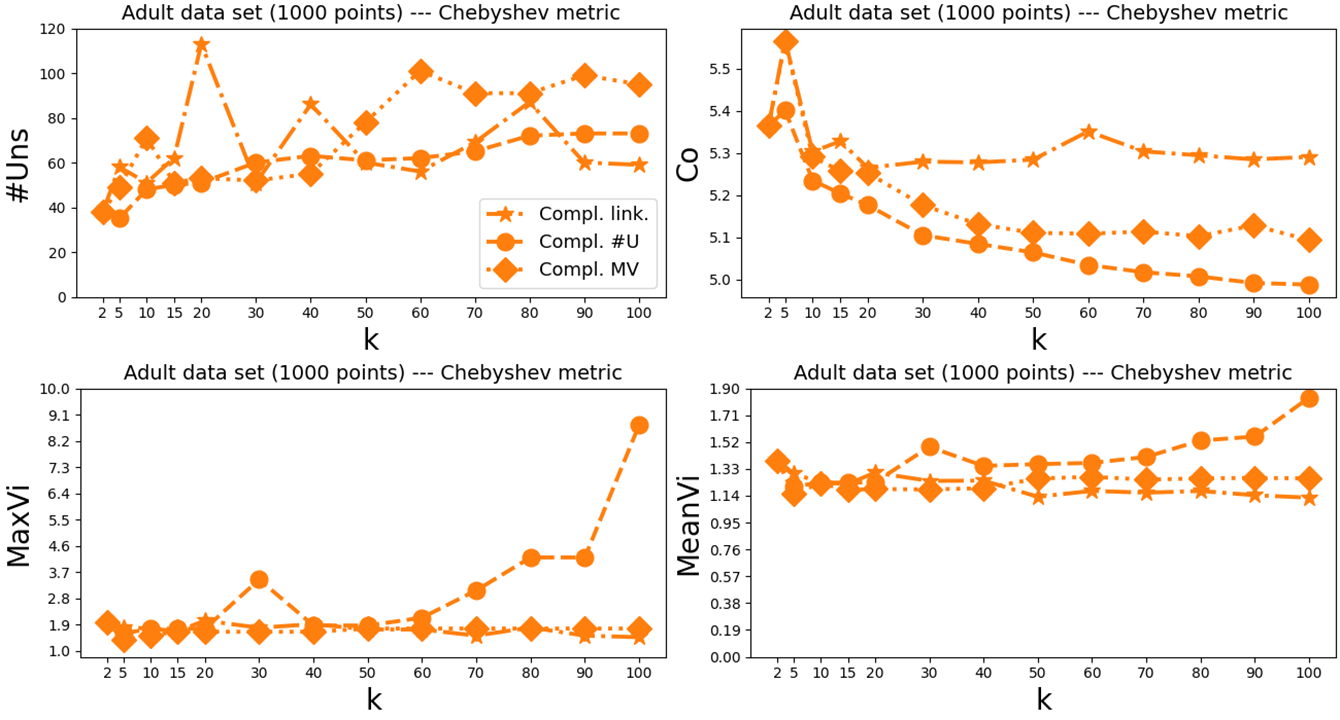}

\caption{Adult data set with Chebyshev metric: 
the various measures 
for the clusterings produced by 
complete  linkage clustering 
and the two variants of our heuristic 
approach. 
}
\end{figure*}

\begin{figure*}[h]
\centering
\includegraphics[width=0.9\textwidth]{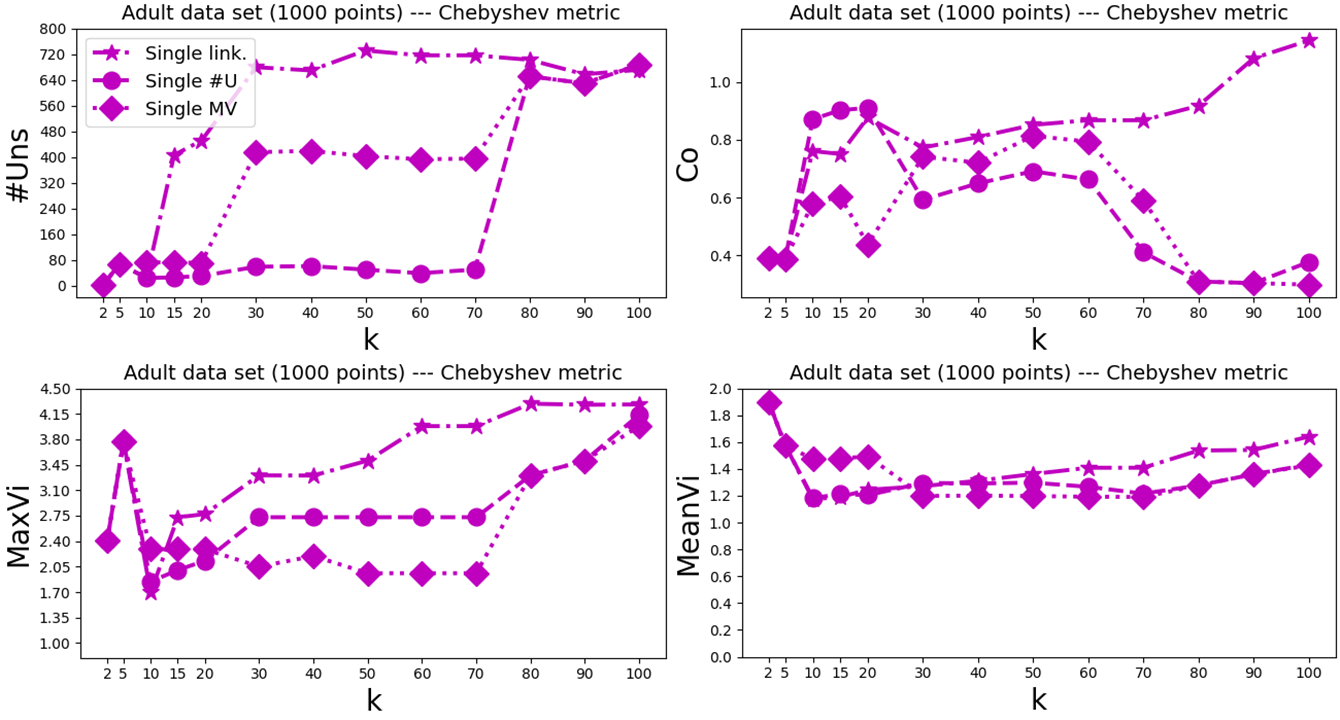}

\caption{Adult data set with Chebyshev metric: 
the various measures 
for the clusterings produced by 
single  linkage clustering 
and the two variants of our heuristic 
approach. 
}
\end{figure*}

\clearpage
\subsection{Experiments on the Drug Consumption Data Set}\label{appendix_exp_general_drug}

\vspace{5mm}
\begin{figure*}[h]
\centering
\includegraphics[width=0.74\textwidth]{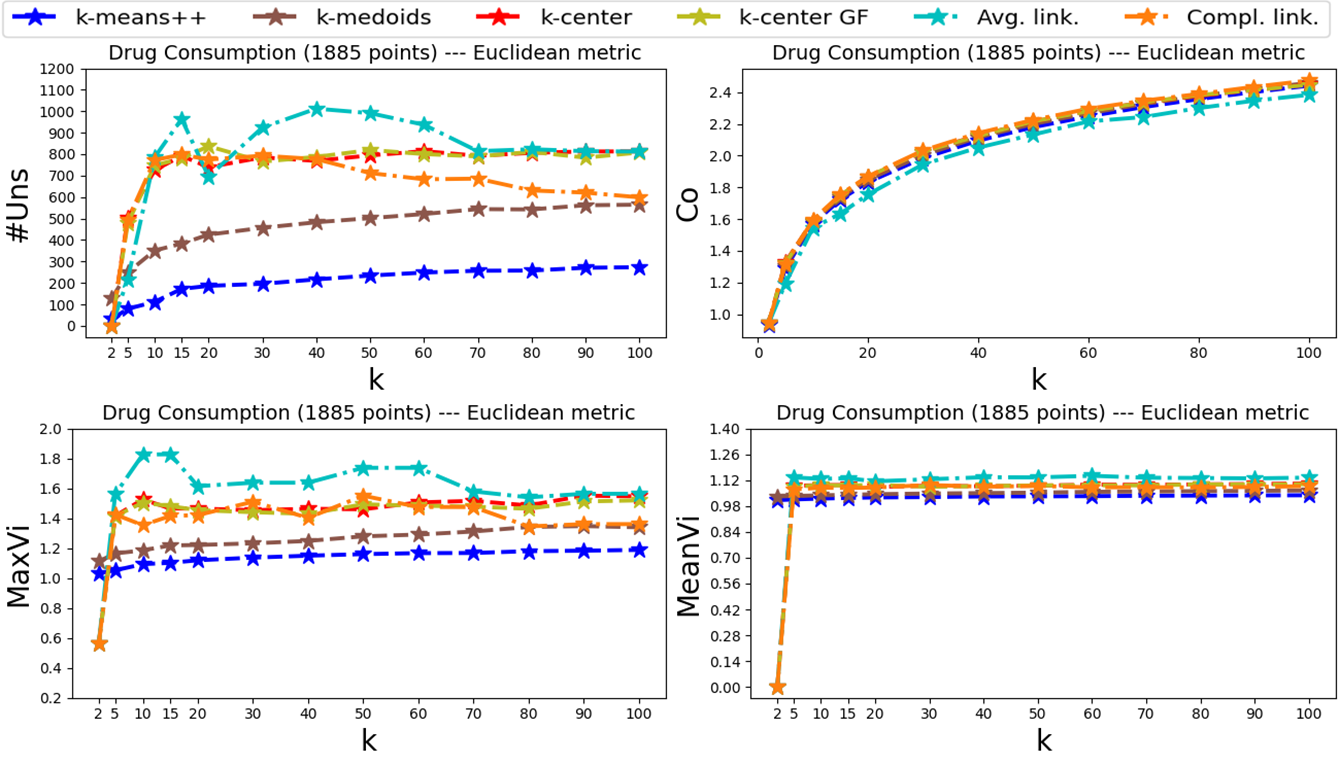}
\caption{Drug Consumption data set with Euclidean metric: 
$\Nrunf$ \textbf{(top-left)}, 
$\cost$ \textbf{(top-right)},
$\Maxviol$ \textbf{(bottom-left)} 
and $\Meanviol$ \textbf{(bottom-right)}
for the clusterings produced by the various standard algorithms as a function of 
 the number of clusters~$k$.
$k$.
}\label{exp_gen_standard_alg_Drug_Consumption_euclidean}
\end{figure*}

\vspace{12mm}
\begin{figure*}[h]
\centering
\includegraphics[width=0.74\textwidth]{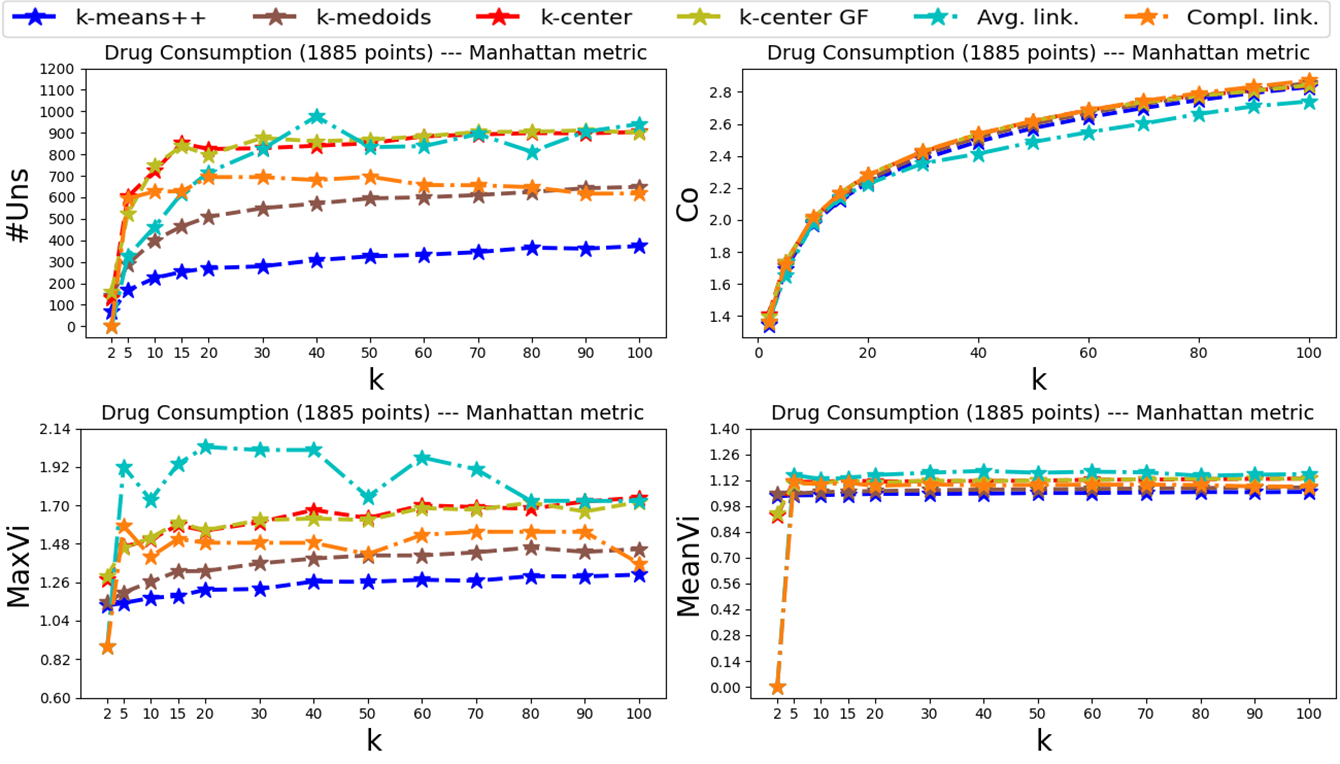}
\caption{Drug Consumption data set --- similar plots as in Figure~\ref{exp_gen_standard_alg_Drug_Consumption_euclidean}, 
but for the Manhattan metric. 
}
\end{figure*}

\begin{figure*}[h]
\centering
\includegraphics[width=0.9\textwidth]{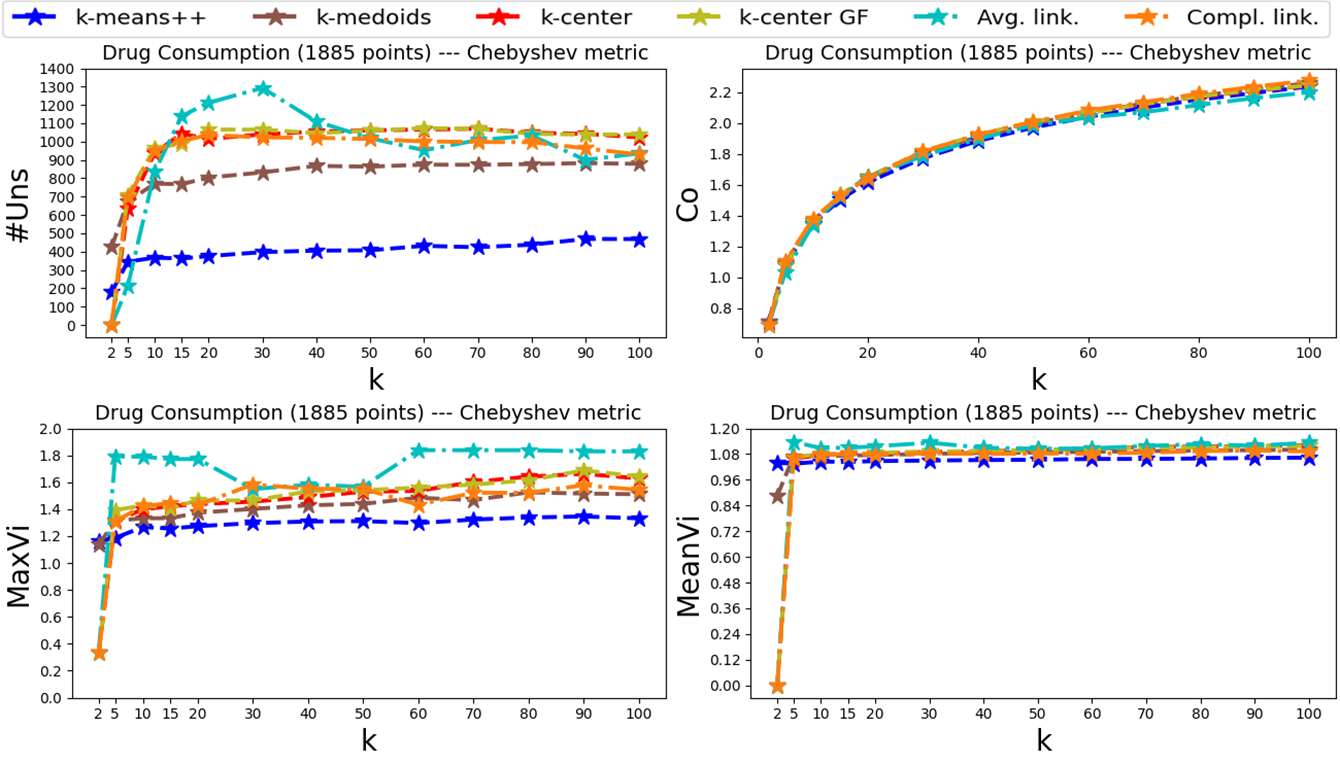}
\caption{Drug Consumption data set --- similar plots as in Figure~\ref{exp_gen_standard_alg_Drug_Consumption_euclidean}, but for the Chebyshev metric. 
}
\end{figure*}


\begin{figure}[h]
\centering
\includegraphics[width=0.9\textwidth]{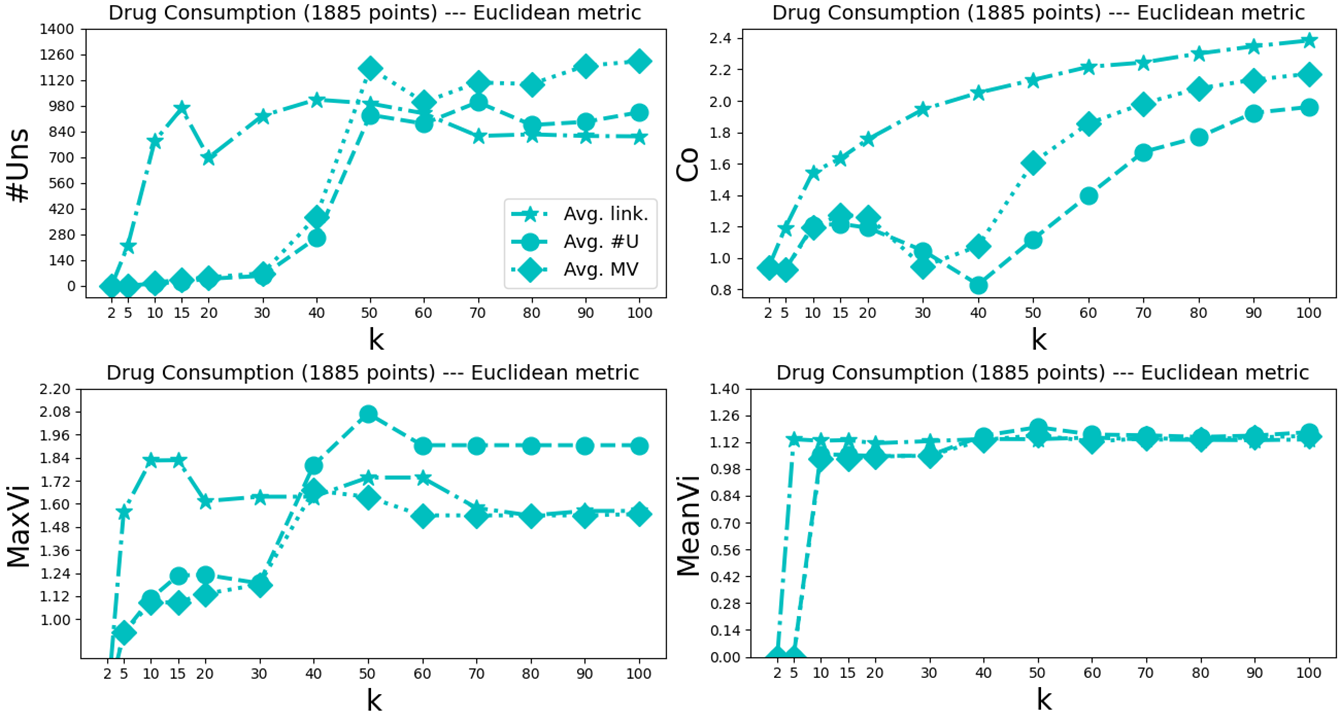}

\caption{Drug Consumption data set with Euclidean metric: 
$\Nrunf$  \textbf{(top-left)}, 
$\cost$ \textbf{(top-right)},
$\Maxviol$ \textbf{(bottom-left)}, 
and $\Meanviol$ \textbf{(bottom-right)}
for the clusterings produced by 
average linkage clustering 
and the two variants of our heuristic of Appendix~\ref{appendix_pruning_strategy} 
to improve it: the first ($\#$U in the legend) greedily chooses splits as to minimize $\Nrunf$, the second 
(MV)
as to minimize $\Maxviol$. 
}
\end{figure}

\begin{figure*}[h]
\centering
\includegraphics[width=0.9\textwidth]{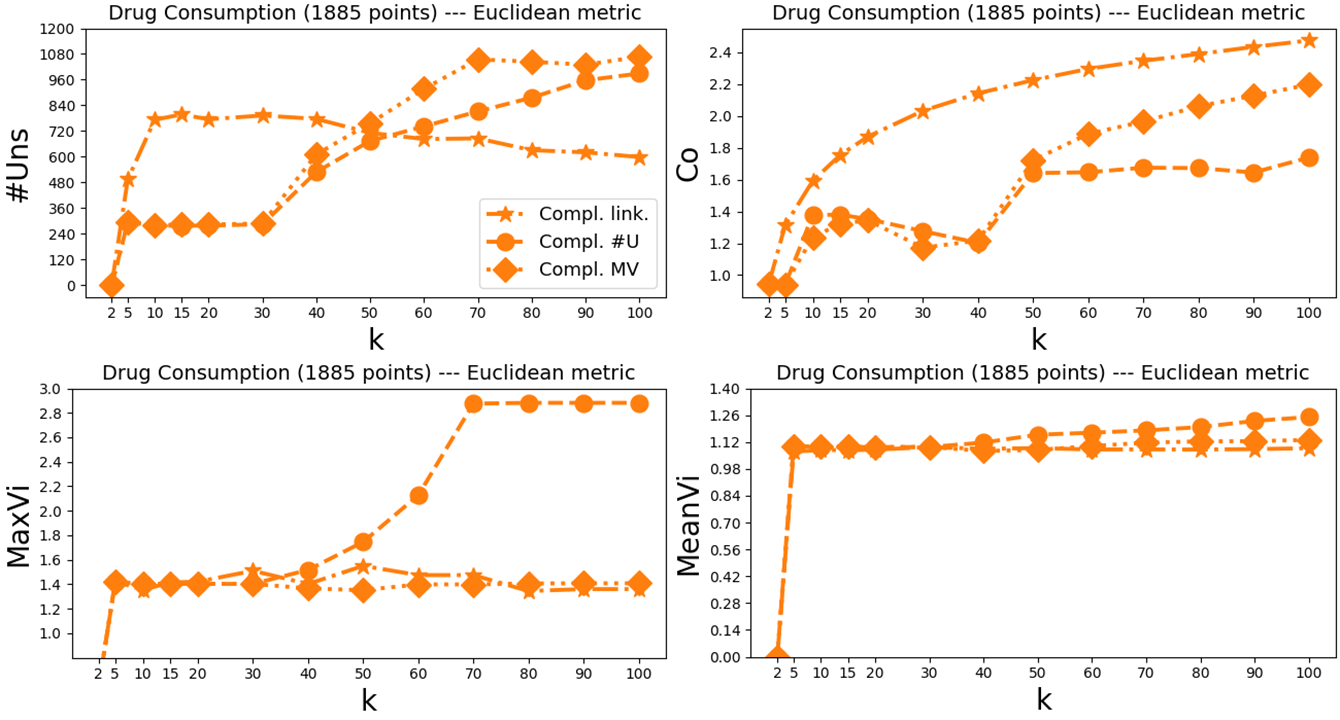}

\caption{Drug Consumption data set with Euclidean metric: 
the various measures 
for the clusterings produced by 
complete  linkage clustering 
and the two variants of our heuristic 
approach. 
}
\end{figure*}

\begin{figure*}[h]
\centering
\includegraphics[width=0.9\textwidth]{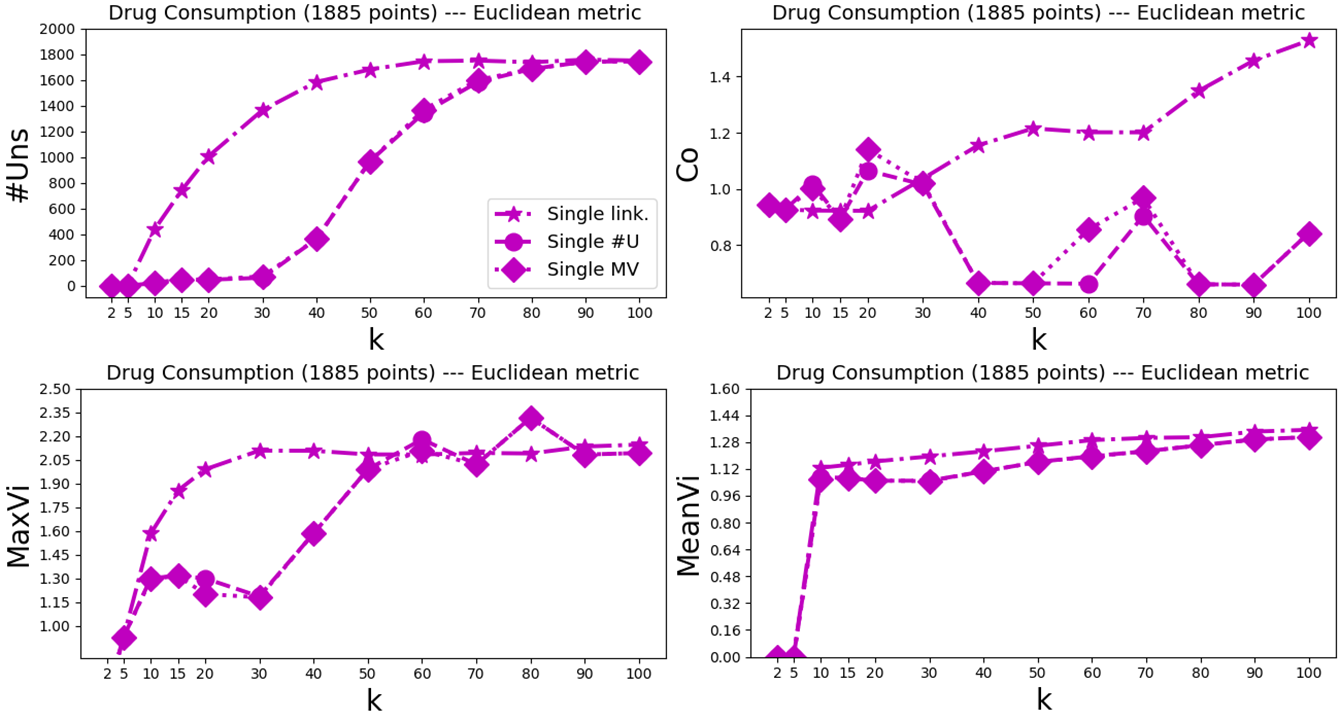}

\caption{Drug Consumption data set with Euclidean metric: 
the various measures 
for the clusterings produced by 
single  linkage clustering 
and the two variants of our heuristic 
approach.  
}
\end{figure*}

\begin{figure*}[h]
\centering
\includegraphics[width=0.9\textwidth]{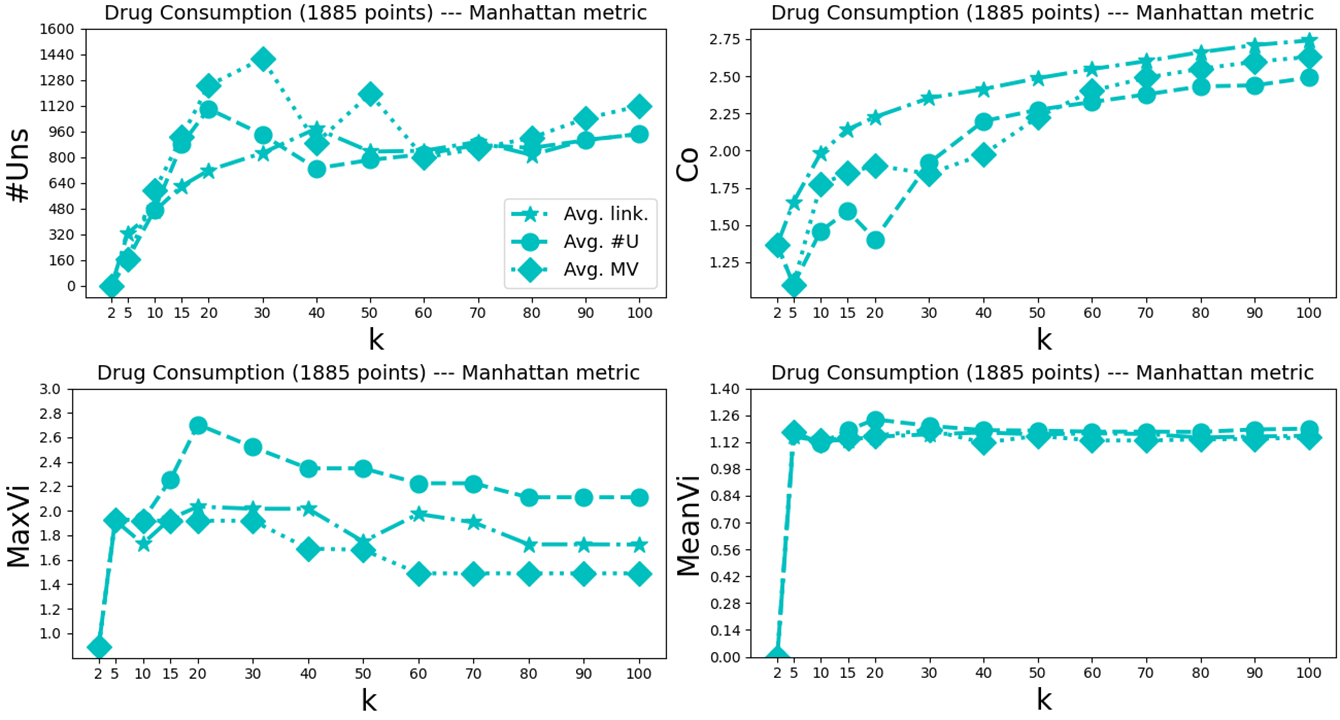}

\caption{Drug Consumption data set with Manhattan metric: 
the various measures 
for the clusterings produced by 
average  linkage clustering 
and the two variants of our heuristic 
approach. 
}
\end{figure*}

\begin{figure*}[h]
\centering
\includegraphics[width=0.9\textwidth]{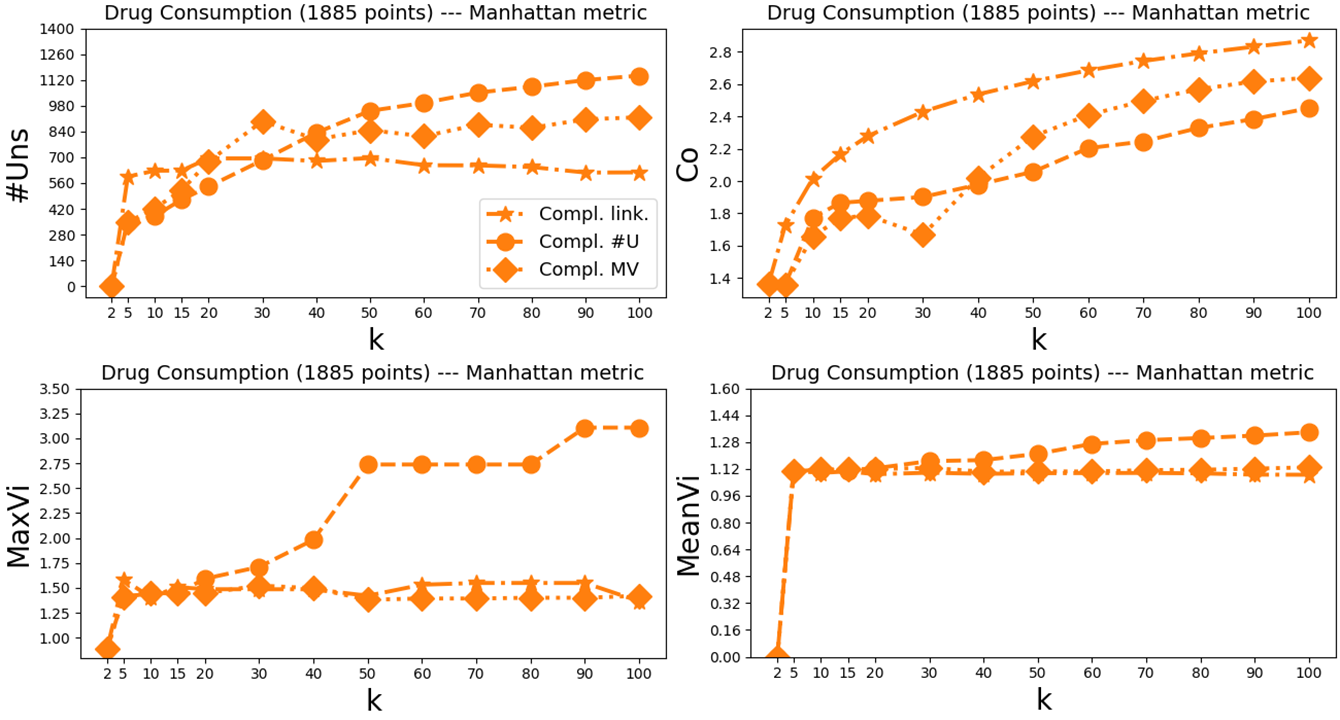}

\caption{Drug Consumption data set with Manhattan metric: 
the various measures 
for the clusterings produced by 
complete  linkage clustering 
and the two variants of our heuristic 
approach. 
}
\end{figure*}

\begin{figure*}[h]
\centering
\includegraphics[width=0.9\textwidth]{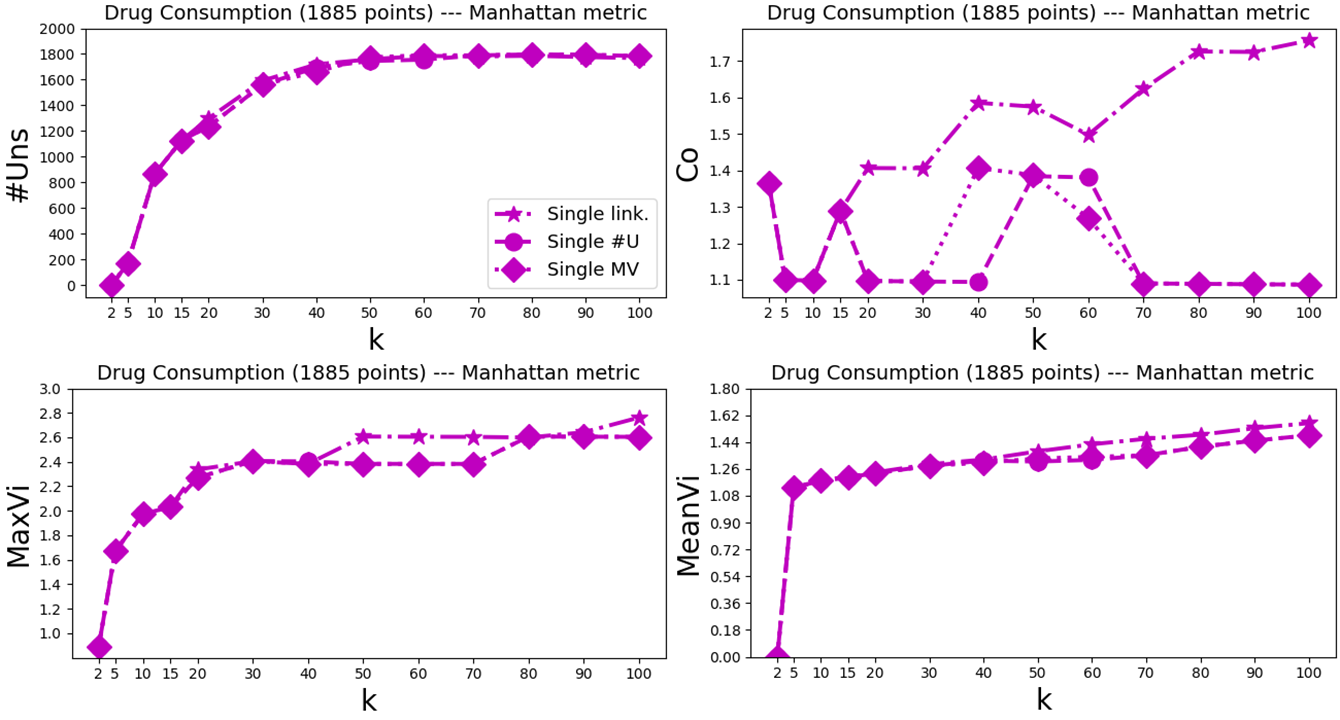}

\caption{Drug Consumption data set with Manhattan metric: 
the various measures 
for the clusterings produced by 
single  linkage clustering 
and the two variants of our heuristic 
approach. 
}
\end{figure*}

\begin{figure*}[h]
\centering
\includegraphics[width=0.9\textwidth]{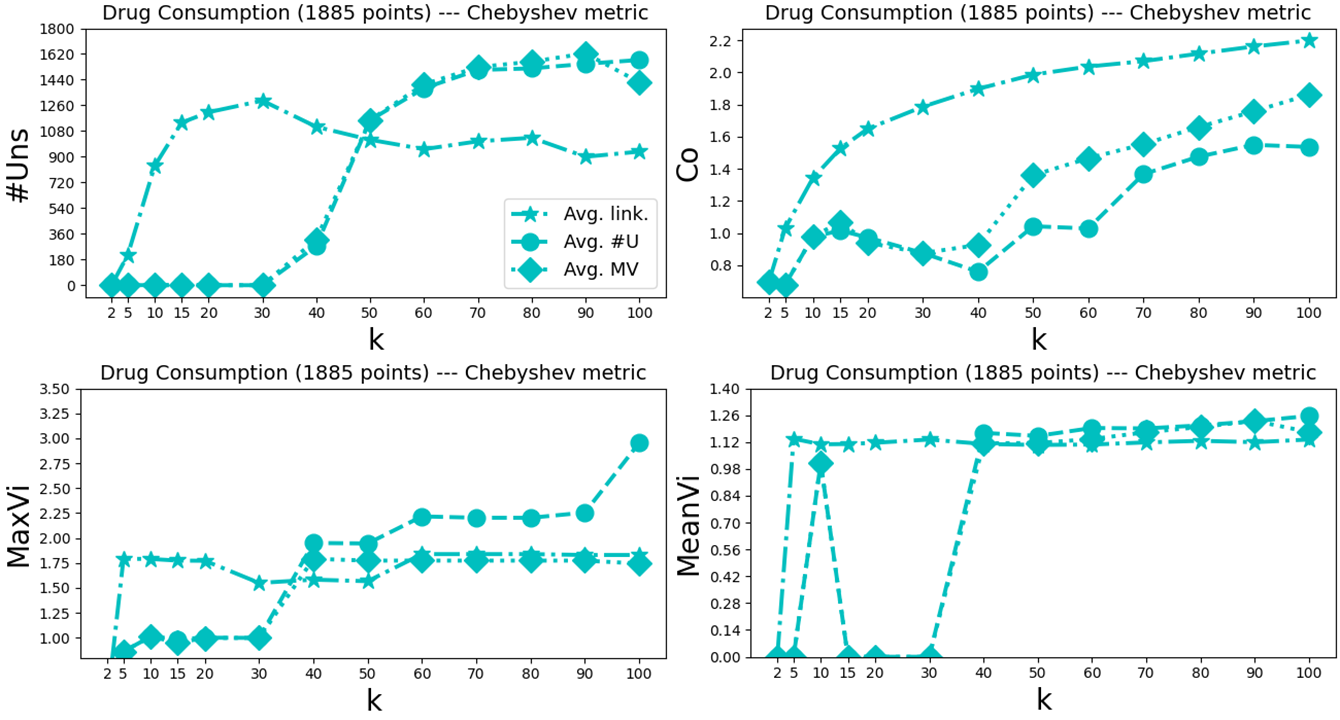}

\caption{Drug Consumption data set with Chebyshev metric: 
the various measures 
for the clusterings produced by 
average  linkage clustering 
and the two variants of our heuristic 
approach. 
}
\end{figure*}

\begin{figure*}[h]
\centering
\includegraphics[width=0.9\textwidth]{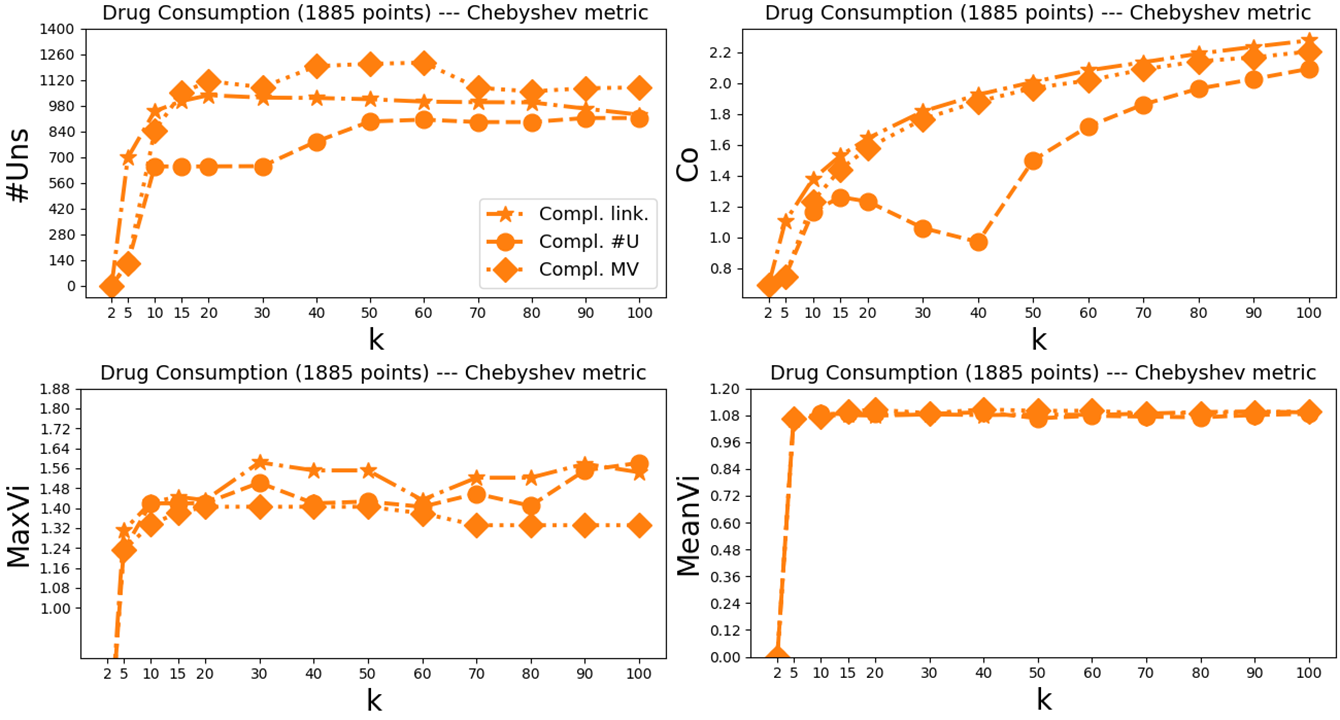}

\caption{Drug Consumption data set with Chebyshev metric: 
the various measures 
for the clusterings produced by 
complete  linkage clustering 
and the two variants of our heuristic 
approach. 
}
\end{figure*}

\begin{figure*}[h]
\centering
\includegraphics[width=0.9\textwidth]{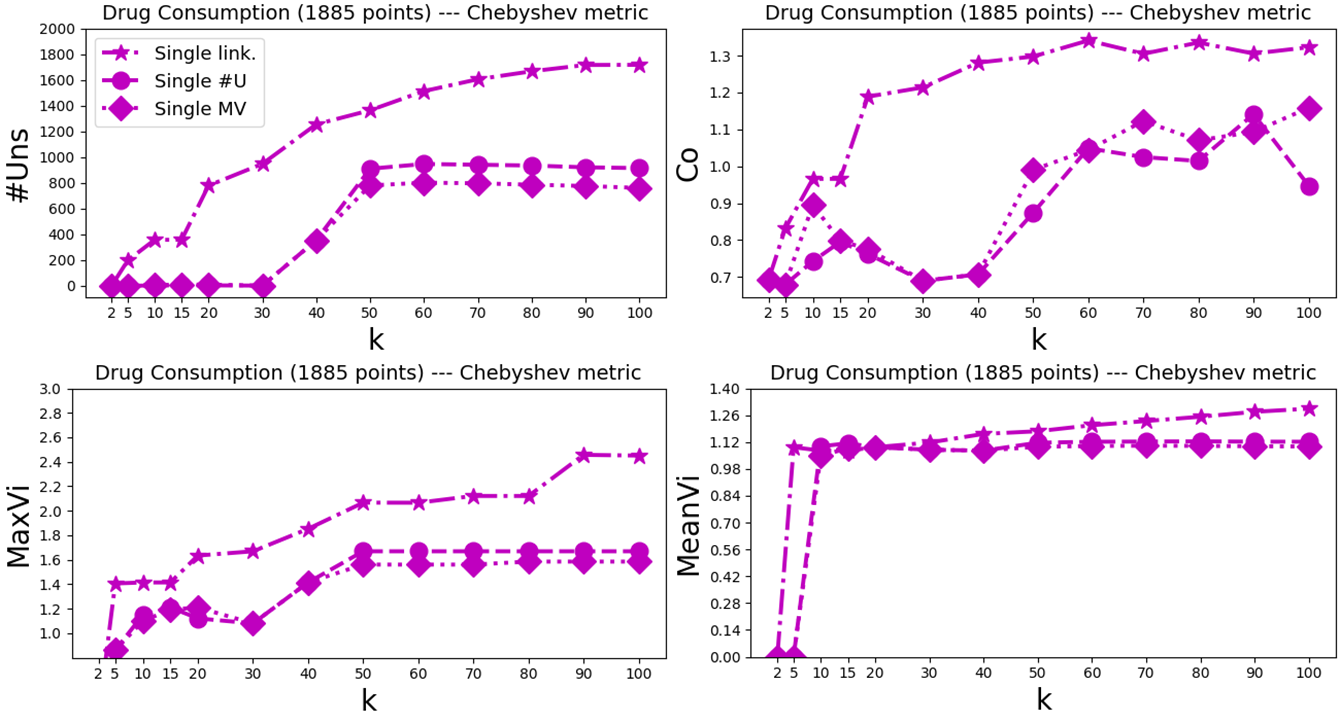}

\caption{Drug Consumption data set with Chebyshev metric: 
the various measures 
for the clusterings produced by 
single  linkage clustering 
and the two variants of our heuristic 
approach. 
}
\end{figure*}

\clearpage
\subsection{Experiments on the Indian Liver Patient Data Set}\label{appendix_exp_general_liver}

\vspace{5mm}
\begin{figure*}[h]
\centering
\includegraphics[width=0.74\textwidth]{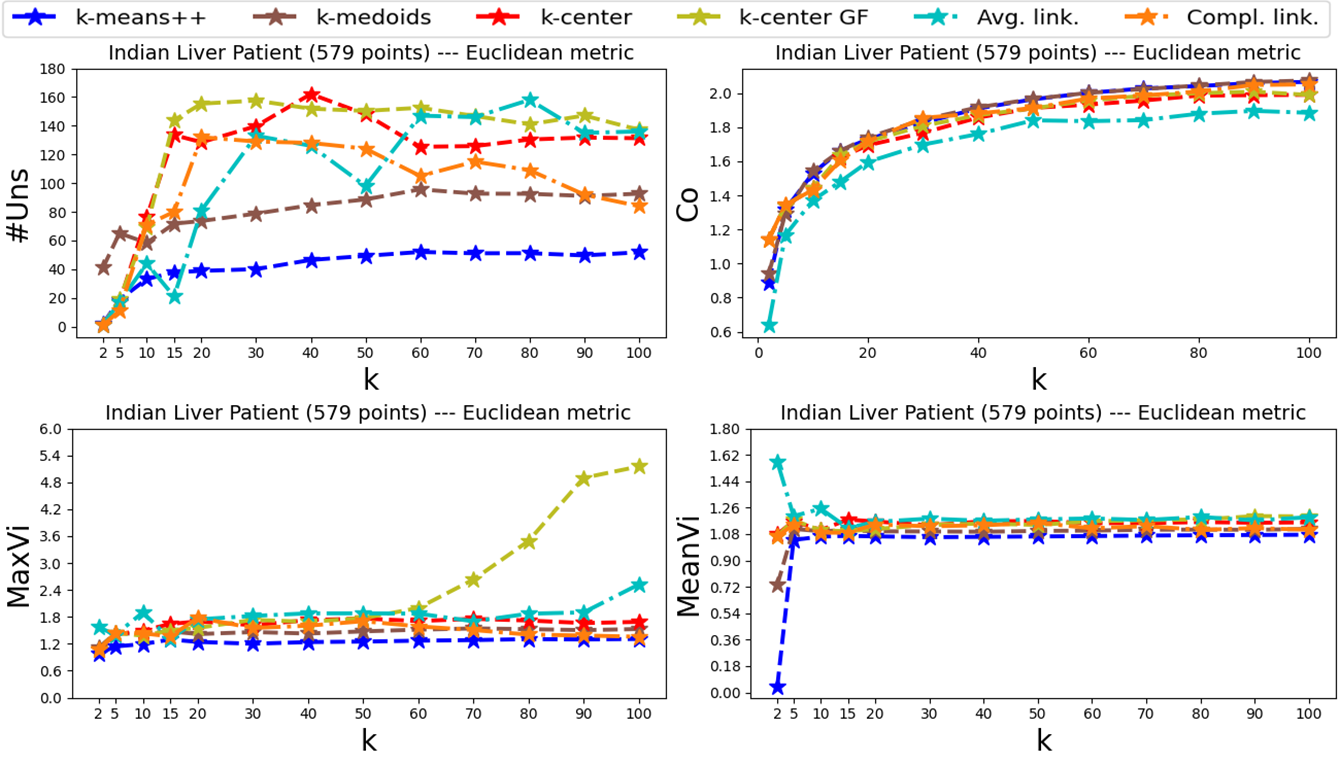}
\caption{Indian Liver Patient data set with Euclidean metric: 
$\Nrunf$ \textbf{(top-left)}, 
$\cost$ \textbf{(top-right)},
$\Maxviol$ \textbf{(bottom-left)} 
and $\Meanviol$ \textbf{(bottom-right)}
for the clusterings produced by the various standard algorithms as a function of 
 the number of clusters~$k$.
$k$.
}\label{exp_gen_standard_alg_IndianLiver_euclidean}
\end{figure*}

\vspace{12mm}
\begin{figure*}[h]
\centering
\includegraphics[width=0.74\textwidth]{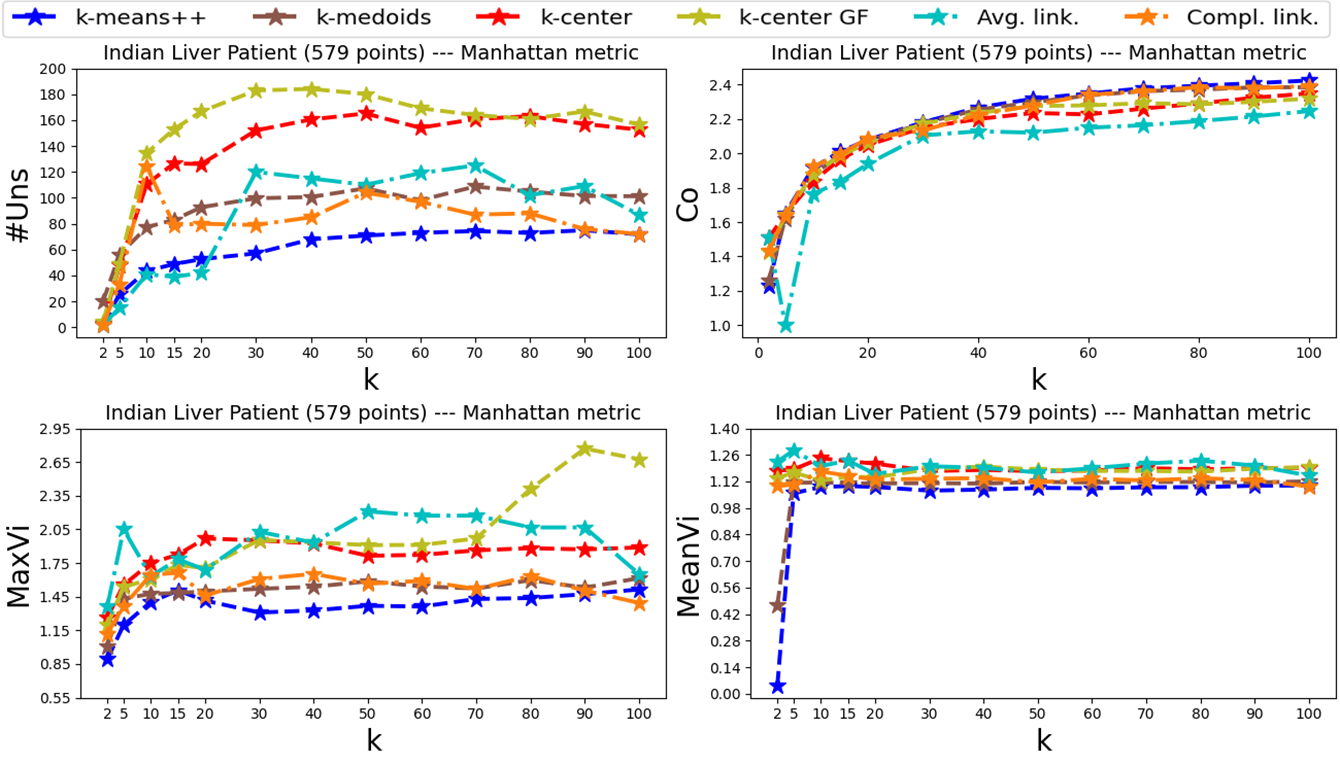}
\caption{Indian Liver Patient data set --- similar plots as in Figure~\ref{exp_gen_standard_alg_IndianLiver_euclidean}, 
but for the Manhattan metric. 
}
\end{figure*}

\begin{figure*}[h]
\centering
\includegraphics[width=0.9\textwidth]{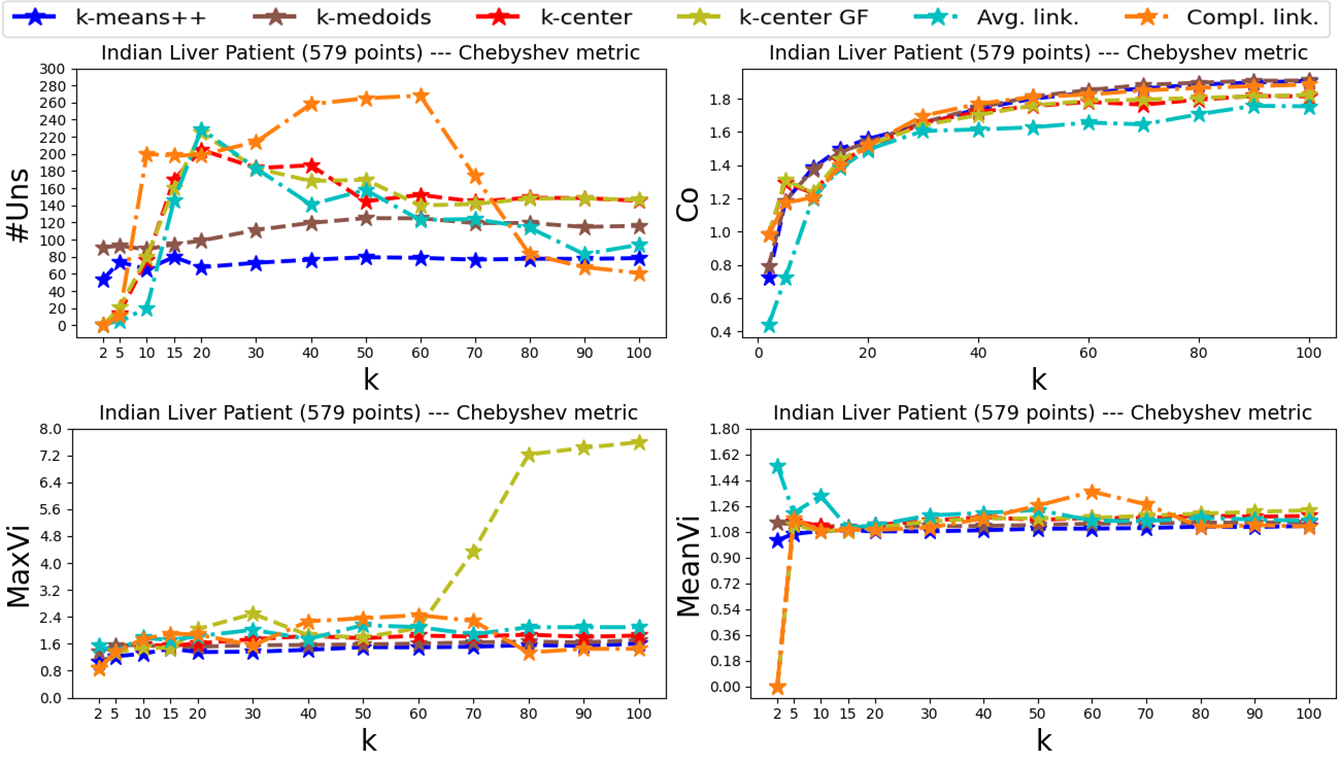}
\caption{Indian Liver Patient data set --- similar plots as in Figure~\ref{exp_gen_standard_alg_IndianLiver_euclidean}, but for the Chebyshev metric. 
}
\end{figure*}


\begin{figure}[h]
\centering
\includegraphics[width=0.9\textwidth]{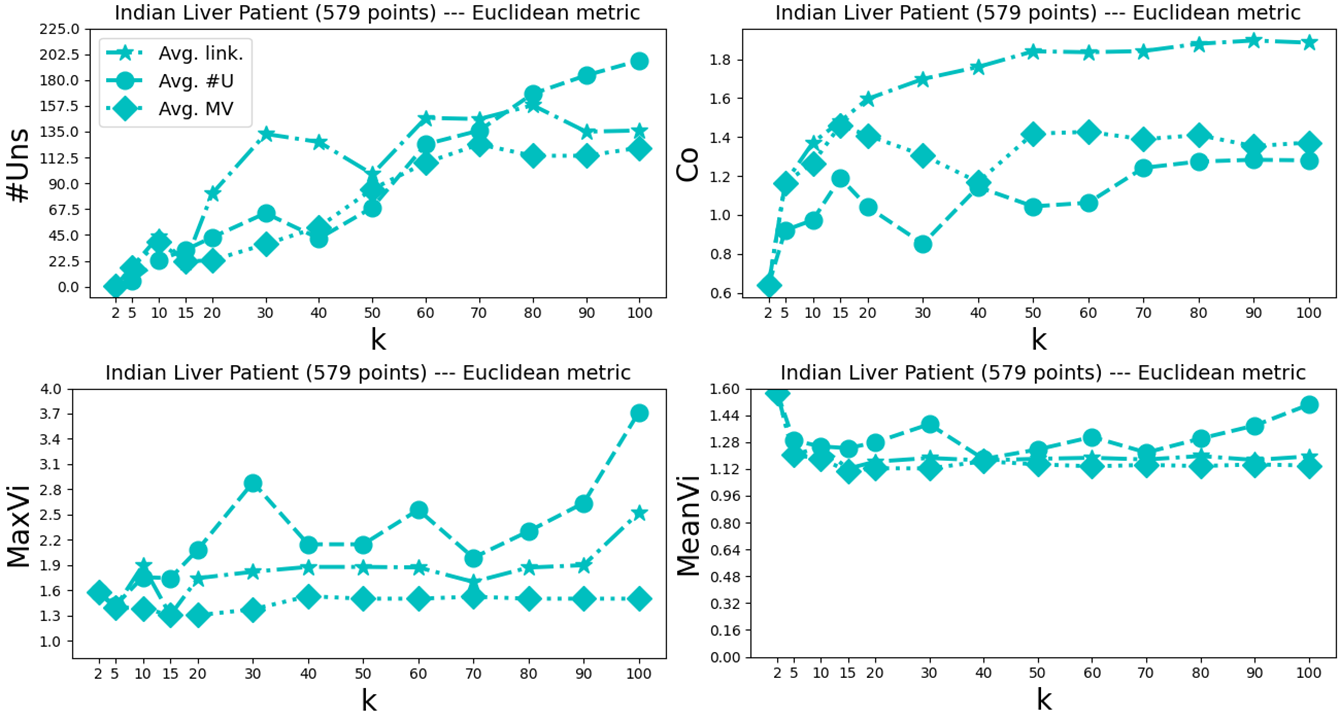}

\caption{Indian Liver Patient data set with Euclidean metric: 
$\Nrunf$  \textbf{(top-left)}, 
$\cost$ \textbf{(top-right)},
$\Maxviol$ \textbf{(bottom-left)}, 
and $\Meanviol$ \textbf{(bottom-right)}
for the clusterings produced by 
average linkage clustering 
and the two variants of our heuristic of Appendix~\ref{appendix_pruning_strategy} 
to improve it: the first ($\#$U in the legend) greedily chooses splits as to minimize $\Nrunf$, the second 
(MV)
as to minimize $\Maxviol$. 
}
\end{figure}

\begin{figure*}[h]
\centering
\includegraphics[width=0.9\textwidth]{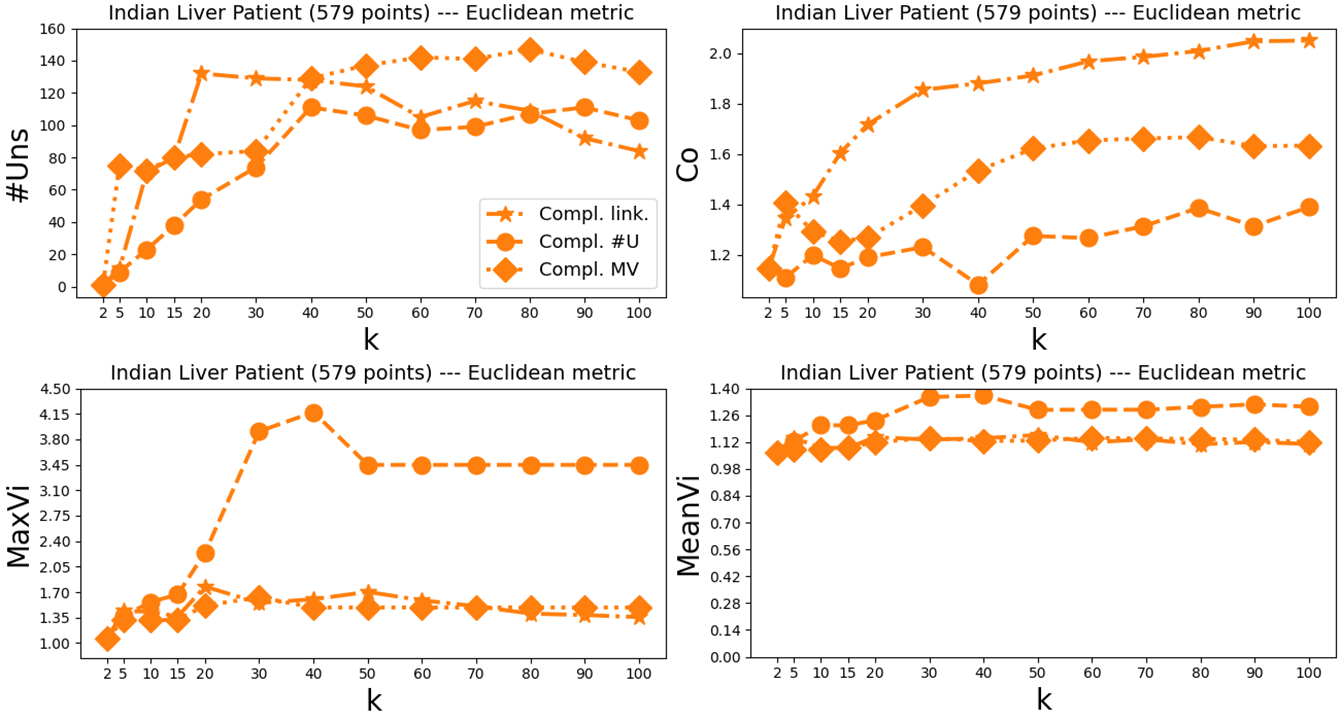}

\caption{Indian Liver Patient data set with Euclidean metric: 
the various measures 
for the clusterings produced by 
complete  linkage clustering 
and the two variants of our heuristic 
approach. 
}
\end{figure*}

\begin{figure*}[h]
\centering
\includegraphics[width=0.9\textwidth]{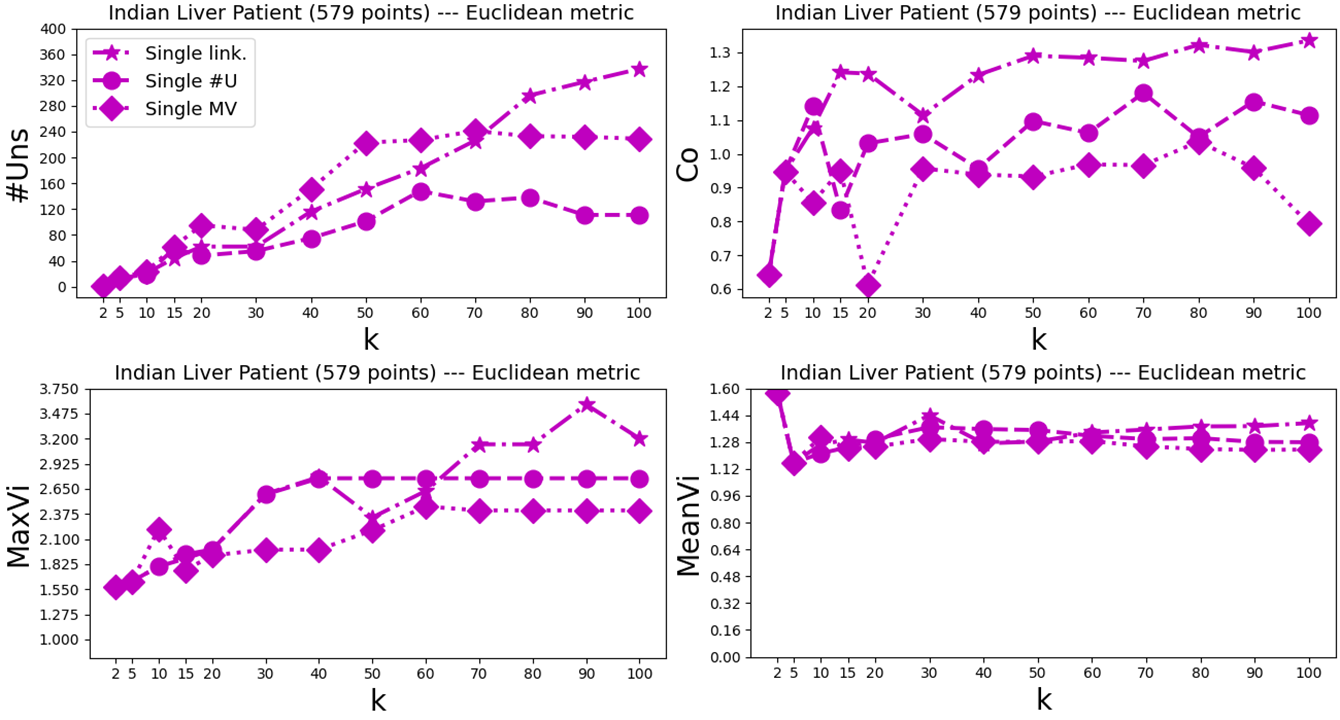}

\caption{Indian Liver Patient data set with Euclidean metric: 
the various measures 
for the clusterings produced by 
single  linkage clustering 
and the two variants of our heuristic 
approach.  
}
\end{figure*}

\begin{figure*}[h]
\centering
\includegraphics[width=0.9\textwidth]{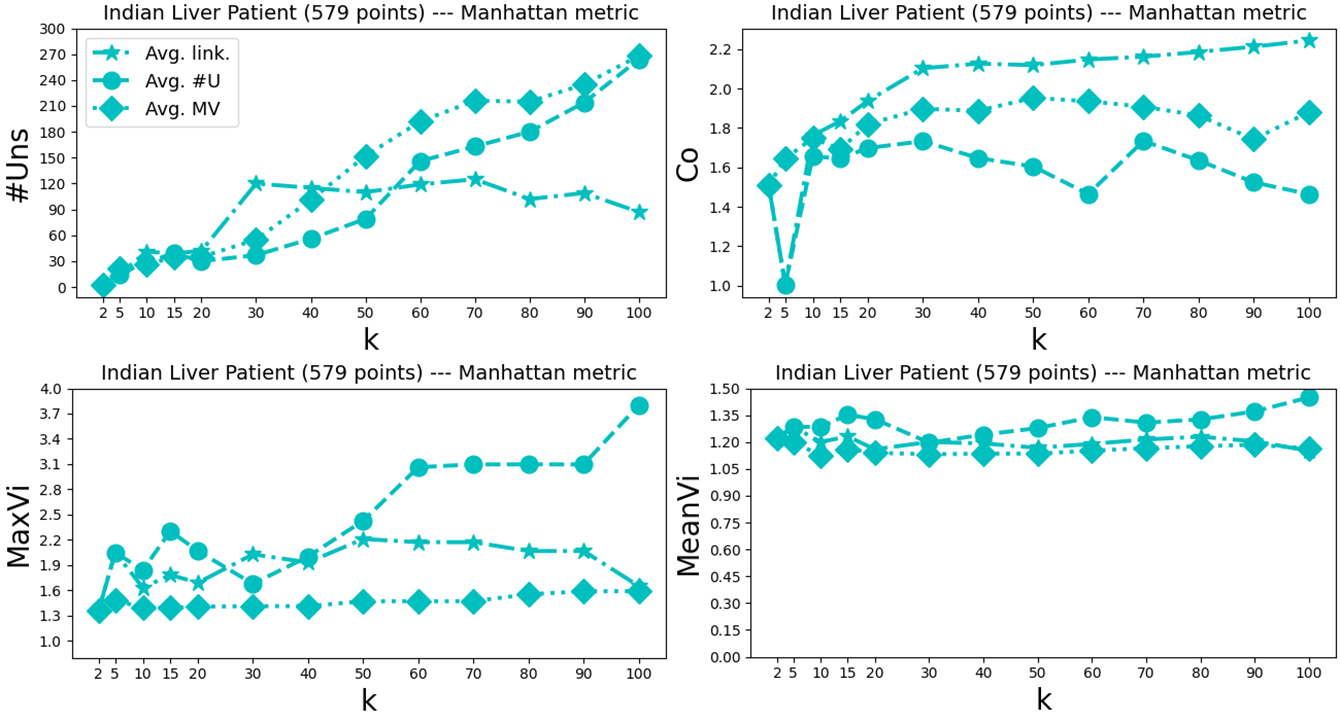}

\caption{Indian Liver Patient data set with Manhattan metric: 
the various measures 
for the clusterings produced by 
average  linkage clustering 
and the two variants of our heuristic 
approach. 
}
\end{figure*}

\begin{figure*}[h]
\centering
\includegraphics[width=0.9\textwidth]{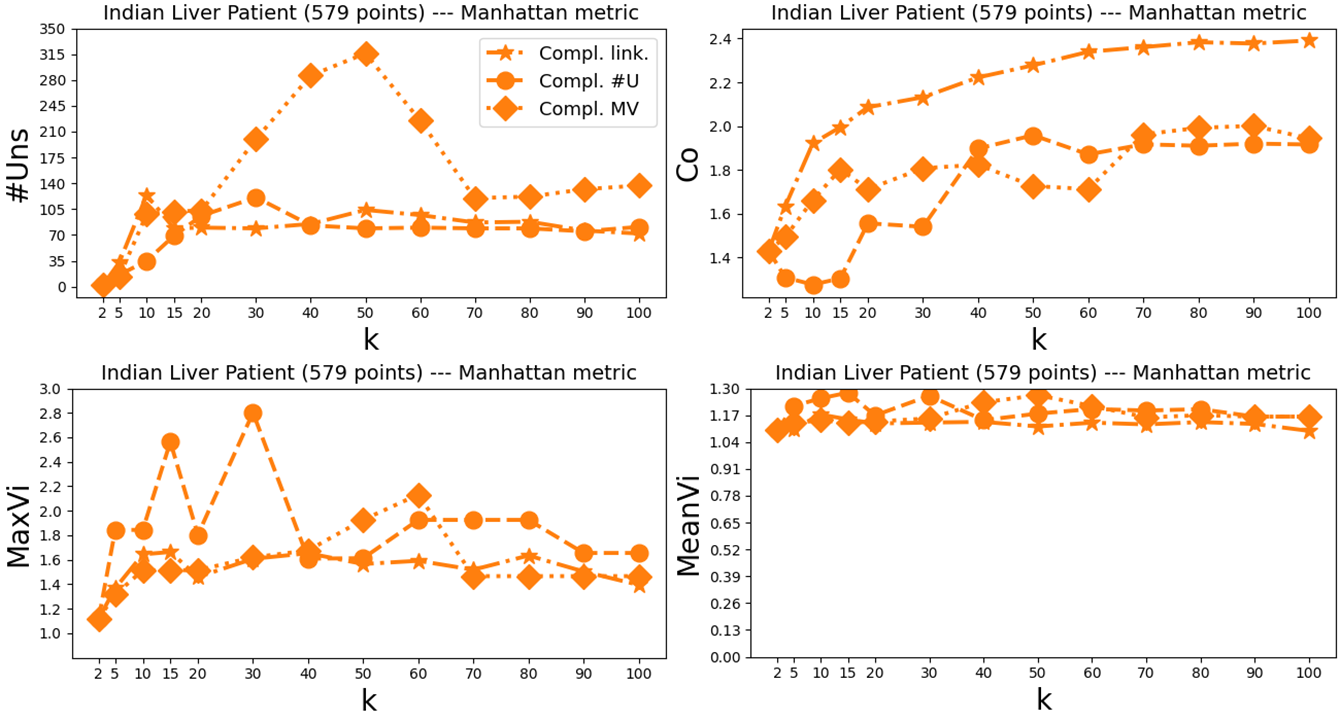}

\caption{Indian Liver Patient data set with Manhattan metric: 
the various measures 
for the clusterings produced by 
complete  linkage clustering 
and the two variants of our heuristic 
approach. 
}
\end{figure*}

\begin{figure*}[h]
\centering
\includegraphics[width=0.9\textwidth]{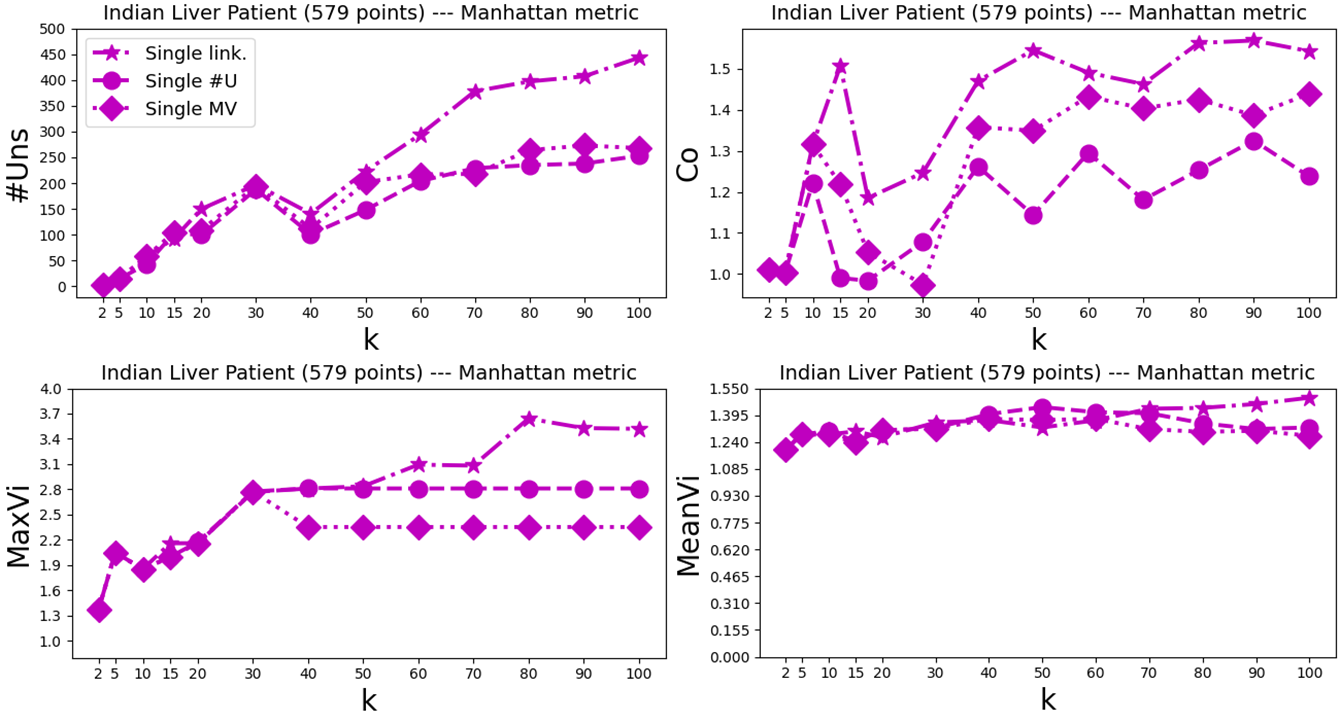}

\caption{Indian Liver Patient data set with Manhattan metric: 
the various measures 
for the clusterings produced by 
single  linkage clustering 
and the two variants of our heuristic 
approach. 
}
\end{figure*}

\begin{figure*}[h]
\centering
\includegraphics[width=0.9\textwidth]{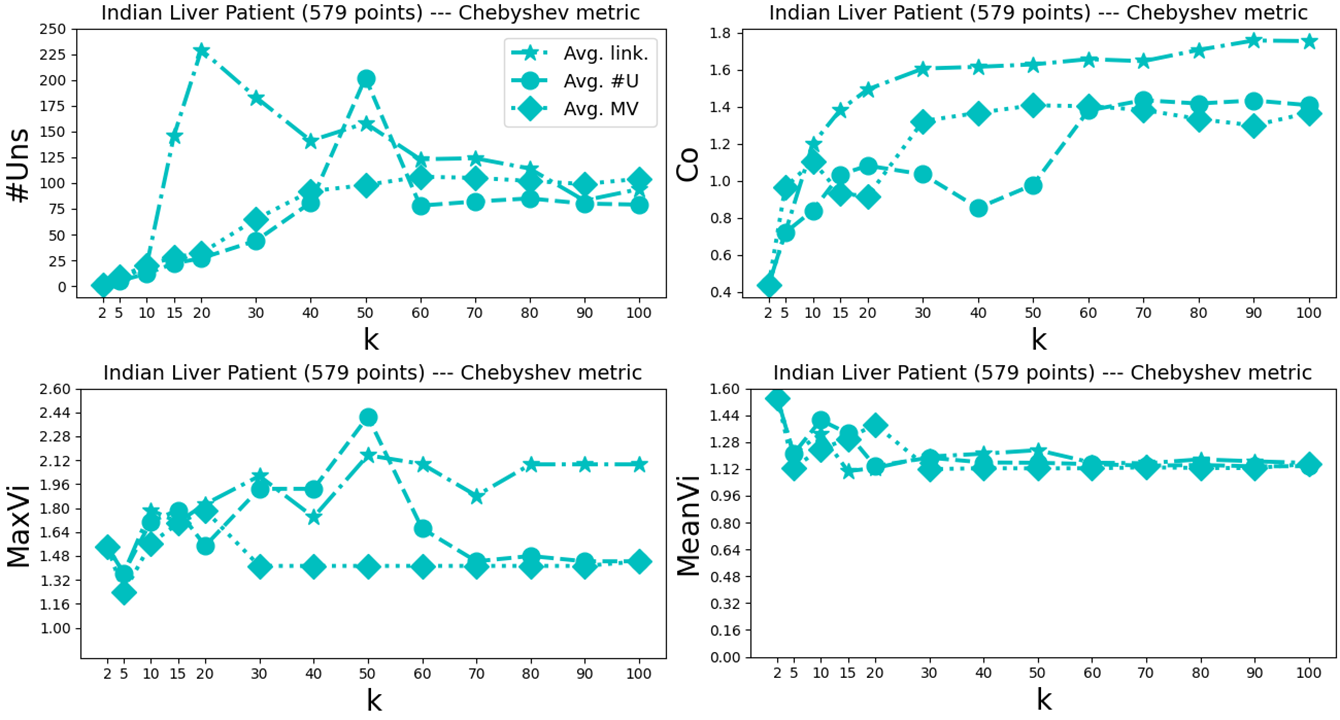}

\caption{Indian Liver Patient data set with Chebyshev metric: 
the various measures 
for the clusterings produced by 
average  linkage clustering 
and the two variants of our heuristic 
approach. 
}
\end{figure*}

\begin{figure*}[h]
\centering
\includegraphics[width=0.9\textwidth]{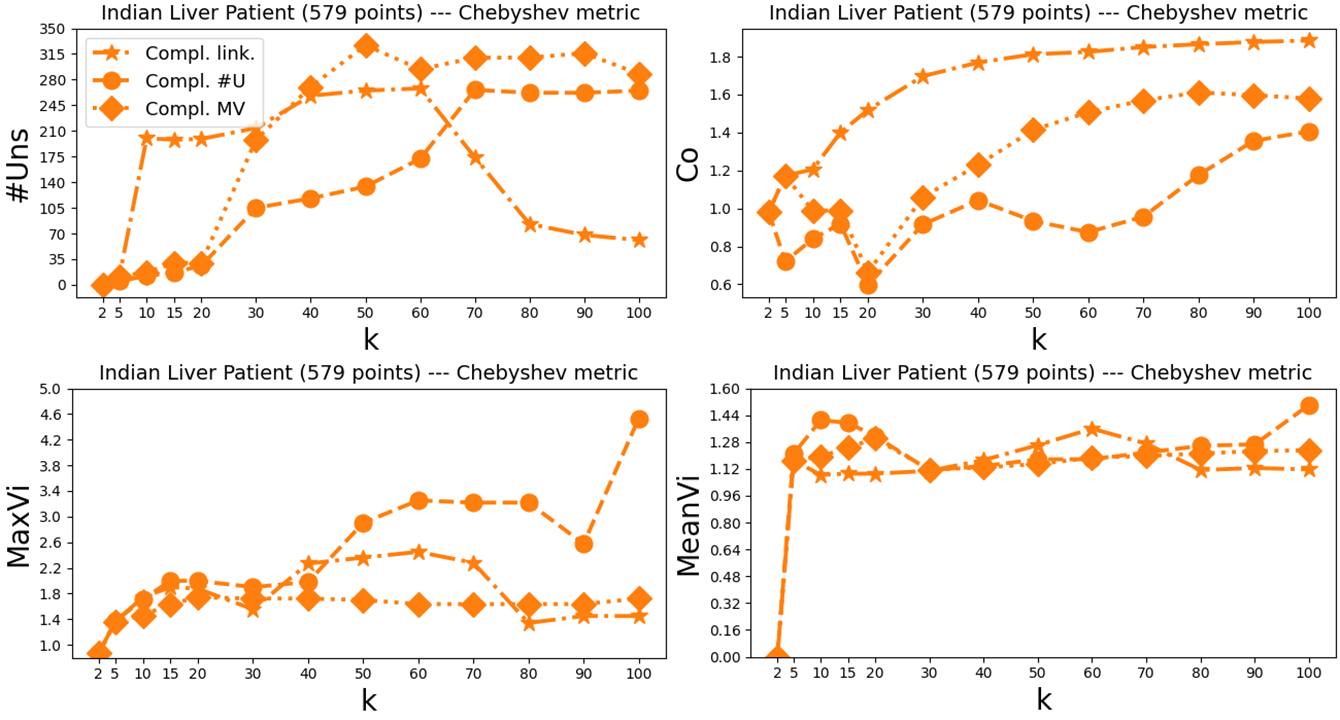}

\caption{Indian Liver Patient data set with Chebyshev metric: 
the various measures 
for the clusterings produced by 
complete  linkage clustering 
and the two variants of our heuristic 
approach. 
}
\end{figure*}

\begin{figure*}[h]
\centering
\includegraphics[width=0.9\textwidth]{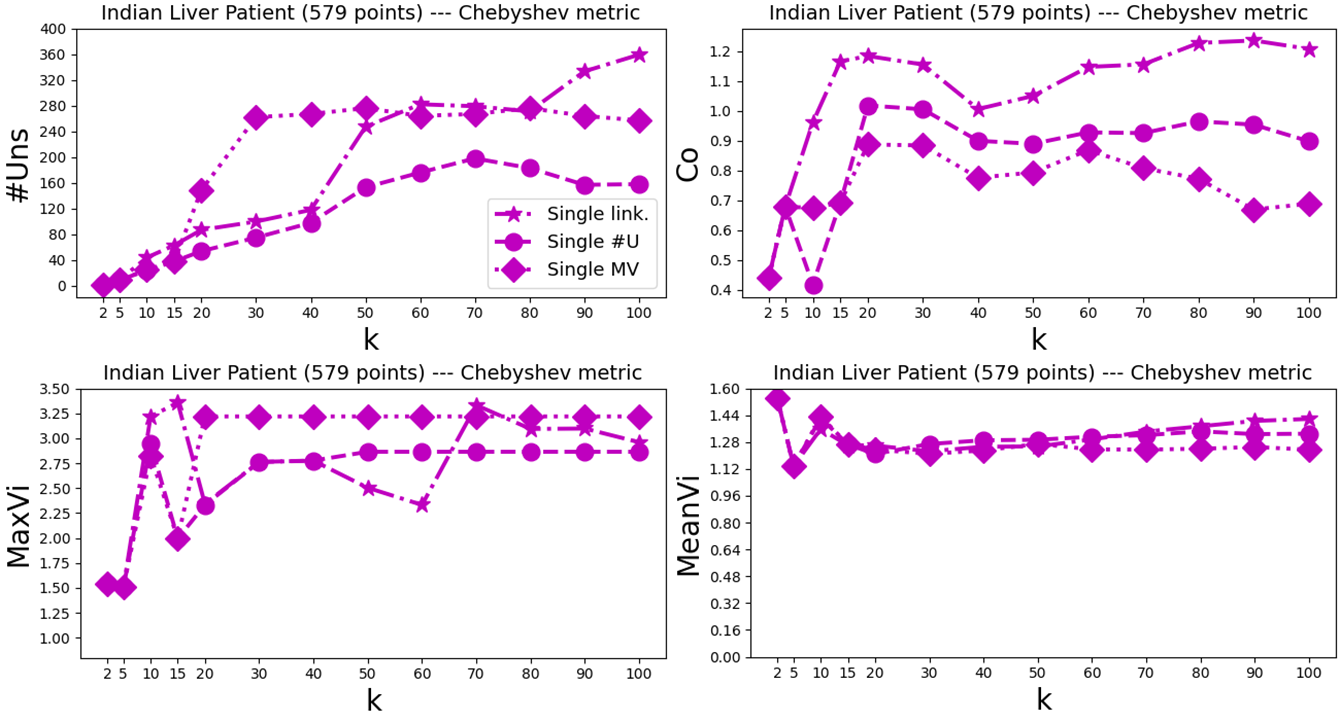}

\caption{Indian Liver Patient data set with Chebyshev metric: 
the various measures 
for the clusterings produced by 
single  linkage clustering 
and the two variants of our heuristic 
approach. 
}
\end{figure*}


\clearpage

\begin{figure}[t]
 \centering
 \includegraphics[scale=0.28]{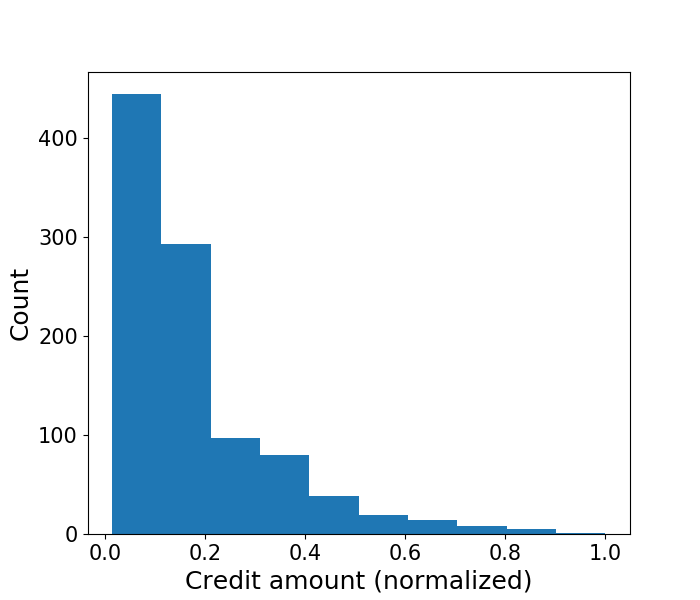}
 \hspace{1.5cm}
 \includegraphics[scale=0.28]{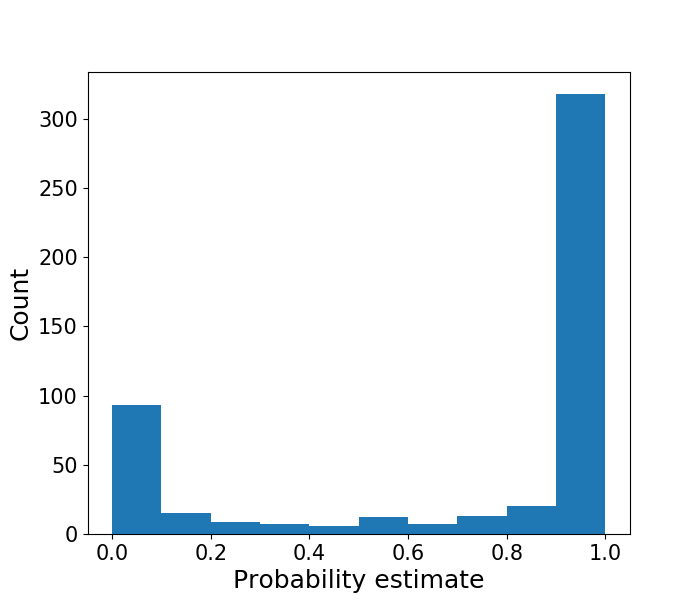}
 
 \caption{Histograms of the data sets used in the experiments of Appendix~\ref{appendix_exp_1dim}. \textbf{Left:} The credit amount 
 (one of the 20 features in the German credit data set; normalized to be in $[0,1]$) for the 1000 records in the German credit data set. 
 Note that there are only 921 unique values. \textbf{Right:} The estimated probability of having a good credit risk 
 for the second 500 records in the German credit data set. The estimates are obtained from a multi-layer perceptron trained on the first 500 records in the German
  credit data set.}\label{plot_histogram_1D_data}
\end{figure}

 \begin{table}[t]
 \caption{Experiment on German credit data set. Clustering 1000 people according to their credit amount. Target cluster sizes $t_i=\frac{1000}{k}$, $i\in [k]$. 
 \textsc{Naive}$=$naive clustering that matches the target cluster sizes, \textsc{DP}$=$dynamic programming approach of Appendix~\ref{appendix_p_equals_infty}, 
\textsc{$k$-means}$=k$-means initialized with medians of the 
clusters of the naive clustering, \textsc{$k$-me++}$=k$-means++. 
Results for 
\textsc{$k$-me++}  
averaged over 100 runs. Best values in~bold.
 }
 \label{experiment_1Da}
 \vskip 0.15in
 \begin{center}
 \begin{small}
 \begin{sc}
 \begin{tabular}{clccccc}
 \toprule
 & & $\Nrunf$ & $\Maxviol$ & $\Meanviol$ & $\Obj$ & $\cost$ \\
 \midrule
 \parbox[t]{2mm}{\multirow{4}{*}{\rotatebox[origin=c]{90}{$k=5$}}} & 
 Naive &  105 & 2.95 & 1.47 &\textbf{0} &  \textbf{0.24} \\
 &DP & \textbf{0} & \textbf{1.0} & \textbf{0}  & 172 &  0.28\\
 &$k$-means & 1 & \textbf{1.0}  & 1.0 & 170 & 0.28\\
 &$k$-me++& 1.11 & 1.01 & 0.61 & 275.3 &  0.29 \\
 \midrule
 \parbox[t]{2mm}{\multirow{4}{*}{\rotatebox[origin=c]{90}{$k=10$}}} & 
 Naive & 113 & 2.16 & 1.3 & \textbf{0} & \textbf{0.28} \\
 &DP & \textbf{0} & \textbf{1.0} &\textbf{0} & 131 & 0.3 \\
 &$k$-means &  4 & 1.01 &1.01 & 136 & 0.29 \\
 &$k$-me++& 3.16 & 1.02 & 0.99 & 159.47 & 0.29  \\
 \midrule
 \parbox[t]{2mm}{\multirow{4}{*}{\rotatebox[origin=c]{90}{$k=20$}}} & 
 Naive & 92 & 3.17 & 1.3 & \textbf{0} & 0.3 \\
 &DP & \textbf{0} & \textbf{1.0} & \textbf{0}& 37 & 0.3 \\
 &$k$-means & 5 & 1.01 & 1.01 & 37 & 0.29 \\
 &$k$-me++& 7.44 & 1.06 & 1.03 & 105.06 &\textbf{0.26} \\
 \midrule
 \parbox[t]{2mm}{\multirow{4}{*}{\rotatebox[origin=c]{90}{$k=50$}}} & 
 Naive & 101 & 2.6 & 1.3 & \textbf{0} & 0.32 \\
 &DP & \textbf{0} & \textbf{1.0} &\textbf{0} & 8 & 0.3\\
 &$k$-means & 18 & 1.26 & 1.06 & 10 &  0.3 \\
 &$k$-me++ & 12.7 & 1.19 & 1.05 & 50.22 &  \textbf{0.26} \\
 \bottomrule
 \end{tabular}
 \end{sc}
 \end{small}
 \end{center}
 \vskip -0.1in
 \end{table}

\section{Experiments on 1-dimensional Euclidean data sets
}\label{appendix_exp_1dim}

In this section we present experiments with our dynamic programming (DP)  approach of Appendix~\ref{appendix_p_equals_infty} for 1-dim Euclidean data sets. 
We used the German Credit data set 
\citep{UCI_all_four_data_sets_vers2}.  
It 
comprises 1000 records (corresponding to 
human beings) 
and for each record one binary label (good vs. bad credit risk) and~20~features. 


In our first experiment, we clustered the 1000 people according to 
their credit amount, which is one of the 20 features.   
A~histogram of the data  
can be seen in 
the left plot of 
Figure~\ref{plot_histogram_1D_data}. We 
were aiming 
for $k$-clusterings with clusters of equal size (i.e., 
target cluster sizes $t_i=\frac{1000}{k}$, $i\in [k]$) and compared our DP approach 
with $p=\infty$ to 
$k$-means clustering 
as well as 
a naive clustering that simply puts the 
$t_1$ smallest 
points in the first cluster, the next $t_2$ many points in the second cluster, and so on. 
We considered two initialization strategies for $k$-means:  we either used the medians of the clusters of the naive clustering for initialization 
(thus, hopefully, biasing $k$-means towards the target cluster sizes) or we ran $k$-means++ \citep{kmeans_plusplus}. 
For the latter we report average 
results obtained from running the experiment for 100 times. 
In addition to the four quantities 
 $\Nrunf$, $\Maxviol$, $\Meanviol$ and  $\cost$ 
considered in the experiments of Section~\ref{section_experiments_general} / Appendix~\ref{appendix_exp_general_adult}~-~\ref{appendix_exp_general_liver} (defined at the beginning of Section~\ref{section_experiments}), 
we report $\Obj$ 
(``objective''),  
which is the value of the objective function of 
\eqref{1dim-problem} 
for $p=\infty$. 
Note that $k$-means / $k$-means++ 
yields contiguous clusters and 
$\Obj$ is meaningful for all four clustering methods that we consider.

The results are provided in 
Table~\ref{experiment_1Da}, where we consider $k=5$, $k=10$, $k=20$ or $k=50$.  
As expected, for the naive clustering we always have $\Obj=0$, and
for our  
DP approach (\textsc{DP}) we have $\Nrunf=0$, $\Maxviol\leq 1$ and $\Meanviol=0$.
Most interesting to see is that both versions of $k$-means yield almost perfectly stable clusterings when $k$ is small 
and moderately stable clusterings when $k=50$ (with $k$-means++ outperforming $k$-means).

In our second experiment,  
we used the first 500 records 
of the data set 
to train a multi-layer perceptron (MLP)
for predicting the label (good vs. bad credit risk). 
We then 
applied 
the MLP to estimate the probabilities of having a good credit risk for the other 500 people. A histogram of the probability estimates is shown in the right plot of Figure~\ref{plot_histogram_1D_data}.     
We used the same clustering methods as in the first experiment to cluster 
the 500 people according to their probability estimate. 
We believe that such a clustering problem may arise 
frequently in practice (e.g., when a bank determines its lending policy) and that IP-stability is highly desirable in 
this 
context.

Table~\ref{experiment_1Db} and Table~\ref{experiment_1Dc} provide the results. 
In Table~\ref{experiment_1Db}, we 
consider 
uniform target cluster sizes $t_i=\frac{500}{k}$, $i\in [k]$, while in Table~\ref{experiment_1Dc} we consider various non-uniform target cluster sizes. The interpretation of the 
results is similar as for the first experiment of Section~\ref{section_experiments_1D}. Most notably, $k$-means can be quite unstable with up to 33 data points being 
unstable when $k$ is large, whereas $k$-means++ produces very stable clusterings with not more than three data points being unstable. However, $k$-means++ 
performs 
very poorly
in terms of $\Obj$, which can be almost ten times as large as for $k$-means and our dynamic programming approach  (cf.~Table~\ref{experiment_1Db},~$k=50$). 

The MLP that we used 
for predicting the label (good vs. bad credit risk) in the second experiment of Section~\ref{section_experiments_1D} has 
three hidden layers of size 100, 50 and 20, respectively, and a test accuracy of 0.724.

 \begin{table}[h!]
\caption{Experiment on German credit data set. Clustering the second 500 people according to their estimated probability of having a good credit risk. 
Target cluster sizes $t_i=\frac{500}{k}$, $i\in [k]$. 
\textsc{Naive}$=$naive clustering that matches the target cluster sizes, \textsc{DP}$=$dynamic programming approach of Appendix~\ref{appendix_p_equals_infty}, 
\textsc{$k$-means}$=k$-means initialized with medians of the 
clusters of the naive clustering, \textsc{$k$-me++}$=k$-means++. 
Results for 
\textsc{$k$-me++}  
averaged over 100 runs. Best values in~bold.
}
\label{experiment_1Db}
\vskip 0.15in
\begin{center}
\begin{small}
\begin{sc}
\begin{tabular}{cclccccc}
\toprule
& ~~~~~Target cluster sizes~~~~~ & & $\Nrunf$ & $\Maxviol$ & $\Meanviol$ & $\Obj$ & $\cost$ \\
\midrule
\parbox[t]{4mm}{\multirow{4}{*}{$k=5$}} & 
\multirow{4}{*}{\shortstack{$t_1=\ldots=t_5=100$}} 
& Naive & 197 & 58.28 & 9.52 & \textbf{0} &  0.34 \\
&&DP & \textbf{0} & \textbf{0.99} & \textbf{0} & 214  & \textbf{0.29} \\
&&$k$-means & 1 & 1.02 & 1.02 & 212 & \textbf{0.29} \\
&&$k$-me++ & 0.67 & 1.01  & 0.62 & 220.72  & 0.3 \\
\midrule
\parbox[t]{4mm}{\multirow{4}{*}{$k=10$}} & \multirow{4}{*}{\shortstack{$t_1=\ldots=t_{10}=50$}} 
& Naive & 162 & 10.27 & 3.06 & \textbf{0} & 0.34  \\
&&DP & \textbf{0} & \textbf{0.98} &\textbf{0} & 217 & \textbf{0.29}  \\
&&$k$-means & 8 & 1.25 & 1.09 & 207 & 0.34  \\
&&$k$-me++ & 1.11 & 1.02 & 0.87 & 248.86 & \textbf{0.29}\\
\midrule
\parbox[t]{4mm}{\multirow{4}{*}{$k=20$}} & \multirow{4}{*}{\shortstack{$t_1=\ldots=t_{20}=25$}} 
& Naive & 116 & 9.64 & 2.09 & \textbf{0} & 0.34 \\
&&DP & \textbf{0} & \textbf{1.0} &\textbf{0} & 155 & 0.29   \\
&&$k$-means & 33 & 2.13 & 1.38 & 95 & 0.3 \\
&&$k$-me++ & 2.53 & 1.07 & 0.97 & 239.55  & \textbf{0.28} \\ 
\midrule
\parbox[t]{4mm}{\multirow{4}{*}{$k=50$}} & \multirow{4}{*}{\shortstack{$t_1=\ldots=t_{50}=10$}} 
& Naive &  73 & 3.8 & 1.78 &  \textbf{0} & 0.34 \\
&&DP & \textbf{0} & \textbf{1.0} &\textbf{0} & 24 & 0.3\\
&&$k$-means &  28 & 2.29 & 1.25& 13 &  0.3 \\
&&$k$-me++ & 3.56  & 1.24 & 1.08 & 233.09 & \textbf{0.23}  \\
\bottomrule
\end{tabular}
\end{sc}
\end{small}
\end{center}
\vskip -0.1in
\vspace{4mm}
\end{table}

 \begin{table}[t]
\caption{Experiment on German credit data set. Clustering the second 500 people according to their estimated probability of having a good credit risk. 
Various 
non-uniform 
target cluster sizes.  
\textsc{Naive}$=$naive clustering that matches the target cluster sizes, \textsc{DP}$=$dynamic programming approach of Appendix~\ref{appendix_p_equals_infty}, \textsc{$k$-means}$=k$-means initialized with medians of the 
clusters of the naive clustering, \textsc{$k$-me++}$=k$-means++. 
Results for 
\textsc{$k$-me++}  
averaged over 100 runs. Best values in~bold.
}
\label{experiment_1Dc}
\vskip 0.15in
\begin{center}
\begin{small}
\begin{sc}
\begin{tabular}{cclccccc}
\toprule
& ~~~~~Target cluster sizes~~~~~ & & $\Nrunf$ & $\Maxviol$ & $\Meanviol$ & $\Obj$ & $\cost$ \\
\midrule
\parbox[t]{9mm}{\multirow{4}{*}{$k=12$}} & \multirow{4}{*}{\shortstack{$t_i=\begin{cases}50 &\text{\textnormal{for}~}3\leq i\leq 10 \\25& \text{\textnormal{else}} \end{cases}$}} 
& Naive & 188 & 12.85 & 3.22 & \textbf{0} & 0.34 \\
&&DP & \textbf{0} & \textbf{0.97} & \textbf{0}& 232  & \textbf{0.29} \\ 
&&$k$-means & 3 & 1.05 & 1.04 & 217 & \textbf{0.29} \\
&&$k$-me++ & 1.44 & 1.03 & 0.77 & 254.81 & \textbf{0.29} \\
\midrule
\parbox[t]{9mm}{\multirow{6}{*}{$k=12$}} & \multirow{6}{*}{\shortstack{$t_1=t_{12}=10$,\\$t_2=t_{11}=15$,\\$t_3=t_{10}=25$,\\$t_4=t_9=50$,\\$t_5=t_8=50$,\\$t_6=t_7=100$}} 
& & & & & &\\
&& Naive & 251 & 65.99 & 6.56 & \textbf{0} &  0.33 \\ 
&&DP & \textbf{0} & \textbf{0.97} & \textbf{0}  & 247 & \textbf{0.29}\\   
&&$k$-means & 14 & 1.52 & 1.16 & 238 & 0.3\\
&&$k$-me++ & 1.43 & 1.03 & 0.83 & 270.09 & \textbf{0.29}\\
& & & & & & & \\
\midrule
\parbox[t]{9mm}{\multirow{4}{*}{$k=20$}} & \multirow{4}{*}{\shortstack{$t_i=\begin{cases}10 &\text{\textnormal{for}~}i=1,3,5,\ldots \\40& \text{\textnormal{for}~}i=2,4,6,\ldots \end{cases}$}}
& Naive & 189 & 137.31 & 6.64& \textbf{0} & 0.35 \\ 
&&DP & \textbf{0} & \textbf{1.0} &\textbf{0} & 140 & 0.29 \\ 
&&$k$-means & 30 & 1.91 &1.28 & 91 & 0.3 \\
&&$k$-me++ & 2.62 & 1.07 & 0.98 & 224.26 & \textbf{0.28} \\ 
\midrule
\parbox[t]{9mm}{\multirow{4}{*}{$k=20$}} & \multirow{4}{*}{\shortstack{$t_i=\begin{cases}115 &\text{\textnormal{for}~}i=10,11 \\15& \text{\textnormal{else}} \end{cases}$}}
& Naive & 224 & 215.88 & 13.01 & \textbf{0} & 0.34 \\ 
&&DP & \textbf{0} & \textbf{1.0} & \textbf{0} & 165 & 0.29 \\ 
&&$k$-means & 51 & 3.04 & 1.67 & 112 & 0.3 \\ 
&&$k$-me++ & 2.44 & 1.07 & 0.99 & 249.79 & \textbf{0.28} \\
\bottomrule
\end{tabular}
\end{sc}
\end{small}
\end{center}
\vskip -0.1in

\vspace{1.4cm}
\end{table}

\end{document}